\newcommand{\spell}[1]{\overline{#1}}
\newcommand{\yield}{\mathcal{Y}}
\newcommand{\func}{\phi}
\newcommand{\prodrule}{\psi}
\newcommand{\Ff}{\mathcal{F}}
\newcommand{\trees}{\mathcal{T}}
\newcommand{\DL}{\operatorname{D}}
\newcommand{\START}{\mathsf{START}}
\newcommand{\T}{\mathsf{T}}
\newcommand{\NT}{\mathsf{NT}}
\newcommand{\cvgarg}[0]{attribute}
\newcommand{\Cvgarg}[0]{Attribute}
\theoremstyle{plain}
\newtheorem{theorem}{Theorem} 
\newtheorem{lemma}[theorem]{Lemma}
\theoremstyle{definition}
\theoremstyle{example}
\newtheorem{example}[theorem]{Example}
\newtheorem{definition}[theorem]{Definition}
\theoremstyle{remark}
\newcommand\pfun{\mathrel{\ooalign{\hfil$\mapstochar$\hfil\cr$\to$\cr}}}
\title{A Theory of Emergent In-Context Learning as Implicit Structure Induction }
\author{
  Michael Hahn\\
  Saarland University\\
  \texttt{mhahn@lst.uni-saarland.de} \\
  \and
  Navin Goyal\\
  Microsoft Research India\\
  \texttt{navingo@microsoft.com}\\ 
}
\date{}
\begin{document}
\maketitle
\begin{abstract}
Scaling large language models (LLMs) leads to an emergent capacity to learn in-context from example demonstrations.
Despite progress, theoretical understanding of this phenomenon remains limited.
We argue that in-context learning relies on recombination of compositional operations found in natural language data.
We derive an information-theoretic bound showing how in-context learning abilities arise from generic next-token prediction when the pretraining distribution has sufficient amounts of compositional structure, under linguistically motivated assumptions.
A second bound provides a theoretical justification for the empirical success of prompting LMs to output intermediate steps towards an answer.
To validate theoretical predictions, we introduce a controlled setup for inducing in-context learning; unlike previous approaches, it accounts for the compositional nature of language. 
Trained transformer LMs can perform in-context learning for a range of tasks, in a manner consistent with the theoretical results.
Mirroring real-world LMs in a miniature setup, in-context learning emerges when scaling parameters and data, and LMs perform  better when prompted to output intermediate steps. 
Probing shows that in-context learning is supported by a representation of the input's compositional structure.
Taken together, these results provide a step towards theoretical understanding of emergent behavior in large language models.
\end{abstract}

Large language models (LLMs), trained only on next-word prediction, can perform novel tasks by completing a prompt consisting of example demonstrations, without any parameter updating \citep{BrownMRSKDNSSAA20}.
This ability, termed \textit{in-context learning} (ICL), is emergent in the sense that it arises without specialized training data or objectives, simply by scaling models and computation \citep{DBLP:journals/corr/abs-2206-07682}.
This phenomenon has recently been the focus of much research, but theoretical understanding is limited.
Aiming to build theoretical understanding, recent work has studied in-context learning in miniaturized controlled settings, investigating how transformers could learn to solve simple classification or regression tasks in context 
\citep{DBLP:conf/iclr/XieRL022,DBLP:journals/corr/abs-2211-15661,DBLP:journals/corr/abs-2208-01066,DBLP:journals/corr/abs-2205-05055,Oswald2022TransformersLI,DBLP:journals/corr/abs-2212-10559}.
However, existing studies do not take into account the highly compositional nature of language data, modeling the pretraining data either in terms of an unstructured set of HMMs \citep{DBLP:conf/iclr/XieRL022}, or as consisting of prompts formatted analogously to the test tasks.
Such setups make it hard to account for a lot of the remarkable flexibility that real-world LLMs show:
They can perform broad ranges of in-context tasks with varying prompt formats, and they can be prompted to provide intermediate steps leading to an answer, often dramatically improving performance \cite[e.g.][]{Wei2022Chain,DBLP:journals/corr/abs-2112-00114,wang-etal-2022-iteratively,Suzgun2022ChallengingBT}.
The emergence of such behavior remains largely mysterious.

We argue that these abilities can arise through recombination of compositional structure found in linguistic data, which we formalize in terms of grammar formalisms long studied in the linguistic literature.
We first investigate when an idealized predictor performing next-token prediction can perform in-context learning from demonstrations.
Theorem~\ref{theorem:theorem1} describes how broad ICL skills arise when the pretraining distribution contains a sufficient amount of compositional structure. 
Based on this result, we introduce a novel controlled scenario in which in-context learning from demonstrations emerges from next-token prediction. 
We define a suite of few-shot test tasks, defined in first-order logic relative to a logical world model, and evaluate language models (LMs) trained on training datasets with varying amounts of diverse compositional structure.
Unlike training datasets closely mirroring constrained scenarios proposed in previous work, text generated by compositional processes leads to broad ICL capabilities.
While pretraining cross-entropy decreases continuously, a wide variety of tasks emerge suddenly after varying amounts of pretraining.
Our theory also explains why prompting LLMs to provide intermediate steps makes ICL more effective (Theorem~\ref{theorem:cot}).
We probe the LM's inner workings and argue that representation learning supports the ICL ability.

Taken together, our key contributions are
\begin{enumerate}
	\item a theoretical analysis of the conditions under which generic next-token prediction leads to in-context learning from demonstrations in an idealized predictive model,  
	\item a controlled setup for studying in-context learning, in which in-context learning skills emerge for a broad set of tasks, including prompting LMs for providing intermediate steps.
\end{enumerate}

\begin{table}[h]
    \centering
    \footnotesize
    \begin{tabular}{lccccccccc}
    \hline
         & Theory & \textsc{Comp.} & LLM  \\ \hline
improves with prompt length         & \checkmark & \checkmark & \checkmark  \\
gets harder with $\DL[\tau_\func]$ & \checkmark & \checkmark & \checkmark\footnotemark  \\
\textsc{ChainOfThought} > Raw & \checkmark & \checkmark & \checkmark \footnotemark \\
\textsc{ChainOfThought} > \textsc{Explanation} & \checkmark & \checkmark & \checkmark \textsuperscript{\ref{footnote-label}} \\
\hline
gets harder with $|\Ff|$ & \checkmark & \checkmark & ? \\
does not get harder with $|\Omega|$ & \checkmark & \checkmark & ? \\
recombining skills never seen together & n.a. & \checkmark\footnotemark & ? \\ \hline
works for natural \& unnatural prompts & \checkmark & n.a. & \checkmark \\
\hline
    \end{tabular}

    \caption{Schematic properties of ICL as predicted by our theory, as exhibited by real transformers trained on the \textsc{Compositional} data, and observed for real-world LLMs.
Evidence for the first group of properties comes from all settings.
	A signature prediction of our theory is that ICL success depends on the complexity $\DL[\tau_\func]$ of a task's compositional description.
	We further derive a benefit of prompting LMs for intermediate steps before the answer (chain-of-thought prompting).
	Properties in the second group are hard to establish for real-world LLMs, but can be cleanly studied in our controlled setup.
The theory predicts scaling with the number of functions $|\Ff|$ and objects $|\Omega|$; we further experimentally observe recombination of skills never seen together in finite training data.
Robustness to unnatural prompt formats (third  group) is shared by our theory and real-world LLMs; it does not apply to our miniaturized training data which has no notion of ``naturalness''.
    }
    \label{tab:icl-properties}
\end{table}

\footnotetext[1]{Figure~\ref{fig:gpt3} provides evidence from InstructGPT.}
\footnotetext[2]{\label{footnote-label} \cite[e.g.][]{Wei2022Chain,DBLP:conf/emnlp/LampinenDCMTCMW22}; Figure~\ref{fig:gpt3}}
\footnotetext[3]{Figure~\ref{fig:recombination}}

\section{A Formal Learnability Bound for Learning from Demonstrations}\label{sec:theory}

In order to understand when broad ICL capabilities emerge from generic next-token prediction, we take the perspective of idealized learners with access to infinite training data and infinite capacity to fit the data distribution. 
We show that very general linguistically-motivated assumptions about the generative process underlying the pretraining data are sufficient to guarantee ICL capabilities for an idealized language model performing ordinary next-token prediction.

\subsection{Setup}
\paragraph{World Model.}
Both the pretraining data and the few-shot tasks are generated on the basis of some  finite universe $\Omega$ of objects.
The pretraining corpus consists of a collection of finite strings $d\in\Sigma^*$, referred to as \emph{documents}; $\Sigma$ is a finite set serving as the alphabet.
A ``spellout'' map $\omega \mapsto \spell{\omega}$ maps objects $\omega\in\Omega$ to their names $\spell{\omega} \in \Sigma$.

\begin{figure*}
    \centering

        \begin{tikzpicture}

		\node (label) at (-1.2,1)[draw=none, align=left, anchor=center]{\includegraphics[width=0.55\textwidth]{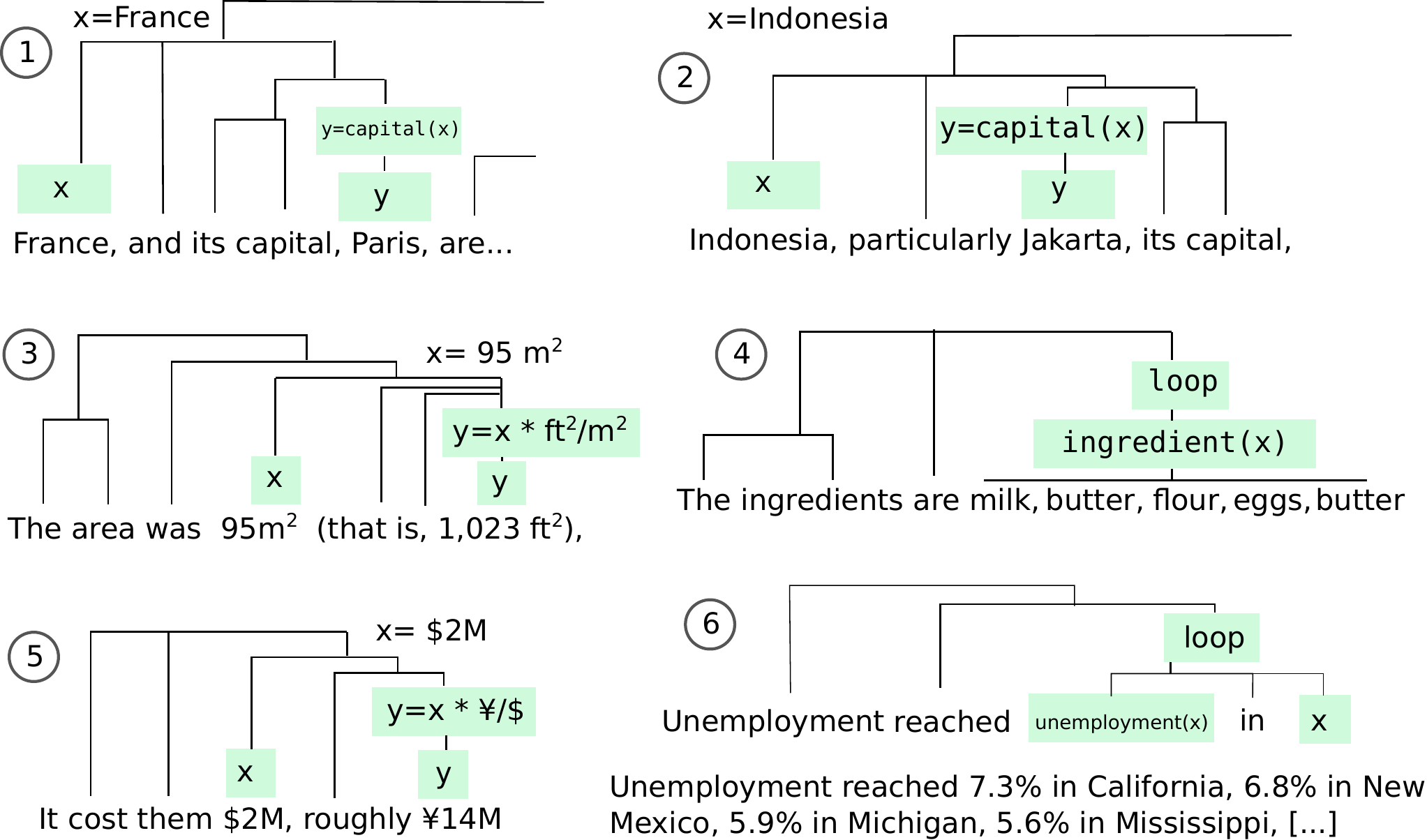}};
  		\node (label) at (-6,3)[draw=none, align=left, anchor=center]{\LARGE{A}};

		\node (label) at (6.5,1)[draw=none, align=left, anchor=center]{\includegraphics[width=0.35\textwidth]{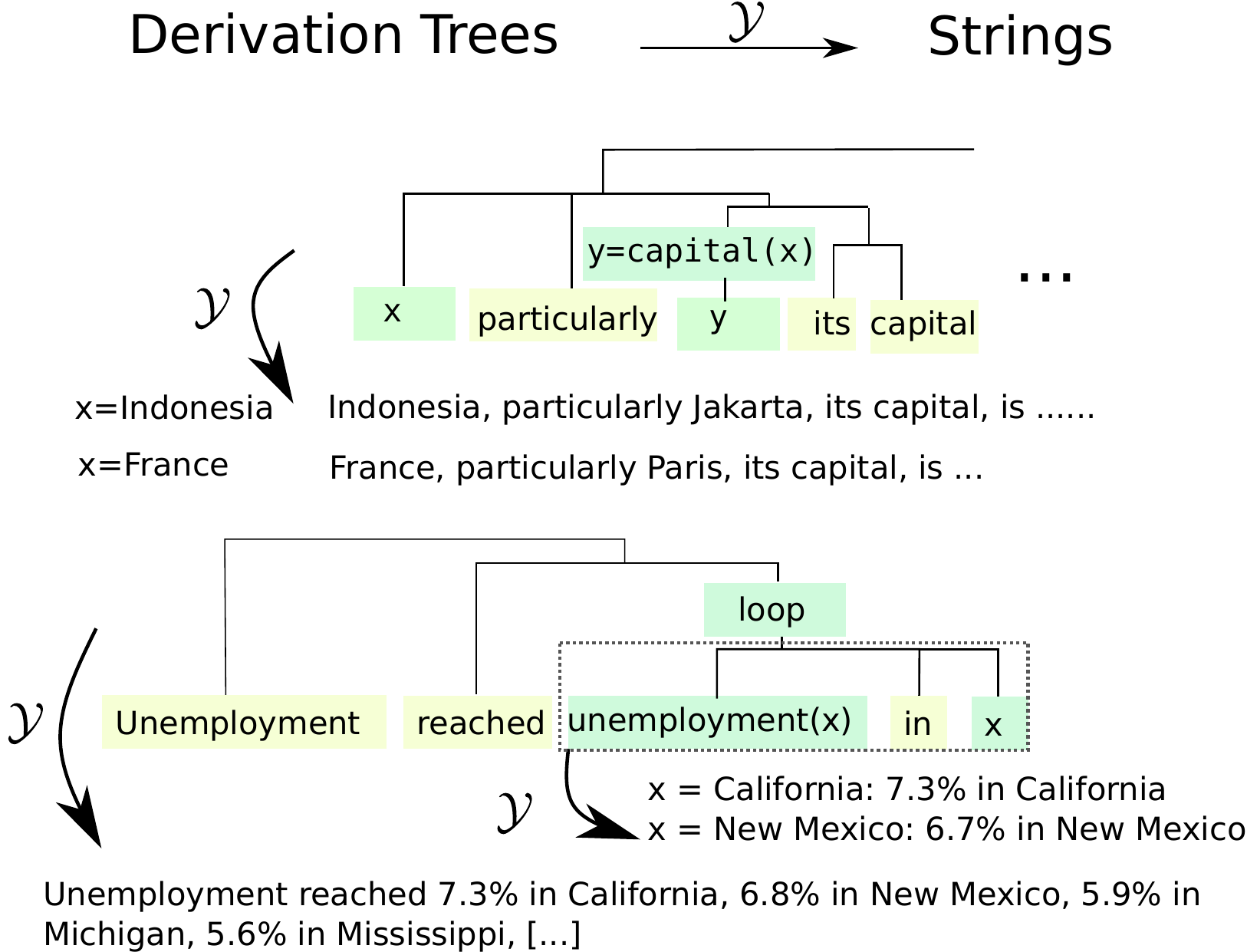}};
\node (label) at (3.5,3)[draw=none, align=left, anchor=center]{\LARGE{B}};

		\node (label) at (-2.5,-4.5)[draw=none, align=left, anchor=center]{\includegraphics[width=0.4\textwidth]{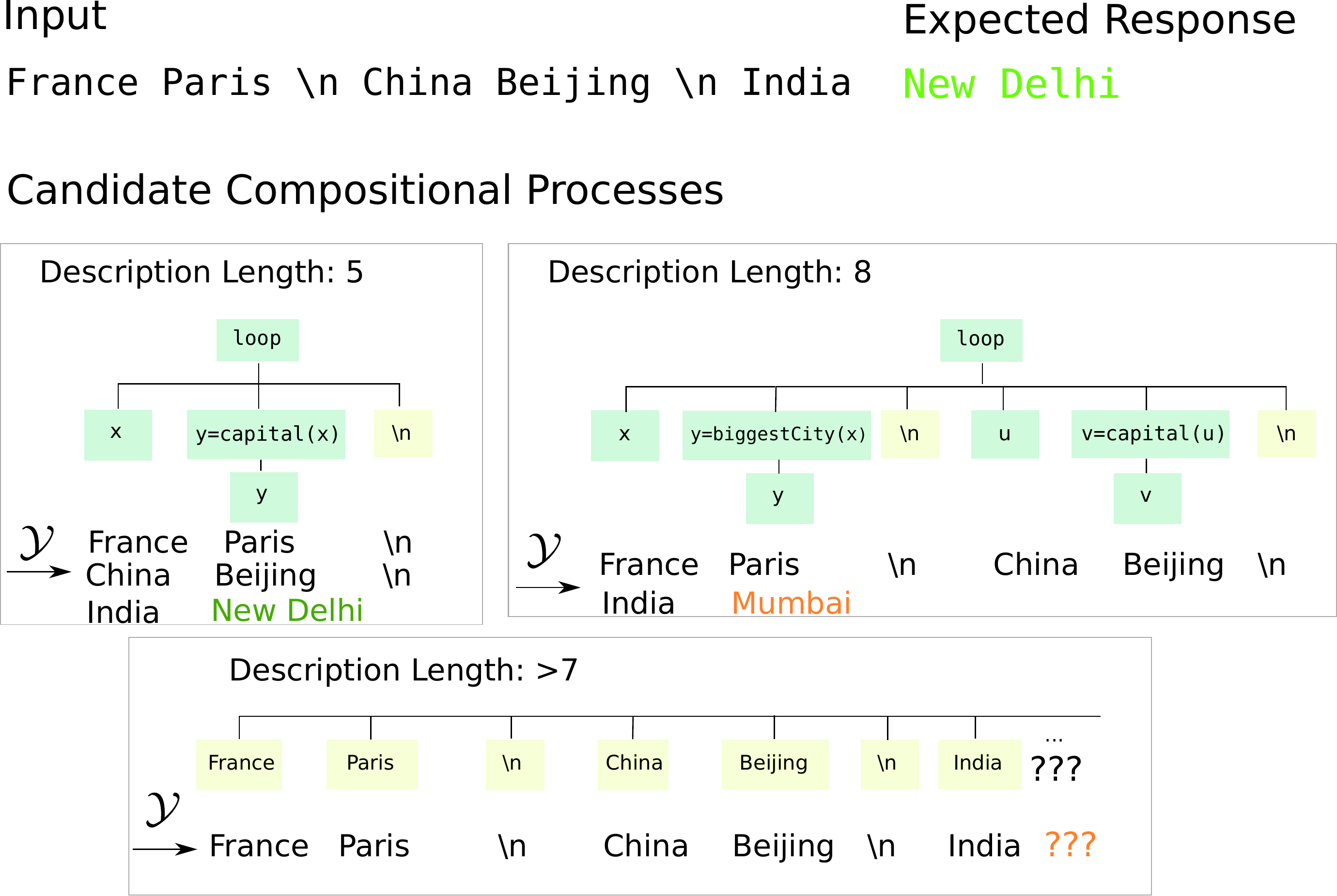}};
  \node (label) at (-6,-2.5)[draw=none, align=left, anchor=center]{\LARGE{C}};

		\node (label) at (6,-4.5)[draw=none, align=left, anchor=center]{\includegraphics[width=0.5\textwidth]{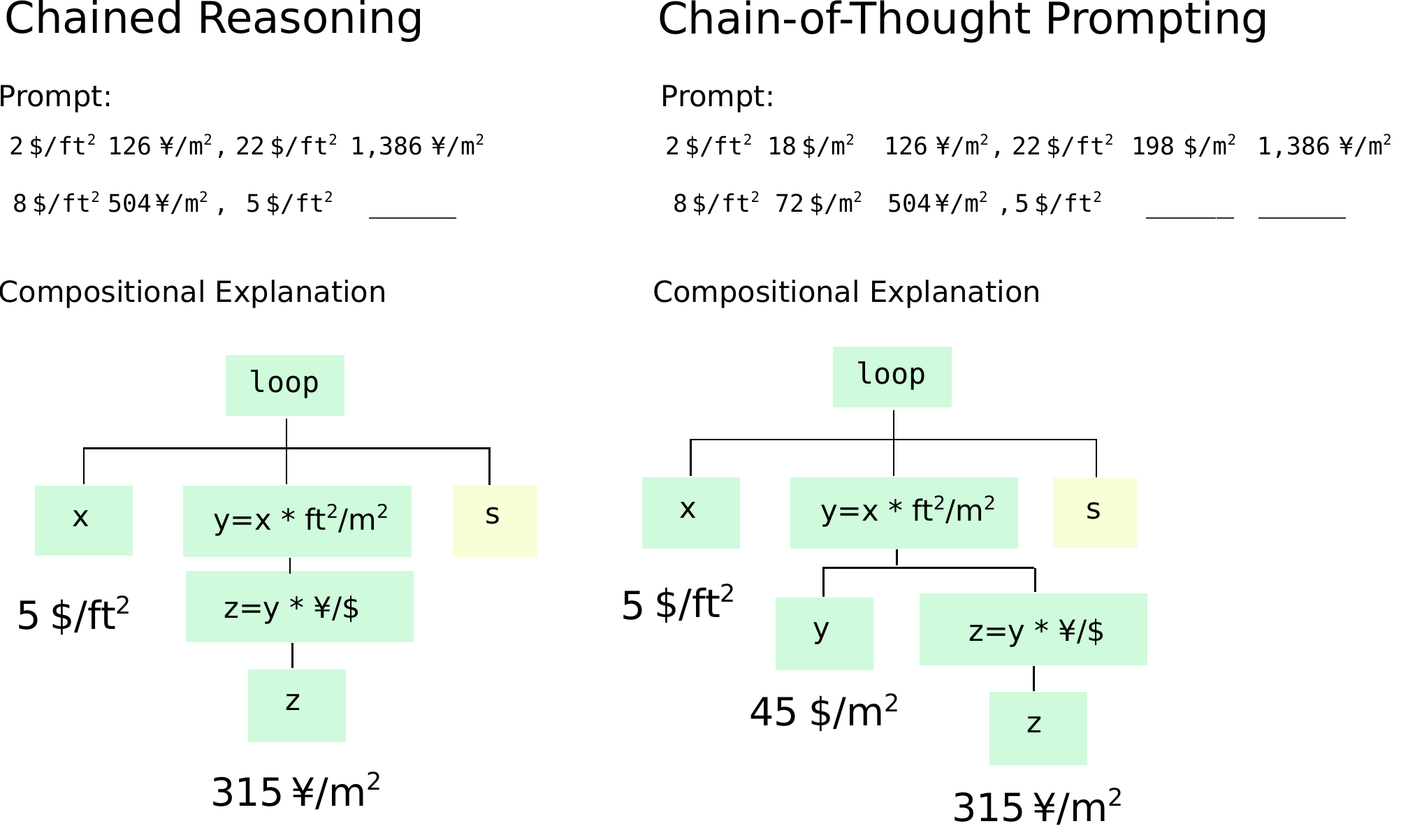}};
\node (label) at (1.5,-2.5)[draw=none, align=left, anchor=center]{\LARGE{D}};

\node (label) at (-2.1,-7.7)[draw=none, align=left, anchor=center]{\LARGE{E}};
		\node (label) at (2,-8.5)[draw=none, align=left, anchor=center]{\includegraphics[width=0.5\textwidth]{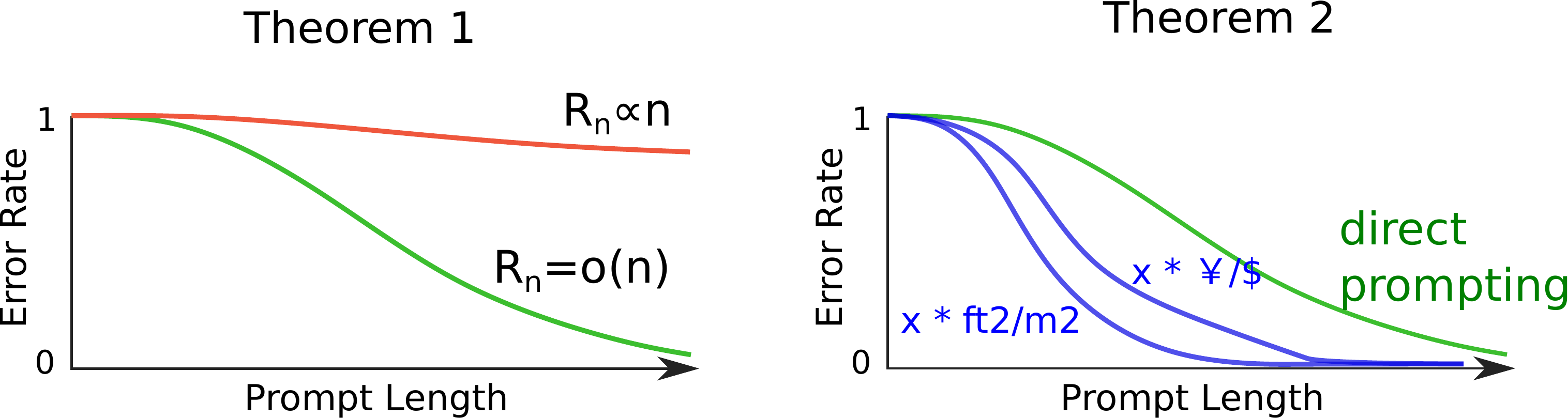}};

	\end{tikzpicture}

 \caption{ (A) Natural language text is generated by a compositional process. Following linguistic research, we assume that each document can be described in terms of a compositional description generated from a formal grammar. We highlight some examples of operations that are re-used and re-composed across documents and applied to different objects.
	(B)
	We formalize this generative processes in terms of a probabilistic grammar combining operations into derivation trees paired with a \emph{yield} operation expressing these into strings.
 Going beyond PCFGs, variables can be shared across subtrees (top), and subtrees can be iterated (bottom).
(C) When faced with a prompt, recombining compositional operations found in the training corpus provides a parsimonious explanation of the input (here, naming countries with their capitals), allowing an optimal predictor to correctly infer the response.
Noncompositional explanations (bottom) are not parsimonious.
Other compositional explanations (such as alternatingly naming largest cities and capitals, top right) can also explain the input, but are disfavored because they are less parsimonious.
	(D) This perspective extends to chained reasoning: the most parsimonious explanation of the prompt is in terms of a generative process combining reasoning steps.
	We illustrate this at the example of numerical reasoning (unit conversion), which can be solved by recombining two functions observed in the training data.
	In the chain-of-thought version, the intermediate step is made explicit.
 (E) Theorem~\ref{theorem:theorem1} guarantees convergence of errors to zero, as prompt length increases, when the compositional process can express $n$-fold repetition (as in (B) bottom).
 For a composite task as in (D), Theorem~\ref{theorem:cot} provides individual error bounds for the two steps in chain-of-thought prompting (D right), providing faster convergence than for direct prompting (D left).
	}
    \label{fig:compositional-setup}
\end{figure*}

\paragraph{Formalizing compositional document generation.}

We start by theoretically analyzing when ICL is possible for a predictor reflecting a linguistically-plausible generative process.
Over the past decades, the linguistics literature has proposed a substantial number of grammar formalisms intended to describe the compositional structure and distribution of sentences and text \cite[e.g.][]{Pollard1984GeneralizedPS,Joshi1985NaturalLP,Abney1996StochasticAG,Stabler1996DerivationalM,Steedman2004TheSP,Kallmeyer2010ParsingBC}. 
Rather than committing to any individual one of them, we eclectically condense key aspects into a simple formalism and analyze that one.
Our results then transfer to other formalisms in the literature; see Appendix~\ref{sec:design-choices}.
The intuitive scope of our formalism is described in Figure~\ref{fig:compositional-setup}A--B:
Text is produced from an inventory of building blocks through composition operations \citep[e.g.][]{chomsky1957syntactic,Goldberg2006ConstructionsAW}, and properties and operations can be recombined and applied to different objects \citep[e.g.][]{montague1973proper,Marcus1998RethinkingEC} (instantiated by attributes named $x,y$ in Figure~\ref{fig:compositional-setup}A).

Our formalization of linguistically-plausible compositional generative processes,
\textbf{Compositional Attribute Grammar} (CAG), consists of two components: (1) a probabilistic context-free grammar (PCFG) probabilistically generatomg derivation trees over finite sets of terminals and nonterminals  by applying a finite set of production rules, (2) a \emph{yield} operation $\mathcal{Y}$ recursively mapping trees to strings.
As outlined below, to sample a random string from the CAG, we first generate a derivation tree from the PCFG and then apply the yield operation on it.
We discuss the relation between this definition and the landscape of grammar formalisms, and how our theoretical results transfer to those, in Appendix~\ref{sec:design-choices}.

CAGs go beyond PCFGs in (i) conditioning  string generation on {\cvgarg}s passed across subtrees (such as the entities ``France'' or ``95 $m^2$'' in Figure~\ref{fig:compositional-setup}A), and (ii) allowing operations other than simple concatenation of strings derived by subtrees (Figure~\ref{fig:compositional-setup}A.4, 6). 
In a derivation tree, each node is associated with a list of {\cvgarg}s taking values in $\Omega$; its length given by the \emph{arity} $a_n \in \mathbb{N}$ of the node's nonterminal $n$.
The \emph{yield} operation $\mathcal{Y}$ recursively maps trees to strings. 
It takes into account a tree, {\cvgarg}s from $\Omega$, and a source $r$ of randomness (Figure~\ref{fig:compositional-setup}B):
\begin{equation}\label{eq:yield-general}
    \mathcal{Y}(\tau, \langle x_1,\dots,x_{a_n}\rangle, r) \in \Sigma^*, 
\end{equation}
where $\tau \in \mathcal{T}$ (the set of all derivation trees), $x_i \in \Omega$. 
For a tree consisting of only a terminal, (\ref{eq:yield-general}) is arbitrarily defined. 
For a tree with children, the yield is defined recursively: 
The yield of $\psi[\tau_1, ..., \tau_\ell]$---i.e., the tree rooted by a production rule $\psi$ with children trees $\tau_1, ..., \tau_\ell \in \mathcal{T}$---is some arbitrary concatenation of yields of children trees, $\yield(\tau_{i},\eta_j,r_j)$, where each $\eta_j$ is a tuple of {\cvgarg}s and $r_j$ are independent. Children may appear multiple times with different {\cvgarg}s; their ordering, multiplicity, and  {\cvgarg}s may depend on $\psi$, $r$, and the {\cvgarg}s $x_1, \dots, x_{a_n}$ of the parent (formal definition in Appendix~\ref{sec:yield}). 
Besides grammatical knowledge that is typically the focus in linguistic work on grammar formalisms, $\yield$ must also incorporate world knowledge that shapes the sentence distribution: In Figure~\ref{fig:compositional-setup}B (top), $\yield$ as applied to the node labeled \texttt{y=capital(x)} is responsible for passing the {\cvgarg} $y$ satisfying $y=capital(x)$ to a subtree; in Figure~\ref{fig:compositional-setup}B (bottom), it is---when applied to the \texttt{loop} node---responsible for repeating a subtree applied to different US states.

Each document $d$ in the corpus is generated by sampling a tree $\tau\in\mathcal{T}$ whose root nonterminal is a designated start symbol $\START$, with arity 0, and a random $r$ and setting $d := \mathcal{Y}(\tau, \langle\rangle, r) \in \Sigma^*$. 
We write $p(d)$ for the resulting distribution on $\Sigma^*$.
We refer to the number of nodes in a derivation tree $\tau \in \mathcal{T}$ as its \textbf{description length} $\DL[\tau]$.
For some constant $\rho>0$, $P(\tau \,|\,\START) \geq \exp(-\rho\cdot \DL[\tau])$ (Lemma~\ref{prop:prob-dl-bound}).

\paragraph{Regularity Assumptions.}
We make general regularity assumptions about the CAG, conceptually similar to those made in the HMM model of \citet{DBLP:conf/iclr/XieRL022}:
The set of derivation trees is closed under projection onto variable values, concatenation of yields (as in a standard CFG), and marginalizing of variables.
Documents have finite expected length, and all nonterminals can be used in generating some documents at probability bounded away from zero.
See Appendix~\ref{sec:assumptions} for formal definition.
These assumptions ensure that all strings in $\Sigma^*$ can be constructed at some (albeit small) nonzero probability, so that an idealized predictor makes well-defined next-token predictions on any input.

\paragraph{Iteration Complexity.}
Key to our learning bound will be the ability of CAGs to generate repetition of the same operation applied to different objects.
Natural language has various such operations (Figure~\ref{fig:compositional-setup}A), including lists (Figure~\ref{fig:compositional-setup}A.4) or the \textit{gapping} construction (Figure~\ref{fig:compositional-setup}A.6\footnote{A linguistically faithful analysis of gapping is slightly more complex than in Figure~\ref{fig:compositional-setup}A.6, see more in Appendix~\ref{sec:minimalist-iteration}.}); \citet{Ross1970GAPPINGAT}), the latter highly prominent in linguistic research and thought to elude context-free syntax \citep{Steedman1990GappingAC}.
Not all CAGs will have loop-like operations as in Figure~\ref{fig:compositional-setup}A, but they may have other compositional means of generating structures repeating an operation on different {\cvgarg}s.
We formalize this by associating to each CAG its \textit{Iteration Complexity} $R_n$:
For each $n \leq |\Omega|$, let $R_n$ be the smallest number such that 
the following holds for all $\theta \in \trees$, and all pairwise distinct $x_1, ..., x_n \in \Omega$.
We consider all trees $\tau\in\mathcal{T}$ ($a_{\tau} = a_{\theta}+1$) such that for all $\xi\in\Omega^{a_\tau}$, $\mathcal{Y}(\tau, \xi, r)$ with probability at least $p_\tau > 0$ has an infix whose distribution matches
\begin{equation}\label{eq:infix-repetition}
\mathcal{Y}(\theta,\langle x_1, \xi_{1\dots a_\tau}\rangle,r_1)...\mathcal{Y}(\theta,\langle x_N, \xi_{1\dots a_\tau}\rangle,r_n).
\end{equation}
There is always at least one such tree (Lemma~\ref{eq:always-some-repetition}).
We define $R_n$ by the requirement that, for at least one of these $\tau$,
\begin{equation}
\DL[\tau]  \leq  R_n + \DL[\theta] + \frac{1}{\rho} \log \left[p_\tau \cdot {{|\Omega|\choose n}}\right].
\end{equation}
Intuitively, $R_n$ indicates how much more complex repetition is compared to a single occurrence; the third term accounts for the number of different choices of $x_1,\dots,x_n$; it disappears in the simple case where the yield of $\tau$ contains (\ref{eq:infix-repetition}) for each sequence $x_1,\dots,x_n$ at equal probabilities $p_\tau = {{|\Omega|\choose n}}^{-1}$. 
A simple way of achieving $R_n = 1$ uses a production rule $\psi$ mapping a nonterminal to a single nonterminal, and a corresponding yield $\yield(\psi[\theta], \langle\rangle, r)$ of the form $\yield(\theta, \langle x_1\rangle, r_1) \dots \yield(\theta, \langle x_{n}\rangle, r_{n})$, with the permutation determined by $r$, as in Figure \ref{fig:compositional-setup}B bottom.\footnote{More generally, if the number $n$ of iterations produced by this nonterminal depends on $r$, with some probability distribution $p(n)$, then $R_n \leq 1-\frac{1}{\rho} \log \sum_{k=n}^\infty p(k)$ (Appendix, Example~\ref{ex:iteration-rn}).}

\subsection{Learnability Bound}
We now provide in-context learning guarantees for an idealized predictor reflecting the distribution of documents sampled from a CAG.
This autoregressive predictive distribution,
over strings over the alphabet $\Sigma \cup \{\$\}$, for each $n=1,2,\dots$, is given as:\footnote{This is defined for all $x_{1\dots n}$ except when $x_{1\dots n-1} \in \Sigma^+\$\Sigma^*$, which can never be followed by any symbol inside a document.}
\begin{equation}\label{eq:predictive}
    M(x_n|x_{1\dots n-1}) = \frac{\sum_{d\in\Sigma^*} p(d) \cdot \#_d(x_{1\dots n})}{\sum_{d\in\Sigma^*} p(d) \cdot \#_d(x_{1\dots n-1})} 
\end{equation}
where $\#_d(x_{1\dots n})$ is the number of times $x_{1\dots n}$ appears in $\$d\$$ (with \$ serving as beginning and end of sequence token). 
For longer strings, $M(x_{n\dots n+\Delta}|x_{1\dots n-1}) := \prod_{i=n}^{n+\Delta} M(x_i|x_{1\dots i-1})$.

Our learning bound is in terms of the description length of defining a function within the CAG. 
Formally, we say that a function $\func : \Omega \rightarrow \Omega^*$  
is \emph{expressed} by a derivation tree $\tau_\func$ with description length $\DL[\tau_\func]$ if: 
\begin{equation}\label{eq:f-expressible}
\yield(\tau_\func, \langle x\rangle, r) \equiv \spell{\func(x)}, \forall  r, \forall x \in \Omega
\end{equation}
For instance, ``capital'' or unit conversion are expressed by subtrees of description length $2$ in the examples in Figure~\ref{fig:compositional-setup}A--B. 
With these notions in place, we state our first theorem:

\begin{theorem}[Single-Step Prompting]\label{theorem:theorem1}
Let any CAG be given, satisfying the regularity assumptions, including the associated trees $\trees$, yield map $\yield$, and predictive distribution $M$, with the associated quantities $R_n$.
	Let $\func : \Omega \rightarrow \Omega^d$ be a function expressed by a derivation tree $\tau_\func\in\trees$.
 Let $\xi := x_1, x_2, ..., x_n \in \Omega$ ($n \leq |\Omega|$) be a sequence with the $x_i$ pairwise distinct, and let $s \in \Sigma$.
For $m=1,\dots,n$, consider the prompt $P_m$ given by
	\begin{equation}
		 \overline{x_1 \func(x_1)} s \spell{x_2 \func(x_2)} s \dots s \spell{x_{m-1} \func(x_{m-1})} s \spell{x_{m}},
	\end{equation}
 with expected completion $\spell{\func(x_m)}$.
Assume that predictions are made as
\begin{equation}
     \operatorname{arg} \operatorname{max}\limits_{\omega \in \Sigma^d} M(\omega s|P_m),
\end{equation}
with ties broken arbitrarily. 
On average across the choice of the sequence $x_1, x_2, ..., x_n$ (picked uniformly at random from length-$n$ sequences with pairwise-distinct entries), the summed zero-one loss on completing $P_1, ..., P_n$,
is bounded by
 \begin{equation}\label{eq:learning-bound}
  \mathcal{O}\left(R_n + \DL[\tau_\func]\right)
 \end{equation}
where $\mathcal{O}(\cdot)$ absorbs constants depending on the PCFG, $s$, and the average document length $\mathbb{E}[|d|]$, but not otherwise on $|\Omega|$, $\func$, or $n$.
\end{theorem}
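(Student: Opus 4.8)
The plan is to first bound the expected summed \emph{log-loss} $\mathbb{E}_{\xi}\big[\sum_{m=1}^{n} -\log M(\overline{\func(x_m)}\,s \mid P_m)\big]$, with $\xi=x_1,\dots,x_n$ uniform over length-$n$ sequences of distinct objects, and then deduce the zero-one bound: on any step where $M$ assigns mass $>\tfrac12$ to the correct continuation $\overline{\func(x_m)}\,s$ the arg-max over $\omega\in\Sigma^d$ equals $\func(x_m)$ (the other strings $\omega s$ carry total mass $<\tfrac12$ since $\sum_{\omega\in\Sigma^d}M(\omega s\mid P_m)\le 1$), while every erroneous step contributes at least $\log 2$ to the log-loss, so the summed zero-one loss is at most $\tfrac{1}{\log 2}$ times the summed log-loss. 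Two ingredients drive the log-loss bound: a parsimonious derivation tree that witnesses the whole demonstration pattern, and a telescoping identity that isolates the ``apply $\func$'' steps from the genuinely hard ``name the next object'' steps.

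First I would build a block tree $\theta\in\trees$ of arity $1$ with $\yield(\theta,\langle x\rangle,r)\equiv \overline{x}\,\overline{\func(x)}\,s$, by joining a naming subtree for $x$, the tree $\tau_\func$, and a terminal emitting $s$ under a single production rule, so $\DL[\theta]=\DL[\tau_\func]+\mathcal{O}(1)$. Applying the definition of $R_n$ to this $\theta$ and any fixed distinct $x_1,\dots,x_n$ gives a tree $\tau$ with $\DL[\tau]\le R_n+\DL[\theta]+\tfrac{1}{\rho}\log\!\big[p_\tau\binom{|\Omega|}{n}\big]$ whose yield, with probability at least $p_\tau$, contains $W_n := \overline{x_1\func(x_1)}\,s\cdots\overline{x_n\func(x_n)}\,s$ as an infix; the regularity assumptions (closure under concatenation and marginalization of variables, every nonterminal usable in documents) let me embed $\tau$ under $\START$ at $\mathcal{O}(1)$ extra description length, so Lemma~\ref{prop:prob-dl-bound} yields, uniformly over $\xi$,
\[
c^{\xi}_n \;:=\; \sum_{d} p(d)\,\#_d(W_n)\;\ge\; p_\tau\,e^{-\rho\DL[\tau]}\;\ge\; c_0\,\frac{e^{-\rho(R_n+\DL[\tau_\func])}}{\binom{|\Omega|}{n}},
\]
with $c_0$ depending only on the PCFG and $s$. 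Writing $W_0:=\varepsilon$, $W_m:=P_m\,\overline{\func(x_m)}\,s$ and $\#_\varepsilon := \sum_d p(d)\,\#_d(\varepsilon)=\mathbb{E}[|d|]+\mathcal{O}(1)$, the definition of $M$ on strings telescopes (each numerator cancels the next denominator) into
\[
\prod_{m=1}^{n} M\big(\overline{\func(x_m)}\,s \mid P_m\big)\;=\;\frac{c^{\xi}_n}{\#_\varepsilon\cdot r(\xi)},\qquad r(\xi):=\prod_{m=1}^{n}M\big(\overline{x_m}\mid W_{m-1}\big),
\]
so $\sum_{m} -\log M(\overline{\func(x_m)}s\mid P_m) = \log\#_\varepsilon + \log r(\xi) - \log c^{\xi}_n$.

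Next I would average over $\xi$. The key observation is that $r(\cdot)$ is a sub-probability over the $N:=|\Omega|!/(|\Omega|-n)!$ ordered tuples of distinct objects: for each $m$, $\sum_{x_m}M(\overline{x_m}\mid W_{m-1})\le\sum_{\sigma\in\Sigma}M(\sigma\mid W_{m-1})\le 1$, hence $\sum_{\xi} r(\xi)\le 1$, so by concavity of $\log$, $\tfrac{1}{N}\sum_{\xi}\log r(\xi)\le\log\tfrac{1}{N}=-\log N$. Combining with the per-tuple lower bound on $c^{\xi}_n$ above,
\[
\mathbb{E}_{\xi}\Big[\textstyle\sum_{m} -\log M(\overline{\func(x_m)}s\mid P_m)\Big]\;\le\;\log\#_\varepsilon - \log N - \log\!\Big(c_0\,\tfrac{e^{-\rho(R_n+\DL[\tau_\func])}}{\binom{|\Omega|}{n}}\Big)\;=\;\log\#_\varepsilon-\log c_0+\rho(R_n+\DL[\tau_\func])-\log n!\,,
\]
since $N/\binom{|\Omega|}{n}=n!$; as $\log n!\ge 0$, the right-hand side is $\mathcal{O}(R_n+\DL[\tau_\func])$ with constants depending only on the PCFG, $s$, and $\mathbb{E}[|d|]$. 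The zero-one conversion from the first paragraph then proves the theorem.

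The step I expect to be the main obstacle is the construction: realizing $\theta$ and the iteration tree $\tau$ as genuine objects of the CAG and embedding them under $\START$ while keeping the description-length overhead an absolute constant --- this is where all the regularity assumptions are needed, and where the exact form of the $p_\tau\binom{|\Omega|}{n}$ term in the definition of $R_n$ is essential, because the combinatorial factor $\binom{|\Omega|}{n}$ introduced there is precisely what cancels against $\log N$ in the averaging step, leaving only the harmless $-\log n!$, so that no residual dependence on $|\Omega|$ or $n$ survives. A minor additional point is to assume distinct objects have distinct spellouts (otherwise the target completions are ill-defined and $r(\cdot)$ need not be a sub-probability over tuples), which is without loss of generality.
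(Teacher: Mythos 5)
Your proposal is correct and follows essentially the same route as the paper's proof: construct the single-example tree, invoke the definition of $R_n$ and the regularity assumptions to lower-bound the model's probability of the fully completed prompt by $\exp(-\mathcal{O}(R_n+\DL[\tau_\func]))\binom{|\Omega|}{n}^{-1}$ up to constants, discard the ``input-token'' contributions on average, and convert cross-entropy to zero-one loss via the $\log 2$ threshold. Your handling of the input positions (telescoping plus Jensen applied to the sub-probability $r(\xi)$, with $\binom{|\Omega|}{n}$ cancelling against $\log N$ up to $\log n!$) is just an equivalent restatement of the paper's step that adds the non-negative KL divergences between the uniform comparator $\mu(\cdot\mid Q_{m-1})$ and $M(\cdot\mid Q_{m-1})$ at those positions.
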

We provide the proof in Appendix~\ref{proof:theorem1}.

\paragraph{Remarks.}
Dependence of (\ref{eq:learning-bound}) on $R_n$ cannot in general be avoided (Appendix~\ref{sec:optimality-bound}).
The bound~(\ref{eq:learning-bound}) is information-theoretic in nature, considering the idealized predictor (\ref{eq:predictive}).
We take up the empirical behavior of real transformers pretrained on finite data in Section~\ref{sec:experiments}.

Equation~\ref{eq:learning-bound} absorbs constants that depend on the PCFG backbone, but not on $|\Omega|$. Thus, while the bound might end up vacuous when $|\Omega|$ (and thus the maximum prompt length) is small, it will always become nonvacuous when taking $|\Omega|$ to infinity while fixing the PCFG.

Various variants and extensions can be proven with the same approach.
We assumed that the response has a fixed, known length $d$, because we did not make any assumptions about the separator.
An analogous theorem holds if the length of the response is unknown a priori, but the separator $s$ does not occur in any $\spell{\func(x)}$. Even the length of $\spell{x}$ may be taken as flexible if their set is prefix-free.
The bound is robust to changes in the prompt format, such as adding symbols between $x$ and $\func(x)$.
The function $\func$ can also be taken as stochastic; in this case, a regret bound comparing to an oracle predictor that knows the task from the start holds, with an analogous proof (Appendix~\ref{sec:stochastic-functions}).
An analogous statement further holds for functions $\func$ with multiple arguments, though an adapted definition of $R_n$ is then needed; we include such functions in our experiments (Section~\ref{sec:test-tasks}).

\paragraph{Proof Intuition.}
The intuition of the proof is that an optimal predictive model $M$ implicitly identifies the generative process $\tau\in\mathcal{T}$ underlying the prompt, in order to predict the next token (Figure~\ref{fig:compositional-setup}C).
One possibility is that the prompt was generated in some unstructured manner as a random concatenation of symbols (Figure~\ref{fig:compositional-setup}C bottom); another possibility is that the recurrence of pairs $(\spell{x},\spell{\func(x)})$ throughout the string is no coincidence, and that a generative process generating a prompt-like structure underlies it (Figure~\ref{fig:compositional-setup}C center).
If $M$ was trained on an unstructured corpus, there is no reason to prefer the second hypothesis:
appearance of structure is likely to be a coincidence under the corpus-generating process, and there is no reason to extrapolate it to future tokens.
On the other hand, when the pretraining data was generated by a compositional process (as in Figure~\ref{fig:compositional-setup}A), the most parsimonious explanation of the very peculiar format of the prompt $P_n$ is as structured repetition of a single operation, leading $M$ to predict $\spell{\func(x_{m+1})}$.
The key quantities modulating the preference for the second explanation are $\DL[\tau_\func]$ and $R_n$:
the smaller these are, the greater the advantage in parsimony of a structured explanation, and the more strongly $M$ will predict the pattern to continue, i.e., predict $\spell{\func(x_{m+1})}$ as the next token.
A more complex $\func$, as in  Figure~\ref{fig:compositional-setup}D (left) may take more examples, but will nonetheless ultimately be learned: every prediction error provides some information about the function $\func$; the number of errors is thus bounded by the complexity of the structure underlying the prompt.

\paragraph{Role of Iteration Complexity.}
The key to in-context learning is the parameter $R_n$, which measures how complex repetition of an operation is:
the slower the growth of $R_n$ with $n$, the better the error bound on ICL.
If $R_n = o(n)$, the error on $P_n$ must converge to zero as $n$ increases (Figure~\ref{fig:compositional-setup}E).
This capacity is unavailable in pure PCFGs, for which $R_n \equiv +\infty$ for $n>1$ (Appendix~\ref{sec:pcfg-no-icl})\footnote{Intuitively, this follows from the fact that the copy language $\{ww : w \in \Omega^*\}$ is not context-free.}, but it arises in mildly context-sensitive languages thought to be appropriate to natural language syntax; an example is the \textit{gapping} construction in Figure~\ref{fig:compositional-setup}A.6 \citep{Kallmeyer2010OnMC}.
See Appendix~\ref{sec:minimalist-iteration} for more on the linguistic background.

\paragraph{Description Length.}
In computing description lengths in Figure~\ref{fig:compositional-setup}, we assumed that concepts such as ``capital'', ``biggest city'', or ``USD to RMB'' were given by atomic nonterminals in the generative process; however, some of these operations might themselves be best thought of as composed (e.g., ``USD to RMB'' from ``multiplication'' and ``exchange of unit symbols'', or ``biggest city'' from ``city'' and ``populations''), accordingly impacting description length.
Our learning bound is stated in terms of the description length within the formal system, remaining agnostic about the description lengths of any of these specific real-world concepts.

\paragraph{(Un)Natural Prompts and Semantic Priors}
Theorem~\ref{theorem:theorem1} is stated for prompts that simply concatenate inputs $x_i$ and outputs $\func(x_i)$, but the statement holds equivalently for other regularly structured prompts.
Real-world LLMs are often prompted with more naturalistic prompts (``\texttt{volleyball is a sport  {\textbackslash}n onions are}...''), but can also deal with other prompt formats (``\texttt{volleyball: sport  {\textbackslash}n  onions: food}...'') \citep{rong21extrapolating,DBLP:journals/corr/abs-2202-12837} and can, at least when they are sufficiently large, even learn unnatural or permuted input-output mappings (``\texttt{volleyball: animal {\textbackslash}n onions: sport {\textbackslash}n broccoli: sport}...'') \citep{rong21extrapolating,Wei2023LargerLM}.
Indeed, the proof of Theorem \ref{theorem:theorem1} provides more or less favorable bounds for more or less natural prompts:
the bound in Equation~\ref{eq:learning-bound} is derived by bounding the cross-entropy that $M$ incurs on predicting all tokens in the prompt $P_{n+1}$.
Higher probability of natural examples  compared to less natural or even unnaturally permuted ones propagates to increased probability assigned by $M$ to the entire prompt, and thereby faster convergence of ICL.
The theory thus predicts correctly that naturalistic prompts are more successful than unnatural ones, but simultaneously that sufficiently strong predictive models can ultimately override semantic priors favoring natural completions \citep{Wei2023LargerLM}.
The relation of the error bound~(\ref{eq:learning-bound}) to the cross-entropy on the prompt also explains the empirical observation that prompts assigned higher LLM likelihood tend to lead to better ICL results \citep{Gonen2022DemystifyingPI}.

\subsection{Chain-of-Thought Prompting}\label{sec:cot-theory}

Empirical research has observed that ICL for complex tasks benefits when models are prompted to provide intermediate steps before the answer \citep[e.g.][\textit{chain of thought prompting}]{DBLP:journals/corr/abs-2112-00114, Wei2022Chain,Suzgun2022ChallengingBT}. 
We formally study this in the simple context of computing composed functions $\func_1 \circ \func_2$.
Here, chain-of-thought prompting conceptually corresponds to prompting the model to output both an intermediate step $\func_1(x_{n})$ and the result $\func_2(\func_1(x_{n}))$ (Figure~\ref{fig:compositional-setup}D).
Applying Theorem 1
to either direct prompting or a version with the intermediate step results in a bound depending on $\DL[\tau_{\func_1 \circ \func_2}]$. 
We now show a better bound for the chain-of-thought version, where the intermediate step is provided before the answer:
the error in each of the two steps can be bounded individually by the description of only one function. 
While one cannot, without further assumptions, expect a bound that holds pointwise for each pair $\func_1, \func_2$, we prove a bound that holds \emph{pointwise} on the component of interest and \emph{on-average} on the other component:
\begin{theorem}[Chain-of-Thought Prompting]\label{theorem:cot}
Let any CAG be given, satisfying the regularity assumptions, including the associated trees $\trees$, yield map $\yield$, and predictive distribution $M$, with the associated quantities $R_n$.
    Let $\func_1 : \Omega \rightarrow \Omega$ be a function expressed by a derivation tree $\tau_{\func_1}\in\trees$.
    Let $\func_2 : \Omega \rightarrow \Omega$.
    Let $s \in \Sigma$.
    Consider the prompt
    \begin{equation}\label{eq:first-prompt}
    P^{(1)}_n = \overline{x_1 \func_1(x_1) \func_2(\func_1(x_1))} s \dots s \overline{x_m \func_1(x_m) \func_2(\func_1(x_m))} s \overline{x_{m+1}}
    \end{equation}
    with expected completion $\overline{\func_1(x_{m+1})}$,
    or
   \begin{equation}\label{eq:second-prompt}
    P^{(2)}_m = \spell{x_1 \func_2(x_1) \func_1(\func_2(x_1))} s \dots \spell{x_m \func_2(x_m) \func_1(\func_2(x_m))} s \spell{x_{m+1} \func_2(x_{m+1})}
    \end{equation}
    with expected completion $\overline{\func_1(\func_2(x_{m+1}))}$.
    On average across arbitrary functions $\func_2$ and across pairwise distinct sequences $x_1, \dots, x_n \in \Omega$, and summed over $m=1, \dots, n$, the zero-one-error on each of the two prompts is bounded by
    \begin{equation}
        \mathcal{O}(R_n + \DL[\tau_{\func_1}])
    \end{equation}
    with constants depending on the PCFG, $s$, and $\mathbb{E}[|d|]$, but not $\func_1$, $|\Omega|$, or $n$.
\end{theorem}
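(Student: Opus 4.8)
\emph{Plan.} The strategy is to reduce both statements to the machinery already behind Theorem~\ref{theorem:theorem1}, viewing each prompt as a repeated application of a ``gadget'' in which $\func_1$ is compiled in through $\tau_{\func_1}$ while the $\func_2$-dependent coordinate plays the role of an uninformative \emph{nuisance} slot. As in the proof of Theorem~\ref{theorem:theorem1}, the first step is to bound the summed zero-one error by the cross-entropy that $M$ incurs on the answer tokens only: an error at step $m$ forces $M$ to assign mass at most $\tfrac12$ to the correct completion there, so it contributes at least $\log 2$ to $-\sum_t \log M(\cdot\mid\cdot)$ restricted to the answer positions, whence the number of errors over $m=1,\dots,n$ is at most $\tfrac{1}{\log 2}$ times that restricted cross-entropy.

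Next I would build a structured derivation generating a document that has the full prompt (all $n$ blocks together with the queried answer token) as an infix. For $P^{(1)}$ the block is $\overline{x\,\func_1(x)\,\ast}\,s$ and for $P^{(2)}$ it is $\overline{x\,\ast\,\func_1(\ast)}\,s$, where ``$\ast$'' is produced by a fresh draw inside the gadget; the gadget therefore has description length $\mathcal{O}(\DL[\tau_{\func_1}])$ (it is $\tau_{\func_1}$ plus $\mathcal{O}(1)$ wrapping that emits $\overline{x}$, the separator, and the nuisance symbol). Iterating this gadget over $x_1,\dots,x_n$ costs $R_n$ by definition of the iteration complexity, and Lemma~\ref{prop:prob-dl-bound} gives $-\log M(\text{prompt})\le \rho\big(R_n+\mathcal{O}(\DL[\tau_{\func_1}])\big)+(\text{cost of the nuisance symbols})+\mathcal{O}(1)$, where the additive $\tfrac1\rho\log\binom{|\Omega|}{n}$-type term and the dependence on the choice of $x_1,\dots,x_n$ vanish after averaging over that choice exactly as in Theorem~\ref{theorem:theorem1}. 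The restricted (answer-token) cross-entropy equals the total cross-entropy minus the cross-entropy at the nuisance positions (minus the non-negative contributions at the $\overline{x_i}$ and separator positions), so it is at most $\rho\big(R_n+\mathcal{O}(\DL[\tau_{\func_1}])\big)+\big[(\text{cost of nuisance symbols})-(\text{cross-entropy of }M\text{ at nuisance positions})\big]+\mathcal{O}(1)$.

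It then remains to show the bracket is non-positive in expectation over $\func_2$. The key point is that the nuisance coordinate is something \emph{every} generative explanation of the prompt must also reproduce, so its cost is refunded by the information $M$ genuinely lacks about it: $\mathbb{E}_{\func_2}\!\big[\text{cross-entropy of }M\text{ at a nuisance position}\mid\text{prefix}\big]\ge H(\text{that symbol}\mid\text{prefix})$ for a uniformly random $\func_2$. For $P^{(2)}$ the nuisance symbols are $\func_2(x_1),\dots,\func_2(x_n)$ evaluated at the pairwise-distinct inputs $x_i$; over a uniform $\func_2$ they are i.i.d.\ uniform and independent of everything emitted earlier, so their summed conditional entropy is exactly $n\log|\Omega|$, matching the cost of generating them by fresh uniform draws, the bracket is $0$, and the bound $\mathcal{O}(R_n+\DL[\tau_{\func_1}])$ follows. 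For $P^{(1)}$ the nuisance symbol $\func_2(\func_1(x_i))$ is a deterministic function of the already-emitted token $\func_1(x_i)$, hence takes at most $|\func_1(\{x_1,\dots,x_n\})|$ distinct values; I would therefore replace the ``fresh draw at every block'' gadget by a hierarchical one that draws those at-most-$|\func_1(\Omega)|$ values once at the root and routes the correct one into each block using $\tau_{\func_1}$, so that both the generation cost and the entropy refund equal $|\func_1(\{x_i\})|\log|\Omega|$ and the bracket is again non-positive.

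The main obstacle is exactly this last accounting for $P^{(1)}$. The iteration underlying the definition of $R_n$ passes only a fixed attribute tuple and carries no memory of earlier repetitions, so the routing sending $\func_2(\func_1(x_i))$ into the $i$-th block must be arranged either via attributes set at the root --- whose description length has to be charged and, crucially, made to \emph{coincide with} rather than merely dominate the refund $|\func_1(\{x_i\})|\log|\Omega|$ --- or via the yield logic of the iterating rule itself; one must check this is possible with overhead $\mathcal{O}(\DL[\tau_{\func_1}])$, uniformly in $|\Omega|$ and in how non-injective $\func_1$ is. Finally, the averaging over $\func_2$ (and over the $x$-sequence) is not cosmetic: for an adversarial $\func_2$ a spuriously simpler wrong function could agree with $\func_1\circ\func_2$ on an initial segment of the prompts; averaging washes out such coincidences and is what makes the constants independent of $\func_2$, yielding a bound that holds pointwise in $\func_1$ and on-average in $\func_2$.
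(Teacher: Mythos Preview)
Your high-level strategy matches the paper's: build a per-example gadget tree that compiles in $\tau_{\phi_1}$ while treating the $\phi_2$-dependent token as a uniformly random nuisance slot, then invoke the machinery of Theorem~\ref{theorem:theorem1}. The difference is in how the nuisance slot is handled, and the paper's route is considerably simpler than your cost-vs-refund bookkeeping.

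The paper uses the \textsc{Marginalization} closure assumption (Section~\ref{sec:assumptions}) to absorb the nuisance draw \emph{into the gadget's yield}: one constructs trees $\tau^{(1)},\tau^{(2)}$ with $\yield(\tau^{(1)},\langle x\rangle,r)=\overline{x}\,\overline{\phi_1(x)}\,\overline{q}\,s$ and $\yield(\tau^{(2)},\langle x\rangle,r)=\overline{x}\,\overline{q}\,\overline{\phi_1(q)}\,s$, where $q$ is uniform over $\Omega$ as $r$ varies, each of description length $\DL[\tau_{\phi_1}]+O(1)$. Substituting these for the $\tau$ of~(\ref{eq:tau-x-fx}) and rerunning the proof of Theorem~\ref{theorem:theorem1} verbatim yields the $\mathcal{O}(R_n+\DL[\tau_{\phi_1}])$ bound on completing the auxiliary prompts $P^{(i)}_n(\mathbf{q})$, averaged over pairwise-distinct $\mathbf{x}$ and i.i.d.\ uniform $\mathbf{q}$ (the latter coming for free from the independent randomness in each iteration). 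No explicit decomposition into answer-token versus nuisance-token cross-entropy is needed: the nuisance is already part of both $\mu$ and the tree's yield, so the $\mu/M_k$ ratio bound goes through exactly as in Theorem~\ref{theorem:theorem1}. The final step is then just the observation that the distribution of $P^{(i)}_n(\mathbf{q})$ over $\mathbf{q}$ coincides with the distribution of the original prompt over uniform $\phi_2$, which identifies the $\mathbf{q}$-average with the $\phi_2$-average.

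Your self-identified obstacle for $P^{(1)}$---that when $\phi_1$ is not injective on $\{x_i\}$ the values $\phi_2(\phi_1(x_i))$ are not independent, so a fresh-draw-per-block gadget overpays relative to the entropy refund---lands exactly at this last identification step: the equality of the $\mathbf{q}$-distribution and the $\phi_2$-distribution is literal only when $\phi_1$ is injective on the sequence. The paper asserts the identification without further comment; your hierarchical-routing idea is one way to make the non-injective case precise. But the cleaner scaffolding is still the paper's: reach the $\mathbf{q}$-averaged bound directly via \textsc{Marginalization} and then argue the transfer to the $\phi_2$-average, rather than carrying the nuisance cost and refund through the whole argument.
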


We prove this in Appendix~\ref{sec:proof-cot}.
The proof idea is that in each step, the other function can be effectively ignored in inferring the compositional process.
While we focus on the composition of two functions, an analogous statement and proof hold for longer composition chains.

In the worst case, the two components could make errors on disjoint sets of inputs, so that the errors would add up, giving the same asymptotics as without chain-of-thought prompting.
But in the more realistic situation where both errors are high for short prompts and then go to zero once the tasks are identified, the overall error will just be the larger one of the two component tasks' errors (Figure~\ref{fig:compositional-setup}E right).
This indeed is close to what happens in our experiments (Figure~\ref{fig:cot-by-prompt}).

This theoretical benefit is not available when providing the intermediate step \emph{after} the solution, as an \emph{explanation} rather than a chain-of-thought.
In this version, the first step amounts to solving the composed task in one go, leading to an error bound only in terms of $\DL[\tau_{\func_1\circ\func_2}]$.
Indeed, in real-world LLMs, providing intermediate steps before the answer seems to be much more effective than providing it after the answer \citep{Wei2022Chain,DBLP:conf/emnlp/LampinenDCMTCMW22}.

\subsection{Comparison to \citet{DBLP:conf/iclr/XieRL022}}\label{sec:comparison-xie}
We discuss how this theoretical analysis of ICL relates to and differs from the analysis proposed by \citet{DBLP:conf/iclr/XieRL022}.
They model the pretraining data as a mixture of HMMs, and cast ICL as Bayesian identification of one of these mixture components (an analogous idea is sketched by \citet{yang203large}). 
Each HMM mixture component is thought to describe some type of text, e.g., Wikipedia biographies, newspaper articles, or tweets.
A prompt (e.g., \texttt{Albert Einstein was German {\textbackslash}n Mahatma Gandhi was Indian {\textbackslash}n Marie Curie was}) is then identified as a concatenation of text samples resembling some of these components (e.g., biographies).
Our analysis likewise can be understood in terms of Bayesian inference, in that $M$ implicitly identifies a generative process $\tau$ underlying the prompt.
The most important difference between our analysis and that of \citet{DBLP:conf/iclr/XieRL022} is that we aim to account for the flexible and open-ended nature of prompting capabilities in LLMs by leveraging the compositional nature of natural-language data: 
Whereas \citet{DBLP:conf/iclr/XieRL022} analyzed the task of recovering one of a \emph{fixed space} of HMMs which made up the training corpus, we explain ICL as identifying a task from an \emph{open-ended} hypothesis space of tasks \emph{compositionally recombining} operations found in the training corpus.
Whereas \citet{DBLP:conf/iclr/XieRL022} focused their discussion of ICL on entity-property associations (e.g., nationalities), our approach makes it possible to study ICL on tasks of varying complexity and structure within a single framework (Figure~\ref{fig:compositional-setup}D), including variants such as chain-of-thought prompting (Section~\ref{sec:cot-theory}).

The theorems in \citet{DBLP:conf/iclr/XieRL022} study general recoverability of HMM mixture components from a single sample in the presence of repeated low-probability transitions (caused by the prompt structure), with few assumptions about the HMM. 
However, the application of these theorems to explaining ICL of classification tasks (e.g. mapping famous people to their nationalities) relies on an important additional assumption, which closely relates to Iteration Complexity: 
HMM transitions happen within each mixture component (e.g., Wikipedia articles about people), but not between different mixture components (e.g., articles about people; articles about cities; tweets).
This encodes an implicit modeling assumption that similar text about different entities (e.g., biographies of different people) tends to appear contiguously in the pretraining corpus.
This assumption is a specific form of the more general idea that the generative process underlying natural language can produce repetition of the same operation applied to different entities, formalized by Iteration Complexity $R_n$.
Similar assumptions implicitly underlie other work aiming to empirically induce ICL in controlled setups  by training on prompt-like inputs \citep{DBLP:journals/corr/abs-2208-01066,DBLP:journals/corr/abs-2205-05055,DBLP:journals/corr/abs-2211-15661}.
We will experimentally compare such training data with CAG-based pretraining data.

\section{Experiments}\label{sec:experiments}

\subsection{Training Datasets}

\begin{figure*}
\centering

\includegraphics[width=0.98\textwidth]{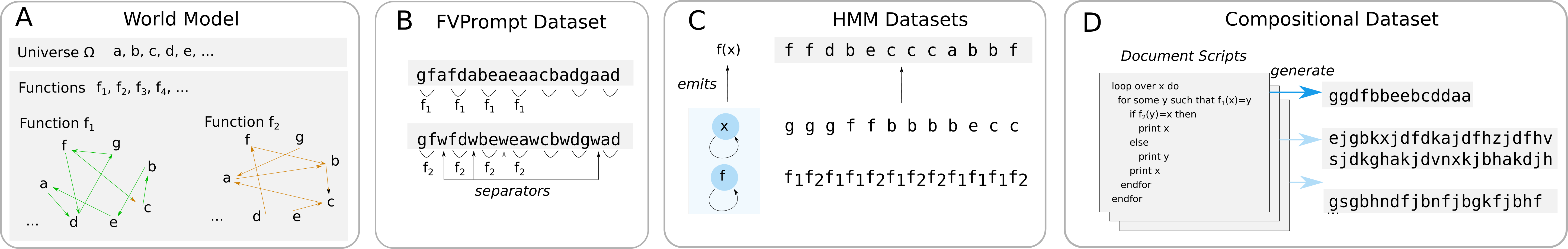}

	\caption{Setup and datasets.
	(A)
We assume a logical model $\mathcal{M}$ consisting of a universe $\Omega$ and a set of functions, visualized here as directed graphs.
	(B--D) We consider three types of generative processes for the training dataset used in pretraining.
	(B) The \textsc{FVPrompt} dataset directly encodes the functions in a prompt-based format.
	(C) The \textsc{HMM} datasets, adapting GINC \citep{DBLP:conf/iclr/XieRL022}, are generated by HMMs whose state space consists of object-function pairs.
	(D) In the \textsc{Compositional} dataset, documents are generated by scripts that have access to the world model and contain arbitrary but compositionally structured instructions; they correspond to a very simple CAG.
	}\label{eq:setup}
\end{figure*}

We have described an information-theoretic analysis characterizing when broad ICL capabilities become possible for an idealized predictive model reflecting a linguistically motivated string distribution.
We now empirically verify whether the predicted behavior can emerge in transformer models pretrained on finite data sampled from a CAG.
We define a suite of in-context learning tasks, and benchmark transformers pretrained on several types of controlled miniature datasets: some modeled closely on prior work, and one representing a minimal CAG.

\paragraph{World Models.}
All training datasets are based on a world model $\mathcal{M}$ consisting of a finite universe $\Omega$ and a set $\Ff$ of functions $f : \Omega \rightarrow \Omega$. 
We focus on functions of arity~1 for simplicity, and will logically define more complex functions in terms of these. 
In creating documents, we make the simple assumption that $\Omega=\Sigma$ and $\omega=\spell{\omega}$.

\paragraph{Function Value Prompts.} 
Our first dataset (\textsc{FVPrompt}, Figure~\ref{eq:setup}B) has a prompt-based format: pairs of $x$ and $f(x)$, for functions $f \in \Ff$ (fixed within a document), with an intervening separator randomly chosen (fixed within a document) from $\Omega$. 
Each prompt includes all $x\in\Omega$ in random order.
Analogous training datasets have been  used in prior work experimentally inducing ICL in constrained setups \citep{DBLP:journals/corr/abs-2205-05055,DBLP:journals/corr/abs-2208-01066,DBLP:journals/corr/abs-2211-15661,DBLP:journals/corr/abs-2301-07067}.

\paragraph{HMM5 and \textsc{HMMPerDoc}.}
Our next dataset (\textsc{HMM5}, Figure~\ref{eq:setup}C) closely follows the GINC dataset proposed by \citet{DBLP:conf/iclr/XieRL022}:
the dataset is generated by a mixture of five HMMs; each HMM state has the form $\langle x,f\rangle \in \Omega\times\Ff$ and emits $f(x)$; the two components evolve independently according to separate transition matrices.
The transition matrices are defined as in \citet{DBLP:conf/iclr/XieRL022}; we provide these in Appendix~\ref{sec:HMM} for reference.
This dataset has more diversity than \textsc{FVPrompt}, but still less than the unbounded state space arising from a PCFG-based compositional system. 
We thus considered a variant, \textsc{HMMPerDoc}, where separate permutation matrices were sampled for each document, and no mixing or averaging was applied (Appendix~\ref{sec:HMM}).

By assuming a fixed finite state space, the generative processes underlying \textsc{HMM5} and \textsc{HMMPerDoc} cannot express unbounded composition.
However, due to the specific factorized design, they incorporate a bias towards repeating the same operation (in this case, sequences of evaluations $f_i(x)$) on different objects; it can be viewed as a simple instantiation of loop operations.
Indeed, this modeling choice is key to \citet{DBLP:conf/iclr/XieRL022}'s argument for this as a model of ICL, whereby ICL arises because similar text about different entities (e.g., Wikipedia articles about different people) tends to appear contiguously in the pretraining corpus.

\paragraph{Compositional Document Scripts.}
Finally, we define a minimal CAG (\textsc{Compositional}, Figure~\ref{eq:setup}D), including minimal language features needed for our theoretical analysis:
a loop construct, a construct introducing new attributes standing in functional relationships to existing attributes---so that functions $f \in \Ff$ can be defined, a terminal that outputs the value of an attribute, and a conditional construct (``if-then-else'').
We ablate each component below.

This can be intuitively described as a minimalistic programming language; we'll refer to the derivation trees as \emph{document scripts} and write them as programs (Figure~\ref{fig:backus-naur}).
Attributes correspond to variables in a script.
Loops are executed for 10 random objects $\omega\in\Omega$.
The ``for some'' statement selects a random satisfying object if more than one exists, and is skipped if none exist.
The syntax tree of a script corresponds to the derivation tree $\tau$, and the stochastic map from scripts to output strings corresponds to the yield function $\yield$ in a CAG (see Appendix~\ref{sec:scripts-are-cvcg}).
Due to the ``for all'' construct, $R_n=1$ for $n \leq 10$.
In order to sample scripts for document generation, we defined a PCFG-like distribution over syntactically valid scripts (corresponding to the PCFG backbone of CAGs), favoring scripts generating documents within our LM's context length (64).
See Appendix~\ref{sec:acc:generating-scripts} for details,

We provide examples in Appendix~\ref{sec:sample-docs}.
Intuitively, this generative process represents agents that have access to the world model and produce text according to arbitrary but compositionally structured instructions.
Unlike natural language, the documents do not share systematic generalizations such as vocabulary or grammar, nor do they have function words indicating structural relations between words.
They make up for this lack of structure by containing increased amounts of repetition within a document. 
We discuss the relation to and differences from natural language in Section~\ref{sec:discussion}.

Our research questions are:
\begin{enumerate}
\item Does ICL appear when training real-world transformers on finite data generated from a minimal CAG? How do models trained on CAG data compare to models trained on the other datasets?
\item Are the predictions of Theorems 1--2 borne out, i.e., effect of description length, dependence on $|\Ff|$, independence from $|\Omega|$, advantage of chain-of-thought prompting?
\item In ICL, can transformers recombine operations never seen together during pretraining?
\item What are the dynamics of emergence?
\end{enumerate}

\subsection{Training Setup}

\paragraph{Models.}
We train GPT2-like \citep{Radford2019LanguageMA} models for next-token prediction using the HuggingFace transformers library \citep{Wolf2019Huggingface}.
Varying the numbers of dimensions, layers, and heads (Table~\ref{tab:model-sizes}), we considered models with 14M (small, 2 layers), 21M (medium, 3 layers), 42M (large, 6 layers), and 85M (XL, 12 layers) parameters. 
See Appendix~\ref{sec:training-details} for training details.

\begin{table}
	\centering
	\begin{tabular}{l|lllll}
   	       & $d$ & Heads & Layers & Parameters \\ \hline
		Small  & 64 & 2 & 2 & 14M \\ 
		  Medium & 128 & 2 & 3 & 21M \\ 
		Large  & 256 & 8 & 6 & 42M \\ 
		XL & 768 & 12 & 12 & 85M \\ 
\end{tabular}

	\caption{Model Sizes. The XL model has the same architecture as GPT2-Small, but a smaller vocabulary.}\label{tab:model-sizes}
\end{table}

\paragraph{Worlds.}
We focus on $|\Omega|=30$ and $|\Ff|=10$, and additionally varied $|\Ff|=5,10,20,30$ and $|\Omega|=30, 100,300$.
Functions were created randomly; $f_1$ was the identity function.
Furthermore, 
we designated functions $f_2$, $f_3$ such that no script included both $f_2$ and $f_3$: we did this in order to test whether models could generalize to contexts simultaneously requiring knowledge of both functions, which cannot be generated by scripts in the training distribution.

For each world $\mathcal{M} = \langle\Omega,\Ff\rangle$, we generated 500M tokens of training data, for each of the four data-generating procedures.
In the main setup, we additionally created five more worlds  and trained medium-size models on compositional data to verify robustness to sampling of the world (Appendix~\ref{sec:stability-world}).
Documents were concatenated, separated by a \textsc{StartOfSequence} symbol.
Data was fed to the model in portions of 64 tokens, across document boundaries.
Training was performed for up to 20 epochs, or until held-out cross-entropy stopped improving, which happened earlier for noncompositional datasets.

\begin{figure*}
\centering

\includegraphics[width=0.98\textwidth]{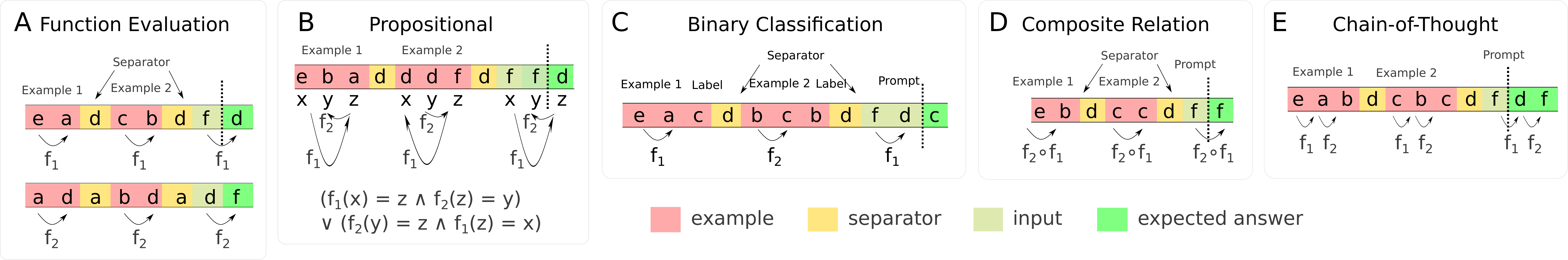}

	\caption{Test tasks.	(A) The \textsc{FunctionEvaluation} task prompts the model to apply a function to a given element.
	(B) \textsc{Propositional} tasks generalize this by asking the model to find a (unique) $z$ satisfying a relation $\varphi(x,y,z)$, where $x,y$ are the inputs. 
	(C) \textsc{Binary} tasks prompt the model to decide which of two funcional relationships holds between a pair of objects. 
	(D) The \textsc{Composition} Task prompts the model to compute the composition of two functions.
	(E) In the \textsc{Chain-of-Thought} version, the model is prompted to also produce an intermediate step. 
	Across (A--E), the models need to identify the task from the prompt without any further training or instruction.
	Note that there are no separate types reserved for labels or separators, and the models have no access to markup indicating the components of the prompt. Similar to large-scale LMs, the models need to figure out the structures of different prompts on-the-fly.
	}\label{fig:tasks}
\end{figure*}

\subsection{Test Tasks}\label{sec:test-tasks}

In order to evaluate ICL capabilities, we collected a suite of test tasks that are definable relative to the world $\mathcal{M} = (\Omega, \Ff)$ (Figure~\ref{fig:tasks}).
Tasks are defined using first-order logic formulas $\varphi$ with literals of the form $f(X)=Y$ ($f \in \Ff$), with
free variables matching the inputs $x \in \Omega$ (or $x,y \in \Omega$) and the expected output $z$. 
Given inputs $x \in \Omega$ (or $x,y$), the expected output $\func(x)$ (or $\func(x,y)$) is the $z$ satisfying the formula.
All prompts are chosen so that $z$ is uniquely determined for all included $(x,y)$, including the examples.
The simplest task is the \textsc{FunctionEvaluation} task (Figure~\ref{fig:tasks}A):
given $x \in \Omega$, compute $\func(x):=f_i(x)$---i.e., the $z$ such that $\varphi(x,y,z) \equiv (f_i(x)=z)$ holds.

\paragraph{Prompt Format.} 
We encoded all test tasks into a prompt-based format for evaluating ICL.
Across tasks, examples are separated by a random (but fixed within a prompt) separator $s \in \Omega$. 
When there are two input variables, 
each prompt is of the form 
\begin{equation}
    x_1 y_1 z_1 s x_2 y_2 z_2 s \ldots x_k y_k z_k s x_{k+1} y_{k+1}
\end{equation}
where each tuple $(x_i,y_i,z_i)$ satisfies the formula $\varphi(x_i,y_i,z_i)$ defining the task.
When there is one input variable, each $y_i$ is omitted.

\paragraph{Propositional.}
Our next set of tasks is defined by first-order formulas without quantifiers (see Appendix~\ref{sec:task-formulas} for list). Besides the function evaluation task, its \textsc{Inverse} (given $x \in \Omega$, output some $z$ such that $f(z)=x$) is also definable using one literal.
We next constructed more complex formulas with two input variables $x,y$.
Each formula was in DNF, such that each term had 2 literals.
We choose 2 literals because this allows encoding the functional relationships between the three variables $x,y,z$.
The number of literals in $\varphi$ provides a proxy for the description length $\DL[\tau_\func]$ in the minimal CAG (Appendix~\ref{sec:test-representability}).
For instance, one task (``missing link'', Figure~\ref{fig:tasks}B, \#2 in Appendix~\ref{sec:task-formulas}) assumes that either $x=f_i(f_j(y))$ or $y=f_i(f_j(x))$; $z$ then is the intermediate element (either $f_j(y)$ or $f_j(x)$); defined by
$\varphi(x,y,z) \equiv (f_j(x) = z \wedge f_i(z) = y) \vee (f_j(y) = z \wedge f_i(z) = x)$.

\paragraph{Composed.}
We furthermore considered a set of tasks that require reasoning about an unobserved variable, or, equivalently, require evaluating composed functions in one go (see Appendix~\ref{sec:task-formulas} for list).
One example (\textsc{Composition}, Figure~\ref{fig:tasks}C) is defined as $\varphi(x,z) \equiv \exists a : a=f_i(x) \wedge z=f_j(a)$; here, $z = f_j(f_i(x))$.

\paragraph{Binary Classification.}

Beyond multi-class classification, LLMs can solve novel classification and reasoning tasks.
To test for the emergence of such abilities, we created tasks that require discriminating between two types of examples and output a binary label $\ell_1, \ell_2 \in \Omega$ (fixed within a prompt).
For instance, the \textsc{RelationClassification} Task (Figure~\ref{fig:tasks}D) asks the model to decide 
 whether $y=f_i(x)$ ($z=\ell_1$) or $y=f_j(x)$ ($z=\ell_2$), assuming exactly one of these is true.

\paragraph{Experimental Details.}
All functions $f_i, f_j, ...$  are varied across prompts, but fixed  within a prompt.
We first exclude the designated functions $f_2,f_3$, and later evaluate performance on them separately.
We also excluded the identity function $f_1$.
Each input $x$ or $(x,y)$ appears at most once within a prompt.
Inputs where the answer is either ambiguous or undefined (e.g., in the \textsc{Inverse} task, if $z$ is the image of zero or two elements under $f_i$) were excluded.

We evaluate on prompts with an even number of examples, ranging from $2$ to $14$; this exhausts the LM's context size for some tasks.
In the binary classification tasks, the number of examples was balanced between the classes.
In the \textsc{Propositional} or \textsc{Composed} tasks involving disjunction, all disjuncts were represented equally, up to a difference of at most one to make up for non-divisibility.

Importantly, there are no separate types reserved for labels or separators, and the model has no access to markup indicating the components of the prompt: like real-world LLMs, models are asked to figure out the structures of different prompts on-the-fly.

\subsection{Results}

\paragraph{Compositional training dataset enables ICL on composed tasks.}
Figure~\ref{fig:comparison-baselines} shows results across tasks for the four training datasets, as a function of the training steps (\# of processed tokens), for the models with 85M parameters.
As expected, models trained on \textsc{HMM5} cannot solve the ICL tasks, and models trained on \textsc{FVPrompt} can only solve the function evaluation task.
Models trained on \textsc{HMMPerDoc} achieve above-chance performance on the \textsc{Propositional} tasks, though not on the \textsc{Binary} or \textsc{Composed} tasks.
For models trained on \textsc{Compositional} 
near-perfect accuracy is achieved on \textsc{FunctionEvaluation} and other tasks with few literals, but not on \textsc{Binary} tasks for which above-chance accuracy is achieved. 
\textsc{Binary} tasks and tasks with many literals are the most difficult.

Focusing on models trained on \textsc{Compositional}, we next investigated how accuracy scales with various parameters, focusing on the diverse \textsc{Propositional} tasks.

\paragraph{Scaling with prompt length.}
Theorem~\ref{theorem:theorem1} provides a bound on the summed errors across prompt lengths, guaranteeing faster convergence for functions with small description length.
Figure~\ref{fig:scaling}A shows that accuracy increases with prompt length.
In agreement with the theorem, longer prompts tend to be necessary for tasks with more literals (i.e., higher description length).

\paragraph{Increasing $|\Ff|$ makes ICL harder, increasing $|\Omega|$ does not.}
Another prediction of Theorem~\ref{theorem:theorem1} concerns the size of the world model.
Recall that Equation~\ref{eq:learning-bound} depends on the PCFG, $R_n$, $s$, and $\DL[\tau_\func]$.
As each $f \in \Ff$ has its own production (Figure~\ref{fig:backus-naur}), dependence on the PCFG is less favorable when $|\Ff|$ increases even if $\DL[\tau_\func]$ stays the same.
On the other hand, in our experimental setup, none of these parameters change with $|\Omega|$. 
We thus expect that increasing $|\Ff|$  should make tasks harder, but increasing $\Omega$ should not.\footnote{Equation~\ref{eq:learning-bound} depends on $s$. As our experiments randomize $s$ over $\Omega$, the PCFG is decoupled from $s$,  removing this dependence. See Remark 2 in Appendix~\ref{proof:theorem1}.}
This was borne out when re-fitting at $|\Ff|= 20, 30$  (Figure~\ref{fig:scaling}B), and at $|\Omega|=100, 300$ (Figure~\ref{fig:scaling}C).
Indeed, ICL accuracy \emph{improved} on \textsc{Propositional} (no improvement on other tasks, see Figure~\ref{fig:additional}) when increasing $|\Omega|$; most remarkably, tasks with eight literals are now learnt at almost perfect accuracies at the same prompt length (see Appendix~\ref{sec:effect-of-omega} for explanation).

\paragraph{Emergence with data and model size.}
We next evaluated the role of model size (Figure~\ref{fig:scaling}E).
Two observations are salient.
First, increasing model size leads to smaller gains on tasks with few literals (accuracy was already high for these tasks), and large gains on tasks with many literals.
Second, Figure~\ref{fig:scaling} show that accuracy follows a pattern of sudden emergence over the course of pretraining---in particular, in those cases where very high accuracy is ultimately reached (as when $|\Omega|=300$, Figure~\ref{fig:scaling}B): a period of flat at-chance performance precedes a sudden increase in accuracy, followed by a mostly flat phase.
This stands in contrast with the evolution of pretraining cross-entropy, which decreases continuously (Figure~\ref{fig:scaling}D).

\begin{figure*}
\centering
\includegraphics[width=0.65\textwidth]{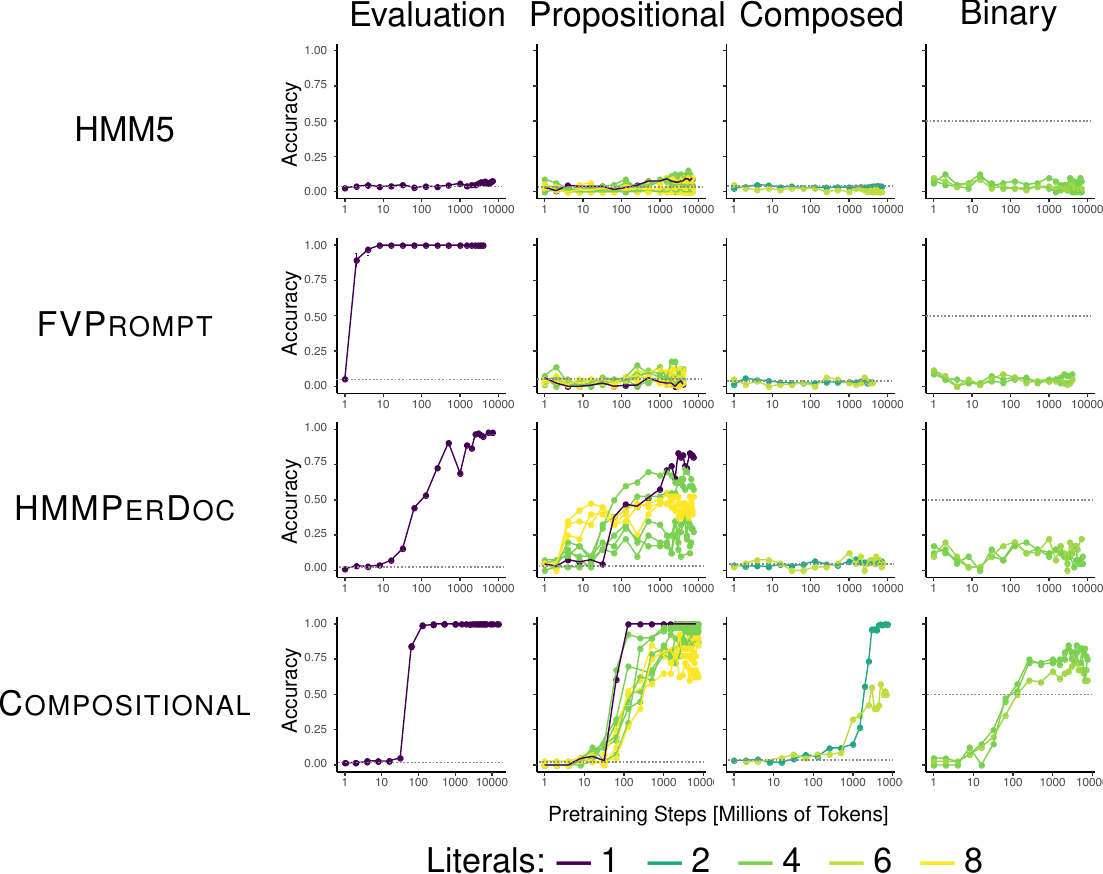}

	\caption{Compositional training data enables emergent in-context learning.
 For each pretraining dataset (rows) and each group of test tasks (columns), we show accuracy (y-axis) as a function of the number of tokens processed in pretraining (x-axis, in millions); one epoch corresponds to 500M tokens.
	Models were trained until held-out loss stopped improving but at most for 20 epochs ($10^{10}$ tokens).
	All results at 85M parameters and the longest prompt length (14 examples).
	Dotted lines indicate chance accuracy.
	The \textsc{HMM5} dataset does not lead to ICL.
	The \textsc{FVPrompt} dataset does not lead to generalization beyond the simple function evaluation task.
	With \textsc{HMMPerDoc} and \textsc{Compositional}, above-chance performance on composed tasks becomes possible.
	\textsc{Compositional} achieves above-chance accuracy on all task groups.
	}\label{fig:comparison-baselines}
\end{figure*}

\begin{figure*}
	\centering
\includegraphics[width=0.8\textwidth]{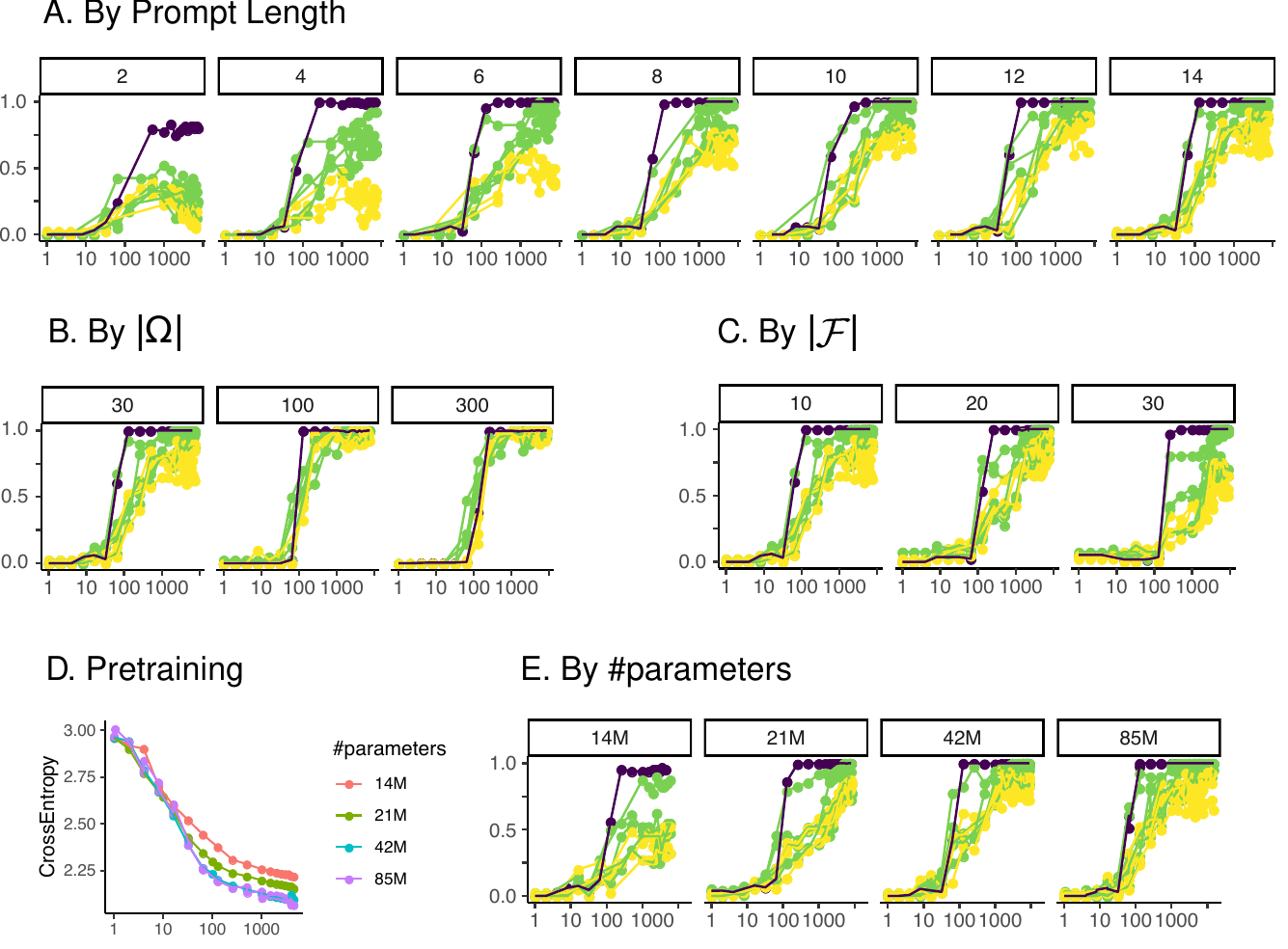}

\definecolor{col0}{HTML}{440154}
\definecolor{col1}{HTML}{46286D}
\definecolor{col2}{HTML}{414487}
\definecolor{col3}{HTML}{3B5E8B}
\definecolor{col4}{HTML}{2A788E}
\definecolor{col5}{HTML}{2B9089}
\definecolor{col6}{HTML}{22A884}
\definecolor{col7}{HTML}{5ABC6D}
\definecolor{col8}{HTML}{7AD151}
\definecolor{col9}{HTML}{C0DD40}
\definecolor{col10}{HTML}{FDE725}

	\textsf{Literals:	\textbf{\textcolor{col0}{---}} 1\ \ \ 
	\textbf{\textcolor{col6}{---}} 2\ \ \ 
	\textbf{\textcolor{col8}{---}} 4\ \ \ 
	\textbf{\textcolor{col9}{---}} 6\ \ \ 
	\textbf{\textcolor{col10}{---}} 8}

	\caption{Scaling of accuracy on the \textsc{Propositional} tasks. 
 As in Figure~\ref{fig:comparison-baselines}, the $x$-axis denotes the number of tokens processed in pretraining. otherwise stated, prompt length is 14, $|\Omega|=30$, $|\Ff|=10$, model size 85M parameters.}\label{fig:scaling}
\end{figure*}

\begin{figure*}
	\centering
       \begin{tikzpicture}
		\node (label) at (0,2)[draw=none, align=left, anchor=center]{\includegraphics[width=0.45\textwidth]{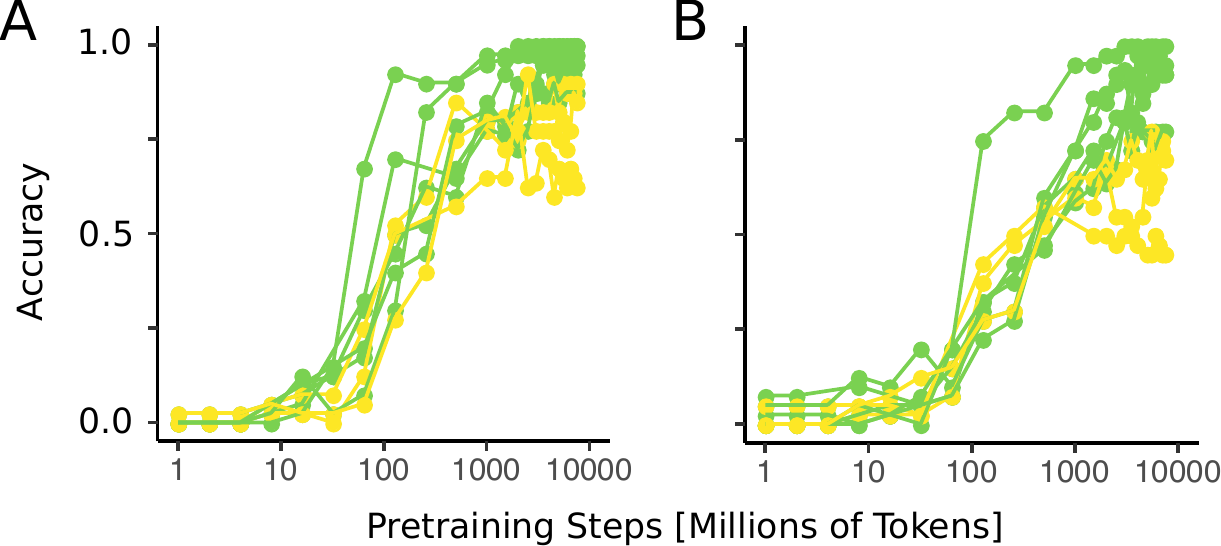} };
\end{tikzpicture}

\definecolor{col0}{HTML}{440154}
\definecolor{col1}{HTML}{46286D}
\definecolor{col2}{HTML}{414487}
\definecolor{col3}{HTML}{3B5E8B}
\definecolor{col4}{HTML}{2A788E}
\definecolor{col5}{HTML}{2B9089}
\definecolor{col6}{HTML}{22A884}
\definecolor{col7}{HTML}{5ABC6D}
\definecolor{col8}{HTML}{7AD151}
\definecolor{col9}{HTML}{C0DD40}
\definecolor{col10}{HTML}{FDE725}

	\textsf{Literals:	\textbf{\textcolor{col0}{---}} 1\ \ \ 
	\textbf{\textcolor{col6}{---}} 2\ \ \ 
	\textbf{\textcolor{col8}{---}} 4\ \ \ 
	\textbf{\textcolor{col9}{---}} 6\ \ \ 
	\textbf{\textcolor{col10}{---}} 8}

	\caption{Recombination: For \textsc{Propositional} tasks with at least two different functions, we show results when all function pairs appeared in overlapping sets of document scripts om pretraining (left), and when two functions appeared in disjoint sets of document scripts (right).}\label{fig:recombination}
\end{figure*}

\definecolor{col1}{HTML}{F8766D}
\definecolor{col2}{HTML}{7CAE00}
\definecolor{col3}{HTML}{00BFC4}
\definecolor{col4}{HTML}{C77CFF}

\begin{figure*}
\centering

        \begin{tikzpicture}

		\node (label) at (2.5,1.8)[draw=none, align=left, anchor=center]{	\footnotesize{\textsf{\textsc{Raw} }}};
		\node (label) at (1.4*4-0.1,1.8)[draw=none, align=left, anchor=center]{	\footnotesize{\textsf{\textsc{Explanation} }}};
		\node (label) at (1.4*6+0,1.8)[draw=none, align=left, anchor=center]{	\footnotesize{\textsf{\textsc{ChainOfThought} }}};

		\node (label) at (5,-1.5)[draw=none, align=left, anchor=center]{	\includegraphics[width=0.6\textwidth]{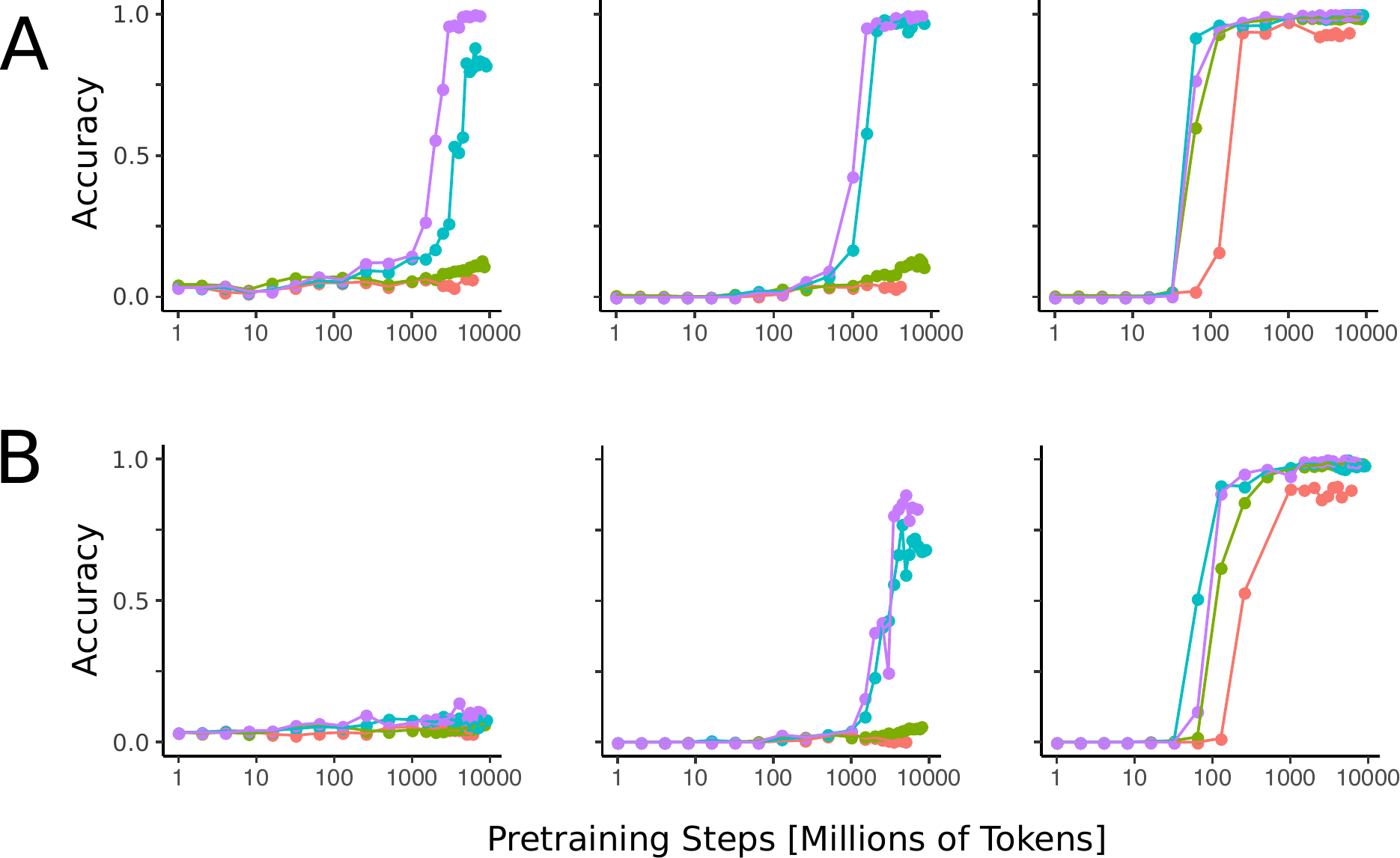} };

	\end{tikzpicture}

	\textsf{	Parameters:	
	\textbf{\textcolor{col1}{---}} 14M
	\textbf{	\textcolor{col2}{---}} 21M
	\textbf{	\textcolor{col3}{---}} 42M
	\textbf{	\textcolor{col4}{---}} 84M}

	\caption{Computing the composition of functions is hard, as this involves an unobserved variable (A), in particular, when it requires recombining functions $f_i,f_j$ that never appeared together in training (B).
Prompting the model to provide the intermediate step after the answer (\textsc{Explanation}) or before it (\textsc{ChainOfThought}) facilitates this.
	\textsc{ChainOfThought} transforms the task into a sequence of two simple tasks, and enables even the smallest model to solve the task.
	All results at the longest prompt length (14 examples).
	}\label{fig:composition}
\end{figure*}

\begin{figure*}
	\centering
        \begin{tikzpicture}
	 \node (label) at (-1.7,4.5)[draw=none, align=left, anchor=center]{$f_j\circ f_i$};
	 \node (label) at (0.7,4.5)[draw=none, align=left, anchor=center]{$f_k\circ f_j \circ f_i$};
	 \node (label) at (3.2,4.5)[draw=none, align=left, anchor=center]{$f_l\circ f_k \circ f_j \circ f_i$};
	 \node (label) at (0.4,3)[draw=none, align=left, anchor=center]{\includegraphics[width=0.5\textwidth]{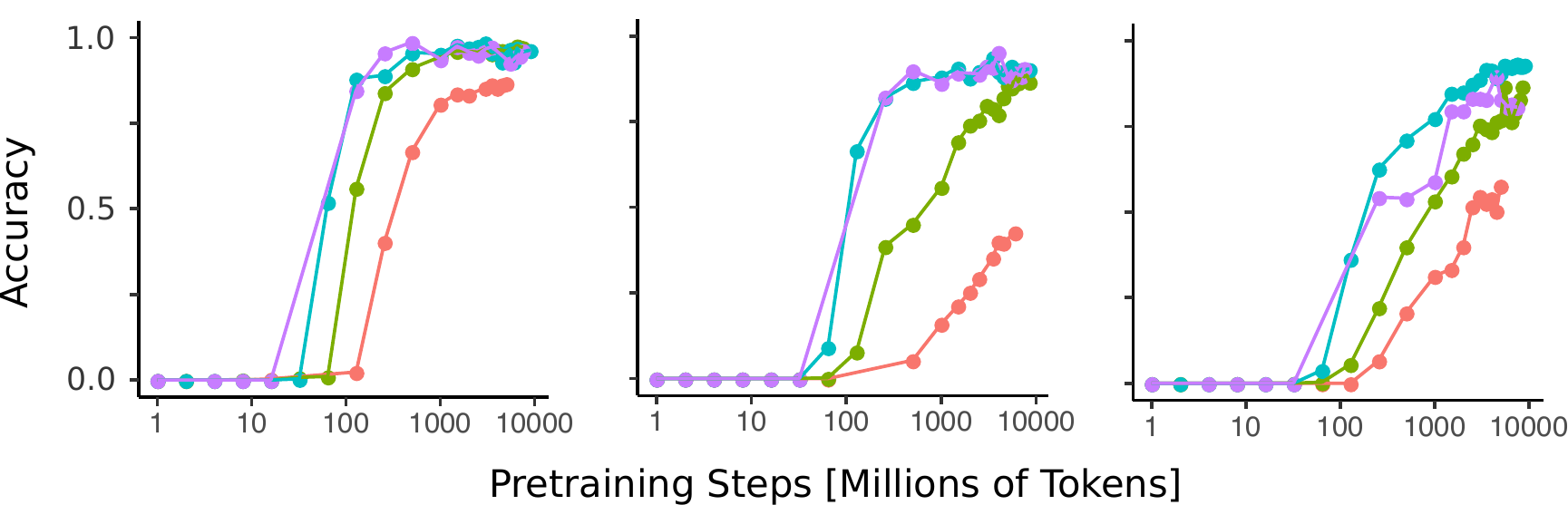}}; 
  \end{tikzpicture}

	\textsf{	Parameters:	
	\textbf{\textcolor{col1}{---}} 14M
	\textbf{	\textcolor{col2}{---}} 21M
	\textbf{	\textcolor{col3}{---}} 42M
	\textbf{	\textcolor{col4}{---}} 84M}

	\caption{Chain-of-thought reasoning for compositions of several functions; in the setting where $f_i, f_j$ never appeared in the same training document script (B in Figure~\ref{fig:composition}).
	For fair comparison, we show results at the largest length (8 examples) that fits into the transformer's context length for all tasks. 
}\label{fig:cot-by-length}
\end{figure*}

\begin{figure*}
\centering
\begin{tabular}{cccccccc}
	\textsf{A} & \textsf{B} \\
\includegraphics[width=0.4\textwidth]{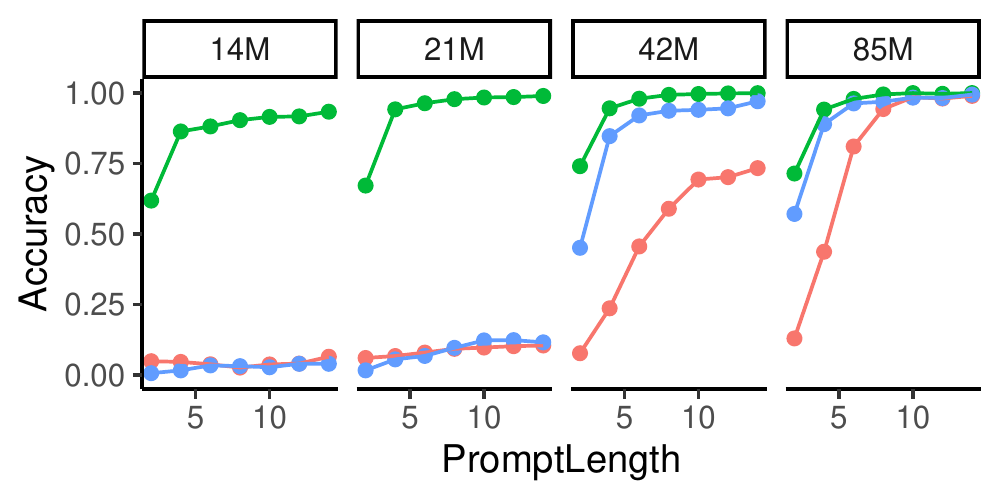} &
\includegraphics[width=0.4\textwidth]{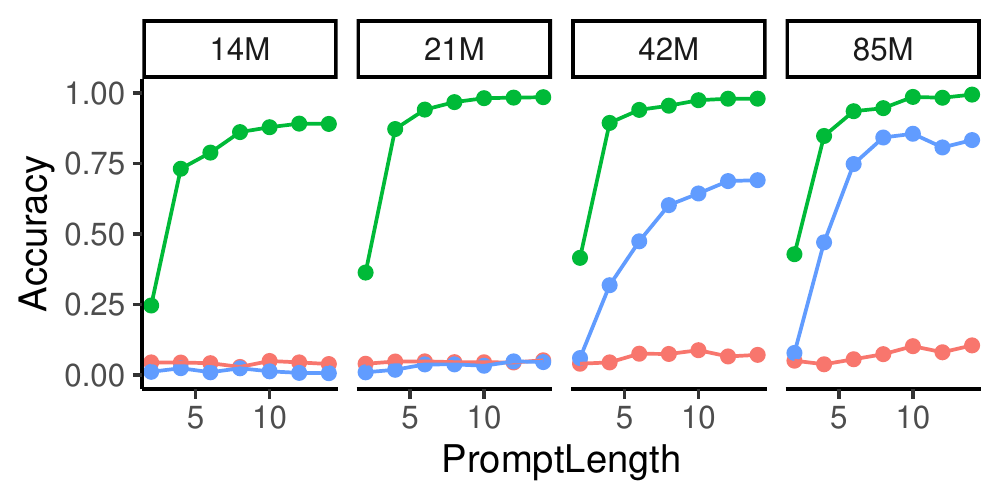}
\end{tabular}

	\textsf{{\LARGE\textcolor{red}{---}} Raw
\ \ \ \ \ 
{\LARGE\textcolor{blue}{---}} \textsc{Explanation}
\ \ \ \ \ 
	{\LARGE\textcolor{green}{---}} \textsc{ChainOfThought}}

\caption{ Accuracy for function composition task by prompt length and model size, at the end of pretraining.
A and B are as in Figure~\ref{fig:composition}.
As predicted by our theory, \textsc{ChainOfThought} achieves the best results.
\textsc{Explanation} only helps for large models (as found for real-world LLMs by \citet{DBLP:conf/emnlp/LampinenDCMTCMW22}), and does not work as well as \textsc{ChainOfThought} even for those.}
\label{fig:cot-by-prompt}
\end{figure*}

\paragraph{LMs can recombine functions.}

Next, for each of the \textsc{Propositional} tasks with at least two different functions, we created a variant where two of the functions involved were replaced by the two functions that never appeared in a single document script ($f_2$,$f_3$).
Solving these tasks thus requires recombining knowledge acquired from different portions of the training data.
We compare accuracy on these versions with the previous results in Figure~\ref{fig:recombination}.
The effect on the accuracies is small, with a significant drop affecting only some tasks with 8 literals.

\paragraph{Explanations help recombine abilities, but Chain-of-thought prompting helps more.}

To test our predictions from Section~\ref{sec:cot-theory}, we
compared two variants of the \textsc{Composition} task where the model was prompted to generate not only the answer, but also the unobserved variable.
In the \textsc{ChainOfThought} (CoT) version, the model was prompted to output the unobserved variable \emph{before} the answer (Figure~\ref{fig:tasks}E). In the \textsc{Explanation} version, the model was prompted to output the unobserved variable \emph{after} the answer.
In these versions, the models are prompted to produce two tokens.
We generated using greedy decoding, and counted responses as correct when the correct sequence was produced.
Theorem~\ref{theorem:cot} provides an improved bound in \textsc{ChainOfThought}, but not in \textsc{Explanation}.

Results, by data and model size, are shown in Figure~\ref{fig:composition}A.
Providing an explanation leads to somewhat earlier emergence of the task, but it only benefits large models (in line with empirical findings for real-world LLMs by \citet{DBLP:conf/emnlp/LampinenDCMTCMW22}).
In line with the theoretical prediction, gains from \textsc{ChainOfThought} are much stronger: it leads to early emergence even in the small model, not much later than the emergence of the simple function evaluation task (Figure~\ref{fig:comparison-baselines}).
This difference between producing intermediate reasoning steps before or after the answer mirrors the behavior of real-world LLMs  \citep{Wei2022Chain,DBLP:conf/emnlp/LampinenDCMTCMW22}.

We next considered composing $f_2, f_3$, which never appeared in the same script used for the pretraining corpus (Figure~\ref{fig:composition}B).
The raw task is now inaccessible even to the largest model.
Both ways of including the intermediate step make the task accessible to the large models; CoT succeeds even on the smallest model and with short prompts  (Figure~\ref{fig:cot-by-prompt}B).
This suggests that prompting the model to include the intermediate step helps it compositionally recombine abilities that were never used together in pretraining.

\textsc{ChainOfThought} continues to succeed for the composition of three and four functions, where single-step prompting is too difficult for all models (Figure~\ref{fig:cot-by-length}).

\paragraph{Ablating CAG Features.}
We created variants of the training data with (i) loops, (ii) variable introduction via ``for some'', (iii) conditions (if-then-else) 
ablated.
See Appendix~\ref{sec:ablations} for details.
We trained medium-size LMs (21M parameters) on each variant (Figure~\ref{fig:ablation}).
Ablating loops or variable introduction made ICL impossible; indeed, these are necessary to provide learning guarantees via Theorem 1 for any nontrivial task.
On the other hand, ablating conditions had no discernible negative impact even on \textsc{Propositional} or \textsc{Binary} tasks whose definitions involve disjunction.

\paragraph{Heldout analysis.}
We additionally trained the 21M parameter LM on a pretraining set where any document containing a substring that would be a valid prompt (with at least 4 examples) for any test task (including chain-of-thought) was removed, affecting $< 0.02\%$  of documents.\footnote{Of a sample of 1K excluded documents, 27\% \textsc{FunctionEvaluation} or \textsc{Inverse}, 71\% \textsc{Composition}, 3\% Task 12, 1.5\% \textsc{Composition} with CoT or explanation; 0\% other \textsc{Propositional} or \textsc{Binary} tasks. We believe that most matches to the composed functions are chance matches, i.e. to functions that are simple and happen to match on some inputs. Upon closer inspection, 66\% of the \textsc{Composition} examples were cases where $z=x$, which match \textsc{Composition} in the (a priori rare) case where $f_j(f_i(x))=x$, so that the function collapses to the much simpler identity function.}
We considered not just the sample prompts used for evaluation, but any sequence matching the specification of any of the tasks.
We considered substrings appearing in any place in a document, not just substrings following the StartOfSequence symbol.
Accuracies were highly correlated between the two versions (Figure~\ref{fig:heldout}).

\paragraph{Real-World LLMs.}
We investigated whether key tenets of our theory apply to a real-world LLM in the InstructGPT family (\texttt{text-davinci-003}), in a synthetic setup of strings over \texttt{a},\dots,\texttt{z} of length 10, with length-preserving operations from \citet{DBLP:journals/jair/HupkesDMB20} as basic functions (reverse; shift; swap first and last letter).
Almost none of the $>10^{14}$ strings are likely to have appeared in the training data.
See Appendix \ref{sec:gpt3=appendix} for details.
Results are shown in Figure~\ref{fig:gpt3}.
For \textsc{Propositional} tasks, accuracy increases with prompt length, with earlier increases for tasks defined with fewer literals.
\textsc{Binary} tasks are solved successfully, better than in our other experiments.
Finally, the \textsc{FunctionComposition} task is difficult; in line with Theorem 2 and our other experiments, \textsc{ChainOfThought} is more helpful in facilitating it than \textsc{Explanation}.

\begin{figure}
    \centering
    \begin{tikzpicture}

	    \node (label) at (0.5,2.7)[draw=none, align=center, anchor=center]{ \textsf{ A.  Propositional}};
	    \node (label) at (4.5,2.7)[draw=none, align=center, anchor=center]{ \textsf{ B.  Binary}};
	    \node (label) at (8.5,2.7)[draw=none, align=center, anchor=center]{  \textsf{ C. Compositiion}};

\node (label) at (0,0)[draw=none, align=center, anchor=center]{    \includegraphics[width=0.25\textwidth]{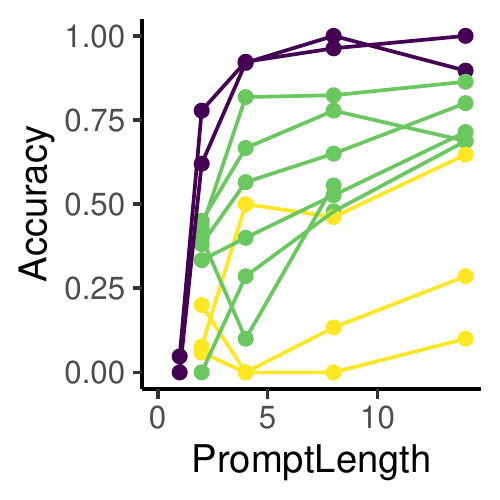}};
    \node (label) at (4,0)[draw=none, align=center, anchor=center]{\includegraphics[width=0.25\textwidth]{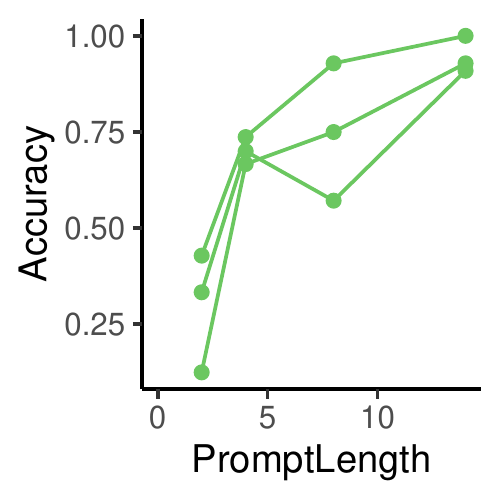}};
      \node (label) at (8,0)[draw=none, align=center, anchor=center]{  \includegraphics[width=0.25\textwidth]{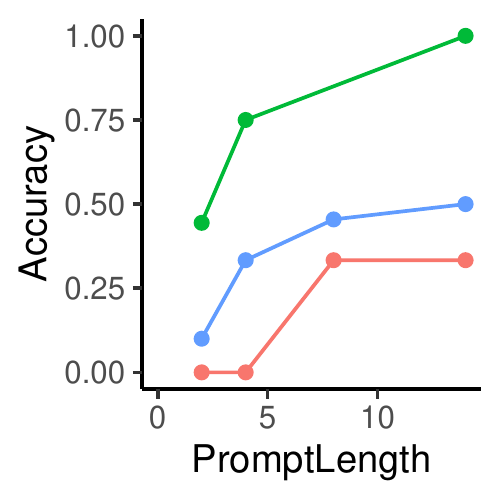}};
        \end{tikzpicture}

\definecolor{col0}{HTML}{440154}
\definecolor{col1}{HTML}{46286D}
\definecolor{col2}{HTML}{414487}
\definecolor{col3}{HTML}{3B5E8B}
\definecolor{col4}{HTML}{2A788E}
\definecolor{col5}{HTML}{2B9089}
\definecolor{col6}{HTML}{22A884}
\definecolor{col7}{HTML}{5ABC6D}
\definecolor{col8}{HTML}{7AD151}
\definecolor{col9}{HTML}{C0DD40}
\definecolor{col10}{HTML}{FDE725}

	\textsf{(A--B)\ \ \ 
Literals:	
	{\LARGE\textbf{\textcolor{col0}{---}}} 1\ \ \ 
	{\LARGE\textbf{\textcolor{col6}{---}}} 2\ \ \ \ \ \ \ 
	{\LARGE\textbf{\textcolor{col8}{---}}} 4\ \ \ \ \ \ \ 
	{\LARGE\textbf{\textcolor{col9}{---}}} 6\ \ \ \ \ \ \ 
	{\LARGE\textbf{\textcolor{col10}{---}}} 8}

	\textsf{(C) \ \ \ 
	{\LARGE\textcolor{red}{---}} Raw
\ \ \ \ \ 
{\LARGE\textcolor{blue}{---}} \textsc{Explanation}
\ \ \ \ \ 
	{\LARGE\textcolor{green}{---}} \textsc{ChainOfThought}}

    \caption{Results from InstructGPT (\texttt{text-davinci-003}) on a family of synthetic string manipulation tasks: \textsc{FunctionEvaluation} and \textsc{Propositional} tasks (A), \textsc{Binary} tasks (B), \textsc{FunctionComposition} (C) with direct prompting, \textsc{Explanation} and \textsc{ChainOfThought}.}
    \label{fig:gpt3}
\end{figure}

\subsection{Representation learning supports ICL}

Our theoretical analysis in Section~\ref{sec:theory} argues that ICL relies on identifying the compositional generative process underlying a prompt.
Here, we provide evidence from the LM's activations and attention patterns that they indeed induce the compositional structure underlying documents and prompts.
We target the 21M parameters model as it has a small number of heads and layers and yet is successful on almost all tasks.
We first visualize attention patterns in a chain-of-thought example, focusing on the final tokens for visibility, in Figure~\ref{eq:attention-correlation}A (see Appendix, Figure~\ref{fig:att-by-tasks} for other tasks).
The two heads in the lowest layer attend to the three preceding token to decreasing degree (1.1), or recent tokens with the same type (1.2), respectively.
The two heads in the second layer attend to diffuse average of recent tokens (2.1) or sharply the immediately preceding tokens (2.2).
In the third layer, one head (3.1) attends to the preceding tokens that are in structurally corresponding positions to the upcoming token---e.g. the head attends to previous delimiters at the end of each prompt example. Head (3.2) is similar if a bit more diffuse.
The pattern was more diffuse but analogous when removing the separator, i.e., the model does not rely on a recurring symbol to induce the structure (Appendix, Figures~\ref{fig:att-map-cot-no-sep} and \ref{fig:att-by-tasks}).

We next analyzed attention patterns across a sample of 300 random documents in the training corpus.
The attention head patterns generalized: heads in the lower layers attended in the same way as described (Figure~\ref{eq:attention-correlation}B).
Head (3.1) was best explained as attending to tokens that were \emph{structurally corresponding}---those tokens produced by executing the same line in the document script as the token now being predicted.
Head (3.2) was explained best---depending on random seed---either by the same or by the same shifted by one.

In addition to this correlational study, we next performed an interventional one: we intervened on the top-level attention heads in the trained model by masking out attention logits to non-structurally-corresponding positions.
If the function of the top-layer attention heads is indeed to attend to structurally corresponding positions, this intervention should improve performance, by effectively providing oracle access to the document's structure.
Indeed, the intervention consistently improved cross-entropy on the pretraining task when applied to the top layer heads, and hurt when applied to other layers (Appendix, Figure~\ref{fig:attention-intervention-ce}).
Similarly, it improved accuracy on the chain-of-thought task, in particular for short prompts (Figure~\ref{eq:attention-correlation}D).

\paragraph{Towards extracting the learned algorithm.}
This attention pattern suggests a general algorithmic approach:
first, the lower layers identify the structure of the document.
The third layer then collects information from all structurally matching positions, and predicts the structurally required token based on that information.
In this algorithm, there are two key tasks:
first, identify the structure underlying the input in the lower layers; second, perform analogical reasoning to predict the next token in the top layer.
Importantly, this algorithm equally works on simple \textsc{FunctionEvaluation} and chain-of-thought-prompting: once the structure has been induced, both tasks amount to predicting $f_i(x)$ for the last token $x$.
This helps explain why chain-of-thought prompting emerges so quickly and works even in the smallest model.

The remaining question is how the lower layers identify and represent document structure.
We hypothesize that the model learns to encode the logical relations holding among the tokens close by, using information gained from the three heads attending to recent tokens.
We can formalize the logical relations as a set of graphs with the immediately preceding tokens as vertices ($\{t-K+1, ... t\}$), and edges describing which functional relations hold.
Encoding this in a set of adjacency matrices, we obtain a tensor $M_{i,j,k}$ ($1\geq i,j \geq K$, $1\geq k \geq |\Ff|$) where $M_{i,j,k}$ holds iff $f(w_{t-i}) = w_{t-j}$.
We set the context size to $K=3$, and fitted a set of log-linear probes to decode $M$ from the activations in each layer (Figure~\ref{eq:attention-correlation}).
$M$ can be decoded with highest accuracy from the second layer.
Probe complexity as assessed using a prequential code  \citep{DBLP:conf/emnlp/VoitaT20} was $\approx 50\%$ of that for a control task \citep{Hewitt2019DesigningAI} where functions were randomized (but consistently within a probe), showing that the layer encodes nontrivial information.
Interestingly, probe accuracy undergoes a rapid increase at about $\approx$ 70M tokens of training, at a similar time as the emergence of many \textsc{Propositional} tasks.

\begin{figure*}
\centering
\includegraphics[width=0.98\textwidth]{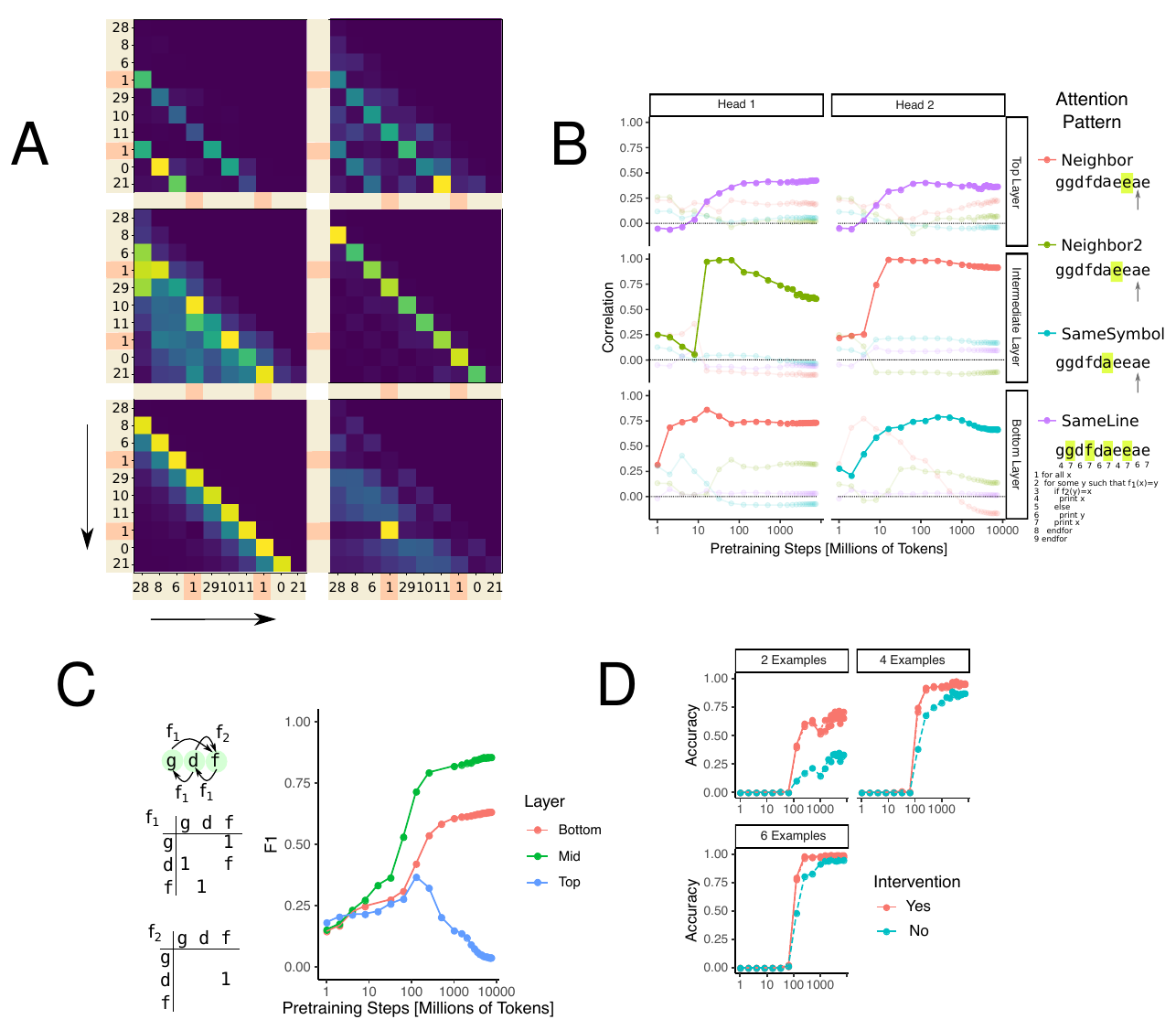}

	\caption{(A) Attention Maps: chain-of-thought prompting, 21M parameter model. Columns: heads; rows: layers, bottom layer is closer to the input. The prompt included 15 examples, we only show the last 10 tokens for clarity. Within each attention map, the $y$-axis indexes queries and the $x$-axis indexes keys. Orange background indicates the separator. (B) Correlation between attention scores and four attention pattern predictors across 300 documents. (C) Local relations among immediately preceding tokens can be summarized into a set of adjacency matrices with binary entries, one for each function $f_i$. A log-linear probe recovers these matrices at high $F_1$ from the intermediate layer. (D) Intervening on attention heads in the top layer, so that they can only attend to structurally-matching positions (without any other change to the trained model) improves model performance for short prompts (here, the same chain-of-thought task as in A).}\label{eq:attention-correlation}
\end{figure*}

\section{Discussion and Related Work}\label{sec:discussion}

\paragraph{Qualitative Properties of ICL.}
We summarize schematic properties of ICL in Table~\ref{tab:icl-properties}.
Properties in the first group are supported by our theory, real transformers pretrained on a minimal CAG, and real-world LLMs like GPT-3.
Improvement of ICL with longer prompts is arguably predicted by all theoretical or experimental approaches to understanding ICL.
On the other hand, the advantage for prompting LMs to provide intermediate steps, and effects of task complexity formalized by description length, are new predictions of our theory.
The first one is by now very well established empirically \citep[e.g.][]{Wei2022Chain, Suzgun2022ChallengingBT}; we found evidence for the second one by prompting InstructGPT on our test tasks (Figure~\ref{fig:gpt3}).
The second group in Table~\ref{tab:icl-properties} includes properties of ICL which are easy to manipulate in controlled setups.
The fact that ICL gets harder with increased $|\Ff|$ is intuitive. 
Invariance to, or even improvement with, increased $|\Omega|$ is consistent with empirical observations about benefits of increased vocabulary size \citep{DBLP:conf/iclr/XieRL022} and increased numbers of objects \citep{DBLP:journals/corr/abs-2205-05055}.
We empirically also found that transformers could recombine their knowledge of functions $f_i, f_j$ never seen together during training.
While it is likely that emergent behavior in LLMs involves a substantial amount of recombination of abilities, this has been hard to directly verify in real-world LLMs.

\paragraph{Theoretical Guarantees for In-Context Learning.}
We have proven information-theoretic bounds for in-context learning in an optimal predictor, under linguistically motivated assumptions on the pretraining dataset, stated in terms of a task's description length in the compositional structure underlying the pretraining data (Theorem~\ref{theorem:theorem1}).
As discussed in Section~\ref{sec:comparison-xie} this differs from bounds by \citet{DBLP:conf/iclr/XieRL022} in guaranteeing learning in an open-ended hypothesis space recombining compositional operations found in the pretraining data; we used this to prove a benefit for prompting models to provide intermediate steps (Theorem~\ref{theorem:cot}).
These results highlight the usefulness of considering linguistically motivated compositional generative processes in analyzing the theoretical foundations of LLMs.
Our bounds apply to an optimal predictor; extending them to finite-capacity models pretrained on finite data (as in our experiments) is an interesting problem for future research.
Our findings on representation learning suggests ways in which the transformer's architecture might be helpful.

\paragraph{ICL in Controlled Setups.}
Our work joins a line of recent work \citep{DBLP:journals/corr/abs-2211-15661,DBLP:journals/corr/abs-2208-01066,DBLP:journals/corr/abs-2205-05055,DBLP:conf/iclr/XieRL022,DBLP:journals/corr/abs-2301-07067} studying ICL in controlled miniature setups.
\citet{DBLP:journals/corr/abs-2208-01066,DBLP:journals/corr/abs-2211-15661} study the in-context learning abilities of transformers by directly meta-training on prompts from a candidate function class, finding that they can be meta-trained to learn, among others, linear functions and decision trees, from example prompts.
\citet{DBLP:journals/corr/abs-2301-07067} provide generalization bounds for such a meta-training setup.
\citet{DBLP:journals/corr/abs-2205-05055} empirically study a simple image-based multiclass classification dataset, where the training data also had a prompt-based format but the task was fixed across pretraining prompts and test trials targeted heldout classes.
Conceptually closest to our work, \citet{DBLP:conf/iclr/XieRL022} cast ICL as recovery of an HMM mixture component (see detailed discussion in Section~\ref{sec:comparison-xie}).
The key innovation in our study is that we account for compositional recombination of skills into composed tasks, attributing its emergence to the compositional structure of linguistic pretraining data.
This allows us to induce ICL on tasks of varying complexity, and account for the benefit of providing intermediate steps.

Another line of work has studied the ability of real-world LLMs to in-context-learn synthetic or even unnatural tasks resembling natural tasks but with exchanged labels
\citep{rong21extrapolating,DBLP:journals/corr/abs-2202-12837,Wei2023LargerLM}.
The ability of models both to infer the prompt structure and unnatural input-label mappings, at least when they are sufficiently large \citep{Wei2023LargerLM}, is conceptually well compatible with the idea that in-context learning relies on identifying the compositional structure underlying a prompt.

\paragraph{Emergence and Grokking.}
In many cases, we observed sudden emergence of abilities when a certain threshold of training steps had been crossed, while the pretraining loss decreased continuously.
This has commonalities with the \textit{grokking} phenomenon observed in supervised learning \citep{Power2022GrokkingGB}.
Grokking is thought to relate to the build-up of generalizable representations \citep{Liu2022TowardsUG,DBLP:journals/corr/abs-2302-03025}; in a similar vein, we found that emergence of many tasks coincided with improvement in structural representations.

\paragraph{Mechanisms of ICL.}
A recent line of work \citep{DBLP:journals/corr/abs-2211-15661,Oswald2022TransformersLI,DBLP:journals/corr/abs-2212-10559}
 provides in-principle proofs that transformers can implement optimization algorithms in context, hypothesizing that this underlies real-world ICL, but leaving open how generic natural-language text data would give rise to such abilities.
In our controlled setup, ICL relied in part on attention heads attending to structurally related positions earlier in the sequence, allowing the model to ``copy'' behavior. This may be related to a hypothesized role of \textit{induction heads} \citep{Olsson2022IncontextLA}, heads that copy symbols from past contexts matching a certain pattern-like description.

\paragraph{Role of Pretraining Data in ICL.}
\citet{DBLP:conf/naacl/ShinLAKKKCLPHS22} empirically investigate the role of pretraining corpora for ICL in real-world LLMs. Potentially relevant to our {\cvgarg}s on the role of recombination of compositional behavior, they found that ICL can result by combining corpora which individually do not give rise to it.
\citet{DBLP:conf/emnlp/RazeghiL0022} studied numerical reasoning capabilities in GPT-based LLMs, finding that term frequencies in the pretraining dataset had a strong impact on ICL performance.
\citet{DBLP:journals/corr/abs-2205-05055} empirically investigated the role of the training dataset in a simple image classification task where the pretraining data consisted of prompt-like sequences, finding that data-distributional properties such as a Zipfian distribution over classes were beneficial to ICL of held-out classes. 
Our setup differs in that we target an open-ended space of compositionally created test tasks and the benefit of providing intermediate steps.
However, relevant to our findings, they found that varying the assignment of classes to labels in pretraining improved ICL; this may encourage a model towards flexible recombination of knowledge.

\paragraph{Algorithmic Information Theory and Minimum Description Length.}
Theorems 1 and 2 suggest that ICL can work because prompts are compressible into compositional generation processes.
Our theoretical analysis has links to Algorithmic Information Theory \citep{LiVitanyi2008introduction} and the Minimum Description Length (MDL) principle \citep{Rissanen2010MinimumDL}, and to statistical estimators based on MDL \citep{Barron1991MinimumCD}.
Informally, Theorem~\ref{theorem:theorem1} is based on the idea that a predictor trained on compositionally generated data will show an implicit MDL-like bias.
In the case where documents are generated by a Turing-complete generative process, $\DL[\cdot]$ corresponds to Kolmogorov Complexity and $M$ corresponds to the \textit{universal prior} used in Solomonoff induction (\citep{solomonoff1964formal}; Definition 4.5.6 in \citet{LiVitanyi2008introduction}). 
In that setting, our theorem is similar to the completeness theorem for Solomonoff induction (Theorem 5.2.1 in \citet{LiVitanyi2008introduction}).
Turing-complete generative processes might be linguistically unrealistic as a model of text data; hence, we here explicitly derive ICL guarantees using a restricted and linguistically motivated class of generative processes. However,  there is an intriguing link to proposals conceptualizing intelligence as universal prediction akin to Solomonoff induction \citep{Hutter2004UniversalAI}.

\paragraph{Pretraining Data and Natural Language.}
While our theoretical results are grounded in a long tradition of research on language, the \textsc{Compositional} pretraining dataset only aims to provide a minimal CAG, and has key differences from real language.
Most prominently, document scripts repeat a lot of structure \emph{within} a document, whereas natural language repeats a lot of structure \emph{across} documents, which our scripts cannot model (recall that there are no grammatical or lexical conventions across documents).
This limitation is shared with the existing work aiming to induce ICL in controlled setups \citep{DBLP:journals/corr/abs-2208-01066,DBLP:journals/corr/abs-2205-05055,DBLP:conf/iclr/XieRL022,DBLP:journals/corr/abs-2211-15661}, though our experiments advance by incorporating compositional recursive structure and multi-step tasks.
Importantly, our theoretical analysis in Section~\ref{sec:theory} is valid independently of such restrictions, because it holds for a very broad class of linguistically realistic generative processes, and is (up to constants) robust to extending the CAG, even extensions adversarial to ICL (Appendix~\ref{sec:optimality-bound}).
Designing CAGs incorporating more realistic features of natural language is an interesting future research direction, and should enable more detailed miniaturized models of the emergence of ICL.
Linguistically richer CAGs could also pave the way to accounting for the role of instructions, which boost prompt effectiveness when prepended to demonstrations \citep{BrownMRSKDNSSAA20}; arguably, they provide additional cues regarding the generative process.

\paragraph{Neural Networks and Compositionality.}
The ability of neural networks to generalize compositionally has recently been the focus of much research \cite[e.g.][]{Kim2020COGSAC}.
Our research pursues a somewhat different direction, by studying how compositional structure in the pretraining data leads to ICL capabilities in the model.
In this sense, there is a relation to work showing that pretraining on structured data can imbue neural networks with increased abilities to model language or generalize compositionally \citep[e.g.][]{DBLP:conf/emnlp/PapadimitriouJ20,DBLP:conf/acl/Mueller0LWS22}.

\paragraph{Limitations and Future Work.}
Our theory and experiments focus on ICL for functions applying to objects from a fixed finite universe.
Real-world ICL is also successful on more complex and open-ended input, such as mapping a sentence to its sentiment, or mapping a text and a question to an answer. Formalizing this within our framework and deriving appropriate ICL bounds is an interesting problem for further research.
Relatedly, the analysis of chain-of-thought prompting in Theorem \ref{theorem:cot} focuses on the composition of a fixed sequence of functions. Real LLMs may also be able to find an appropriate sequence of reasoning steps that may itself be dependent on the input, and there will typically be multiple acceptable chains of reasoning leading to a correct answer. Generalizing Theorem 2 in this respect is an intriguing problem for future research.

In our theoretical guarantees, the measure of task difficulty is the description length in the compositional generative process underlying the pretraining data.
Quantifying it for naturalistic real-world tasks could provide a useful measure of ICL task difficulty for real-world LLMs.

\section{Conclusion}

We have provided an information-theoretic analysis of how generic next-token prediction can enable a model to learn from demonstrations in context.
We show a general error bound for the in-context learning of functions, under linguistically motivated assumptions about the generative process underlying the pretraining corpus.
Using this framework, we further prove a benefit for prompting models to provide intermediate steps towards an answer.
To validate the theoretical analysis, we introduce a new way of inducing in-context learning in a controlled miniature setup.
In this setup, we found sudden emergence of in-context learning when scaling parameters and data, recombination of abilities found in different parts of the pretraining data, 
and the theoretically predicted benefit of prompting for intermediate steps.
Taken together, these results provide a step towards theoretical understanding of emergent properties in large language models.

\section*{Acknowledgments}
M.H. gratefully acknowledges Saarland University's Department of Language Science and Technology and Saarland Informatics Campus for compute resources.

\bibliography{literature}

\appendix

\begin{figure*}
\centering

	\includegraphics[width=0.3\textwidth]{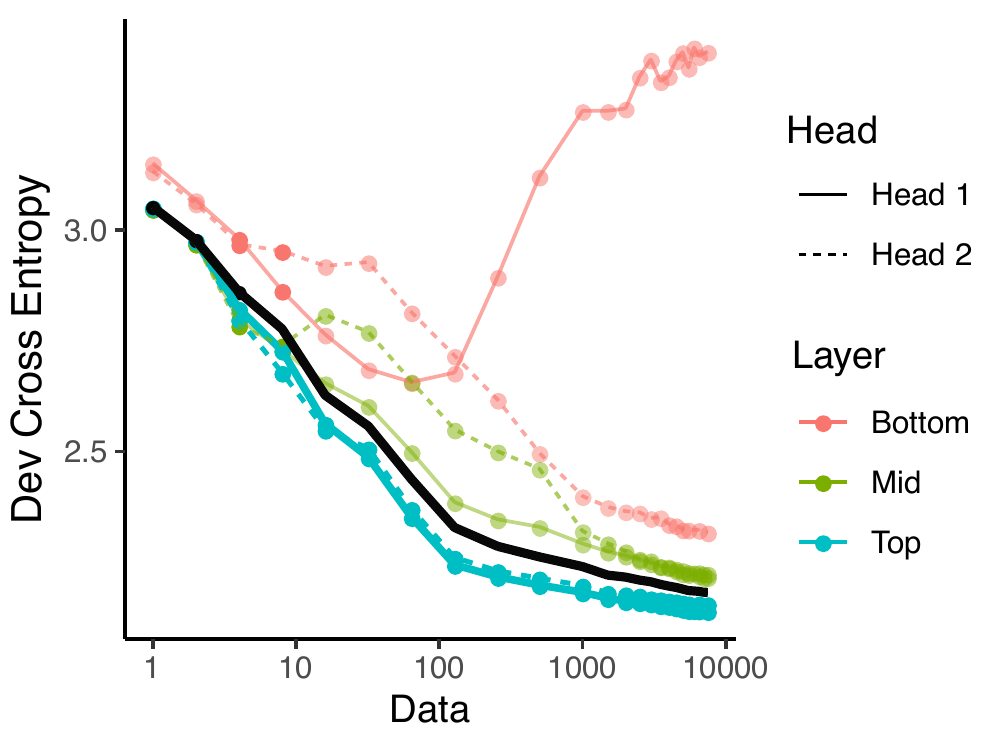}

		\caption{Intervening on attention heads by masking out non-structurally-corresponding positions reduces cross-entropy compared to trained model (black) when applied to top layer (blue), and increases it when applied to lower layers (red, green). Here, we simply intervene on the trained model, without any additional training. A benefit begins early in training, and persists.}\label{fig:attention-intervention-ce}
\end{figure*}

\begin{figure*}
\centering
        \begin{tikzpicture}

		\node (label) at (-0.75*2,-1.5)[draw=none, align=left, anchor=center]{ Accuracy};
		\node (label) at (0.75*0, -1.5)[draw=none, align=left, anchor=center]{ 0.0};
		\node (label) at (0.75*2, -1.5)[draw=none, align=left, anchor=center]{ 0.0};
		\node (label) at (0.75*4, -1.5)[draw=none, align=left, anchor=center]{ 0.0};
		\node (label) at (0.75*6, -1.5)[draw=none, align=left, anchor=center]{ 0.0};
		\node (label) at (0.75*8, -1.5)[draw=none, align=left, anchor=center]{ 0.0};
		\node (label) at (0.75*10,-1.5)[draw=none, align=left, anchor=center]{ 0.0};
		\node (label) at (0.75*12,-1.5)[draw=none, align=left, anchor=center]{ 0.86};
		\node (label) at (0.75*14,-1.5)[draw=none, align=left, anchor=center]{ 0.95};
		\node (label) at (0.75*16,-1.5)[draw=none, align=left, anchor=center]{ 0.98};
		\node (label) at (0.75*18,-1.5)[draw=none, align=left, anchor=center]{ 0.98};

		\node (label) at (-0.75*2,3.3)[draw=none, align=left, anchor=center]{ Data};
		\node (label) at (0.75*0,3.3)[draw=none, align=left, anchor=center]{ 4M};
		\node (label) at (0.75*2,3.3)[draw=none, align=left, anchor=center]{ 8M};
		\node (label) at (0.75*4,3.3)[draw=none, align=left, anchor=center]{ 16M};
		\node (label) at (0.75*6,3.3)[draw=none, align=left, anchor=center]{ 32M};
		\node (label) at (0.75*8,3.3)[draw=none, align=left, anchor=center]{ 64M};
		\node (label) at (0.75*10,3.3)[draw=none, align=left, anchor=center]{ 128M};
		\node (label) at (0.75*12,3.3)[draw=none, align=left, anchor=center]{ 256M};
		\node (label) at (0.75*14,3.3)[draw=none, align=left, anchor=center]{ 500M};
		\node (label) at (0.75*16,3.3)[draw=none, align=left, anchor=center]{ 1000M};
		\node (label) at (0.75*18,3.3)[draw=none, align=left, anchor=center]{ 7500M};
		\node (label) at (0.75*0,2)[draw=none, align=left, anchor=center]{ \includegraphics[width=0.15\textwidth]{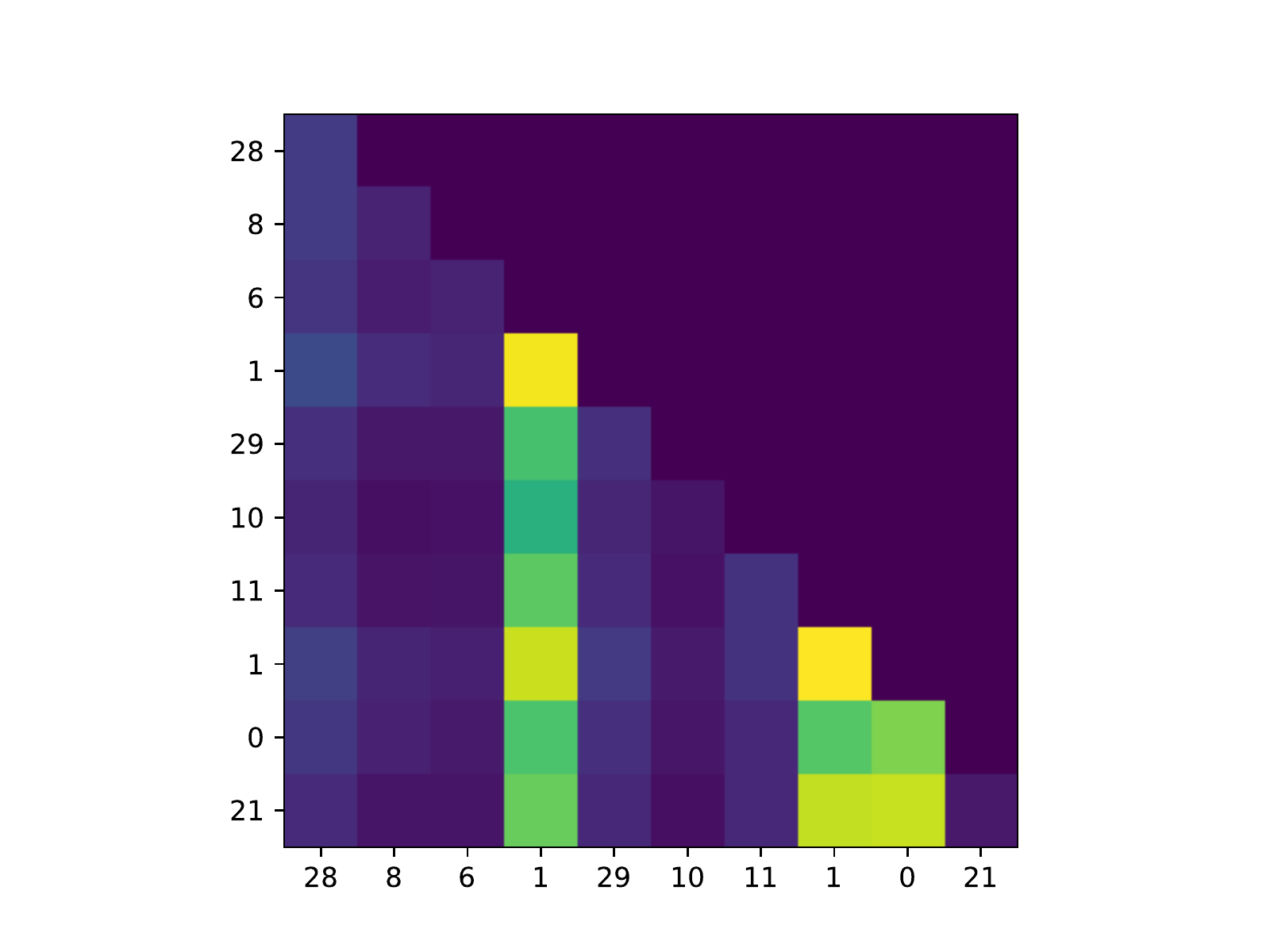}};
		\node (label) at (0.75*0,0)[draw=none, align=left, anchor=center]{ \includegraphics[width=0.15\textwidth]{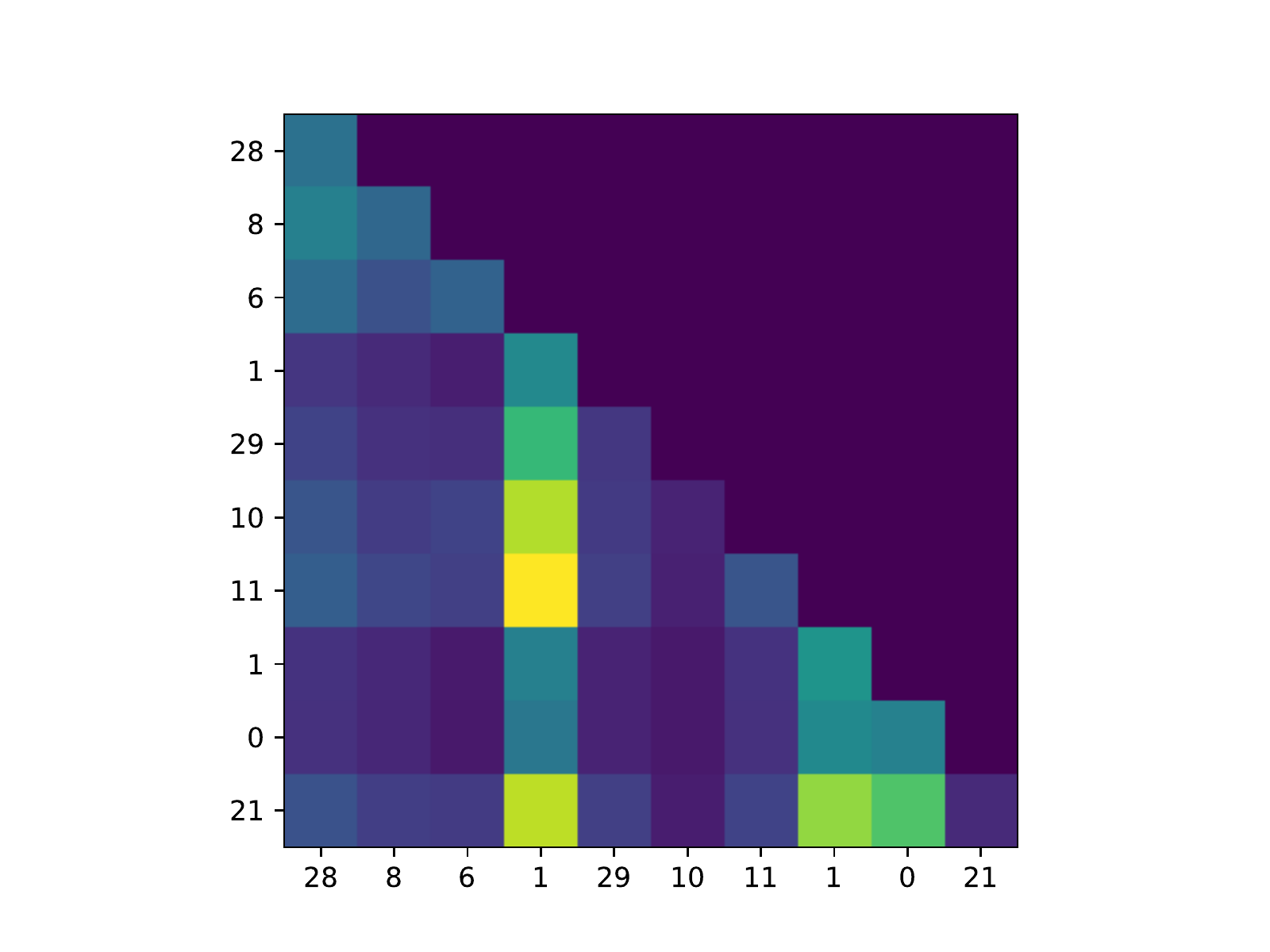}};
		\node (label) at (0.75*2,2)[draw=none, align=left, anchor=center]{ \includegraphics[width=0.15\textwidth]{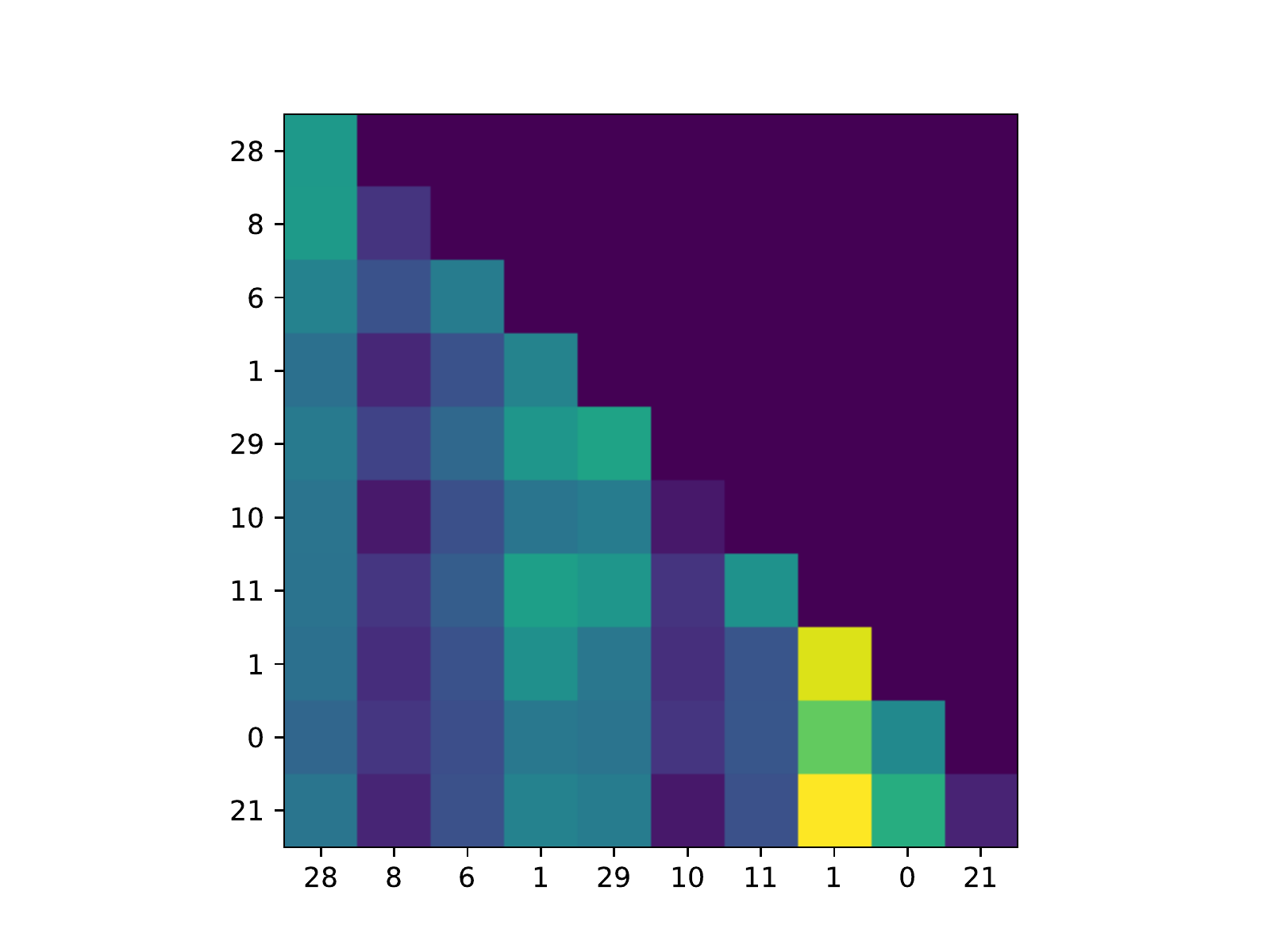}};
		\node (label) at (0.75*2,0)[draw=none, align=left, anchor=center]{ \includegraphics[width=0.15\textwidth]{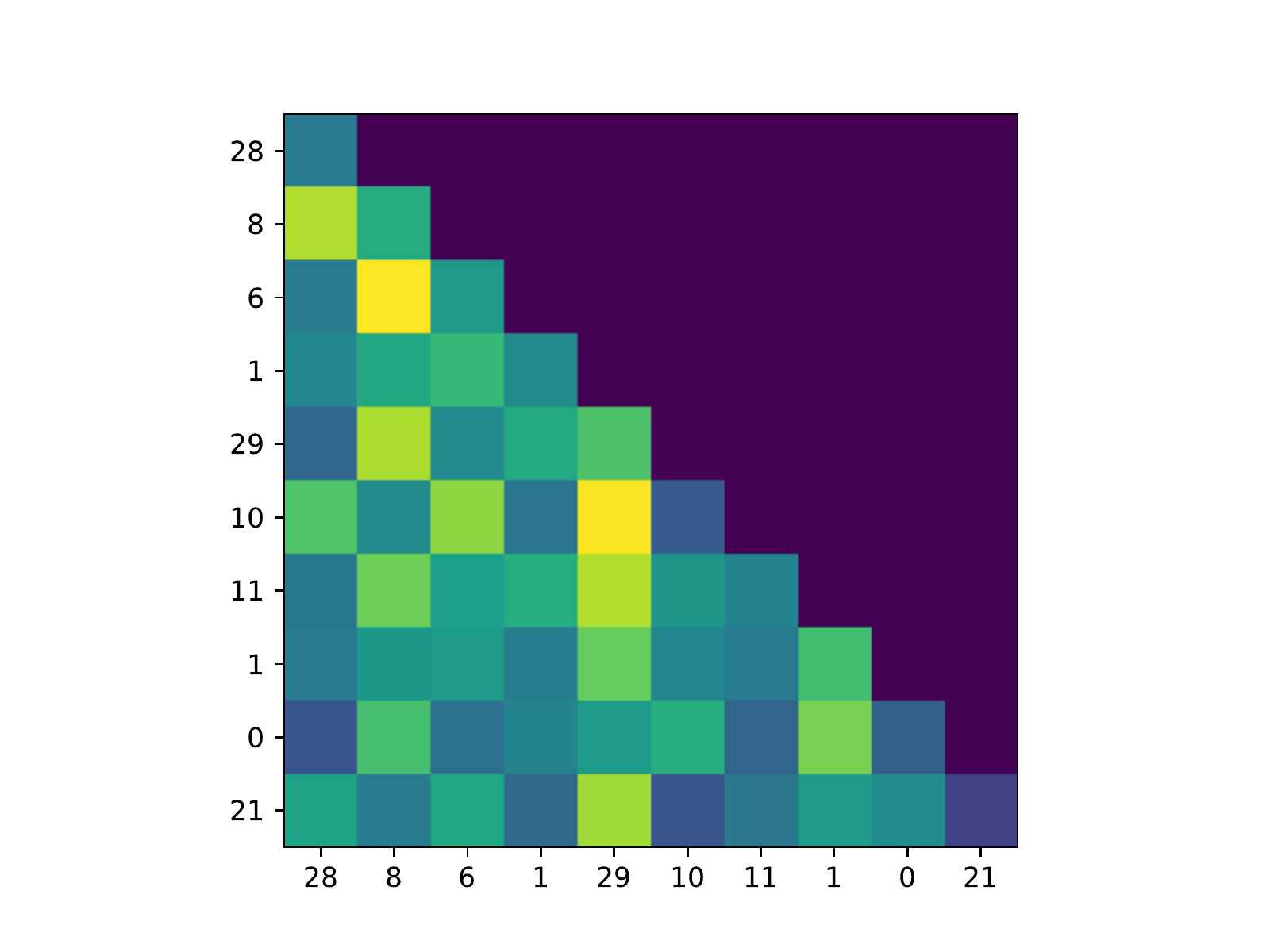}};
		\node (label) at (0.75*4,2)[draw=none, align=left, anchor=center]{ \includegraphics[width=0.15\textwidth]{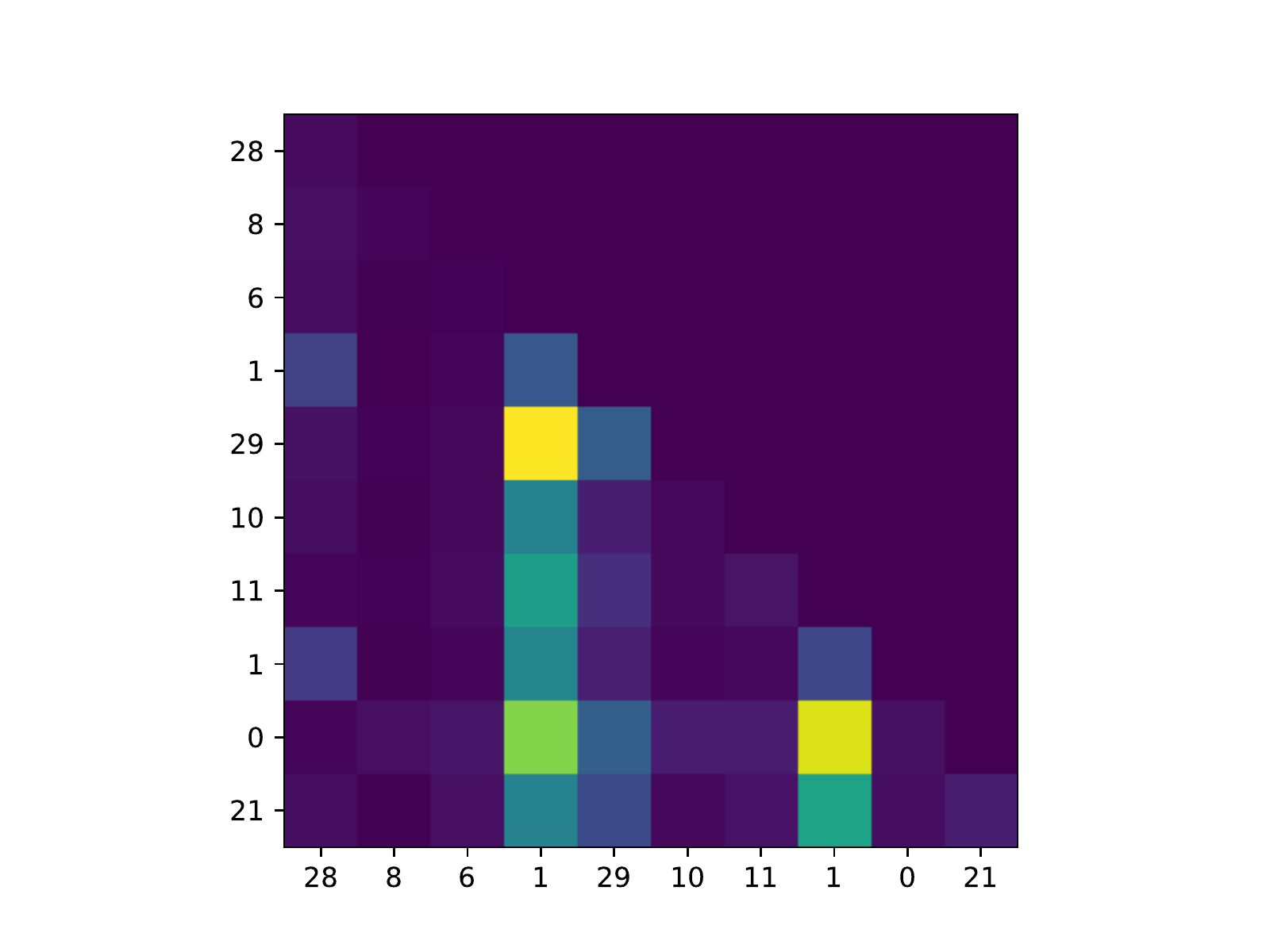}};
		\node (label) at (0.75*4,0)[draw=none, align=left, anchor=center]{ \includegraphics[width=0.15\textwidth]{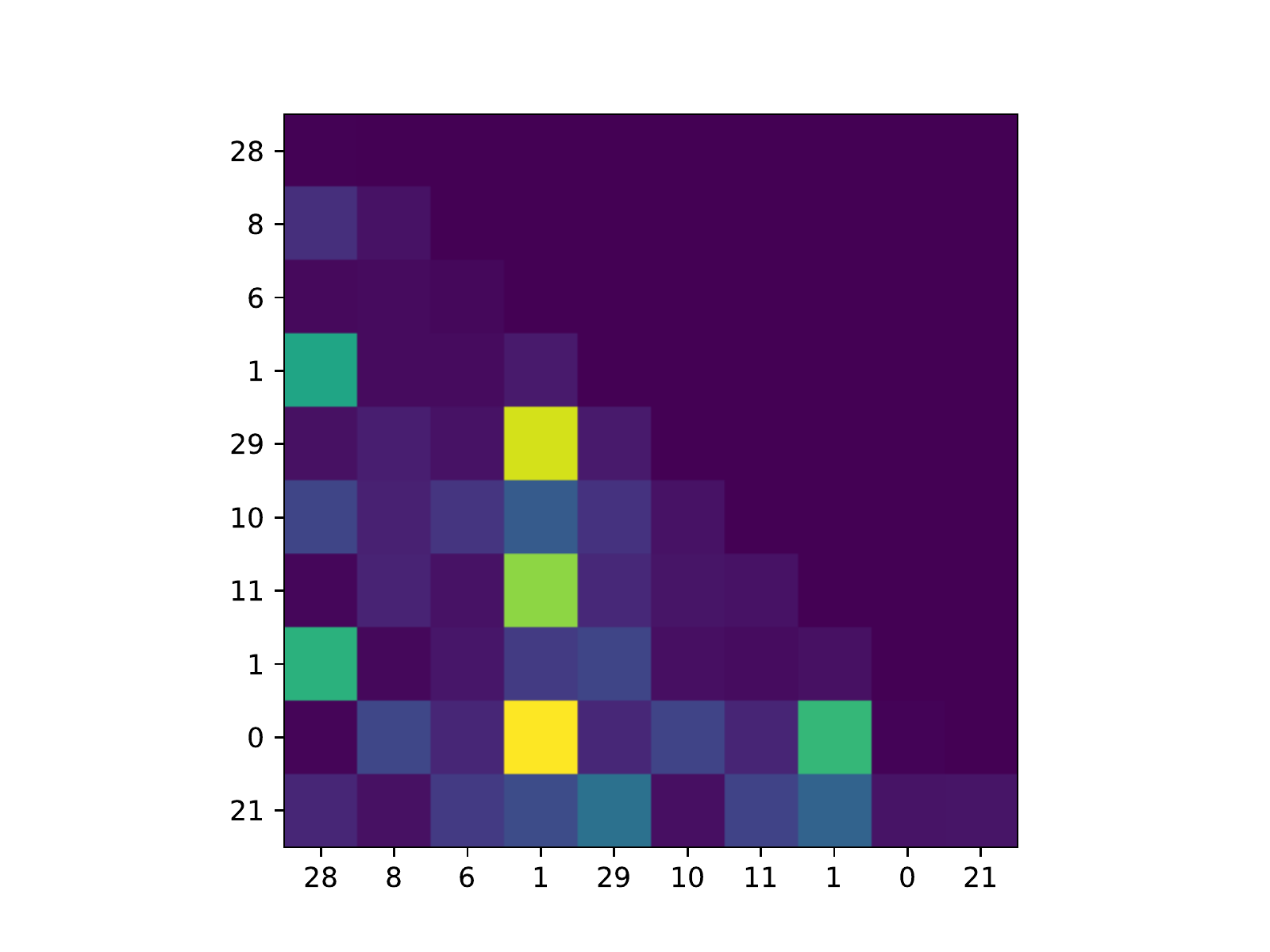}};
		\node (label) at (0.75*6,2)[draw=none, align=left, anchor=center]{ \includegraphics[width=0.15\textwidth]{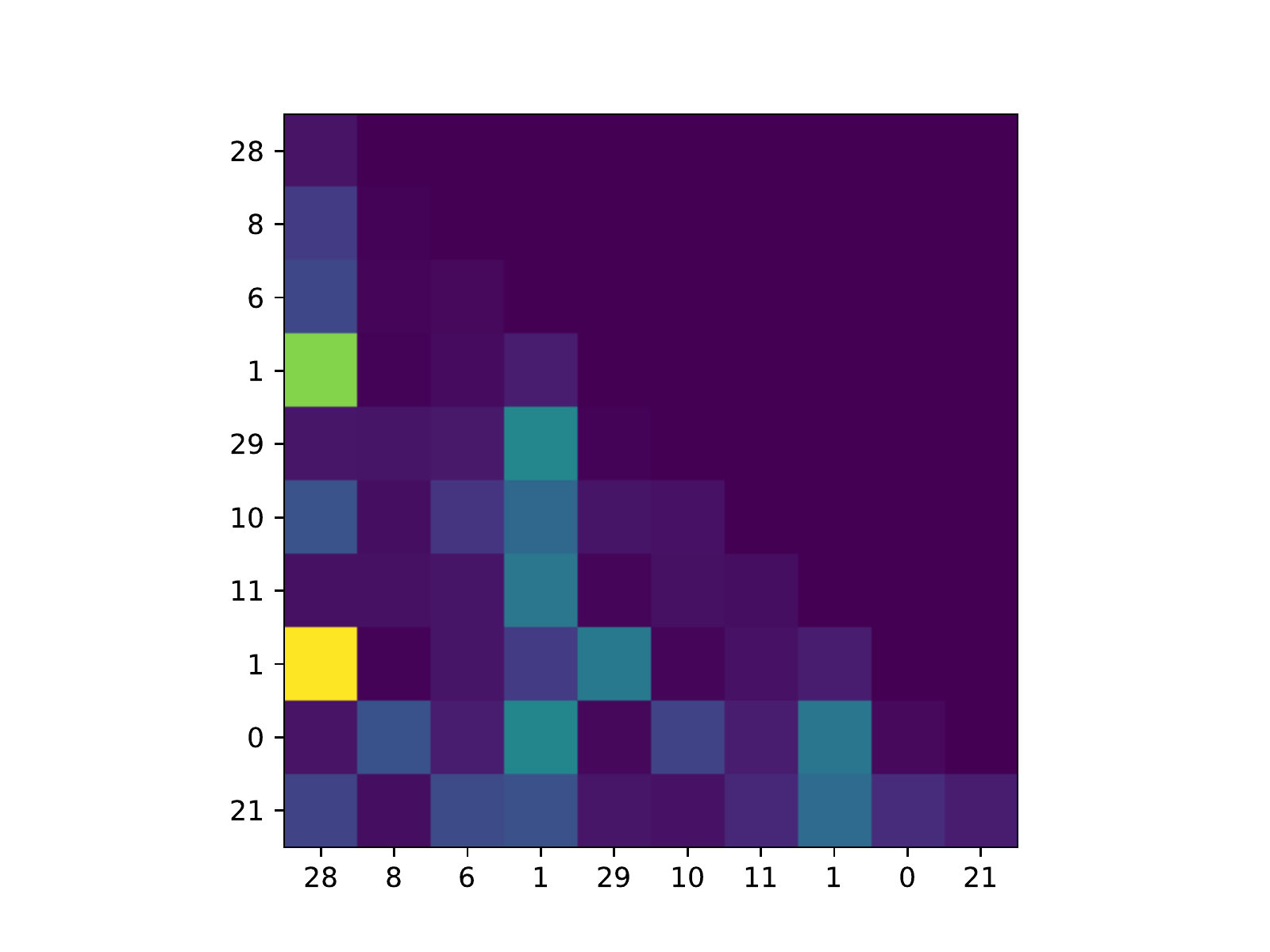}};
		\node (label) at (0.75*6,0)[draw=none, align=left, anchor=center]{ \includegraphics[width=0.15\textwidth]{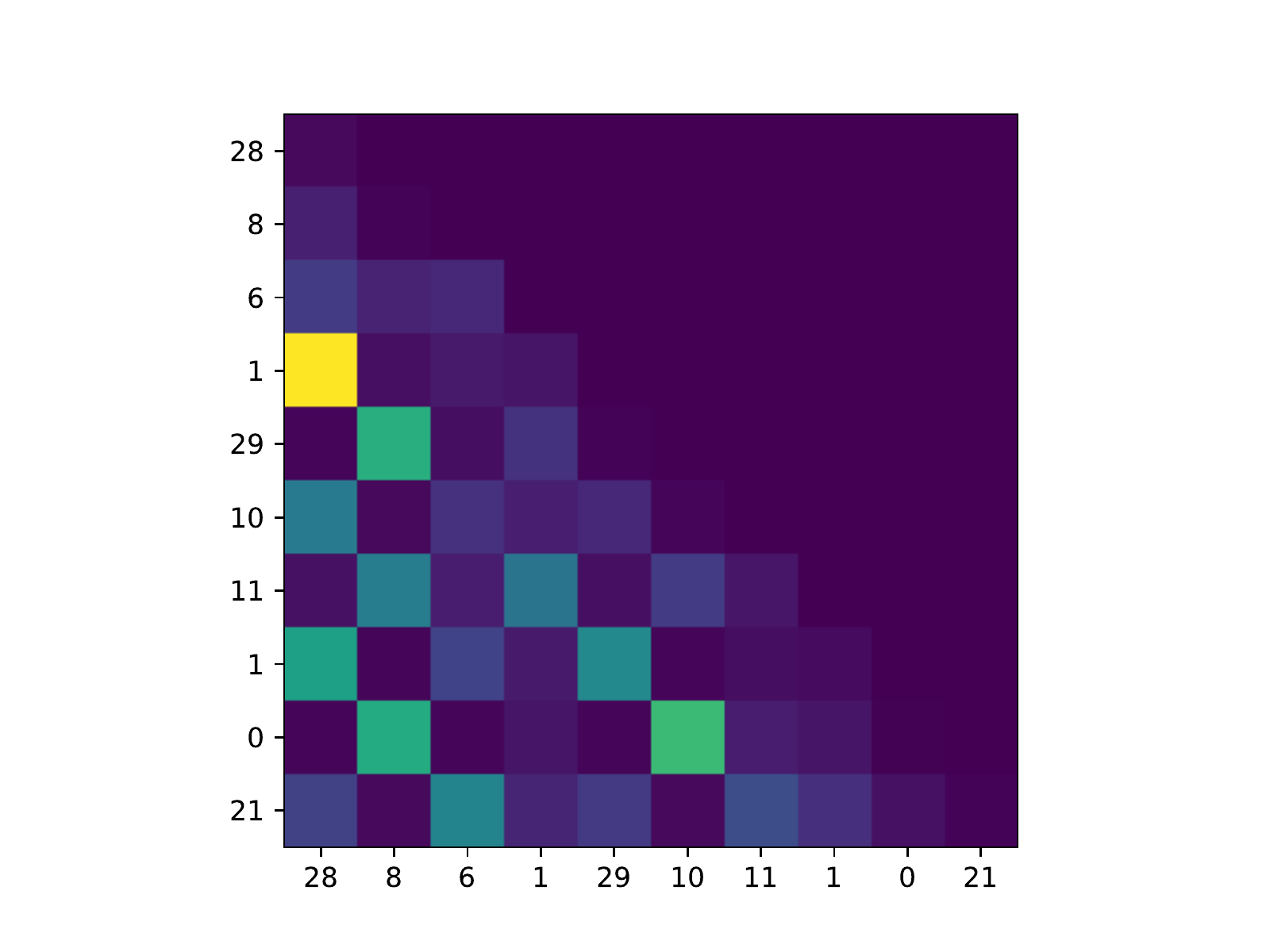}};
		\node (label) at (0.75*8,2)[draw=none, align=left, anchor=center]{ \includegraphics[width=0.15\textwidth]{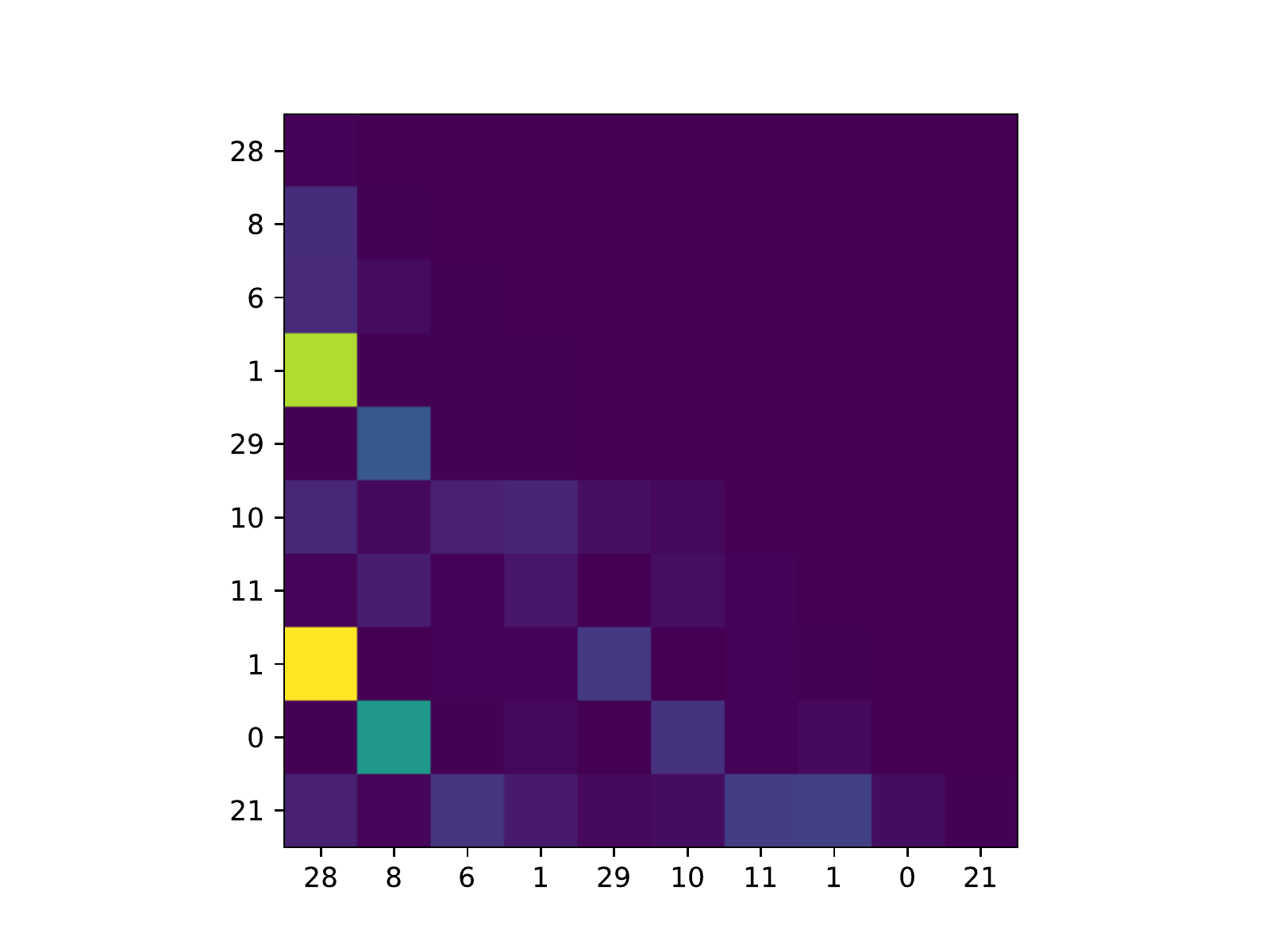}};
		\node (label) at (0.75*8,0)[draw=none, align=left, anchor=center]{ \includegraphics[width=0.15\textwidth]{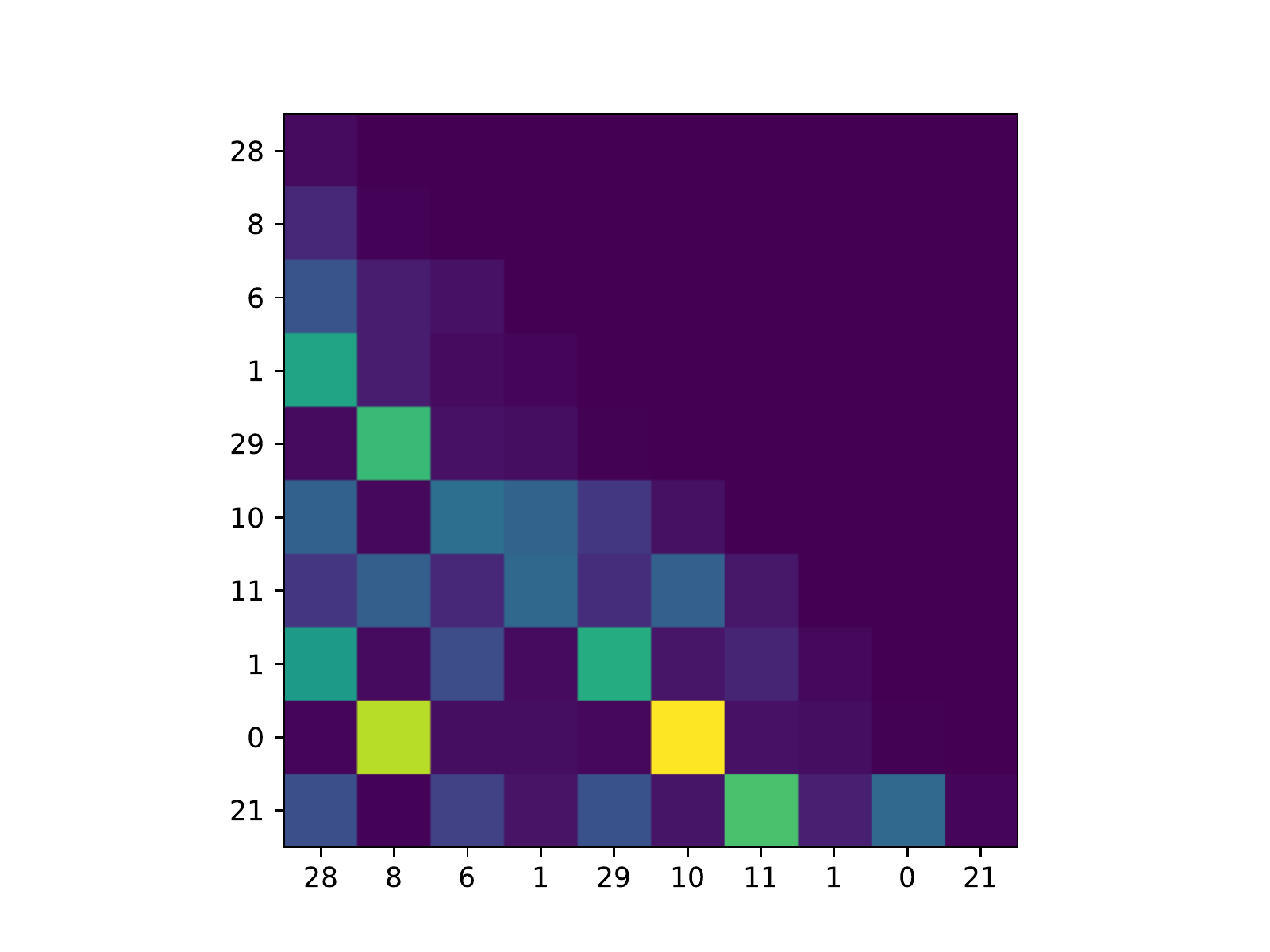}};
		\node (label) at (0.75*10,2)[draw=none, align=left, anchor=center]{ \includegraphics[width=0.15\textwidth]{probing_Sep_ChainOfThought_DiffRel_Long_End.py_medium.py_500_corpus.py_91226973.txt_293031_EPOCH_0.128.txt_2_0.pdf}};
		\node (label) at (0.75*10,0)[draw=none, align=left, anchor=center]{ \includegraphics[width=0.15\textwidth]{probing_Sep_ChainOfThought_DiffRel_Long_End.py_medium.py_500_corpus.py_91226973.txt_293031_EPOCH_0.128.txt_2_1.pdf}};
		\node (label) at (0.75*12,2)[draw=none, align=left, anchor=center]{ \includegraphics[width=0.15\textwidth]{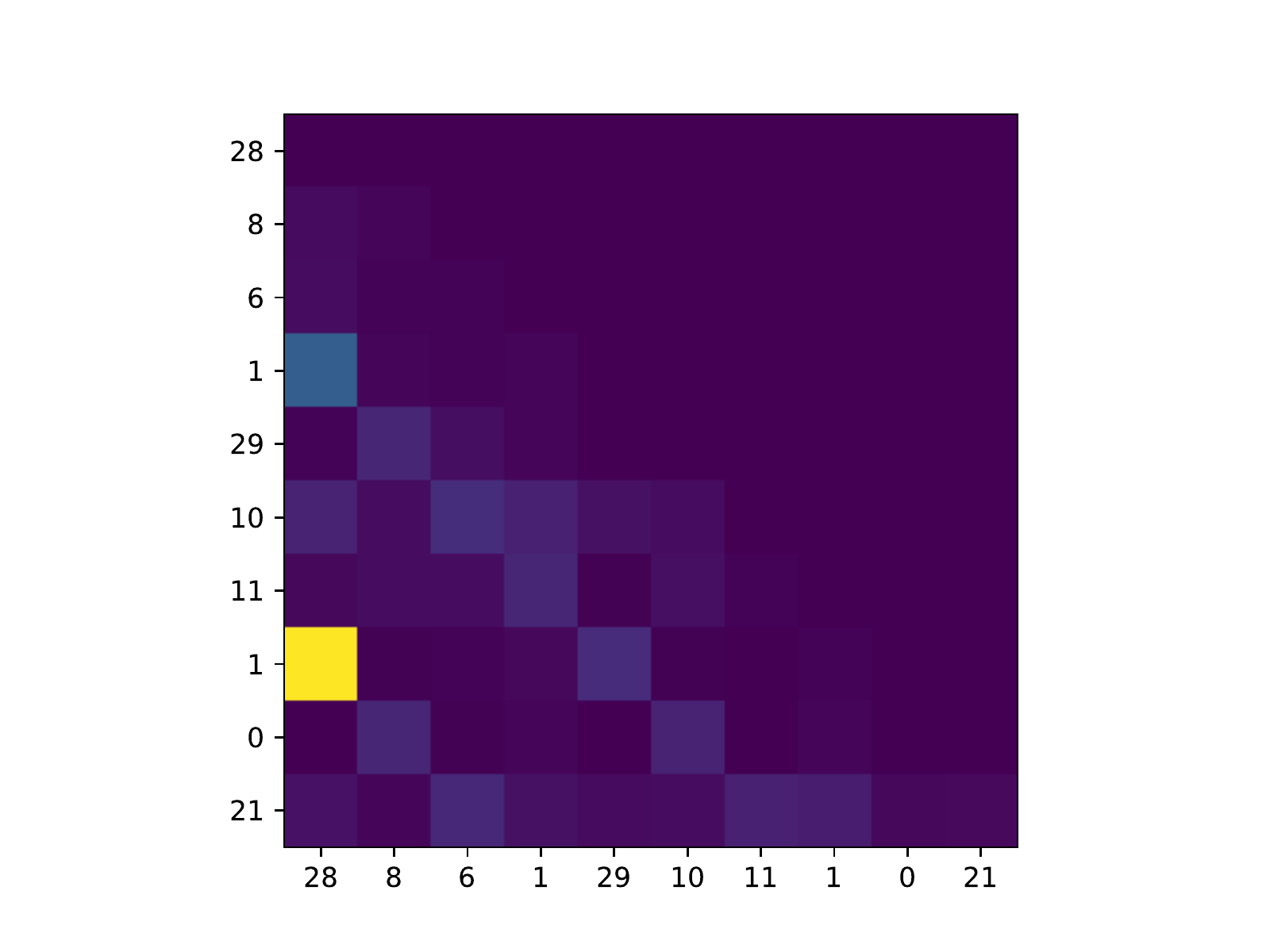}};
		\node (label) at (0.75*12,0)[draw=none, align=left, anchor=center]{ \includegraphics[width=0.15\textwidth]{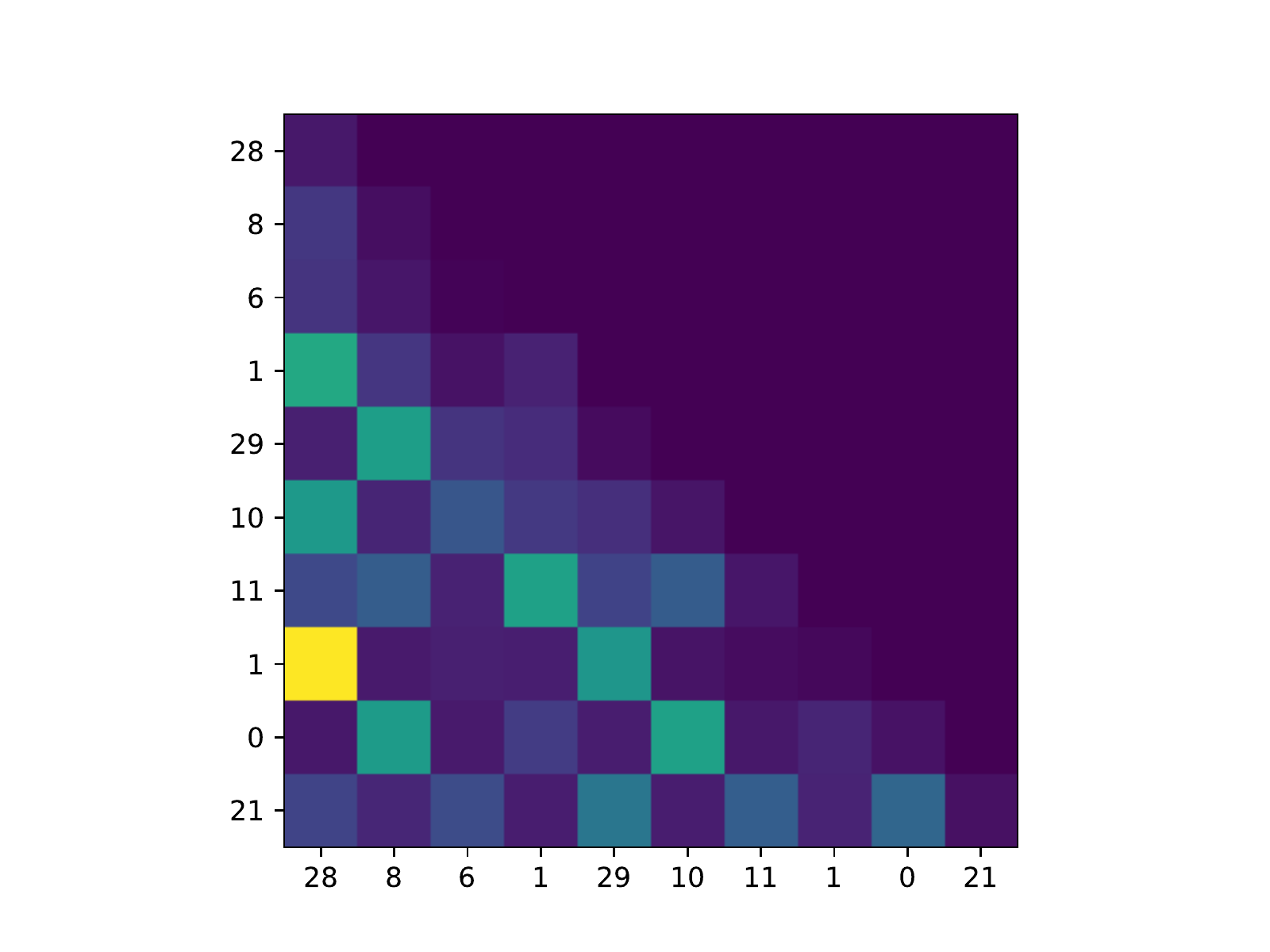}};
		\node (label) at (0.75*14,2)[draw=none, align=left, anchor=center]{ \includegraphics[width=0.15\textwidth]{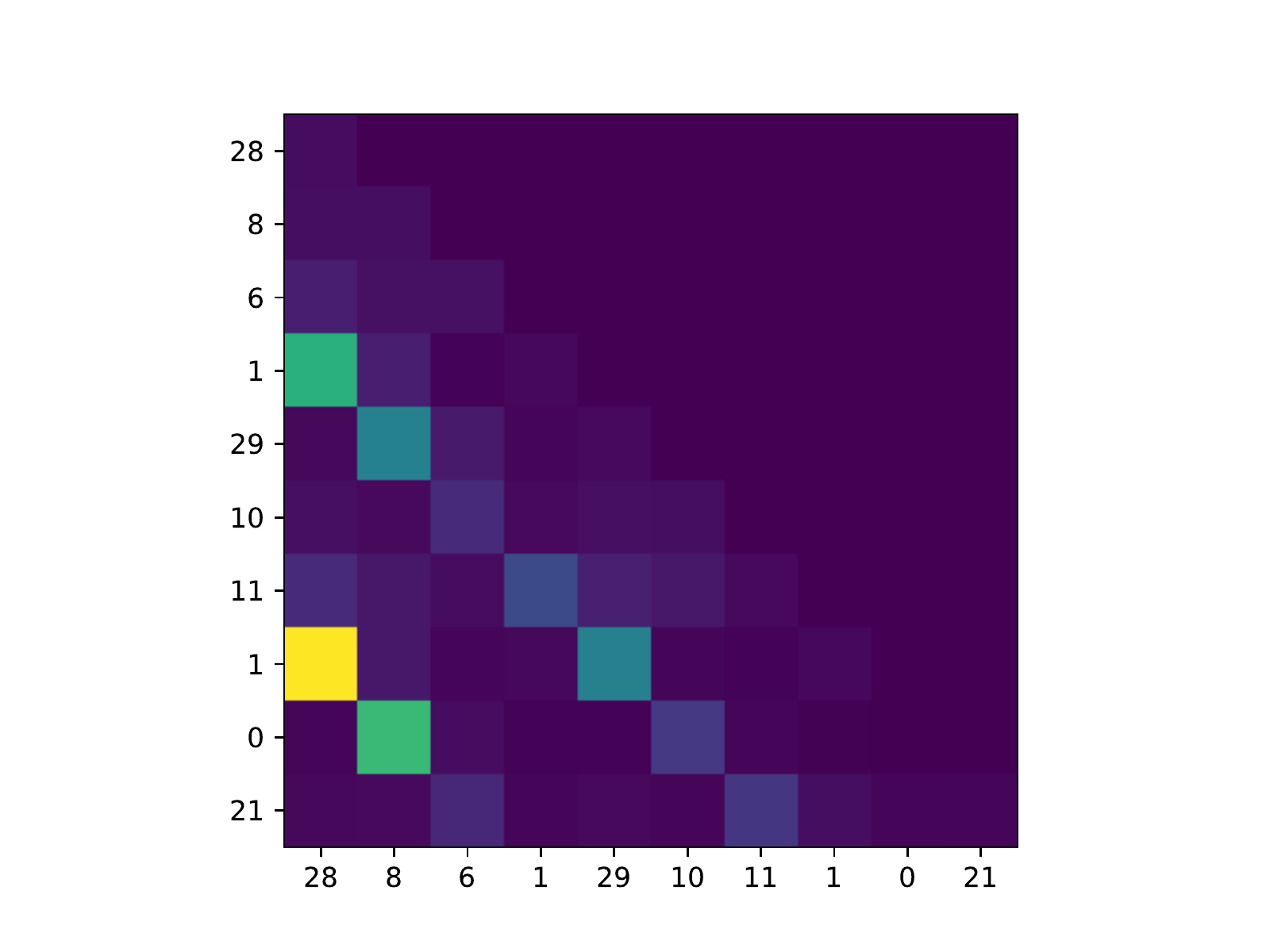}};
		\node (label) at (0.75*14,0)[draw=none, align=left, anchor=center]{ \includegraphics[width=0.15\textwidth]{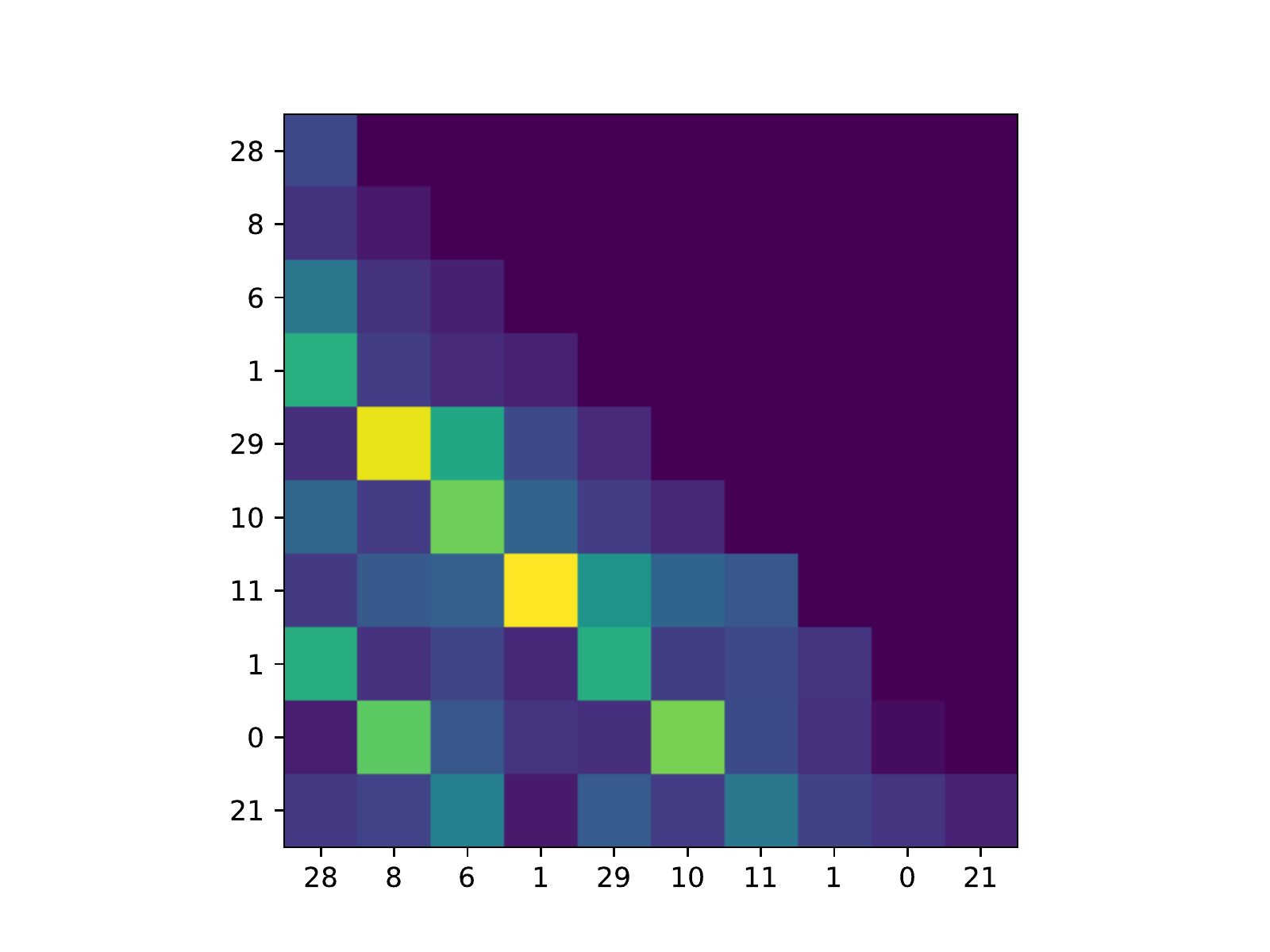}};
		\node (label) at (0.75*16,2)[draw=none, align=left, anchor=center]{ \includegraphics[width=0.15\textwidth]{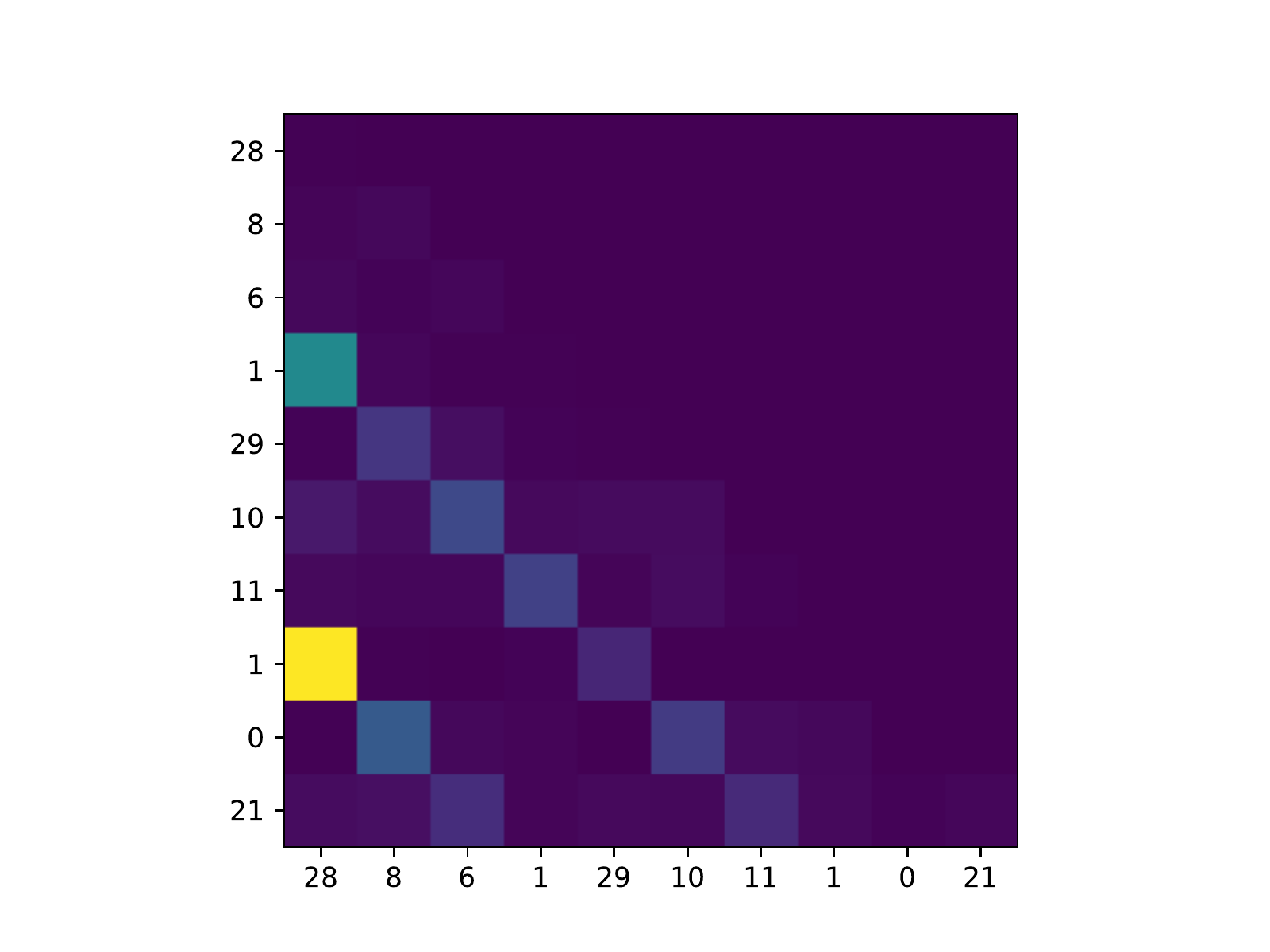}};
		\node (label) at (0.75*16,0)[draw=none, align=left, anchor=center]{ \includegraphics[width=0.15\textwidth]{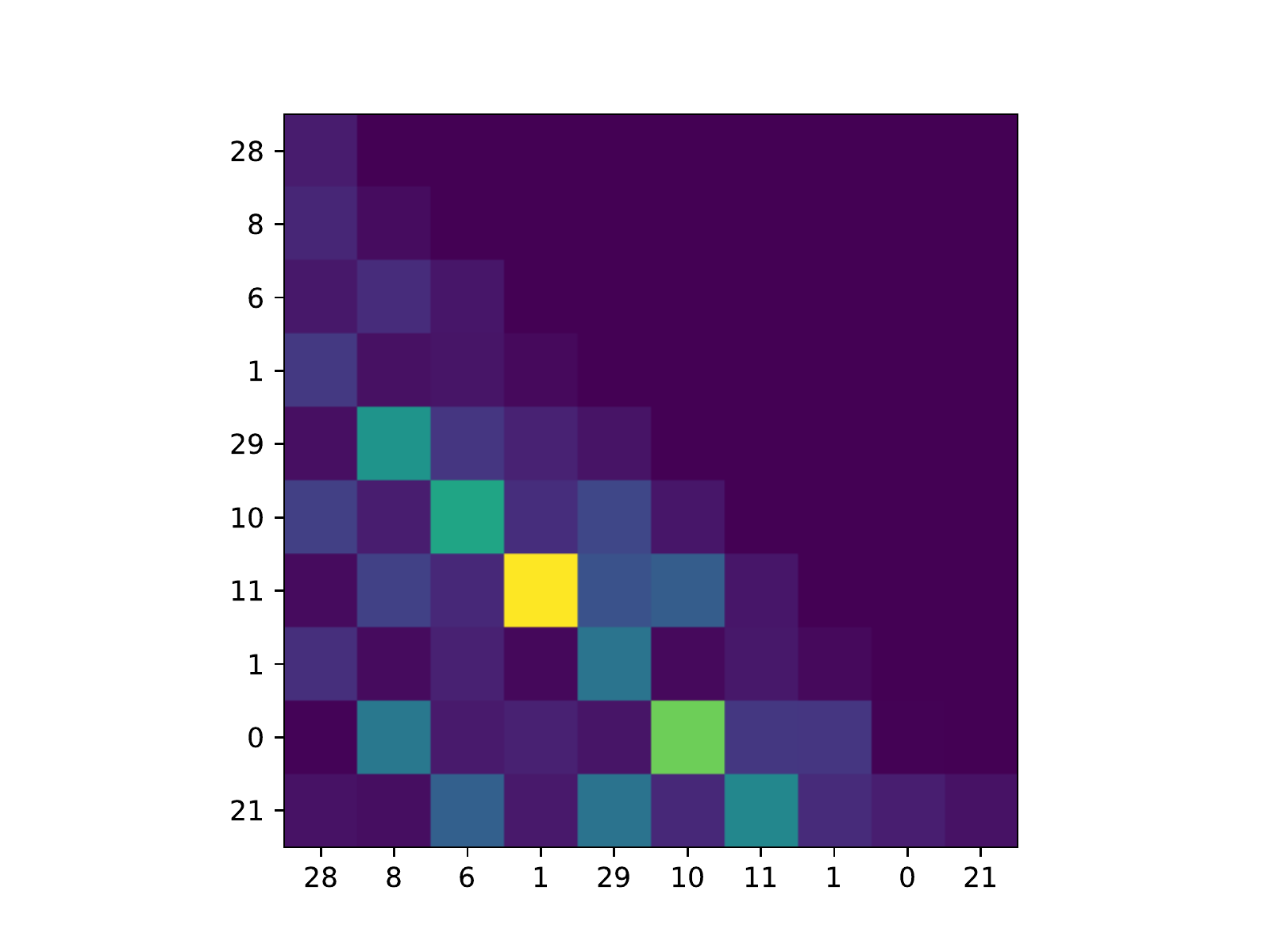}};
		\node (label) at (0.75*18,2)[draw=none, align=left, anchor=center]{ \includegraphics[width=0.15\textwidth]{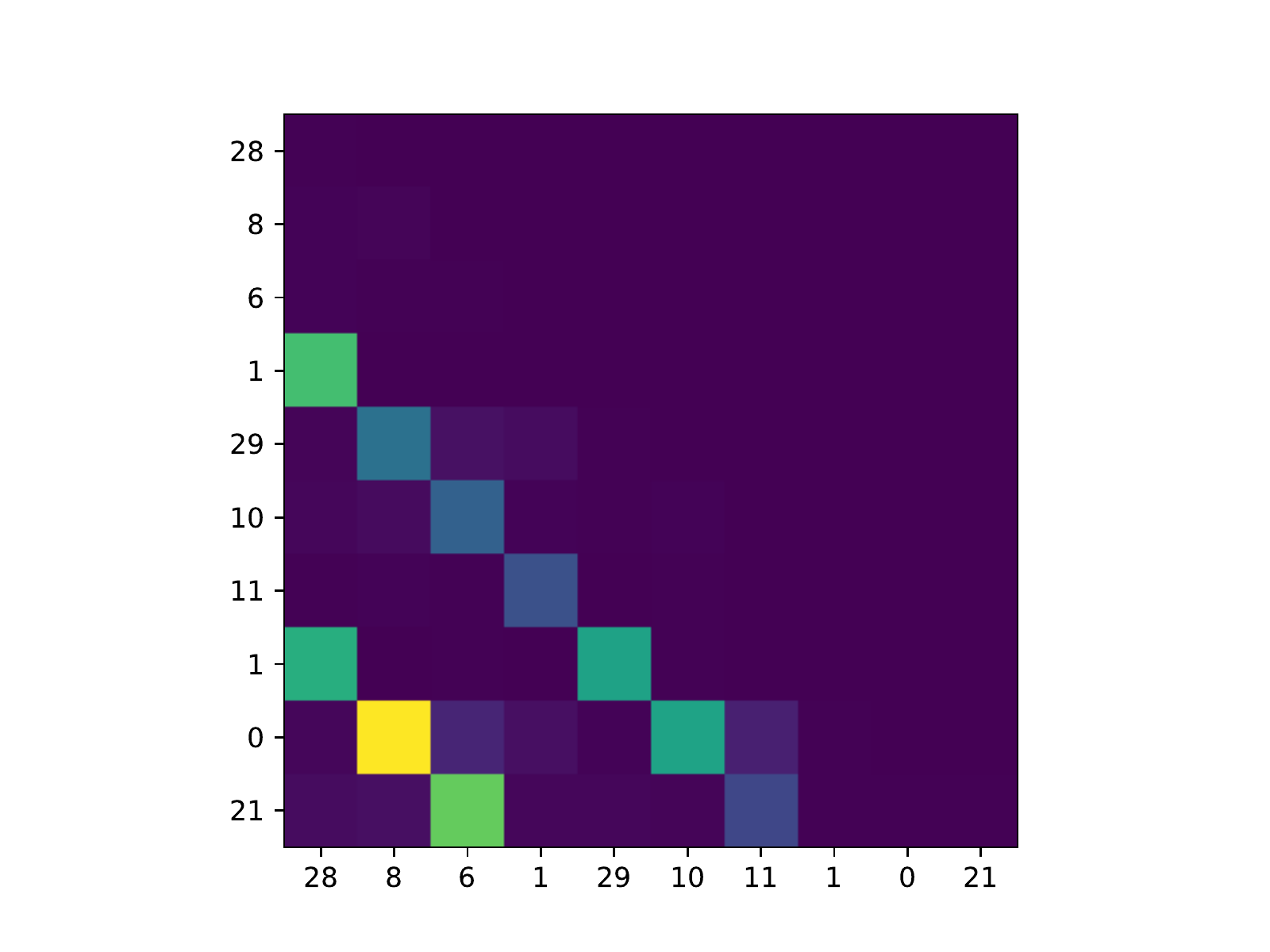}};
		\node (label) at (0.75*18,0)[draw=none, align=left, anchor=center]{ \includegraphics[width=0.15\textwidth]{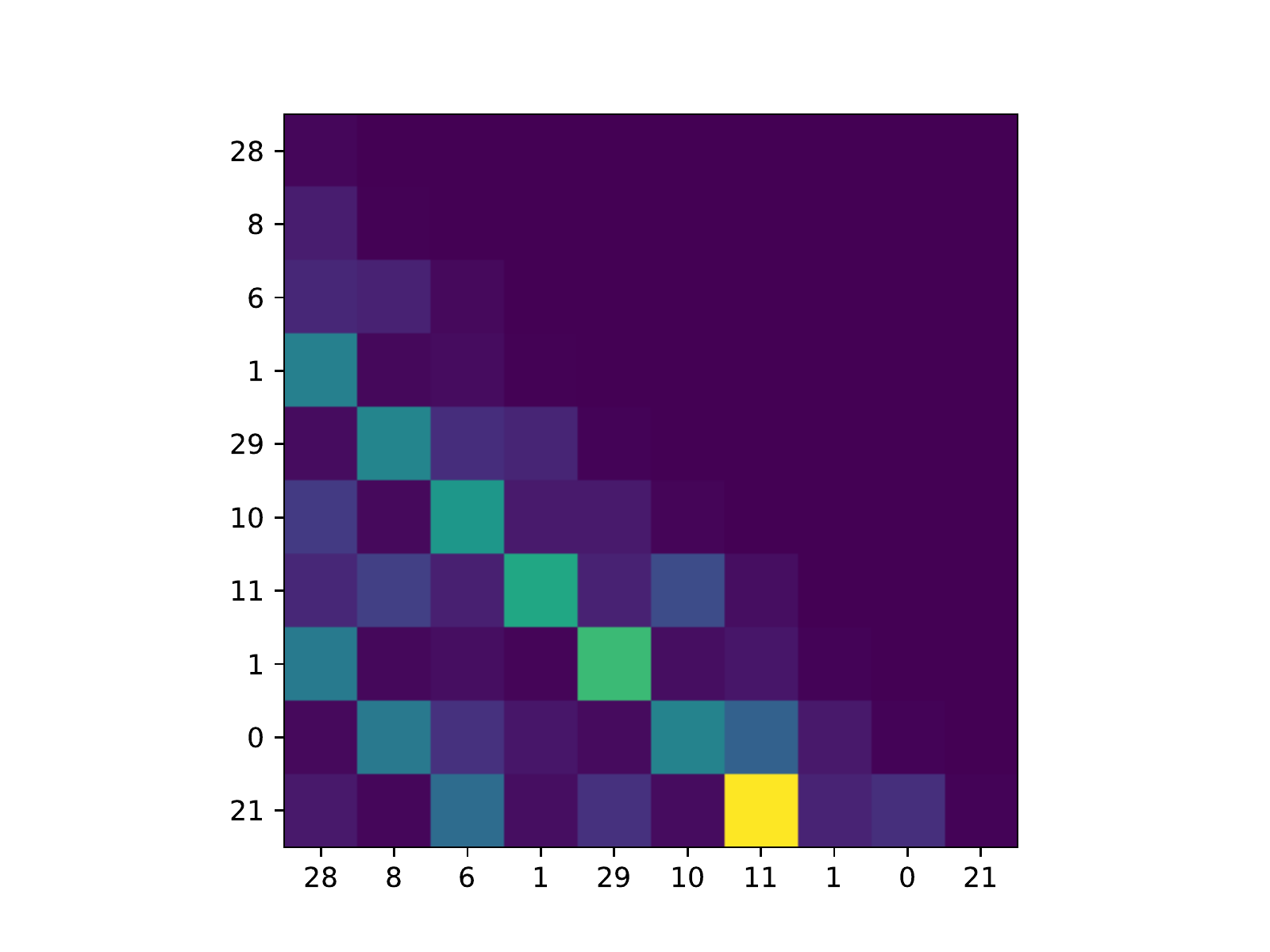}};
	\end{tikzpicture}

	\caption{Attention Maps: chain-of-thought prompting, 21M parameter model, for the two attention heads in the top layer, by the amount of pretraining measured in tokens.
	The rightmost facet corresponds to Figure~\ref{eq:attention-correlation}A.
	A periodic attention pattern becomes visible at around 64M training tokens, preceding the rapid emergence of high accuracy between 128M and 256M tokens.}
\end{figure*}

\begin{figure*}
\centering
        \begin{tikzpicture}

		\node (label) at (0,0)[draw=none, align=left, anchor=west]{ \includegraphics[width=0.6\textwidth]{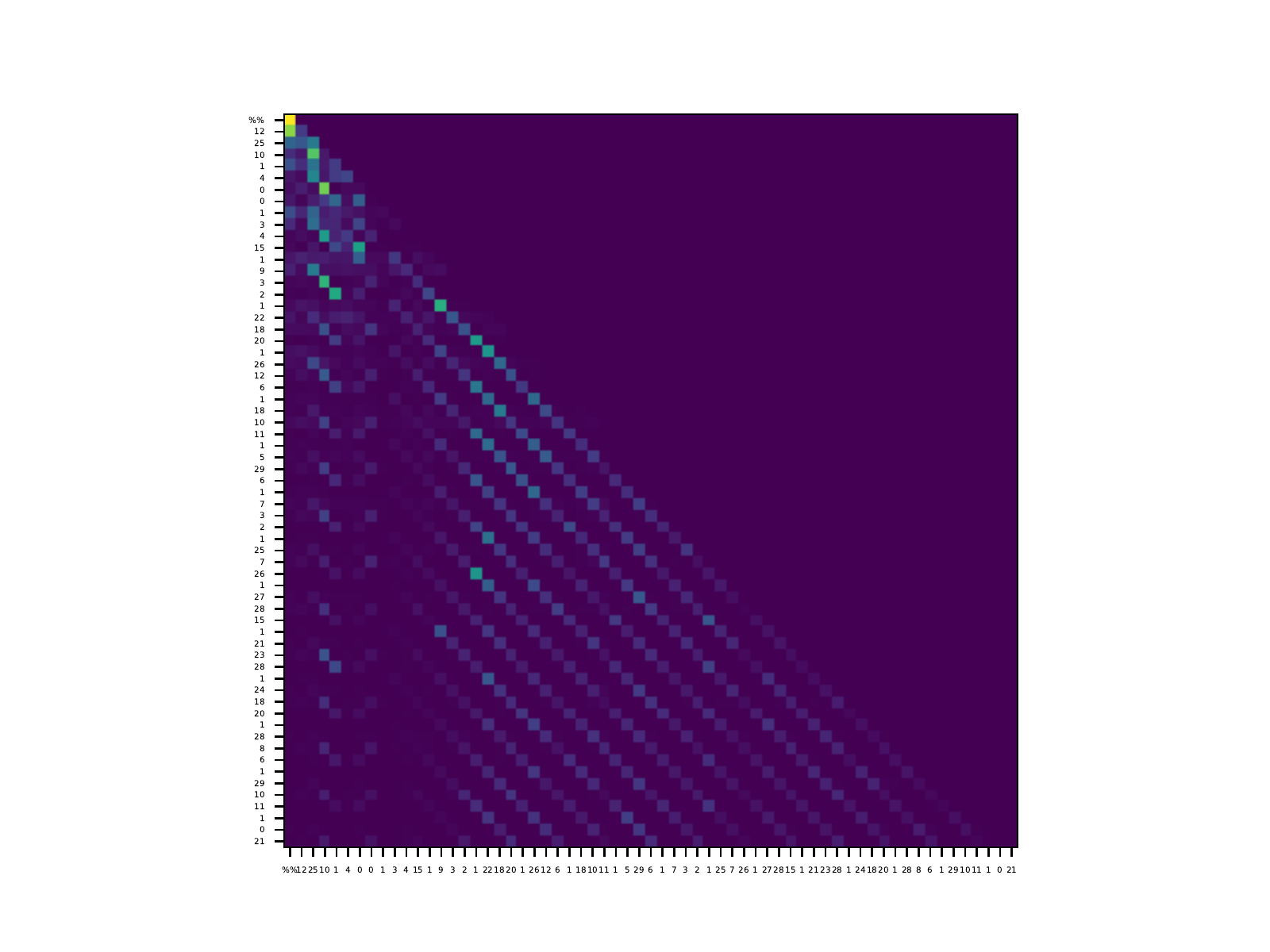}};
		\node (label) at (8,0)[draw=none, align=left, anchor=west]{ \includegraphics[width=0.6\textwidth]{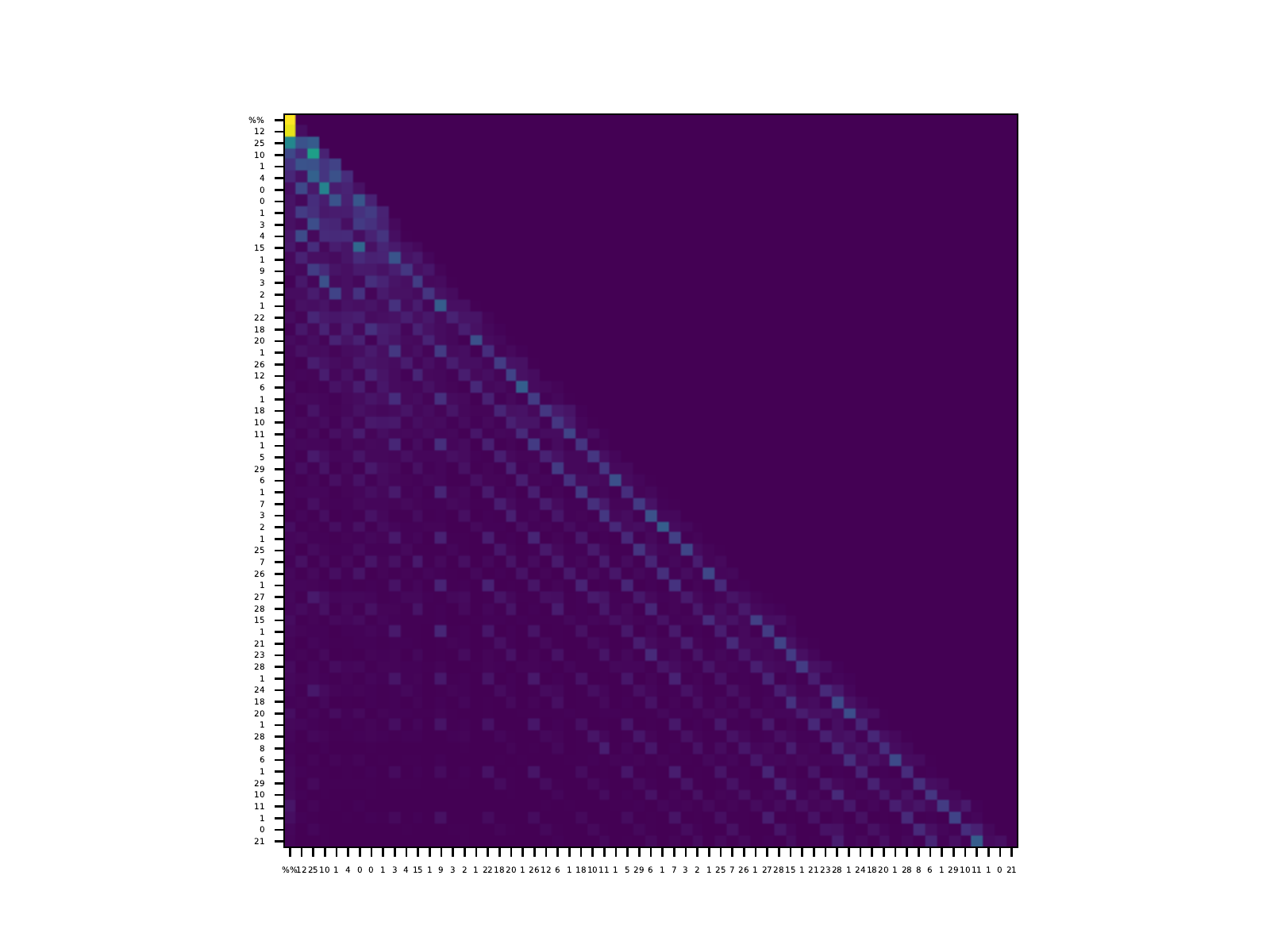}};
		\node (label) at (0,-7)[draw=none, align=left, anchor=west]{ \includegraphics[width=0.6\textwidth]{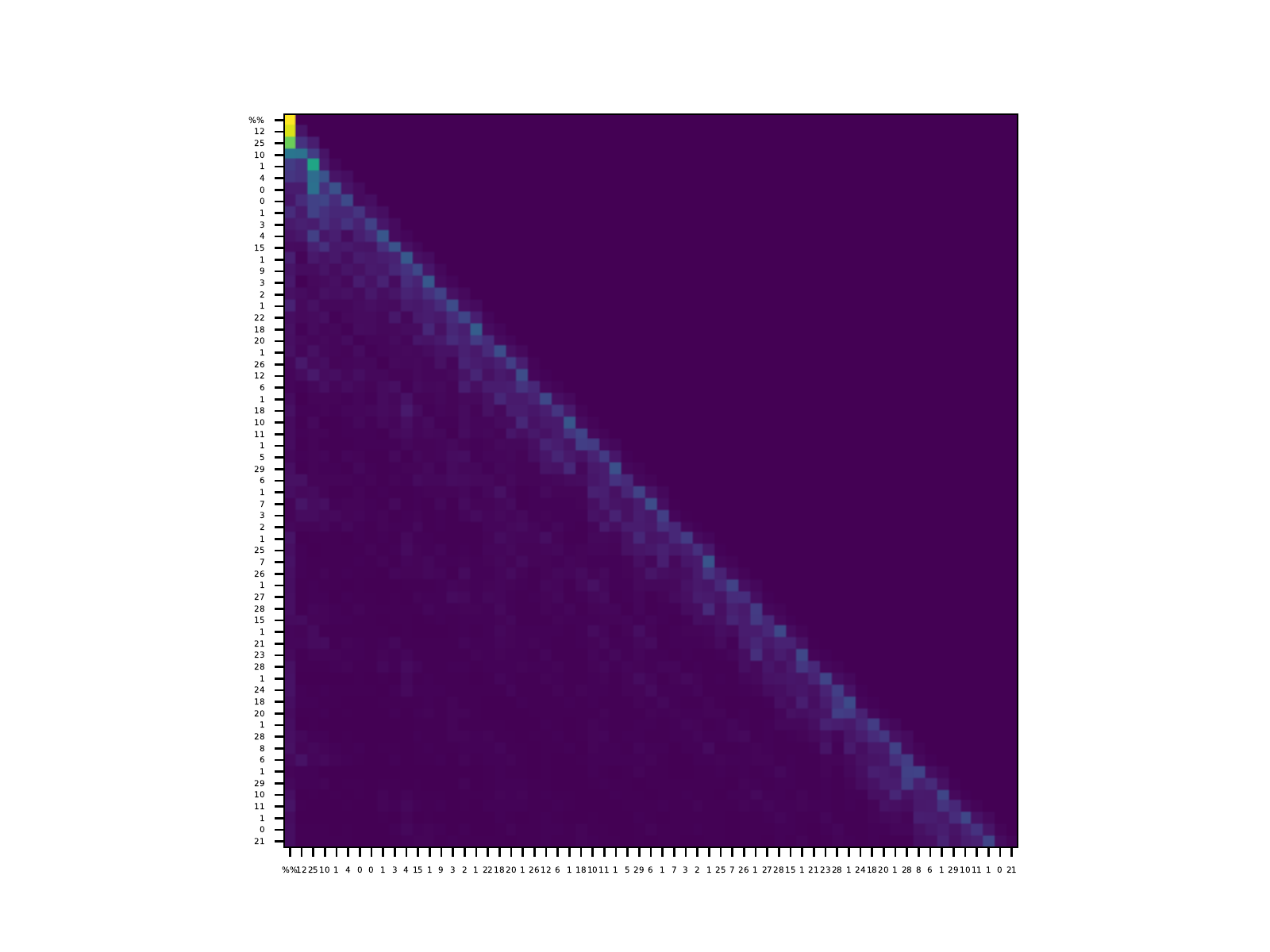}};
		\node (label) at (8,-7)[draw=none, align=left, anchor=west]{ \includegraphics[width=0.6\textwidth]{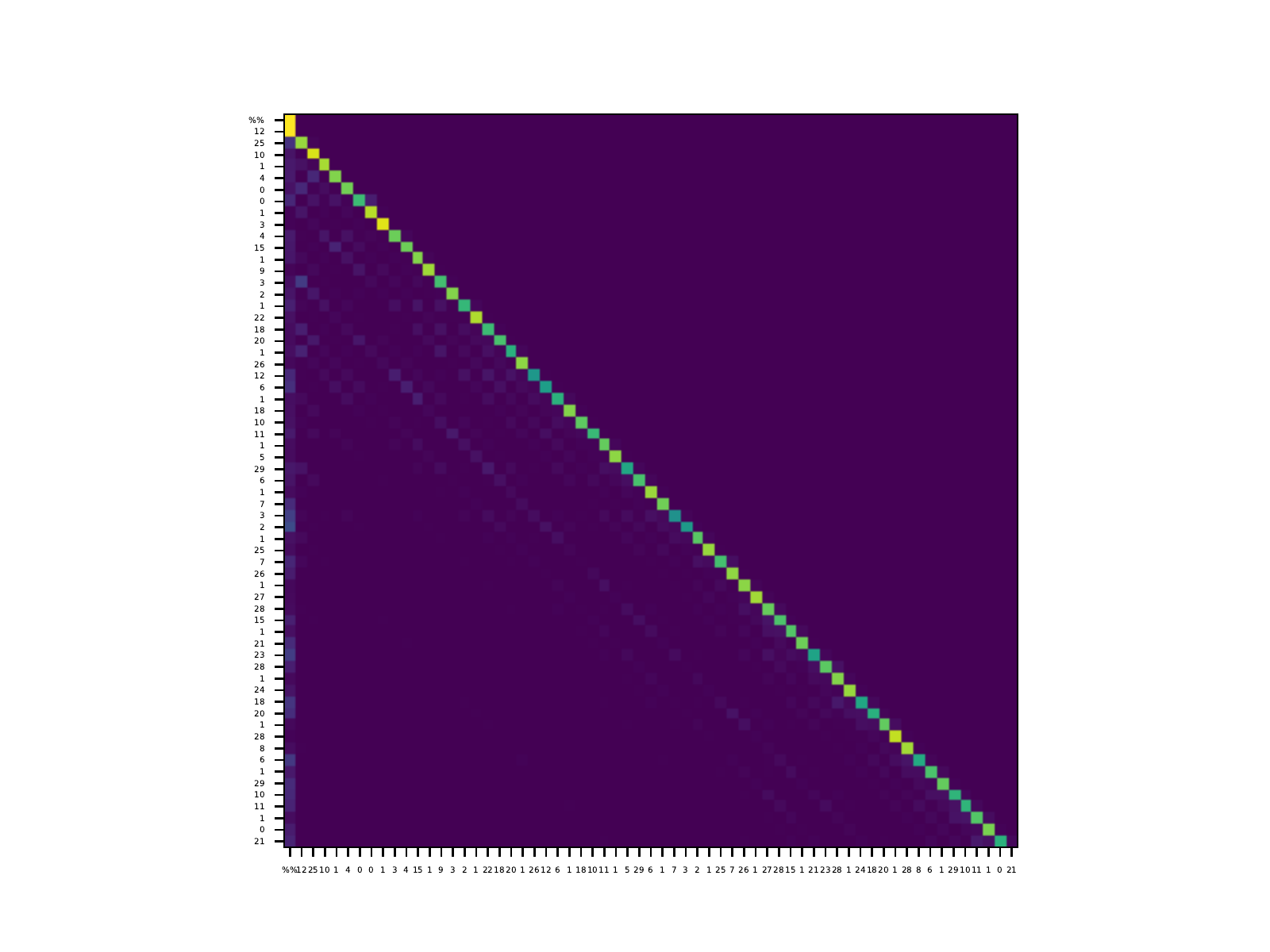}};
		\node (label) at (0,-14)[draw=none, align=left, anchor=west]{ \includegraphics[width=0.6\textwidth]{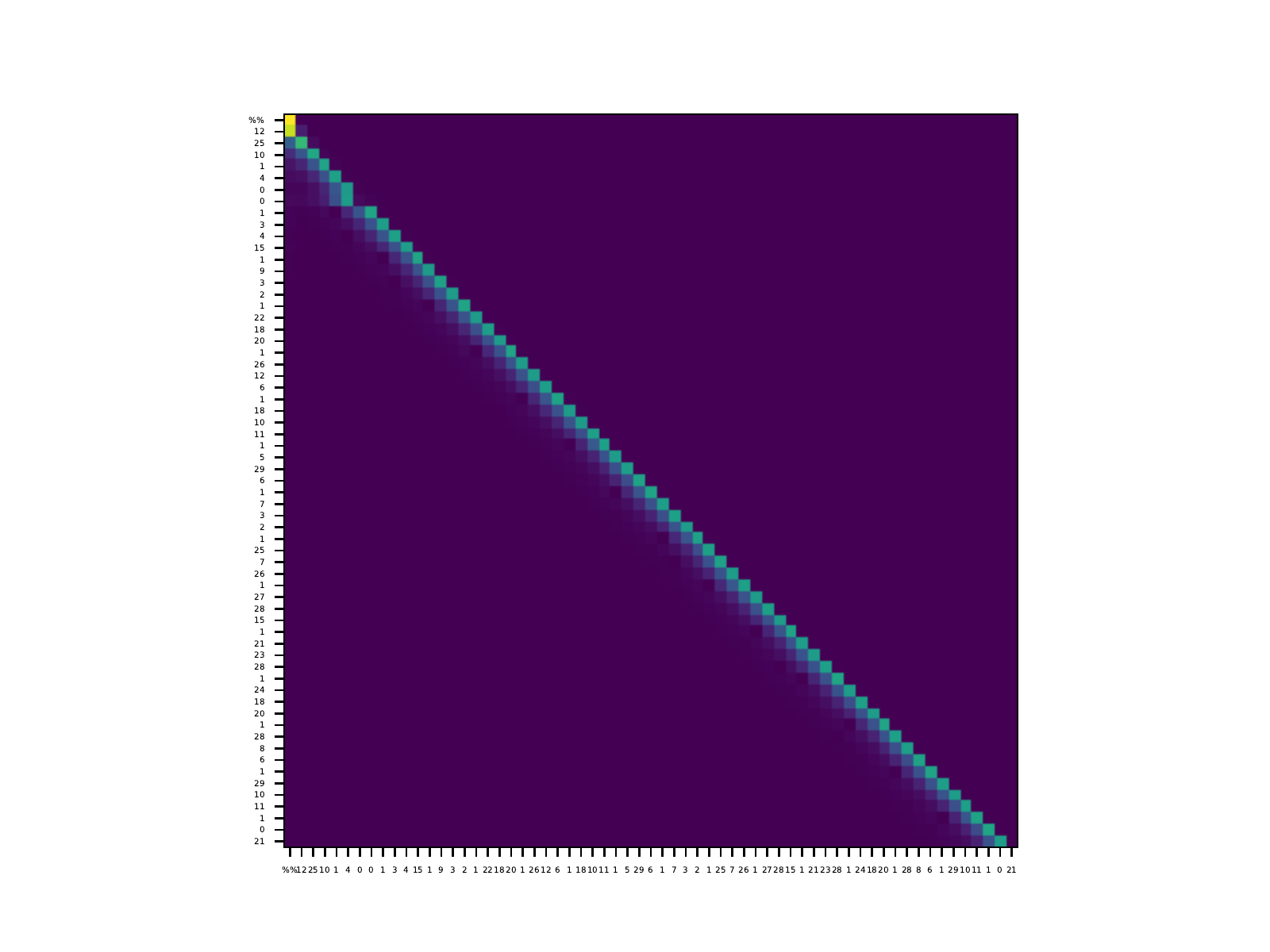}};
		\node (label) at (8,-14)[draw=none, align=left, anchor=west]{ \includegraphics[width=0.6\textwidth]{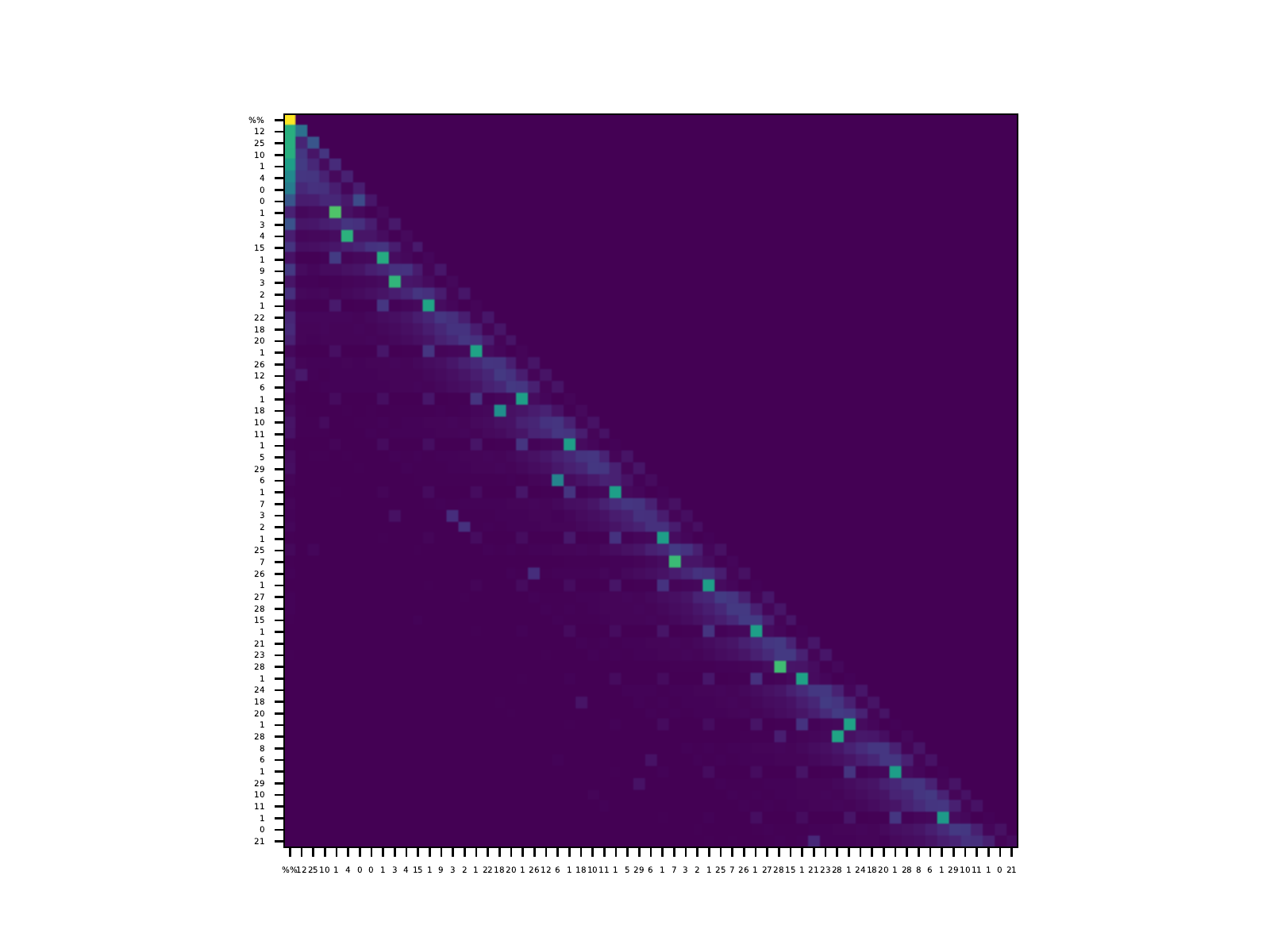}};

	\end{tikzpicture}

	\caption{Full attention maps: chain-of-thought prompting, 21M parameter model. Columns: heads. Rows: layers.
The bottom right corner is shown in Figure~\ref{eq:attention-correlation}A.
}
\end{figure*}

\begin{figure*}
\centering
        \begin{tikzpicture}

		\node (label) at (0,0)[draw=none, align=left, anchor=west]{ \includegraphics[width=0.6\textwidth]{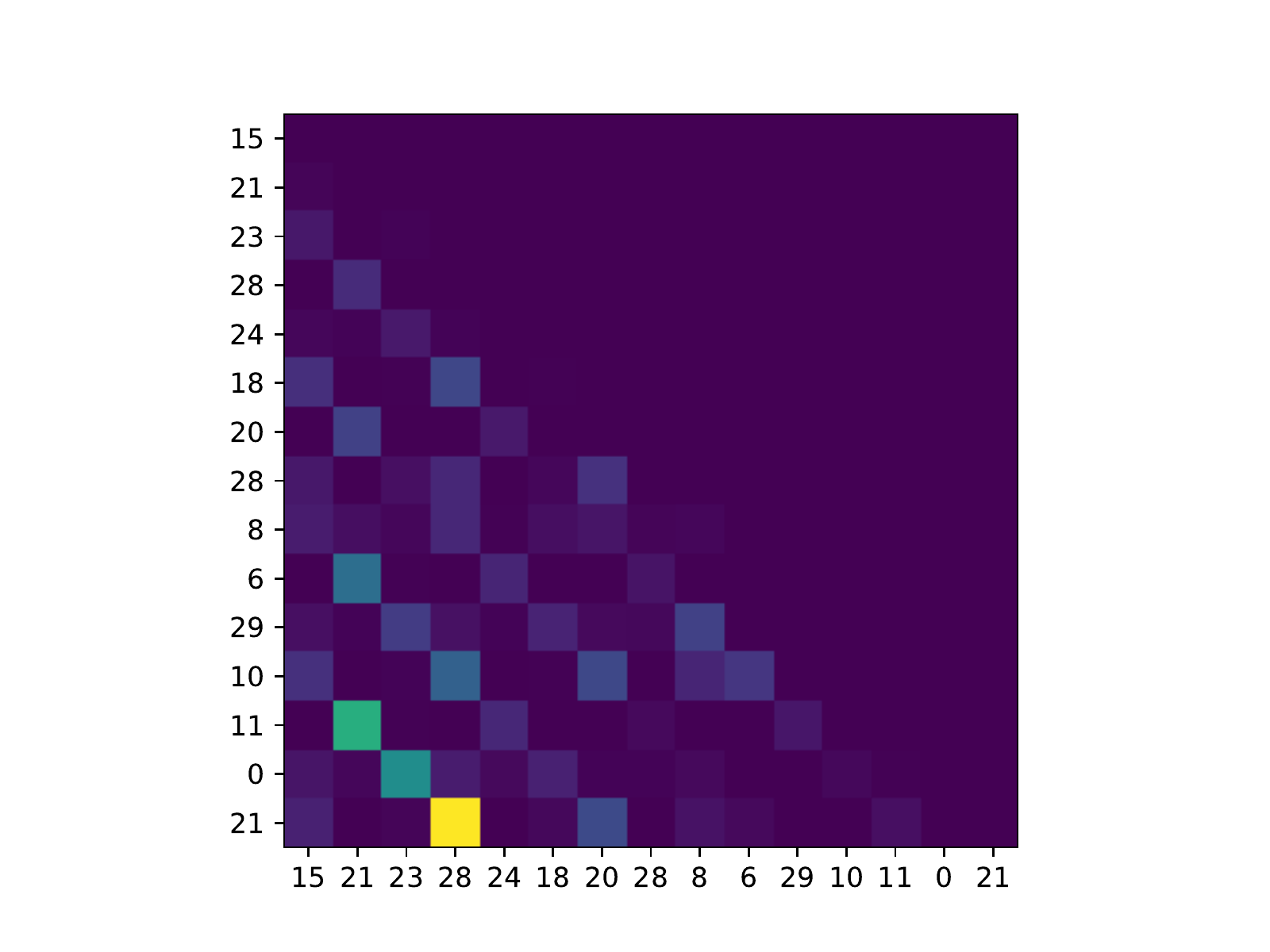}};
 \node (label) at (8,0)[draw=none, align=left, anchor=west]{\includegraphics[width=0.6\textwidth]{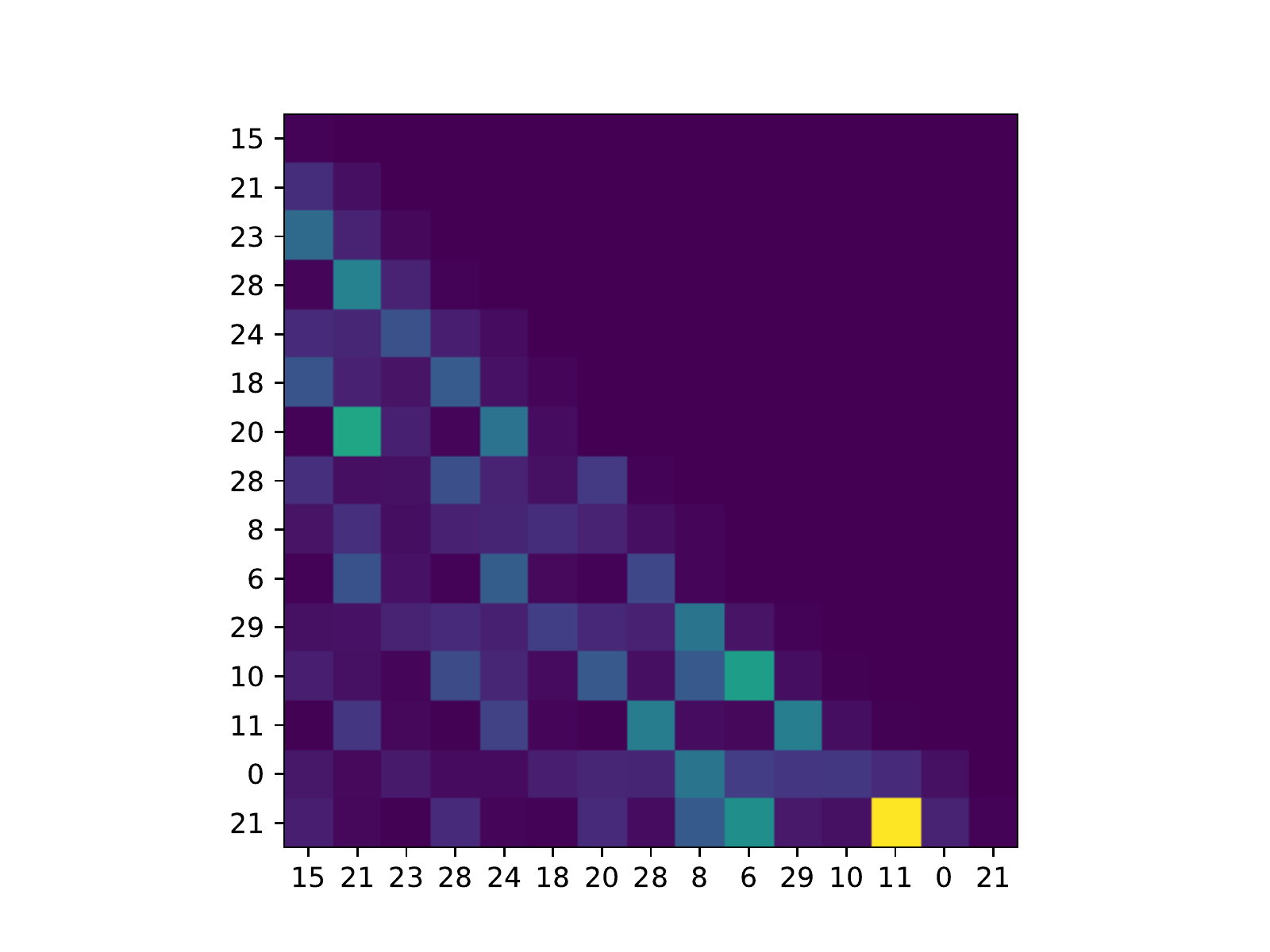}};
		\node (label) at (0,-7)[draw=none, align=left, anchor=west]{ \includegraphics[width=0.6\textwidth]{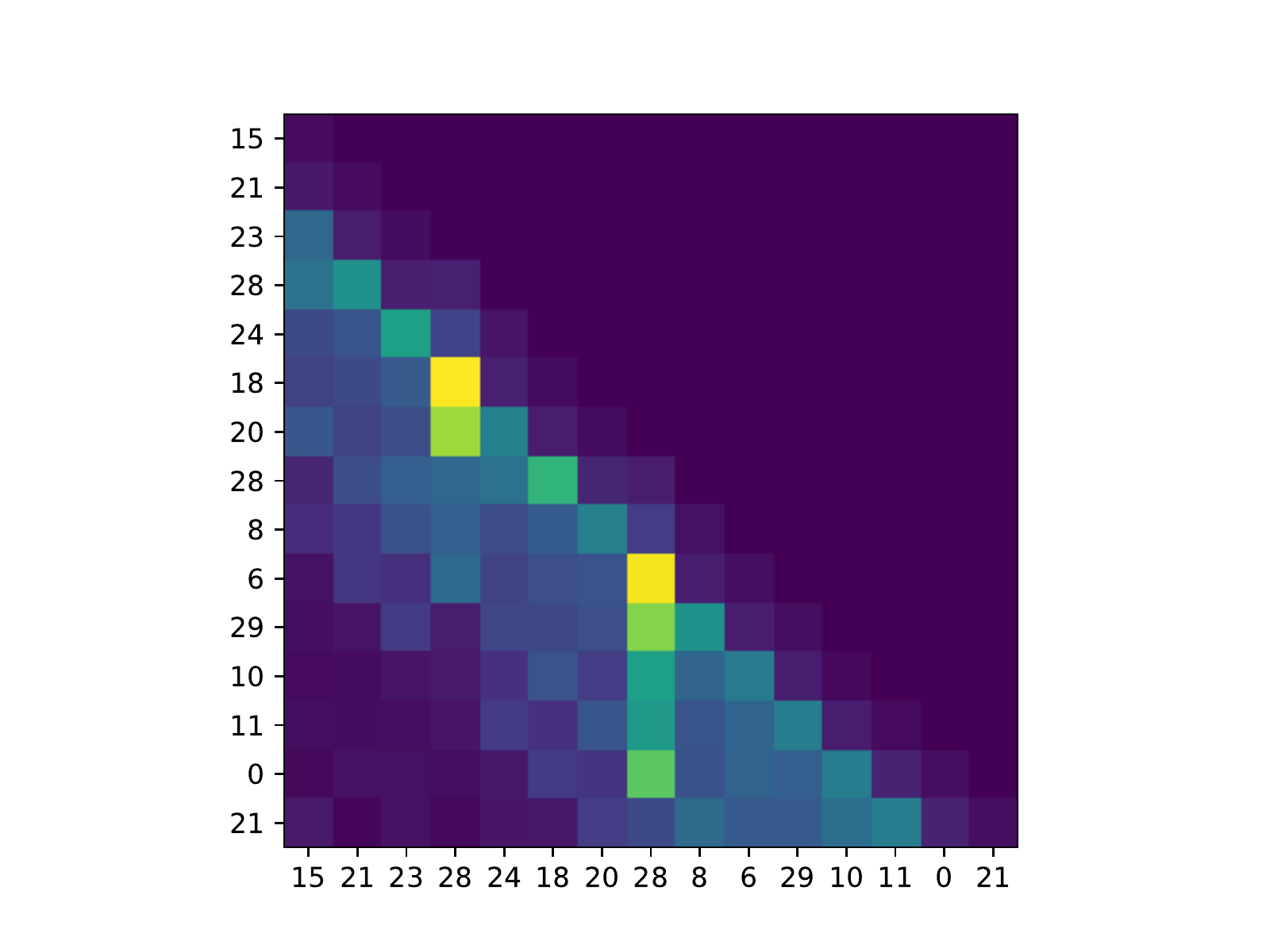}};
		\node (label) at (8,-7)[draw=none, align=left, anchor=west]{ \includegraphics[width=0.6\textwidth]{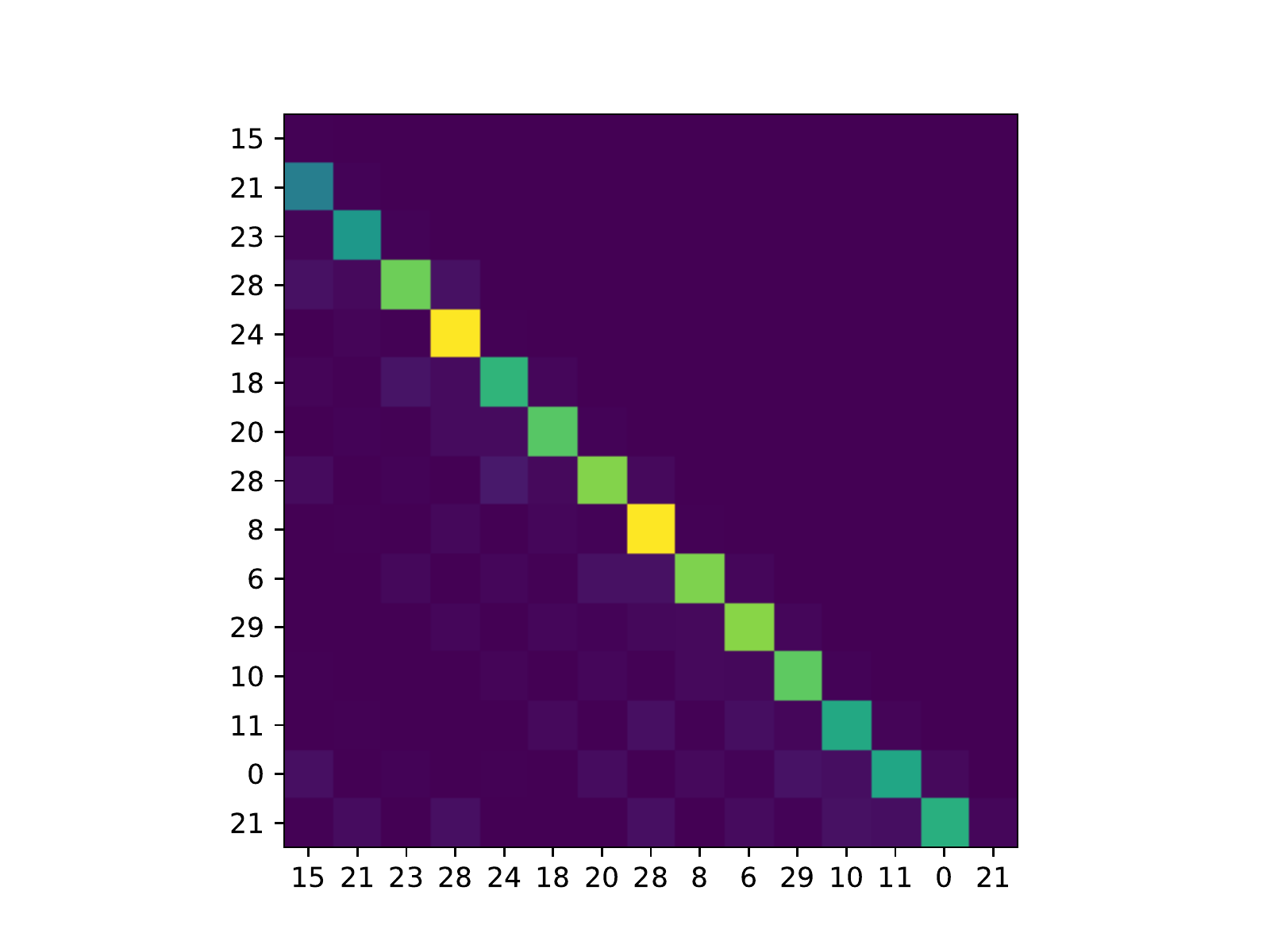}};
		\node (label) at (0,-14)[draw=none, align=left, anchor=west]{ \includegraphics[width=0.6\textwidth]{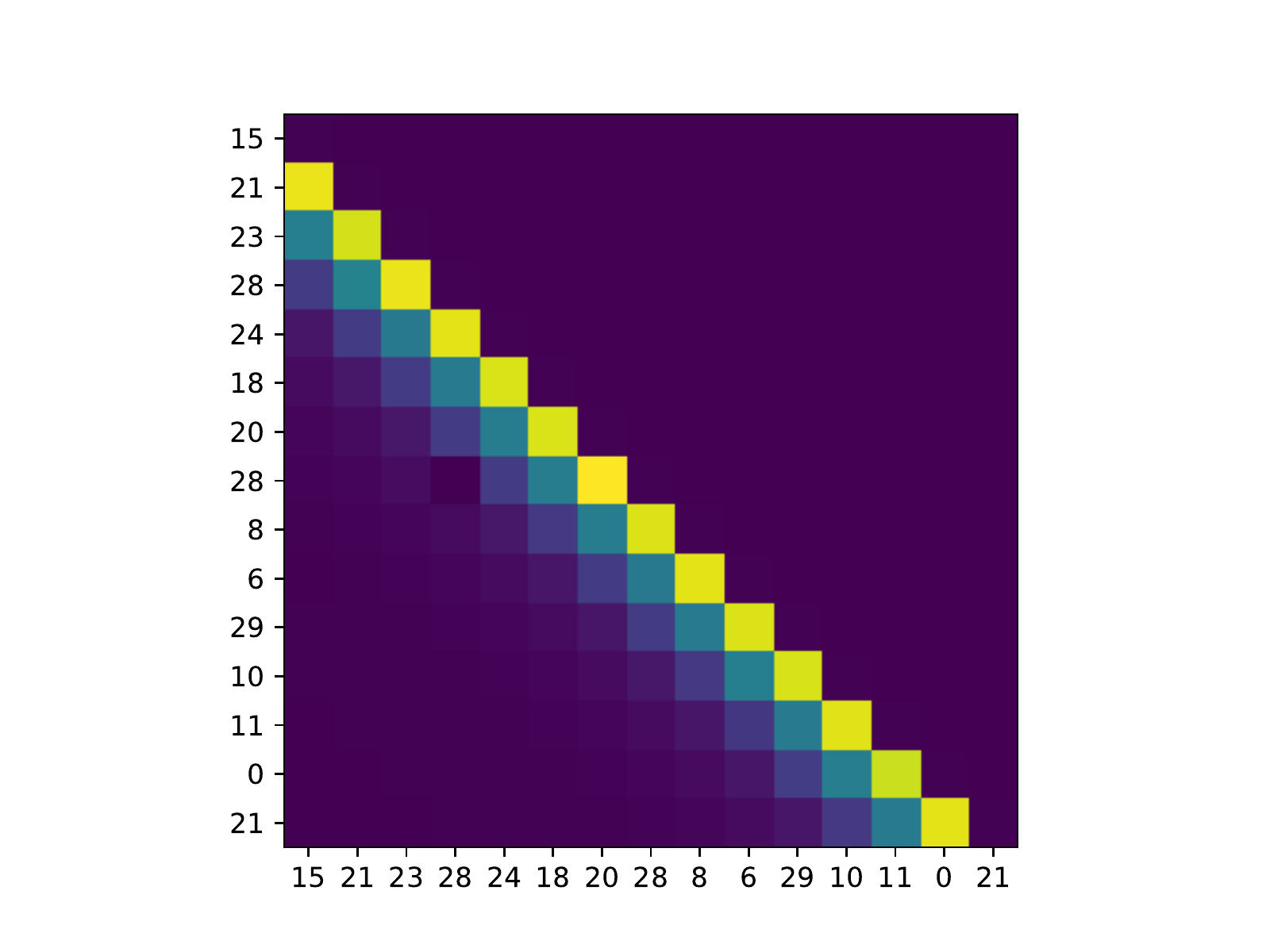}};
		\node (label) at (8,-14)[draw=none, align=left, anchor=west]{ \includegraphics[width=0.6\textwidth]{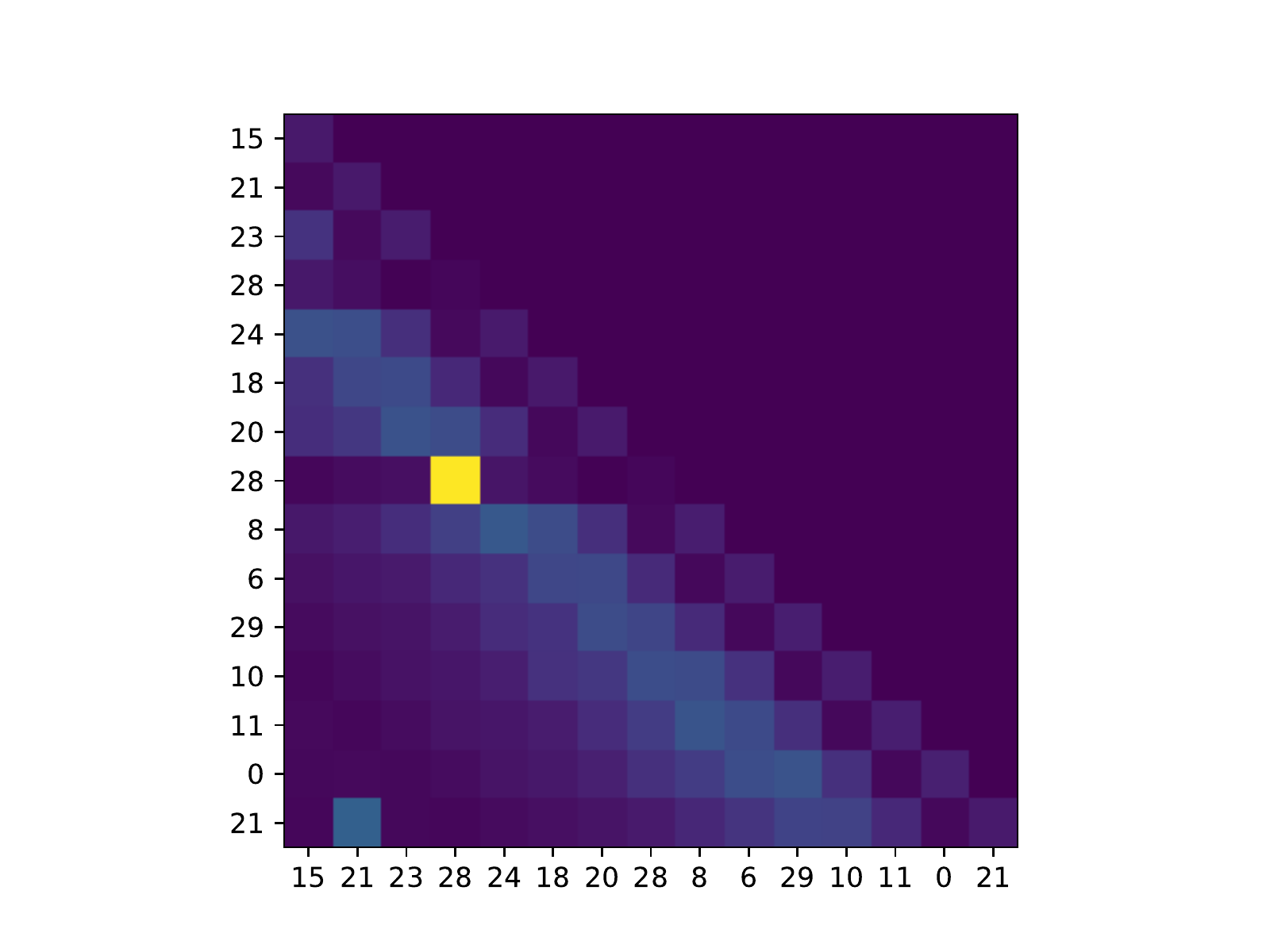}};

	\end{tikzpicture}

	\caption{Attention maps (for the last 15 tokens): chain-of-thought prompting, but with the separator removed. 21M parameter model. Columns: heads. Rows: layers.
Corresponding version with separator is shown in Figure~\ref{eq:attention-correlation}A.
}\label{fig:att-map-cot-no-sep}
\end{figure*}

\tableofcontents

\section{Task Definitions}\label{sec:task-formulas}

\paragraph{Propositional}

1 literal:
\begin{enumerate}
\item $f_i(z)=x$ (\textsc{Inverse})
\end{enumerate}

4 literals:

\begin{enumerate}[label=\arabic*.,start=2]
\item $(f_i(x) = z \wedge f_j(z) = y) \vee (f_i(y) = z \wedge f_j(z) = x)$ (\textsc{MissingLink})

\item $(f_i(x) = z \wedge f_i(z) = y) \vee (f_j(z) = x \wedge f_j(z) = y)$

\item $(f_i(x) = z \wedge f_j(z) = y) \vee (f_i(x) = y \wedge f_j(x) = z)$

\item $(f_i(x) = z \wedge f_i(z) = y) \vee (f_j(x) = y \wedge f_j(z) = x)$

\item $(f_i(x) = z  \wedge  (f_j(z) = y)) \vee    (f_i(z) = x  \wedge  (f_j(y) = z))$

\item $(f_i(x) = z  \wedge  (f_j(z) = y)) \vee   (f_i(z) = x  \wedge  (f_k(z) = y))$

\end{enumerate}

8 literals:

\begin{enumerate}[label=\arabic*.,start=8]
\item $(f_i(x) = z  \wedge  (f_j(y) = x)) \vee  (f_i(z) = x  \wedge  (f_j(x) = y)) \vee   (f_j(y) = z  \wedge  (f_k(y) = x)) \vee   (f_j(z) = y  \wedge  (f_k(x) = y))$

\item $(f_k(x) = z  \wedge  (f_j(y) = x)) \vee   (f_i(x) = z  \wedge  (f_k(x) = y)) \vee   (f_i(z) = x  \wedge  (f_k(y) = x)) \vee      (f_j(z) = y  \wedge  (f_i(y) = x))$

\item $(f_i(x) = z  \wedge  (f_j(y) = x)) \vee    (f_i(z) = x  \wedge  (f_j(x) = y)) \vee    (f_i(y) = z  \wedge  (f_k(y) = x)) \vee   (f_i(z) = y  \wedge  (f_k(x) = y))$

\end{enumerate}

\paragraph{Composed}

\begin{enumerate}[label=\arabic*.,start=11]
\item $\exists a : a=f_i(x) \wedge z=f_j(a)$
\item $\exists a : (f_i(x) = z  \wedge  f_j(a) = y \wedge f_k(z) = a)) \vee  (f_i(z) = x  \wedge  f_j(a) = z \wedge f_k(y) = a))$

This is an extension of \textsc{MissingLink} (\#2) where a chain of \emph{three} (rather than \emph{two}) functions connects $x$ and $y$; one of the two intervening objects is the target output ($z$); the other one ($a$) is unobserved.

\end{enumerate}

\paragraph{Binary Classification}
\begin{enumerate}[label=\arabic*.,start=13]
\item $(y=f_i(x) \wedge z=\ell_i) \vee (y=f_j(x) \wedge z=\ell_j)$
\item $(y=f_i(x) \wedge z=\ell_i) \vee (x=f_i(y) \wedge z=\ell_j)$
\item $(y=f_i(x) \wedge z=\ell_i) \vee (\exists a : (x=f_i(a) \wedge a = f_i(y)) \wedge z=\ell_j))$
\end{enumerate}

\section{Example Documents}\label{sec:sample-docs}

Here we provide some example document scripts.
For each example, we assign a color to each print statement, and color tokens in the document by what statement produced it.

\subsection{Example 1}

\colorlet{linecolor1}{red}
\colorlet{linecolor2}{green}
\colorlet{linecolor3}{blue}
\colorlet{linecolor4}{orange}
\colorlet{linecolor5}{purple}
\colorlet{linecolor6}{Emerald}
\colorlet{linecolor7}{yellow}
\colorlet{linecolor8}{pink}
\colorlet{linecolor9}{cyan}
\colorlet{linecolor10}{teal}
\colorlet{linecolor11}{olive}
\colorlet{linecolor12}{lime}
\colorlet{linecolor13}{lightgray}
\colorlet{linecolor14}{brown}
\colorlet{linecolor15}{magenta}
\colorlet{linecolor16}{gray}

\begin{Verbatim}[commandchars=\\\{\}]
LOOP OVER x0 DO
 FOR SOME x1 SUCH THAT f1(x0) = x1 DO
  \textcolor{linecolor1}{PRINT x1}
 END
 FOR SOME x1 SUCH THAT f0(x1) = x0 DO
  IF f5(x1) = x0 THEN
   \textcolor{linecolor2}{PRINT x0}
   \textcolor{linecolor3}{PRINT x1}
  ELSE
   IF f3(x1) = x0 THEN
    \textcolor{linecolor4}{PRINT x1}
   ELSE
    \textcolor{linecolor5}{PRINT x1}
   ENDIF
  ENDIF
  \textcolor{linecolor6}{PRINT x0}
  \textcolor{linecolor7}{PRINT x0}
 END
ENDFOR
\end{Verbatim}

Sampled Document:

\textcolor{linecolor1}{
20
}
\textcolor{linecolor5}{
1
}
\textcolor{linecolor6}{
1
}
\textcolor{linecolor7}{
1
}
\textcolor{linecolor1}{
14
}
\textcolor{linecolor5}{
25
}
\textcolor{linecolor6}{
25
}
\textcolor{linecolor7}{
25
}
\textcolor{linecolor1}{
16
}
\textcolor{linecolor5}{
16
}
\textcolor{linecolor6}{
16
}
\textcolor{linecolor7}{
16
}
\textcolor{linecolor1}{
8
}
\textcolor{linecolor5}{
12
}
\textcolor{linecolor6}{
12
}
\textcolor{linecolor7}{
12
}
\textcolor{linecolor1}{
19
}
\textcolor{linecolor2}{
29
}
\textcolor{linecolor3}{
29
}
\textcolor{linecolor6}{
29
}
\textcolor{linecolor7}{
29
}
\textcolor{linecolor1}{
25
}
\textcolor{linecolor5}{
28
}
\textcolor{linecolor6}{
28
}
\textcolor{linecolor7}{
28
}
\textcolor{linecolor1}{
20
}
\textcolor{linecolor5}{
27
}
\textcolor{linecolor6}{
27
}
\textcolor{linecolor7}{
27
}
\textcolor{linecolor1}{
6
}
\textcolor{linecolor5}{
8
}
\textcolor{linecolor6}{
8
}
\textcolor{linecolor7}{
8
}
\textcolor{linecolor1}{
7
}
\textcolor{linecolor5}{
14
}
\textcolor{linecolor6}{
14
}
\textcolor{linecolor7}{
14
}
\textcolor{linecolor1}{
28
}
\textcolor{linecolor5}{
17
}
\textcolor{linecolor6}{
17
}
\textcolor{linecolor7}{
17
}
\textcolor{linecolor1}{
17
}
\textcolor{linecolor5}{
18
}
\textcolor{linecolor6}{
18
}
\textcolor{linecolor7}{
18
}

\subsection{Example 2}

\begin{Verbatim}[commandchars=\\\{\}]
LOOP OVER x0 DO
 FOR SOME x1 SUCH THAT f0(x0) = x1 DO
  FOR SOME x2 SUCH THAT f0(x0) = x2 DO
   FOR SOME x3 SUCH THAT f6(x3) = x1 DO
    \textcolor{linecolor1}{PRINT x3}
   END
   IF f4(x2) = x0 THEN
    \textcolor{linecolor2}{PRINT x1}
   ELSE
    IF f9(x1) = x2 THEN
     \textcolor{linecolor3}{PRINT x1}
    ELSE
     \textcolor{linecolor4}{PRINT x0}
    ENDIF
   ENDIF
  END
  IF f8(x1) = x0 THEN
   \textcolor{linecolor5}{PRINT x1}
   \textcolor{linecolor6}{PRINT x1}
   \textcolor{linecolor7}{PRINT x1}
   \textcolor{linecolor8}{PRINT x1}
   \textcolor{linecolor9}{PRINT x1}
   \textcolor{linecolor10}{PRINT x1}
  ELSE
   \textcolor{linecolor11}{PRINT x0}
   \textcolor{linecolor12}{PRINT x1}
  ENDIF
 END
ENDFOR
\end{Verbatim}

Document:

\textcolor{linecolor1}{
19
}
\textcolor{linecolor4}{
1
}
\textcolor{linecolor11}{
1
}
\textcolor{linecolor12}{
1
}
\textcolor{linecolor1}{
24
}
\textcolor{linecolor4}{
4
}
\textcolor{linecolor11}{
4
}
\textcolor{linecolor12}{
4
}
\textcolor{linecolor4}{
27
}
\textcolor{linecolor11}{
27
}
\textcolor{linecolor12}{
27
}
\textcolor{linecolor2}{
15
}
\textcolor{linecolor11}{
15
}
\textcolor{linecolor12}{
15
}
\textcolor{linecolor1}{
2
}
\textcolor{linecolor4}{
10
}
\textcolor{linecolor11}{
10
}
\textcolor{linecolor12}{
10
}
\textcolor{linecolor1}{
8
}
\textcolor{linecolor4}{
21
}
\textcolor{linecolor11}{
21
}
\textcolor{linecolor12}{
21
}
\textcolor{linecolor1}{
18
}
\textcolor{linecolor4}{
17
}
\textcolor{linecolor5}{
17
}
\textcolor{linecolor6}{
17
}
\textcolor{linecolor7}{
17
}
\textcolor{linecolor8}{
17
}
\textcolor{linecolor9}{
17
}
\textcolor{linecolor10}{
17
}
\textcolor{linecolor4}{
22
}
\textcolor{linecolor11}{
22
}
\textcolor{linecolor12}{
22
}
\textcolor{linecolor1}{
1
}
\textcolor{linecolor4}{
25
}
\textcolor{linecolor11}{
25
}
\textcolor{linecolor12}{
25
}
\textcolor{linecolor1}{
26
}
\textcolor{linecolor4}{
26
}
\textcolor{linecolor11}{
26
}
\textcolor{linecolor12}{
26
}
\textcolor{linecolor1}{
22
}
\textcolor{linecolor4}{
7
}
\textcolor{linecolor11}{
7
}
\textcolor{linecolor12}{
7
}

\subsection{Example 3}

\begin{Verbatim}[commandchars=\\\{\}]
LOOP OVER x0 DO
 FOR SOME x1 SUCH THAT f5(x0) = x1 DO
  FOR SOME x2 SUCH THAT f9(x2) = x1 DO
   \textcolor{linecolor1}{PRINT x2}
  END
 END
 FOR SOME x1 SUCH THAT f7(x1) = x0 DO
  \textcolor{linecolor2}{PRINT x0}
 END
 FOR SOME x1 SUCH THAT f0(x0) = x1 DO
  IF rel8(x1) = x0 THEN
   \textcolor{linecolor3}{PRINT x1}
  ELSE
   \textcolor{linecolor4}{PRINT x1}
  ENDIF
 END
 FOR SOME x1 SUCH THAT f2(x1) = x0 DO
  \textcolor{linecolor5}{PRINT x1}
 END
 LOOP OVER x1 DO
  \textcolor{linecolor6}{PRINT x0}
 ENDFOR
 FOR SOME x1 SUCH THAT f0(x0) = x1 DO
  \textcolor{linecolor7}{PRINT x1}
  \textcolor{linecolor8}{PRINT x0}
 END
 FOR SOME x1 SUCH THAT f4(x0) = x1 DO
  \textcolor{linecolor9}{PRINT x0}
 END
ENDFOR
\end{Verbatim}

Document:

\textcolor{linecolor2}{
26
}
\textcolor{linecolor4}{
26
}
\textcolor{linecolor5}{
6
}
\textcolor{linecolor6}{
26
}
\textcolor{linecolor6}{
26
}
\textcolor{linecolor6}{
26
}
\textcolor{linecolor6}{
26
}
\textcolor{linecolor6}{
26
}
\textcolor{linecolor6}{
26
}
\textcolor{linecolor6}{
26
}
\textcolor{linecolor6}{
26
}
\textcolor{linecolor6}{
26
}
\textcolor{linecolor6}{
26
}
\textcolor{linecolor6}{
26
}
\textcolor{linecolor7}{
26
}
\textcolor{linecolor8}{
26
}
\textcolor{linecolor9}{
26
}
\textcolor{linecolor1}{
6
}
\textcolor{linecolor2}{
29
}
\textcolor{linecolor4}{
29
}
\textcolor{linecolor6}{
29
}
\textcolor{linecolor6}{
29
}
\textcolor{linecolor6}{
29
}
\textcolor{linecolor6}{
29
}
\textcolor{linecolor6}{
29
}
\textcolor{linecolor6}{
29
}
\textcolor{linecolor6}{
29
}
\textcolor{linecolor6}{
29
}
\textcolor{linecolor6}{
29
}
\textcolor{linecolor6}{
29
}
\textcolor{linecolor6}{
29
}
\textcolor{linecolor7}{
29
}
\textcolor{linecolor8}{
29
}
\textcolor{linecolor9}{
29
}
\textcolor{linecolor1}{
25
}
\textcolor{linecolor2}{
27
}
\textcolor{linecolor4}{
27
}
\textcolor{linecolor5}{
21
}
\textcolor{linecolor6}{
27
}
\textcolor{linecolor6}{
27
}
\textcolor{linecolor6}{
27
}
\textcolor{linecolor6}{
27
}
\textcolor{linecolor6}{
27
}
\textcolor{linecolor6}{
27
}
\textcolor{linecolor6}{
27
}
\textcolor{linecolor6}{
27
}
\textcolor{linecolor6}{
27
}
\textcolor{linecolor6}{
27
}
\textcolor{linecolor6}{
27
}
\textcolor{linecolor7}{
27
}
\textcolor{linecolor8}{
27
}
\textcolor{linecolor9}{
27
}
\textcolor{linecolor1}{
17
}
\textcolor{linecolor2}{
22
}
\textcolor{linecolor4}{
22
}
\textcolor{linecolor5}{
22
}
\textcolor{linecolor6}{
22
}
\textcolor{linecolor6}{
22
}
\textcolor{linecolor6}{
22
}
\textcolor{linecolor6}{
22
}
\textcolor{linecolor6}{
22
}
\textcolor{linecolor6}{
22
}
\textcolor{linecolor6}{
22
}
\textcolor{linecolor6}{
22
}
\textcolor{linecolor6}{
22
}
\textcolor{linecolor6}{
22
}
\textcolor{linecolor6}{
22
}
\textcolor{linecolor7}{
22
}
\textcolor{linecolor8}{
22
}
\textcolor{linecolor9}{
22
}
\textcolor{linecolor2}{
2
}
\textcolor{linecolor4}{
2
}
\textcolor{linecolor5}{
20
}
\textcolor{linecolor6}{
2
}
\textcolor{linecolor6}{
2
}
\textcolor{linecolor6}{
2
}
\textcolor{linecolor6}{
2
}
\textcolor{linecolor6}{
2
}
\textcolor{linecolor6}{
2
}
\textcolor{linecolor6}{
2
}
\textcolor{linecolor6}{
2
}
\textcolor{linecolor6}{
2
}
\textcolor{linecolor6}{
2
}
\textcolor{linecolor6}{
2
}
\textcolor{linecolor7}{
2
}
\textcolor{linecolor8}{
2
}
\textcolor{linecolor9}{
2
}
\textcolor{linecolor1}{
0
}
\textcolor{linecolor4}{
25
}
\textcolor{linecolor5}{
28
}
\textcolor{linecolor6}{
25
}
\textcolor{linecolor6}{
25
}
\textcolor{linecolor6}{
25
}
\textcolor{linecolor6}{
25
}
\textcolor{linecolor6}{
25
}
\textcolor{linecolor6}{
25
}
\textcolor{linecolor6}{
25
}
\textcolor{linecolor6}{
25
}
\textcolor{linecolor6}{
25
}
\textcolor{linecolor6}{
25
}
\textcolor{linecolor6}{
25
}
\textcolor{linecolor7}{
25
}
\textcolor{linecolor8}{
25
}
\textcolor{linecolor9}{
25
}
\textcolor{linecolor1}{
19
}
\textcolor{linecolor2}{
9
}
\textcolor{linecolor4}{
9
}
\textcolor{linecolor5}{
19
}
\textcolor{linecolor6}{
9
}
\textcolor{linecolor6}{
9
}
\textcolor{linecolor6}{
9
}
\textcolor{linecolor6}{
9
}
\textcolor{linecolor6}{
9
}
\textcolor{linecolor6}{
9
}
\textcolor{linecolor6}{
9
}
\textcolor{linecolor6}{
9
}
\textcolor{linecolor6}{
9
}
\textcolor{linecolor6}{
9
}
\textcolor{linecolor6}{
9
}
\textcolor{linecolor7}{
9
}
\textcolor{linecolor8}{
9
}
\textcolor{linecolor9}{
9
}
\textcolor{linecolor1}{
23
}
\textcolor{linecolor2}{
1
}
\textcolor{linecolor4}{
1
}
\textcolor{linecolor6}{
1
}
\textcolor{linecolor6}{
1
}
\textcolor{linecolor6}{
1
}
\textcolor{linecolor6}{
1
}
\textcolor{linecolor6}{
1
}
\textcolor{linecolor6}{
1
}
\textcolor{linecolor6}{
1
}
\textcolor{linecolor6}{
1
}
\textcolor{linecolor6}{
1
}
\textcolor{linecolor6}{
1
}
\textcolor{linecolor6}{
1
}
\textcolor{linecolor7}{
1
}
\textcolor{linecolor8}{
1
}
\textcolor{linecolor9}{
1
}
\textcolor{linecolor1}{
17
}
\textcolor{linecolor2}{
0
}
\textcolor{linecolor3}{
0
}
\textcolor{linecolor6}{
0
}
\textcolor{linecolor6}{
0
}
\textcolor{linecolor6}{
0
}
\textcolor{linecolor6}{
0
}
\textcolor{linecolor6}{
0
}
\textcolor{linecolor6}{
0
}
\textcolor{linecolor6}{
0
}
\textcolor{linecolor6}{
0
}
\textcolor{linecolor6}{
0
}
\textcolor{linecolor6}{
0
}
\textcolor{linecolor6}{
0
}
\textcolor{linecolor7}{
0
}
\textcolor{linecolor8}{
0
}
\textcolor{linecolor9}{
0
}
\textcolor{linecolor1}{
12
}
\textcolor{linecolor4}{
11
}
\textcolor{linecolor5}{
9
}
\textcolor{linecolor6}{
11
}
\textcolor{linecolor6}{
11
}
\textcolor{linecolor6}{
11
}
\textcolor{linecolor6}{
11
}
\textcolor{linecolor6}{
11
}
\textcolor{linecolor6}{
11
}
\textcolor{linecolor6}{
11
}
\textcolor{linecolor6}{
11
}
\textcolor{linecolor6}{
11
}
\textcolor{linecolor6}{
11
}
\textcolor{linecolor6}{
11
}
\textcolor{linecolor7}{
11
}
\textcolor{linecolor8}{
11
}
\textcolor{linecolor9}{
11
}
\textcolor{linecolor1}{
11
}
\textcolor{linecolor4}{
23
}
\textcolor{linecolor5}{
2
}
\textcolor{linecolor6}{
23
}
\textcolor{linecolor6}{
23
}
\textcolor{linecolor6}{
23
}
\textcolor{linecolor6}{
23
}
\textcolor{linecolor6}{
23
}
\textcolor{linecolor6}{
23
}
\textcolor{linecolor6}{
23
}
\textcolor{linecolor6}{
23
}
\textcolor{linecolor6}{
23
}
\textcolor{linecolor6}{
23
}
\textcolor{linecolor6}{
23
}
\textcolor{linecolor7}{
23
}
\textcolor{linecolor8}{
23
}
\textcolor{linecolor9}{
23
}

\subsection{Example 4: Representability of Test Tasks}\label{sec:test-representability}

The following script produces documents with prefixes matching the function evaluation test task, for a function $g$.
The outermost loop introduces a variable that represents the separator.

\begin{Verbatim}[commandchars=\\\{\}]
LOOP OVER x0 DO
 LOOP OVER x1 DO
  PRINT x1
  FOR SOME x2 SUCH THAT f(x1) = x2 DO
   PRINT x2
  END
  PRINT x0
 ENDFOR
ENDFOR
\end{Verbatim}

\paragraph{Expressing Tasks and Description Length.}
The function evaluation task is expressed (with free variable x1 representing the input) by
\begin{Verbatim}[commandchars=\\\{\}]
FOR SOME x2 SUCH THAT f(x1) = x2 DO
  PRINT x2
END
\end{Verbatim}
The syntax tree of this script has three nodes (variable introduction, \texttt{$\langle$block$\rangle$} printing); the function is defined by one literal.

The \textsc{MissingLink} task is expressed (with free variables x1, x2 representing the input, for functions $f$, $g$) by
\begin{Verbatim}[commandchars=\\\{\}]
FOR SOME x3 SUCH THAT f(x1) = x3 DO
  IF g(x3) = x2 DO
    PRINT x3
  ELSE:
    FOR SOME x4 SUCH THAT f(x2) = x4 DO
       PRINT x4
    END
  END
END
\end{Verbatim}
The syntax tree of the script has nine nodes (twice variable introduction, if, twice printing, four times \texttt{$\langle$block$\rangle$}); the function is defined using four literals (2-DDF with 2 disjuncts).
More generally, representing a 2-DNF with $d$ disjuncts (i.e., $2d$ literals) can be achieved with at most $6d-3$ nodes: $d$ \texttt{FOR SOME} statements, $d-1$ \texttt{IF-THEN-ELSE} statements, $3d-2$ \texttt{$<$block$>$}s, and $d$ \texttt{PRINT} statements.\footnote{Strictly speaking, representation is sometimes only approximate because functions $f_i$ are not in general invertible. Our experiments exclude inputs without unambiguous solutions, partly mitigating this.}

\paragraph{Binary Tasks}
The Relation Classification task is expressed (with free variables x1, x2, and for functions $f$, $g$), if x3 and x4 denote the labels, as 
\begin{Verbatim}[commandchars=\\\{\}]
IF f(x1) = x2 DO
   PRINT x3
ELSE
   IF g(x1) = x2 DO
      PRINT x4
   ELSE
      // undefined
   END
END
\end{Verbatim}

\paragraph{Function Composition}
Function composition is expressed as
\begin{Verbatim}[commandchars=\\\{\}]
FOR SOME x2 SUCH THAT f(x1) = x2 DO
  FOR SOME x3 SUCH THAT g(x2) = x3 DO
     PRINT x3
  END
END
\end{Verbatim}
The \textsc{ChainOfThought} and \textsc{Explanation} versions are expressed as
\begin{Verbatim}[commandchars=\\\{\}]
FOR SOME x2 SUCH THAT f(x1) = x2 DO
  PRINT x2
  FOR SOME x3 SUCH THAT g(x2) = x3 DO
     PRINT x3
  END
END
\end{Verbatim}
and
\begin{Verbatim}[commandchars=\\\{\}]
FOR SOME x2 SUCH THAT f(x1) = x2 DO
  FOR SOME x3 SUCH THAT g(x2) = x3 DO
     PRINT x3
  END
  PRINT x2
END
\end{Verbatim}

\section{Training Details}\label{sec:training-details}

1\% of the dataset was reserved as dev data.
We optimized models using AdamW at a learning rate of 1e-4 and a batch size of 64, and annealed with a cosine schedule with warmup including 1000 warmup steps and 2e6 training steps\footnote{Corresponding to $>8e9$ tokens.} 
We chose these parameters in preliminary experiments to minimize pretraining cross-entropy, without yet testing in-context learning capabilities.

\section{Stability to World}\label{sec:stability-world}

We re-trained the medium-size model on four more worlds (i.e., resampling the functions in $\Ff$), finding consistent results across worlds (Figure~\ref{fig:across-worlds}).

\begin{figure}
    \centering
    Function Evaluation
    
    \includegraphics[width=0.7\textwidth]{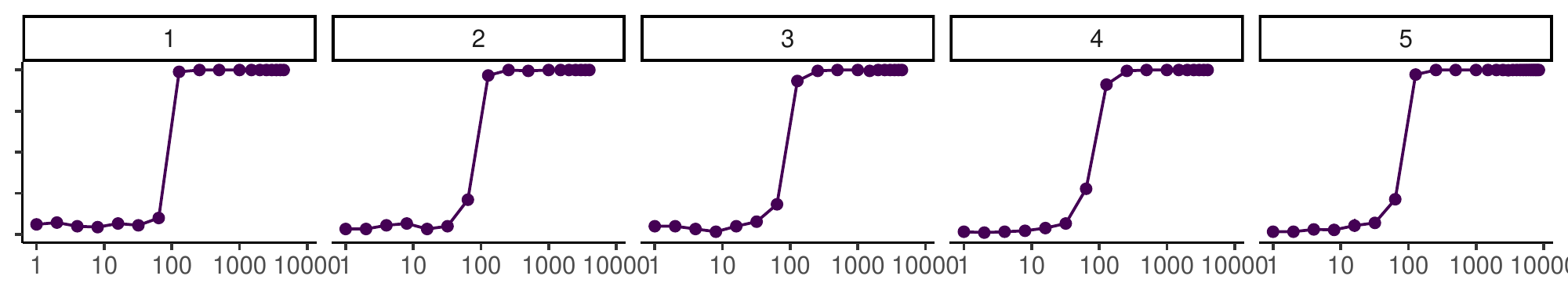}

Propositional

    \includegraphics[width=0.7\textwidth]{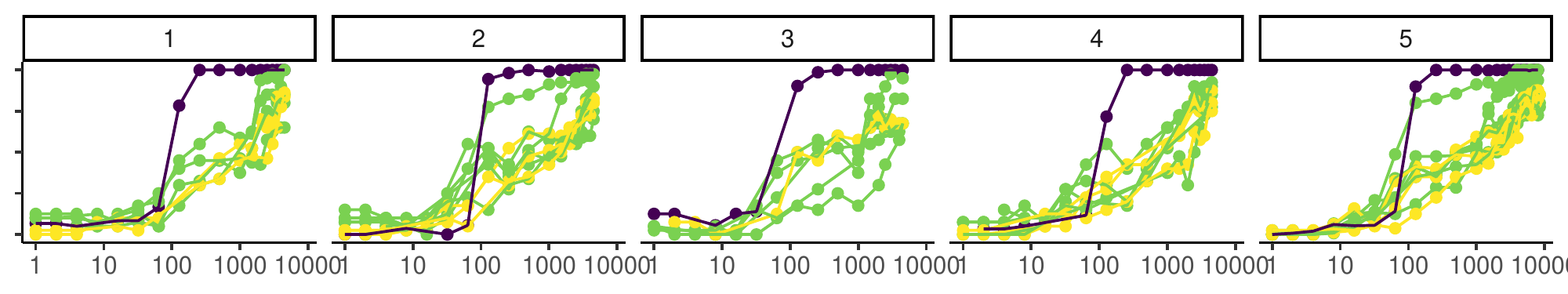}

Quantified

    \includegraphics[width=0.7\textwidth]{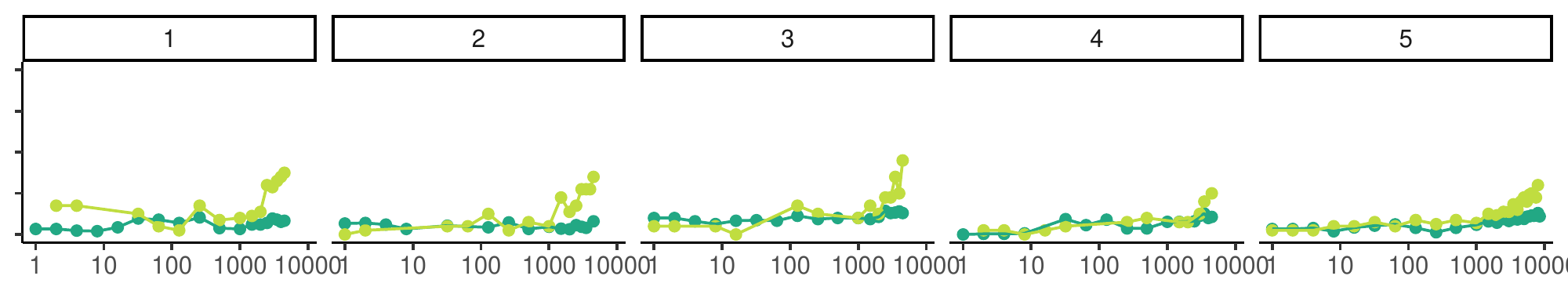}

Binary

    \includegraphics[width=0.7\textwidth]{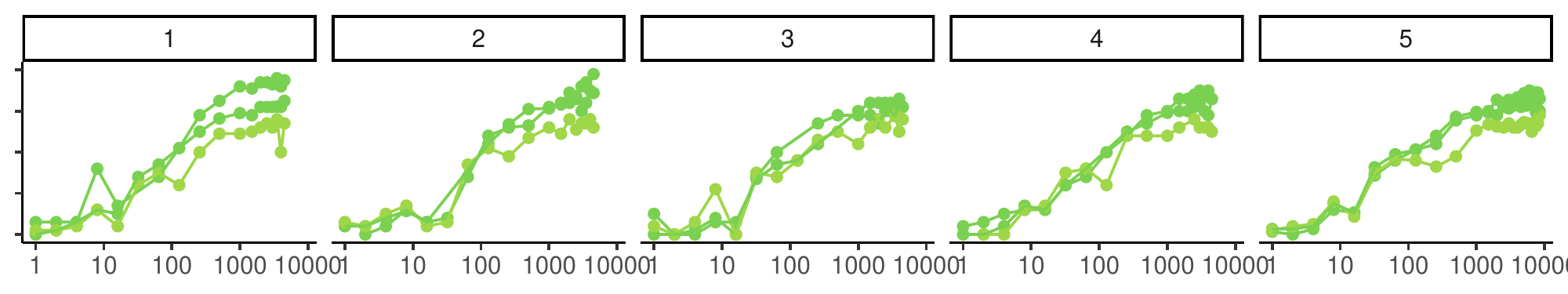}
    \caption{Results across worlds $\mathcal{M}$ for the medium-size LM.}
    \label{fig:across-worlds}
\end{figure}

\section{Attention Maps}

See Figure~\ref{fig:att-by-tasks}.

\begin{figure*}
\centering
        \begin{tikzpicture}
		\node (label) at (0,2)[draw=none, align=left, anchor=center]{Function Evaluation};
\node (label) at (0,0)[draw=none, align=left, anchor=center]{\includegraphics[width=0.25\textwidth]{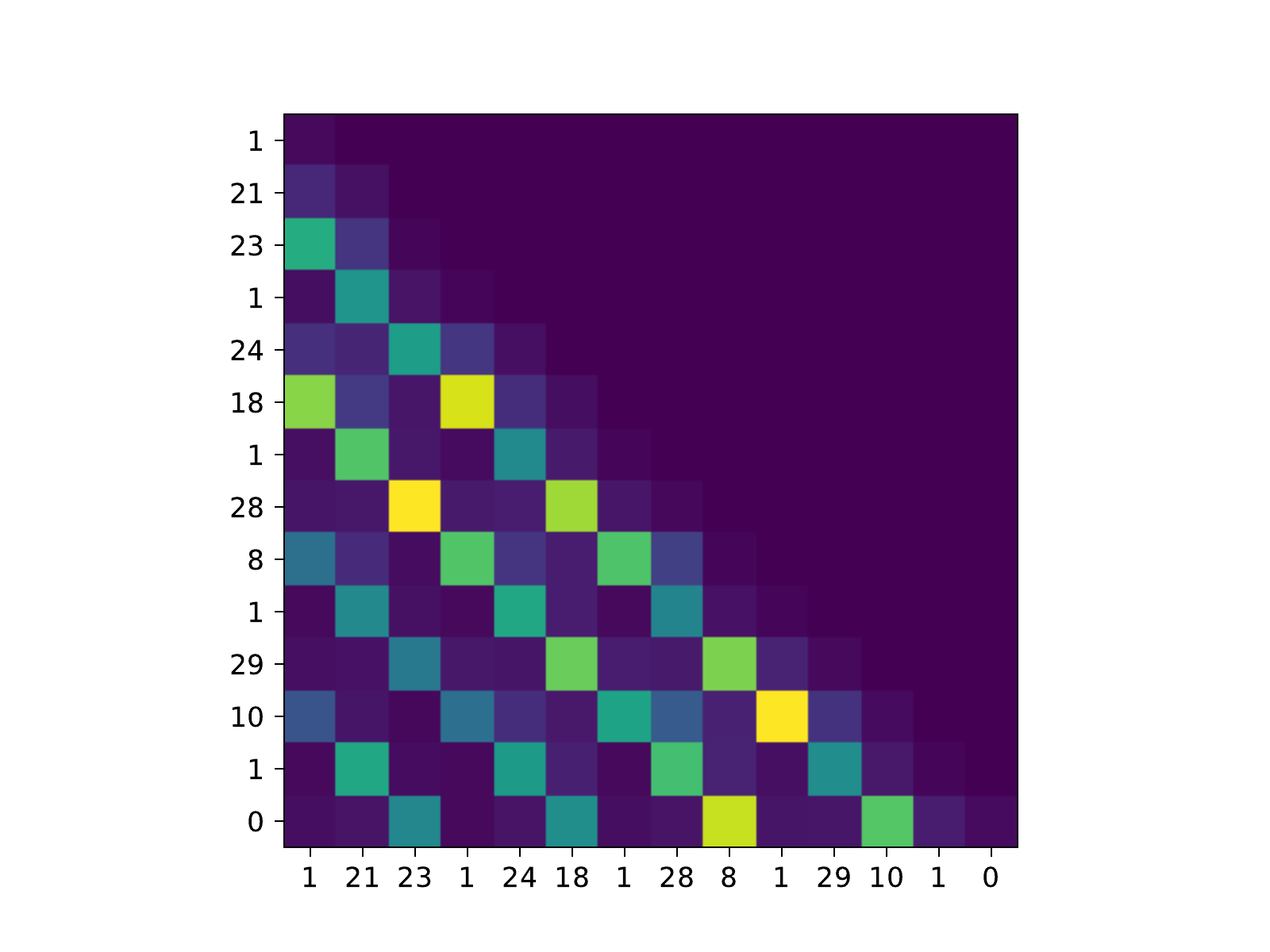}};
\node (label) at (0,-3)[draw=none, align=left, anchor=center]{\includegraphics[width=0.25\textwidth]{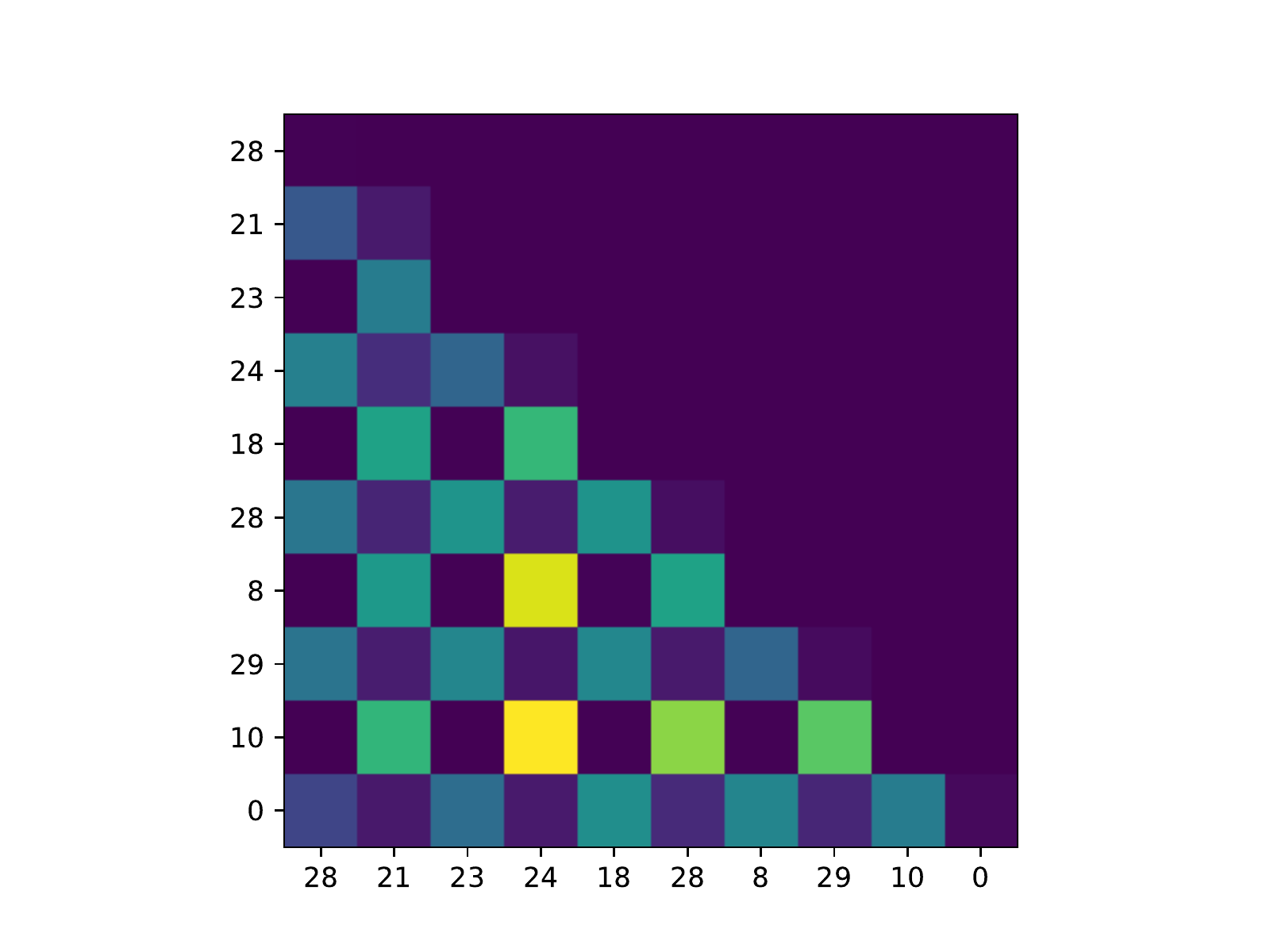}};

		\node (label) at (8,2)[draw=none, align=left, anchor=center]{Function Classification};
\node (label) at (8,0)[draw=none, align=left, anchor=center]{\includegraphics[width=0.25\textwidth]{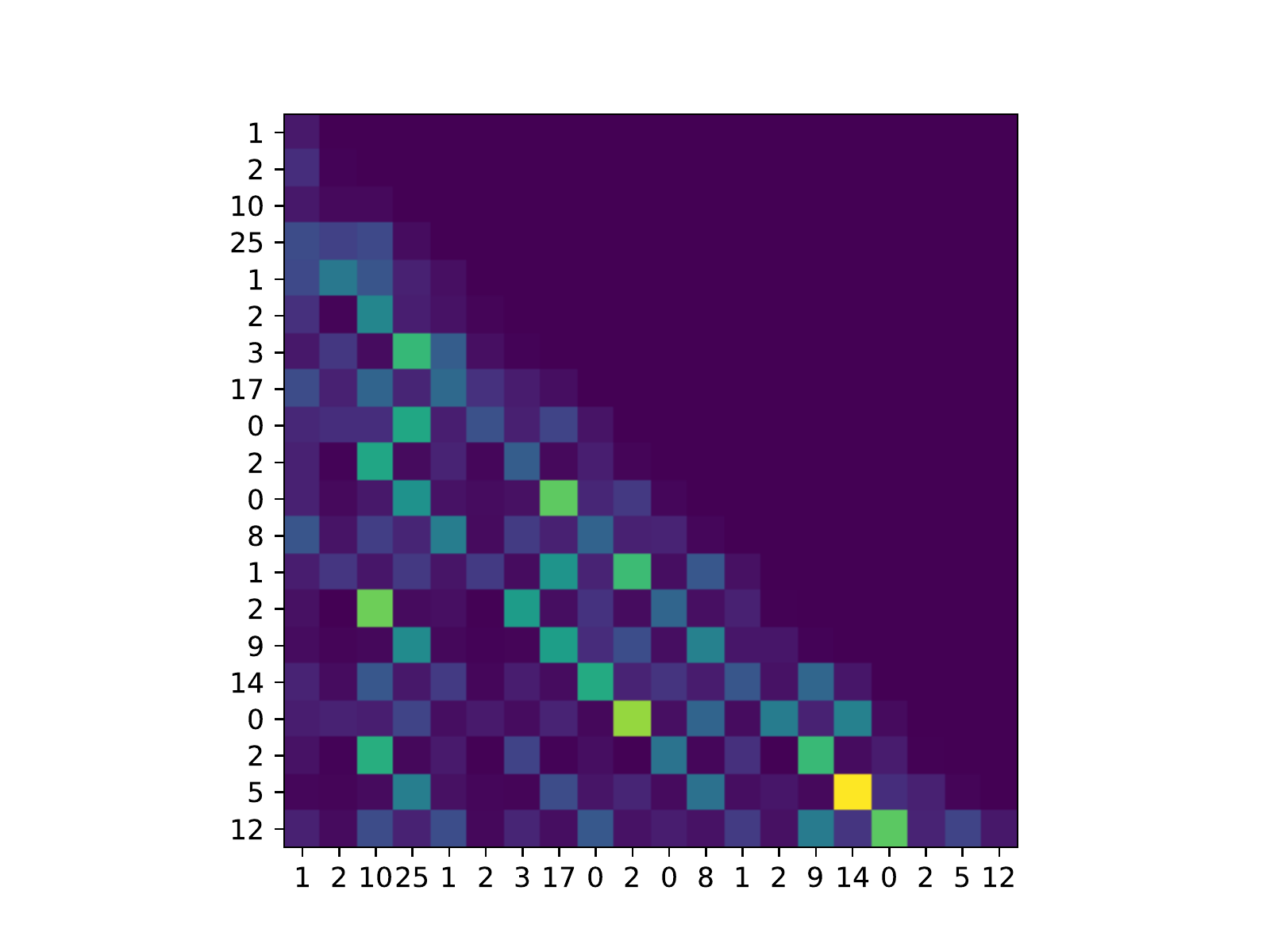}};
\node (label) at (8,-3)[draw=none, align=left, anchor=center]{\includegraphics[width=0.25\textwidth]{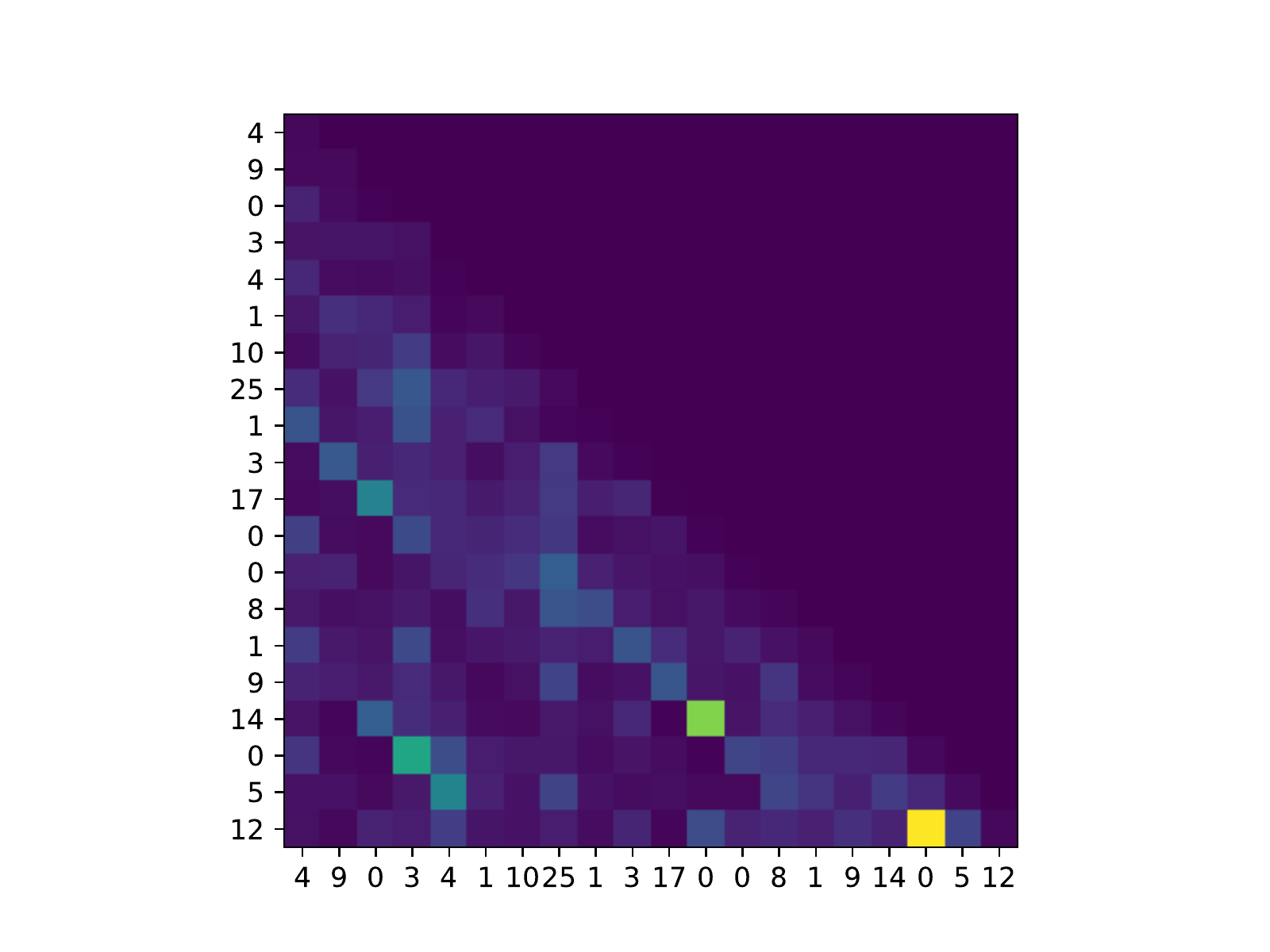}};

		\node (label) at (4,2)[draw=none, align=left, anchor=center]{Missing Link};
\node (label) at (4,0)[draw=none, align=left, anchor=center]{\includegraphics[width=0.25\textwidth]{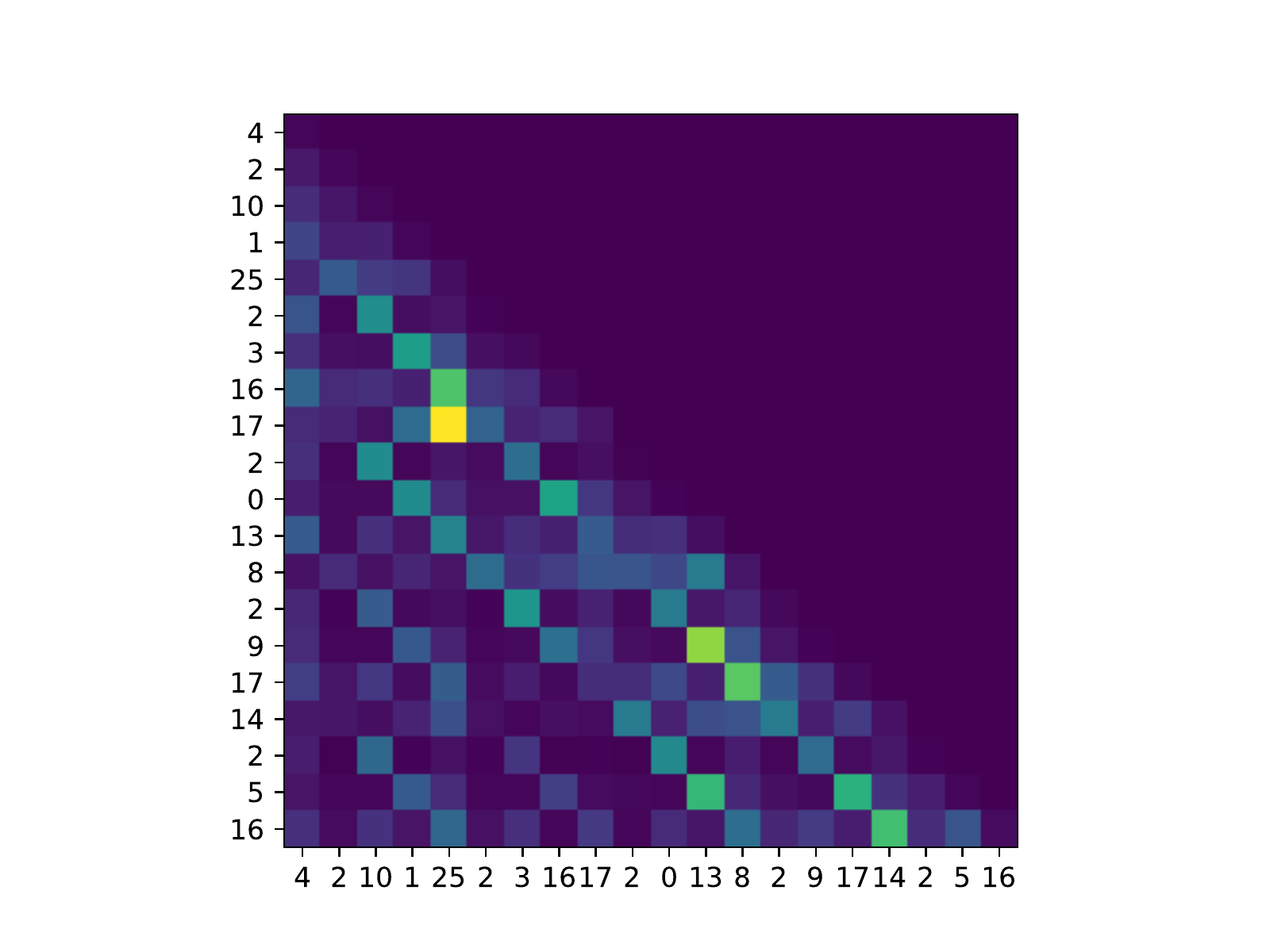}};
\node (label) at (4,-3)[draw=none, align=left, anchor=center]{\includegraphics[width=0.25\textwidth]{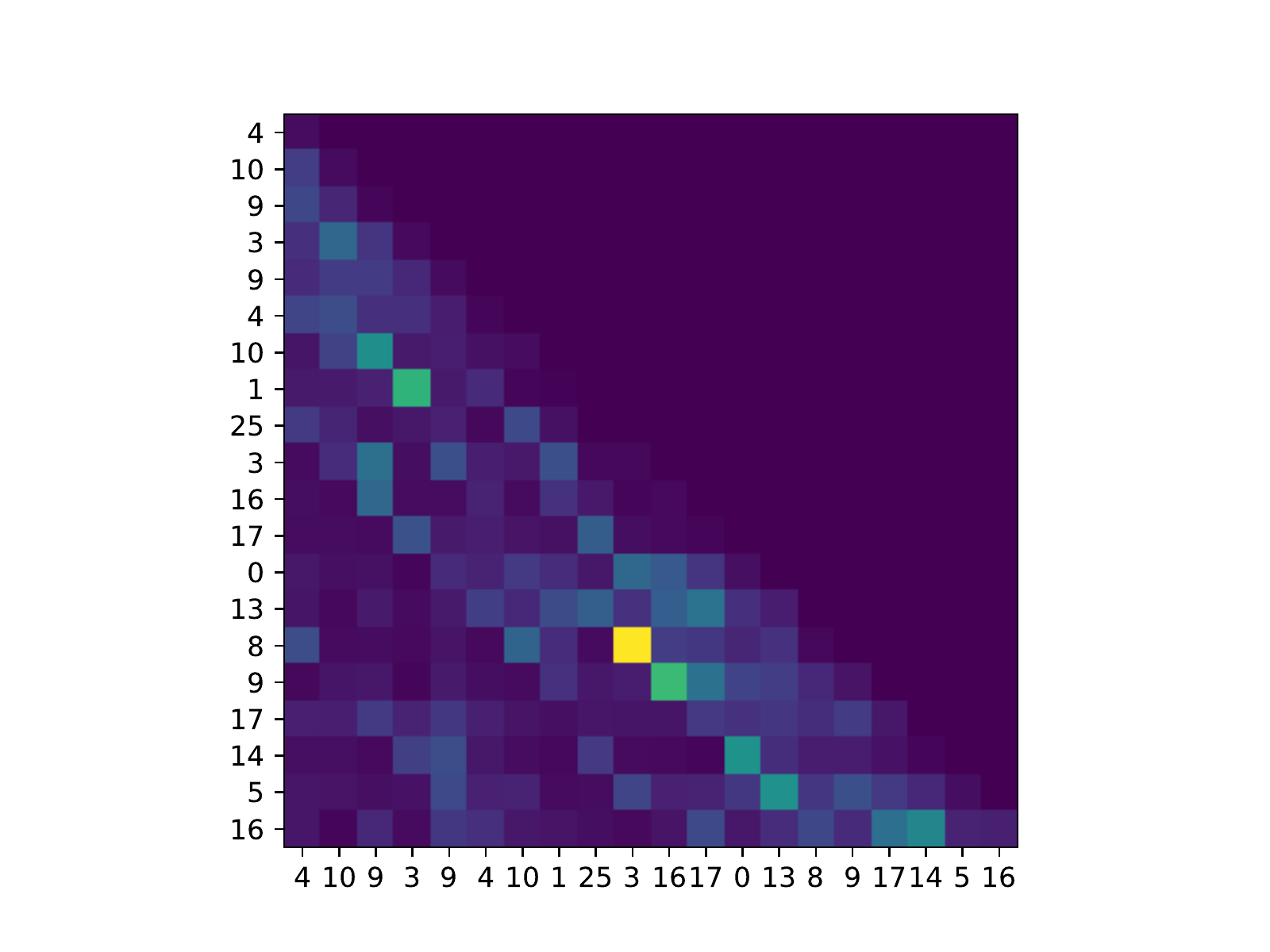}};

		\node (label) at (12,2)[draw=none, align=left, anchor=center]{Chain of Thought};
\node (label) at (12,0)[draw=none, align=left, anchor=center]{\includegraphics[width=0.25\textwidth]{probing_Sep_ChainOfThought_DiffRel_Long_End.py_medium.py_500_corpus.py_91226973.txt_293031_final.txt_2_1.pdf}};
\node (label) at (12,-3)[draw=none, align=left, anchor=center]{\includegraphics[width=0.25\textwidth]{probing_NoSep_ChainOfThought_DiffRel_Long_End.py_medium.py_500_corpus.py_91226973.txt_293031_final.txt_2_1.pdf}};

	\end{tikzpicture}

	\caption{Attention Maps for three tasks, with (top) or without (bottom) separator; only showing the end of the prompt for visibility. All results are for the 21M parameters model; for the second head in the top layer.
 FunctionEvaluation and the chain-of-thought task lead to very well-defined attention patterns. Attention pattern on FunctionClassification and MissingLink are more diffuse, mirroring increased difficulty.
 Without separators, attention patterns are more diffuse, but the periodic structure of the prompt is visible at least in the Function Evaluation task, suggesting that the separator helps with, but is not fully necesssary for, inducing the compositional structure of the prompt.
 }\label{fig:att-by-tasks}
\end{figure*}

\section{Proof of Theorem 1}

\subsection{Formal Definition of Compositional Attribute Grammars}

Here, we provide a full formal definition of our notion of compositional generative process.
All content is also contained in the main text, but we include this formal definition for reference.

We discuss the relationship to other grammar formalisms in the linguistic literature, and explain design choices, in Section~\ref{sec:design-choices}.

A \textbf{Compositional Attribute Grammar} is a tuple $(\mathcal{G}, \Omega, \Sigma, \textsf{spell} : \omega\mapsto\spell{\omega}, a, \mathcal{R}, \yield)$ consisting of the following:

\subsubsection{A Standard PCFG Generating Derivation Trees}\label{def:pcfg}
\begin{enumerate}
    \item a standard PCFG $\mathcal{G}$ consisting of (e.g., \citet{Manning1999FoundationsOS})
    \begin{enumerate}
    \item a finite set $\NT$ of nonterminals
    \item a designated start  symbol $\START \in \NT$ 
    \item a finite set $\T$ of terminals
    \item a finite set of production rules $\prodrule$ of the form
\begin{equation}
n \Rightarrow n_1 \dots n_k
\end{equation}
where $n \in \NT$; $n_i \in \NT \cup \T$.
    \item for each nonterminal $n$, a probability distribution $P_n(\cdot)$ over all production rules whose LHS has this nonterminal. In particular, the set of such production rules is nonempty for each nonterminal.
    \end{enumerate} 

such that the set $\mathcal{T}$ of derivation trees is the smallest set, with associated map $\textsf{root} : \mathcal{T} \rightarrow \T \cup \NT$ and a family of probability distributions $P(\cdot|n)$ ($n \in \T\cup \NT$), such that:

\begin{enumerate}
    \item If $t \in \T$, then $t \in \mathcal{T}$ and $\textsf{root}(t) = t$, and $P_t(t) = 1$.
    \item If $t_1, ..., t_k \in \mathcal{T}$ with root (non)terminals $n_1, ..., n_k$, and $\prodrule$ is a production rule of the form $n \Rightarrow n_1 \dots n_k$, then the tree $\tau$ with root $n$ and children $t_1, ..., t_k$ is an element of $\trees$.
    We will write
    \begin{equation}
        \tau := \prodrule[t_1, ..., t_k]
    \end{equation}
    for this tree.
    Furthermore,
    $\textsf{root}(\tau) = n$ and 
    \begin{equation}
        P(\tau|n) = P_n(\prodrule) \prod_{i=1}^k P(t_i|n_i).
    \end{equation}
\end{enumerate}

\end{enumerate}

\subsubsection{A yield operation mapping derivations to strings}\label{sec:yield}
Having defined the PCFG backbone creating derivation trees, we now need to define how these are mapped to linear strings of tokens.
This mapping is where the distinctive properties where CAGs (and other linguistic grammar formalisms, Appendix~\ref{sec:design-choices}) generalize over PCFGs come in.
The following ingredients are needed:

\begin{enumerate}[label=\arabic*.,start=2]
\item a finite alphabet $\Sigma$, a finite universe $\Omega$, and a map $\mathsf{spell} : \omega\mapsto\spell{\omega} : \Omega\rightarrow\Sigma$
        \item a map $a : \NT \cup \T \rightarrow \mathbb{N}$, called the ``\textit{arity}''

    \item a set $\mathcal{R}$ of sources of randomness\footnote{A formal definition is in terms of the set $\mathcal{R}$ of infinite trees where (1) each node is an independent coin flip, (2) every node has a countably infinite sequence of daughters.
    For any $r \in \mathcal{R}$, we will write $r_0, r_1, r_2,\dots$ for the daughters---these are independent random objects and themselves elements of $\mathcal{R}$.}
    \item A partial function mapping trees and their {\cvgarg}s to linear strings of tokens (see Figure~\ref{fig:compositional-setup}B and Figure~\ref{fig:yield} for illustration):
    \begin{equation}
        \mathcal{Y} : \mathcal{T} \times \Omega^* \times \mathcal{R} \pfun \Sigma^*
    \end{equation}
    such that
    \begin{equation}
        \mathcal{Y}(\tau, \xi, r)
    \end{equation}
    is well-defined iff the number of {\cvgarg}s matches the correct arity: $|\xi| = a_{\textsf{root}(\tau)}$,
    and the following holds:
\begin{enumerate}
    \item If $t \in \T$, then $\mathcal{Y}(t,\xi, r)$ is arbitrary.\footnote{Technically, it needs to be measurable w.r.t. $r$ for each choice of $t$ and $\xi$.}

\textit{The terminals in Figures~\ref{fig:compositional-setup} and \ref{fig:percolation}--\ref{fig:yield} all exemplify this. Some of them (shown in green) depend on their {\cvgarg} variable(s) $\xi$; others (shown in yellow) do not.}
    
    \item For a tree
    \begin{equation}
        \tau := \prodrule[t_1, .., t_k]
    \end{equation}
    arising from the production rule
    \begin{equation}
        \prodrule : n \Rightarrow n_1, ..., n_k
    \end{equation}
    with daughter trees $t_1, \dots, t_k$ and
    $\textsf{root}(\tau) = n$, the yield
    \begin{equation}
        \mathcal{Y}(\tau, \xi, r) \in \Sigma^*
    \end{equation}
    equals a concatenation of yields of children:
\begin{equation}\label{eq:yield-recursive-formal}
    \mathcal{Y}(t_{\iota_1},\eta_1,r_1) ... \mathcal{Y}(t_{\iota_q},\eta_q,r_q) \in \Sigma^*,
\end{equation}
    where
\begin{enumerate}
    \item $r_0 ,..., r_k \in \mathcal{R}$ are mutually independent sources of randomness derived from $r$,\footnote{Formally, they are different daughter trees of $r$.}

\item $q \in \mathbb{N}$ and $\iota \in \{1, ..., k\}^q$ are determined by $\prodrule, r_0, \xi$.

    \item For $j=1, \dots, q$, the tuple $\eta_j \in \Omega^{a_{n_j}}$ is determined by $\psi,r_0,\xi$.

    \end{enumerate}

  \textit{Informally, $\iota$ indicates the ordering and multiplicity of each $t_i$. In the simplest case, represented by ordinary Context-Free Grammars, each daughter tree appears exactly once in some fixed order, so we can take $q=k$ and $\iota = \langle 1,\dots,k\rangle$.
It is known that more general capabilities are needed to account for the structure of natural language, and a range of variants of (\ref{eq:yield-recursive-formal}) have been proposed in the linguistic literature.
See Appendix~\ref{sec:minimalist-iteration} for more on this.
One example is the case of ``loop'' operations, as in Figures~\ref{fig:compositional-setup} and in \ref{fig:yield}3--4: Here, the same daughter tree can be repeated, i.e., some indices appear in $\iota$ multiple times.}

\textit{Each $\eta_j$ is the {\cvgarg} vector for the tree $\tau_{\iota_j}$ and has the length matching the appropriate arity. It supports both passing of {\cvgarg}s along the derivation tree, and the introduction of new {\cvgarg}s which are then made available to daughter subtrees, both illustrated in Figure~\ref{fig:percolation} and Figure~\ref{fig:yield}3--4.}

\end{enumerate}
\end{enumerate}

\begin{figure*}
\centering
\includegraphics[width=0.5\textwidth]{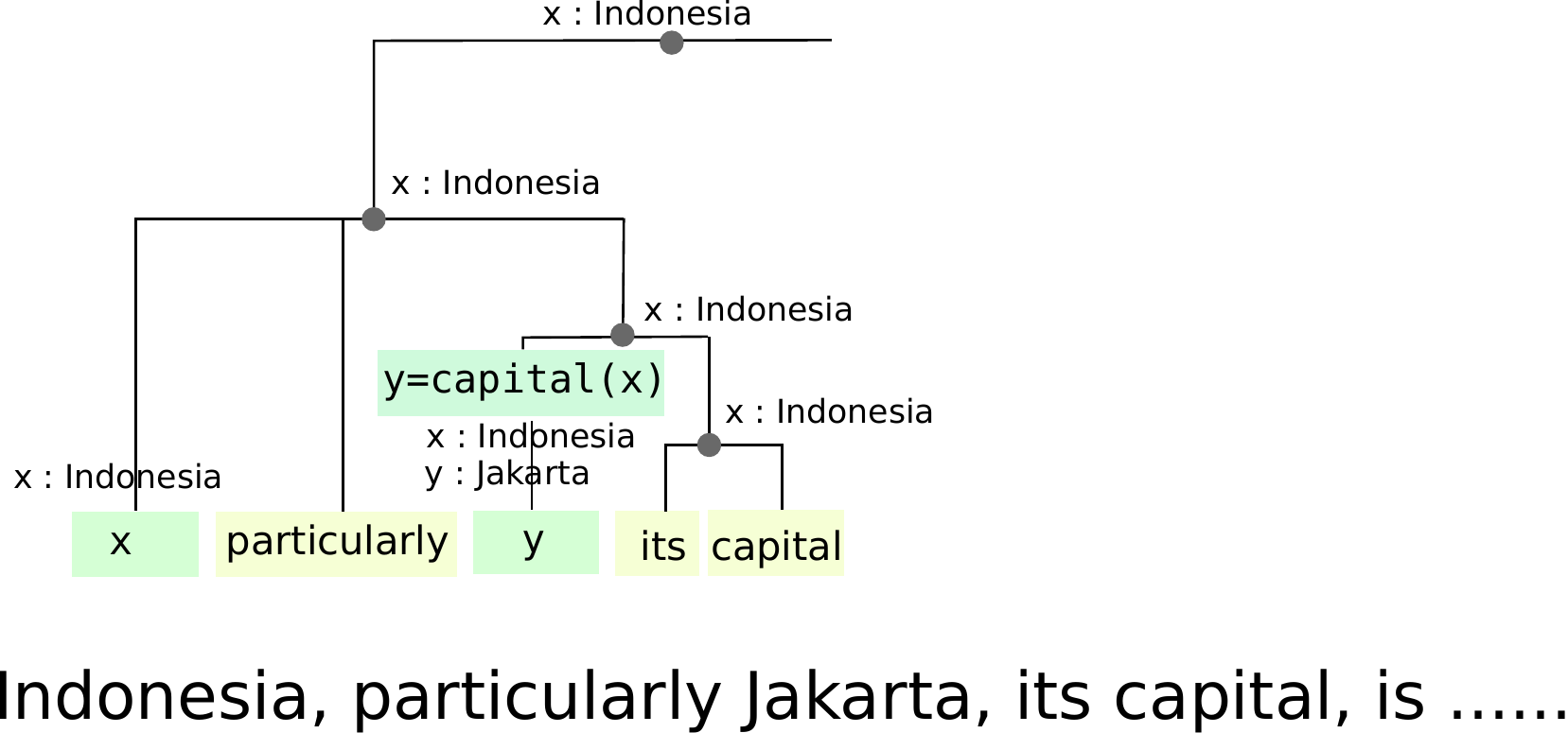}
	\caption{{\Cvgarg}s (or variables) are passed through the derivation tree. This can be formalized using {\cvgarg} lists; here, the nonterminals would have one ($\langle x \rangle$) or two ($\langle x,y \rangle$) {\cvgarg}s.}\label{fig:percolation}
\end{figure*}

\begin{figure*}
\centering
\includegraphics[width=0.5\textwidth]{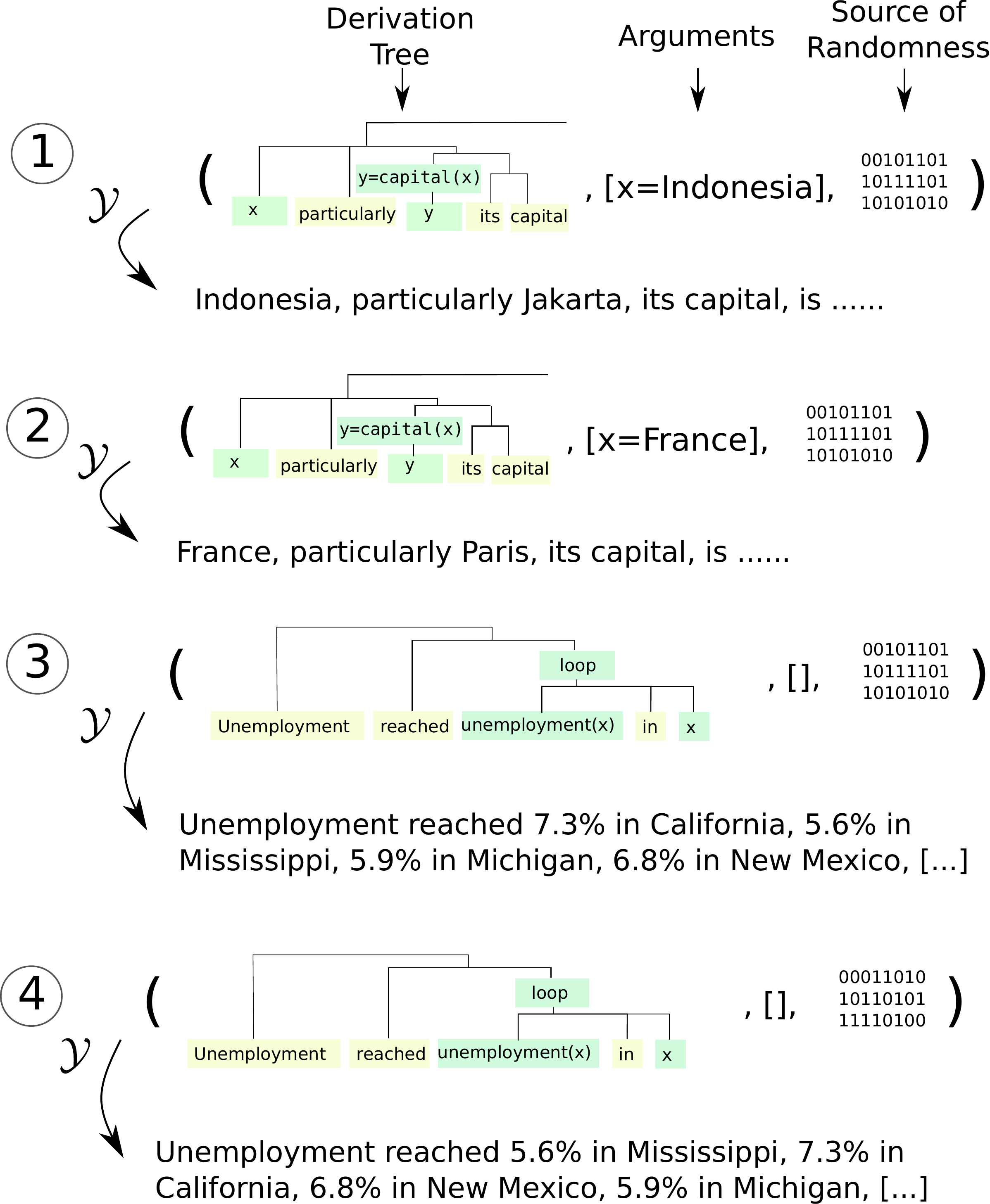}
	
\caption{The yield operation $\yield$ maps derivation trees together with {\cvgarg}s and a source of randomness to strings. Unlike the standard yield function of context-free grammars, it is capable of compositionally re-using and re-combining subtrees in two ways:
First, it ``wires'' the {\cvgarg}s of the daughter trees and its own {\cvgarg}s (1--2).
Second, it can repeat daughter trees, with different {\cvgarg}s, as in the gapping construction \citep{Ross1970GAPPINGAT} (3--4).
The first capacity can be simulated in context-free grammars.
The second capacity goes beyond context-freeness, and is possible in mildly-context sensitive grammar formalisms.
We allow both of these capabilities to be influenced by a source of randomness, for example, to allow variable ordering in iteration constructions without blowing up the set of production rules (compare 3 and 4). 
}\label{fig:yield}
\end{figure*}

\paragraph{Remark.} Extension to a typed version, where $\Omega$ is classified into a set of types and each nonterminal requires its {\cvgarg}s to have appropriate types, is straightforward.

\subsection{Regularity Assumptions}\label{sec:assumptions}\label{sec:technical}

\begin{figure*}
\centering
\includegraphics[width=0.5\textwidth]{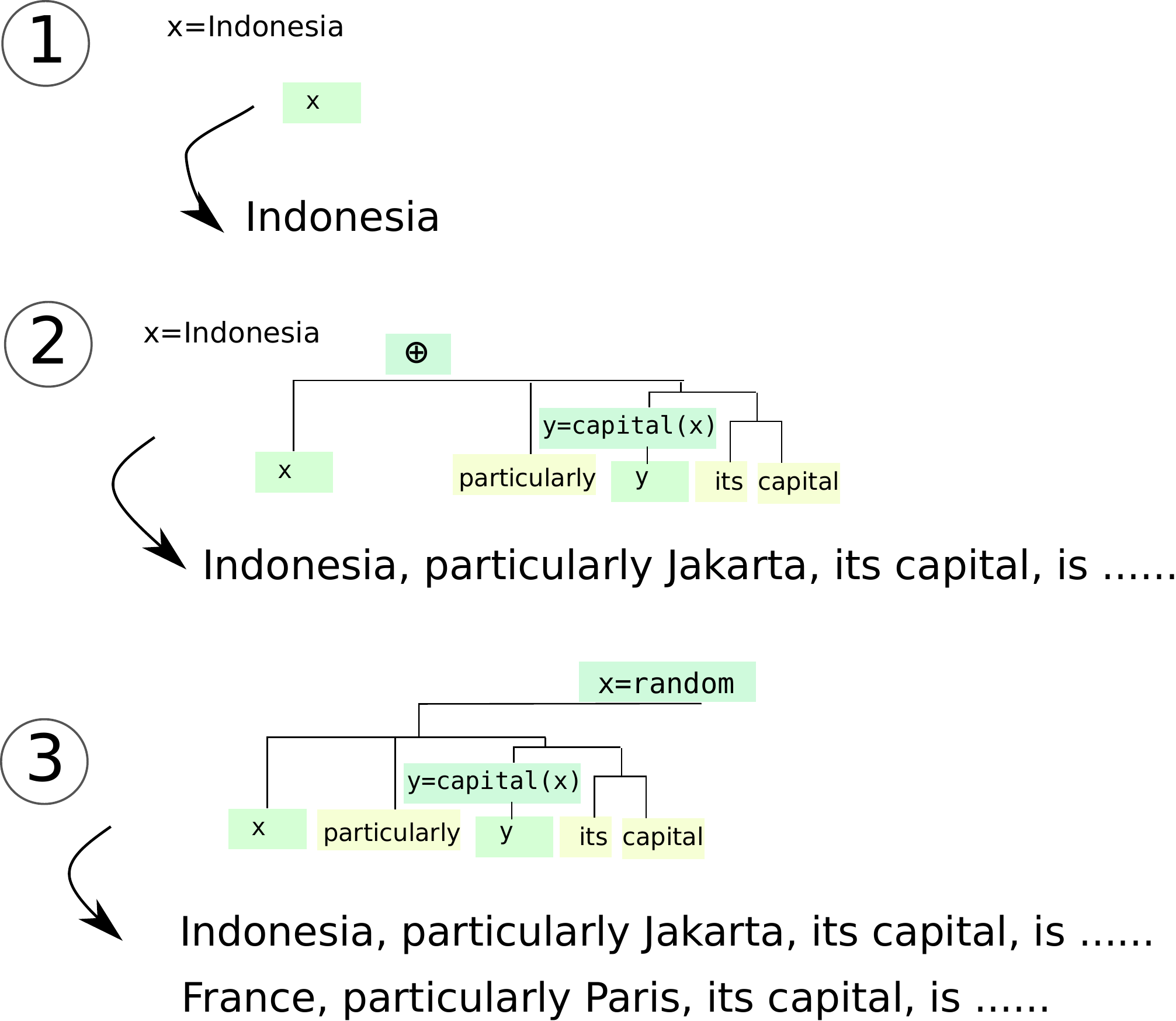}
\caption{We illustrate three closure properties stated in Section~\ref{sec:technical}: Projection onto a variable (1), concatenation of derivation trees (2), and marginalization of variables (3). A CAG permits concatenation of all trees, though production probabilities will distinguish between more probable (like this example) and less probable concatenations (e.g., ``Indonesia Paris'').
}\label{fig:closure}
\end{figure*}

To ensure that all strings in $\Sigma^*$ can be constructed, we assume the following closure properties of the set $\mathcal{T}$ of derivation trees (Figure~\ref{fig:closure}).
Let $a_{max}$ the maximum arity of any nonterminal.
\begin{enumerate}

\item Closure under \textsc{Projection} (Figure~\ref{fig:closure}A): For $1 \leq i \leq n \leq a_{max}$, there is $\tau_{projection,i,n} \in \mathcal{T}$ such that $\mathcal{Y}(\tau_{projection,i,n}, \xi ,r) \equiv \spell{\xi_i}$ for $\xi \in \Omega^n$.
This is marked with variable names in Figure~\ref{fig:compositional-setup}.

\item Closure under \textsc{Concatenation} (Figure~\ref{fig:closure}B): For each $\tau_1, \tau_2 \in \mathcal{T}$, $a_{root(\tau_1)} = a_{root(\tau_2)}$,
there is some production rule $\psi$ such that $\mathcal{Y}(\psi[\tau_1,\tau_2],\xi,r) $ equals
\begin{equation*}
\mathcal{Y}(\tau_1,\xi,r_1)\mathcal{Y}(\tau_2,\xi,r_2).
\end{equation*}

\item Closure under \textsc{Marginalization} (Figure~\ref{fig:closure}C): For each $\tau \in \mathcal{T}$, there is a production rule $\psi$ such that, for $\xi \in \Omega^{a_\tau-1}$, $\mathcal{Y}(\psi[\tau], \xi, r)$ has the infix
\begin{equation*}
     \mathcal{Y}(\tau, \langle\omega, \xi_1, ..., \xi_{a_\tau-1}\rangle, r_1),
\end{equation*}
where $\omega$ is a uniform sample from $\Omega$ determined by $r_0$.

\item Closure under \textsc{Constants}: If $\sigma \in \Sigma$  and $1 \leq n \leq a_{max}$,
then there is $\tau_{symbol(\sigma),n}$ such that $\yield(\tau_{symbol(\sigma)}, \xi, r) \equiv \sigma$ for $\xi \in \Omega^n$.

This property is trivially satisfied in standard PCFGs.

\item There are constants $c_0, c_1 > 0$ such that, for any nonterminal $n$ with arity zero ($a(n)=0$), there is a tree $\tau'$ containing a subtree $\tau$ with root nonterminal $n$ with $\textsf{root}(\tau')=\START$ with $\DL[\tau'] \leq c_0+\DL[\tau]$ such that $\tau$ is used in $\yield(\tau', \langle\rangle, r)$ with probability (over $r$) at least $c_1$. Here, by ``used'' we mean that, along the path from the root of $\tau'$ down to the root of $\tau$, all subtrees appear in the index list $\iota$ generated for the dominating production rule.
An implication of ``being used'' is that the yield of $\tau$ will occur as an infix inside the yield of $\tau'$.

Informally, this states that all nonterminals get used by some document-level derivation tree at probability bounded away from zero.
	In our experiments, $c_0 \leq 2$, $c_1 = 1$ since all nonterminals with zero arity (i.e., document scripts without free variables) can directly generate full documents (see Section~\ref{sec:scripts-are-cvcg}).

\item While documents can be unboundedly long, the expected document length is finite: $\sum_d p(d) |d| <\infty$.

This is needed to make autoregressive language modeling (Equation~\ref{eq:predictive}) well-defined.

\end{enumerate}

Assumptions 1--5 are comparable to the regularity assumptions that \cite[][Assumption 5]{DBLP:conf/iclr/XieRL022} made for their HMM model, which, inter alia, state that all transitions within a mixture component have nonzero probability bounded away from zero, and that all tokens can be emitted.

Another common assumption in the literature is that the PCFG is \emph{proper}, i.e., that the probability of all derivation trees sums up to one (rather than a smaller number). We do not require it here as it is not strictly necessary to prove our results. It is always satisfied when the PCFG is created as the MLE from an empirically observed set of derivation trees \citep{Chi1999StatisticalPO}.

\subsection{Preparatory Lemmas}

We first recall the definition of Iteration Complexity of a CAG:
\begin{definition}[Formal Definition of Iteration Complexity]\label{eq:formal-def-rn}
Let any CAG be given. For $n \leq |\Omega|$, let the CAG's \textit{Iteration Complexity} $R_n$ be the smallest number such that 
the following holds for all $\theta \in \trees$, and all pairwise distinct $x_1, ..., x_n$.
	We consider all trees $\tau\in\mathcal{T}$ such that for all $\xi\in\Omega^{a_\tau}$, $\mathcal{Y}(\tau, \xi, r)$ with probability at least $p_\tau > 0$ has an infix whose distribution (over the randomness in $r$) matches
\begin{equation}
\mathcal{Y}(\theta,\langle x_1, \xi_{1\dots a_\tau}\rangle,r_1)...\mathcal{Y}(\theta,\langle x_N, \xi_{1\dots a_\tau}\rangle,r_n).
\end{equation}
There is always at least one such tree (Lemma~\ref{eq:always-some-repetition}).
We define $R_n$ by the requirement that, for at least one of these $\tau$,
\begin{equation}\label{eq:def-rn-appendix}
\DL[\tau]  \leq  R_n +\DL[\theta] + \frac{1}{\rho} \log \left[p_\tau \cdot {{|\Omega|\choose n}}\right].
\end{equation}
\end{definition}
Intuitively, $R_n$ indicates how much more complex repetition is compared to a single occurrence; the third term accounts for the number of different choices of $x_1,\dots,x_n$; it disappears in the simple case where the yield of $\tau$ contains (\ref{eq:infix-repetition}) for each sequence $x_1,\dots,x_n$ at equal probabilities $p_\tau = {{|\Omega|\choose n}}^{-1}$. 

\begin{example}\label{ex:iteration-rn}
The prime example is a production rule mapping a nonterminal to a single nonterminal, and a corresponding yield $\yield(\psi[\tau], \langle\rangle, r)$ of the form $\yield(\tau, \langle x_1\rangle, r_1) \dots \yield(\tau, \langle x_{n}\rangle, r_{n})$, with the permutation determined by $r$, as in Figure \ref{fig:compositional-setup}B bottom.
	If each sequence $\yield(\tau, \langle x_1\rangle, r_1) \dots \yield(\tau, \langle x_{n}\rangle, r_{n})$ is generated with probability
\begin{equation}
	p(n) \cdot {|\Omega| \choose n}^{-1},
\end{equation}
then
	\begin{equation}R_n \leq 1-\frac{1}{\rho} \log \sum_{k=n}^\infty p(k)
	\end{equation}
	because substituting this for $R_n$ makes (\ref{eq:def-rn-appendix}) true (we set $w(n) :=\sum_{k=n}^\infty p(k)$):
\begin{align*}
	&R_n + \DL[\tau] + \frac{1}{\rho} \log \left[w(n) \cdot {|\Omega| \choose n}^{-1} \cdot {{|\Omega|\choose n}}\right]\\
=		&R_n +\DL[\tau] + \frac{1}{\rho} \log w(n)\\
	=	&(1-\frac{1}{\rho} \log w(n)) +\DL[\tau] + \frac{1}{\rho} \log w(n)\\
	=	&  1+\DL[\tau] \\
	= &	\DL[\psi[\tau]]
\end{align*}
For example, under a power law $p(n) \sim n^{-q}$\footnote{Finiteness of expected document length entails $q>2$.}:
\begin{equation}
	R_n \lesssim 1+\frac{q-1}{\rho} \log n
\end{equation}
Such sublinear growth guarantees convergence of ICL errors to zero as the prompt length increases.
\end{example}

\begin{lemma}\label{prop:prob-dl-bound}
    For some constant $\rho>0$, any $n \in \NT$ and any $\tau \in \mathcal{T}$ with $\mathsf{root}(\tau)=n$,
\begin{equation}
        P(\tau|n) \geq \exp(-\rho\cdot \DL[\tau]).
    \end{equation}
\end{lemma}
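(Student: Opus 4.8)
The plan is to unfold the recursive definition of $P(\cdot\mid n)$ given in Section~\ref{def:pcfg} so that the probability of a derivation tree becomes a product of one factor per internal node, and then bound every factor from below by a single positive constant that exists because the grammar is finite.

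First I would observe, by a straightforward induction on the structure of $\tau$, that if $\tau\in\mathcal{T}$ has root $n$ and its non-leaf (internal) nodes are $v_1,\dots,v_m$, with each $v_j$ carrying a nonterminal $n_{v_j}$ and a production rule $\psi_{v_j}$ whose left-hand side is $n_{v_j}$, then $P(\tau\mid n)=\prod_{j=1}^{m} P_{n_{v_j}}(\psi_{v_j})$; the terminals contribute the factor $P_t(t)=1$ and therefore drop out of the product. Next, since $\NT$ and the set of production rules are both finite and each $P_n(\cdot)$ is a probability distribution on a nonempty set (hence assigns positive mass to at least one rule), the quantity $p_{\min}:=\min\{P_n(\psi)\,:\,n\in\NT,\ \psi\text{ has left-hand side }n,\ P_n(\psi)>0\}$ is a well-defined real number in $(0,1]$. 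Without loss of generality we may assume every production rule has positive probability — rules of probability $0$ can be deleted without changing the string distribution $p$, and the closure properties of Appendix~\ref{sec:assumptions} are then witnessed by positive-probability trees — so every factor $P_{n_{v_j}}(\psi_{v_j})$ above is at least $p_{\min}$.

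Finally, the number $m$ of internal nodes of $\tau$ is at most the total number of its nodes, i.e.\ $m\le\DL[\tau]$, whence $P(\tau\mid n)\ge p_{\min}^{\,m}\ge p_{\min}^{\,\DL[\tau]}=\exp(\DL[\tau]\log p_{\min})$. The lemma then holds with $\rho:=\log(1/p_{\min})>0$, a constant depending only on the PCFG backbone.

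There is no substantive obstacle here beyond bookkeeping: the only point requiring a word of care is the existence of a strictly positive $p_{\min}$, which is exactly where finiteness of the grammar and the normalization of each $P_n$ enter, together with the cosmetic step of discarding (or ruling out) zero-probability production rules so that the strict positivity asserted in the statement is non-vacuous.
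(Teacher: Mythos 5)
Your proof is correct and follows essentially the same route as the paper's, which also sets $\rho=-\log p_{\min}$ for the minimum production-rule probability and concludes by induction on the tree; your version merely spells out the product-over-internal-nodes decomposition and the (reasonable) step of discarding zero-probability rules so that $p_{\min}>0$ is guaranteed.
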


\begin{proof}
    Let $p_{min}$ be the minimum probability of any production rule in the PCFG, and set $\rho = -\log p_{min}$.
    Then, the result follows from induction over the height of $\tau$.    
\end{proof}

\begin{lemma}\label{eq:always-some-repetition}
Consider any CAG satisfying the regularity assumptions.
If $\func\in\mathcal{T}$, and 	$x_1, \dots, x_n \in \Omega$ are all distinct, there is a  tree $\tau\in\mathcal{T}$ such that $\mathcal{Y}(\tau, \xi_{1\dots a_\tau}, r)$ has an infix whose distribution matches
\begin{equation}
	Q(r) := \mathcal{Y}(\func,\langle x_1, \xi_{1\dots a_\tau}\rangle,r_1)...\mathcal{Y}(\func,\langle x_N, \xi_{1\dots a_\tau}\rangle,r_N)
\end{equation}
with probability $p_\tau > 0$.

\end{lemma}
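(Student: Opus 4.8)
The plan is to construct the tree $\tau$ explicitly by composing the closure operations guaranteed by the regularity assumptions, arranging things so that the yield of $\tau$ is \emph{literally} the $n$-fold concatenation $Q$. First, for the statement to type-check, $\tau$ must have arity $a_\tau = a_\func - 1$, since $Q(r)$ feeds $\func$ the tuple $\langle x_i,\xi_{1\dots a_\tau}\rangle$ of length $a_\tau+1 = a_\func$. The only closure that can place a \emph{chosen} object into the leading attribute slot of $\func$ is closure under \textsc{Marginalization} (projection and concatenation keep the attribute tuple fixed; constants ignore it). Applied to $\func$ it gives a production rule $\psi$ so that, writing $\theta := \psi[\func]$ (of arity $a_\func-1$), $\mathcal{Y}(\theta,\xi,r)$ contains $\mathcal{Y}(\func,\langle\omega,\xi\rangle,r_1)$ with $\omega\in\Omega$ uniform and determined by $r_0$; taking $\psi$ to be the canonical unary-branching marginalization rule, this is the \emph{entire} yield, i.e.\ $\mathcal{Y}(\theta,\xi,r)\equiv\mathcal{Y}(\func,\langle\omega,\xi\rangle,r_1)$. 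Hence, conditioning on the positive-probability event $\{\omega=x_i\}$, the law of $\mathcal{Y}(\theta,\xi,\cdot)$ is exactly that of $\mathcal{Y}(\func,\langle x_i,\xi\rangle,\cdot)$.

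Next I apply closure under \textsc{Concatenation} $n-1$ times to $\theta$, producing a single tree $\tau\in\mathcal{T}$, still of arity $a_\func-1$, with $\mathcal{Y}(\tau,\xi,r)=\mathcal{Y}(\theta,\xi,r^{(1)})\cdots\mathcal{Y}(\theta,\xi,r^{(n)})$, where $r^{(1)},\dots,r^{(n)}$ are mutually independent sources derived from $r$ (by the definition of $\yield$, nested binary concatenations yield independent leaf randomness). Thus the $n$ marginalization draws $\omega_1,\dots,\omega_n$ are i.i.d.\ uniform on $\Omega$. On the event $\mathcal{E}:=\bigcap_{i=1}^n\{\omega_i=x_i\}$, which has probability $p_\tau:=|\Omega|^{-n}>0$, the yield of $\tau$ equals $\mathcal{Y}(\func,\langle x_1,\xi\rangle,\cdot)\cdots\mathcal{Y}(\func,\langle x_n,\xi\rangle,\cdot)$, i.e.\ is distributed as $Q(r)$; a string is an infix of itself, so the lemma holds with $p_\tau=|\Omega|^{-n}$. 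Arities match at every step (concatenation and marginalization of arity-$(a_\func-1)$ and arity-$a_\func$ trees give arity $a_\func-1$), and each intermediate object stays in $\mathcal{T}$, so this is just bookkeeping once the construction is fixed.

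The main obstacle is the cleanness of Step 1. If the marginalization closure only guaranteed $\mathcal{Y}(\func,\langle\omega,\xi\rangle,r_1)$ as an infix surrounded by extra material, then Step 2 would yield $u_1\,\func(x_1)\,v_1\,u_2\,\func(x_2)\,v_2\cdots$, and the contiguous block $\func(x_1)\cdots\func(x_n)$ would no longer occur as an infix; and since none of the remaining closures can alter attribute values there is no alternative construction. So the crux is to argue that marginalization can always be realized by a \emph{unary} production rule whose yield is exactly the daughter's yield with a fresh uniform attribute prepended — the canonical reading of the closure property, which I would spell out explicitly at this point. With that, the rest of the argument is routine.
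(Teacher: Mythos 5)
Your proof is correct and follows essentially the same route as the paper's: apply closure under \textsc{Marginalization} to prepend a uniformly random attribute to $\func$, concatenate $n$ independent copies, and condition on the probability-$|\Omega|^{-n}$ event that the draws equal $x_1,\dots,x_n$. Your closing observation about whether marginalization yields the daughter's output exactly versus merely as an infix is a point the paper's own proof glosses over (it simply asserts the yield \emph{is} $\mathcal{Y}(\func,\langle x,\xi\rangle,r)$), and your canonical unary-rule reading is the intended one.
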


\begin{proof}
Use the \textsc{Marginalization} property (Section~\ref{sec:assumptions}.3) to obtain a derivation tree whose yield is 
$\mathcal{Y}(\func, \langle x, \xi_{1\dots a_\tau}\rangle, r)$
where $x \in \Omega$ is uniformly random.
Then use the \textsc{Concatenation} property (Section~\ref{sec:assumptions}.2) to obtain a derivation tree whose yield has the same distribution as
\begin{equation}
\mathcal{Y}(\func,\langle x_1, \xi_{1\dots a_\tau}\rangle,r_1)...\mathcal{Y}(\func,\langle x_N, \xi_{1\dots a_\tau}\rangle,r_N)
\end{equation}
with probability $p_\tau = |\Omega|^{-n}$.
\end{proof}

\subsection{Proof of Theorem~\ref{theorem:theorem1}}\label{proof:theorem1}

\begin{figure*}\centering
\includegraphics[width=0.7\textwidth]{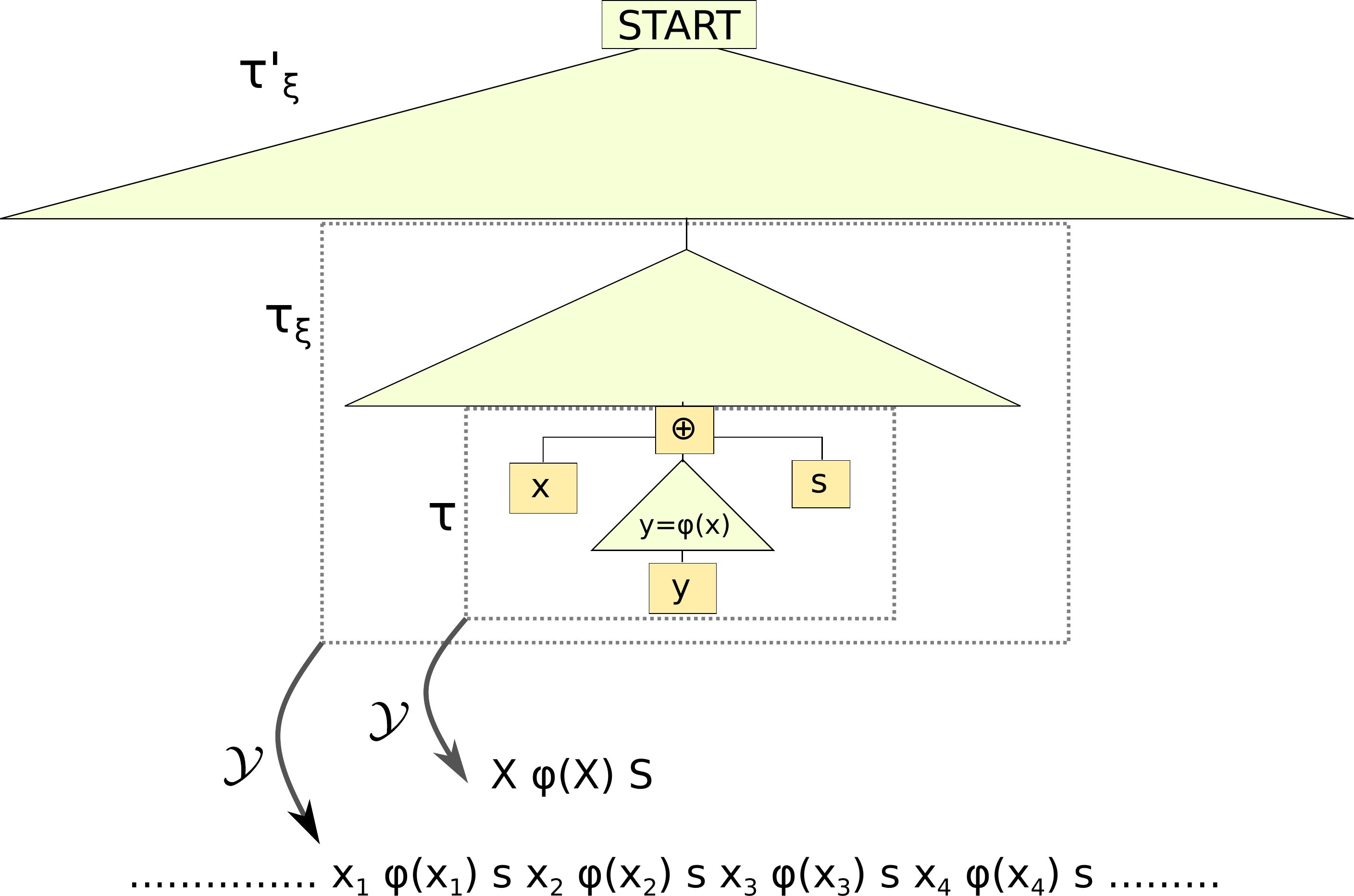}
\caption{Construction of the derivation tree $\tau'_\xi$ in the proof of Theorem~\ref{theorem:theorem1}. Rectangles denote individual nodes; triangles denote (sub)trees. The innermost derivation tree $\tau$ describes a single string $x \func(x) s$, where $x$ and $s$ are variables; it is obtained using the projection and concatenation operations (Section~\ref{sec:assumptions}). The subtree $\tau_\psi$ denoted by the lowest triangle defines some function $\psi$; the number of nodes inside it is $\DL[\tau_\psi]$.
Further outwards, the tree $\tau_\xi$ is chosen using the definition of $R_n$; it has the property that its yield contains an $n$-fold repetition of the yield of $\tau$ applied to objects $x_1, x_2, \dots, x_n$. 
	Finally, the tree $\tau'_\xi$ embeds $\tau_\xi$ inside some tree whose root is the start symbol nonterminal; its existence is guaranteed by regularity assumptions.}\label{fig:proof}
\end{figure*}
\begin{proof}[Proof of the Theorem]

In a first step, we will bound the description length of the prompt.
In the second step, we bound the probability assigned to a prompt by the autoregressive predictive model $M$.
In the third step, we will use this to bound the error on completing it.
Step 3 is similar to ideas from the completeness proof of Solomonoff Induction in Algorithmic Information  Theory (specifically Claim 5.2.2 in \citet{LiVitanyi2008introduction})---which studies Turing-complete generative processes---but we adapt these to our setting of linguistically motivated generative processes.

\paragraph{Bounding Description Length of Prompt.}

We begin by considering any sequence $\xi := x_1\dots x_n \in \Omega^n$ with pairwise distinct elements.
First, by the closure under \textsc{Projection}, \textsc{Concatenation}, and \textsc{Constant} properties (Section~\ref{sec:assumptions}), there is a derivation tree $\tau \in \mathcal{T}$ such that for all $x \in \Omega$ (Figure~\ref{fig:proof}):
	\begin{equation}\label{eq:tau-x-fx}
    \yield(\tau, \langle x\rangle, r) = \spell{x  \func(x)}  s \in \Sigma^*
\end{equation}
and
\begin{equation}
    \DL[\tau] \leq \DL[\tau_\func] + 3
\end{equation}
as in Figure~\ref{fig:compositional-setup}C.
Now let $\tau_\xi$ be a derivation tree such that its yield has an infix
\begin{equation}\label{eq:infix}
\yield(\tau,\langle x_1\rangle,r_1)...\yield(\tau,\langle x_N\rangle,r_N)
\end{equation}
with probability $\geq p_{\tau_\xi} > 0$
such that its description length $\tau_\xi$ satisfies:
\begin{equation}\label{eq:descr-len-bound}
	\DL[\tau_\xi] \leq  R_n +\DL[\tau_\func]+3 + \rho^{-1} \log \left[{{|\Omega|\choose n}}  p_{\tau_\xi}\right]
\end{equation}

\paragraph{Remark 1.}
The bound is derived for a prompt where examples have the simple form $\spell{x\func(x)}$.
In a CAG reflecting naturalistic text distributions, analogous bounds follow for more naturalistic prompts, or--if the CAG can derive those--even for prompts with unnaturally permuted labels.
The key is that a tree $\tau$ is constructed whose yield represents an individual example. 

\paragraph{Remark 2.}
If $s \in \Omega$ as in our experiments, one can obtain an on-average bound over all $s$, with constants independent of $s$, by instead considering
\begin{equation}
    \yield(\tau, \langle x,s\rangle, r) = \spell{x  \func(x)  s} \in \Sigma^*
\end{equation}
obtained using \textsc{Projection} and \textsc{Concatenation}.
Then by the closure under \textsc{Marginalization} (Section~\ref{sec:assumptions}.3) there is a tree $\tau'_{\xi}$ such that 
\begin{equation}
    \mathcal{Y}(\tau'_{\xi}, \langle\rangle, r) = \mathcal{Y}(\tau_\xi, \langle s\rangle, r')
\end{equation}
for each $s \in \Omega$ with probability $\frac{1}{|\Omega|}$.
The rest of the proof then proceeds with this $\tau'_{\xi}$ in place of $\tau_{\xi}$.
This alternative reasoning justifies why, in our experimental setup, the constants in Theorems 1--2 do not depend on $\Omega$.

\paragraph{Bounding Language Model Probability of Prompt.}
By Regularity Assumption 5 in Section~\ref{sec:technical}, there is 
some $\tau'_\xi$ with $\textsf{root}(\tau'_\xi)=\START$ containing $\tau_\xi$ as a subtree with $\DL[\tau'_\xi] \leq c_0 + \DL[\tau_\xi]$, and furthermore, with probability $c_1>0$ over $r$, $\yield(\tau'_\xi, \langle\rangle, r)$ contains an infix distributed identically to $\yield(\tau_\xi, \langle\rangle, r')$, and any such string will in turn have the infix~(\ref{eq:infix}) with probability $p_{\tau_\xi}$.
Hence, $\yield(\tau'_\xi, \langle\rangle, r)$ contains the infix~(\ref{eq:infix}) with probability  (over $r$) at least $c_1 p_{\tau_\xi}$.

For any $\xi$, let $k = (d+2) \cdot n$ be the length of the prompt $Q_n(\xi)$, i.e. the prompt $P_n$ for the sequence $\xi$.
We will now lower-bound the model assigned to the predictor to $Q_n(\xi)$.
Now by Lemma~\ref{prop:prob-dl-bound} and Equation~(\ref{eq:descr-len-bound}):
\begin{align*}
    P(\tau_\xi'|\START) \geq& \exp(-\rho c_0-\rho \DL[\tau_\xi]) \\
	\geq & \exp\left(-\rho c_0-\rho\cdot \left(  R_n +\DL[\tau_\func]+3 + \rho^{-1} \log \left[{{|\Omega|\choose n}}  p_{\tau_\xi}\right]\right)\right) \\
	=&  \exp\left(-\rho \cdot \left(R_n + 3+\DL[\tau_\func]\right)-\rho c_0\right) \cdot p_{\tau_\xi}^{-1} \cdot {|\Omega|\choose n}^{-1}.
\end{align*}
Recall, for any $k \geq 0$ and for any string $w \in \Sigma^k$,
\begin{equation}\label{eq:predictive-2}
    M(w_n|w_{1\dots n-1}) = \frac{\sum_d p(d) \cdot \#_d(w_{1\dots n})}{\sum_d p(d) \cdot \#_d(w_{1\dots n-1})}, 
\end{equation}
where, for the empty string $\epsilon$, $\#_d(\epsilon) = |d|+2$ by convention.\footnote{This convention is chosen so that $\sum_{w\in\Sigma\cup\{\$\}} M(w) = 1$.}
Equation~(\ref{eq:predictive-2}) is well-defined because of Regularity Assumptions, 2, 4 and 5 in Section~\ref{sec:technical}, by which both the numerator and the denominator are finite and non-zero.
Then, 
\begin{equation}\label{eq:predictive-product}
   M_k(w_{1\dots k}) := \prod_{n=1}^k M(w_n|w_{1\dots n-1}) =  \prod_{n=1}^N \frac{\sum_d p(d) \cdot \#_d(w_{1\dots n})}{\sum_d p(d) \cdot \#_d(w_{1\dots n-1})}  = \frac{\sum_d p(d) \cdot \#_d(w_{1\dots k})}{2+\sum_d p(d) \cdot |d|}, 
\end{equation}
where the last equality follows by cancellation of terms appearing in both the numerator and the denominator.
Recall that for $m=1,\dots,n$, the prompt $P_m(\xi)$ is given by
	\begin{equation}
		 \overline{x_1 \func(x_1)} s \spell{x_2 \func(x_2)} s \dots s \spell{x_{m-1} \func(x_{m-1})} s \spell{x_{m}},
	\end{equation}
 Then define $Q_m(\xi)$ as the prompt plus its correct completion:
	\begin{equation}
 \spell{x_1} \spell{\func(x_1)} s 
 \dots s \spell{x_{m} \func(x_{m})} s \equiv P_m \spell{\func(x_{m})} s
	\end{equation}
Now for any correctly completed prompt $Q_n(\xi) \in \Omega^k$, by the definition of $M_k$ (\ref{eq:predictive-product}),
\begin{align}
     M_k(Q_n(\xi))     \geq &  \frac{c_1 p_{\tau_\xi} p(\tau'_\xi|\START) }{2+\mathbb{E}[|d|]} \\
	\geq& c_1\cdot \exp\left(-\rho \cdot (R_n + 3 +\DL[\tau_\func])-\rho c_0\right) \cdot {|\Omega|\choose n}^{-1} \cdot \frac{1}{{2+\mathbb{E}[|d|]}}.\label{eq:mkpn-lower-bound}
\end{align}
We have derived this bound for any sequence $\xi$.

\paragraph{Bounding ICL Error.}
We now use this to derive an error bound that holds on average across all $\xi$.
For the distribution $\mu$ over $\Sigma^*$ arising as the uniform measure on all correctly completed prompts $Q_n(\xi)$: 
\begin{equation}
\mu := \frac{1}{{{{|\Omega|\choose n}}}} \sum_{\xi : \xi_1, \ldots, \xi_n \text{ pairwise distinct}} \delta_{Q_n(\xi)}
\end{equation}
it holds, by rearranging (\ref{eq:mkpn-lower-bound}), that
\begin{align}\label{eq:fraction-bound}
	\frac{\mu_k(Q_n(\xi))}{M_k(Q_n(\xi))}  \leq  & c_1 \cdot \exp(\rho  \cdot (R_n + 3+\DL[\tau_\func])+\rho c_0)  \cdot {(2+\mathbb{E}[|d|])}. 
\end{align}
 We will overload notation by writing $\mu(\sigma)$ for $\sum_{\sigma'\in\Sigma^*} \mu(\sigma\sigma')$ when $|\sigma| < |Q_n(\xi)|$, and $\mu(\sigma'|\sigma) = \frac{\sum_{\sigma'''\in\Sigma^*}\mu(\sigma\sigma'\sigma''')}{\sum_{\sigma''\in\Sigma^*} \mu(\sigma\sigma'')}$.
 We are now ready to bound the error made by the predictor on the prompts.
The expected summed cross-entropy loss on the in-context learning task satisfies (in the sums below, we use the notation $x_{1\dots n} \tilde{\in} \Omega^n$ to denote the sum over all $x_{1\dots n} \in \Omega^n$ with pairwise distinct entries) :
	\begin{align}
 		&	-	\frac{1}{{|\Omega| \choose n}}\sum_{x_{1\dots n} \tilde{\in} \Omega^n}  \sum_{m=1}^n  \log {M}(\spell{\func(x_m)}s|P_{m})  \\
=		&	-	\frac{1}{{|\Omega| \choose n}}\sum_{x_{1\dots n} \tilde{\in} \Omega^n}  \sum_{m=1}^n  \log {M}(\spell{\func(x_m)}s|Q_{m-1}\spell{x_m})  \\
=		&		\frac{1}{{|\Omega| \choose n}}\sum_{x_{1\dots n} \tilde{\in} \Omega^n} \sum_{m=1}^n  \log \frac{\mu(\spell{\func(x_m)}s|Q_{m-1}\spell{x_m})}{{M}(\spell{\func(x_m)}s|Q_{m-1}\spell{x_m})} \label{eq:expected-summed-ce-fraction} 
\end{align}
where we used the fact that $\mu(\spell{\func(x_m)}s|Q_{m-1}\spell{x_m})=1$.
The goal is to bound this in terms of (\ref{eq:fraction-bound}).
By the non-negativity of the KL-Divergence, we obtain
\begin{align*}
    0 \leq   & \sum_{m=1}^n \sum_{x_{1\dots m-1} \tilde{\in} \Omega^{m-1}} \mu(Q_{m-1}) D_{KL}(\mu(\cdot|Q_{m-1})||{M(\cdot|Q_{m-1})}) \\
     = & \sum_{m=1}^n \sum_{x_{1\dots m} \tilde{\in} \Omega^{m}} \mu(Q_{m-1}) \mu(\spell{x_m}|Q_{m-1}) \log \frac{\mu(\spell{x_m}|Q_{m-1})}{{M(\spell{x_m}|Q_{m-1})}} \\
     = & \sum_{m=1}^n \sum_{x_{1\dots n} \tilde{\in} \Omega^{n}} \mu(P_n) \log \frac{\mu(\spell{x_m}|Q_{m-1})}{{M(\spell{x_m}|Q_{m-1})}} \\
= & \frac{1}{{|\Omega| \choose n}} \sum_{x_{1\dots n} \tilde{\in} \Omega^n} \sum_{m=1}^n  \log \frac{\mu(\spell{x_{m}}|Q_{m-1})}{{M}(\spell{x_{m}}|Q_{m-1})} \\
\end{align*}
where both $\mu(\cdot|Q_m)$ and $M(\cdot|Q_m)$ are distributions over the one symbol following $Q_m$.
Hence, we can continue (using Equation~(\ref{eq:fraction-bound})):
\begin{align*}
(\ref{eq:expected-summed-ce-fraction}) \leq   & \frac{1}{{|\Omega| \choose n}} \sum_{x_{1\dots n} \tilde{\in} \Omega^n} \sum_{m=1}^n  \log \frac{\mu(\spell{x_{m}\func(x_m)}s|Q_{m-1})}{{M}(\spell{x_{m} \func(x_m)}s|Q_{m-1})} \\
=  & \frac{1}{{|\Omega| \choose n}} \sum_{x_{1\dots n} \tilde{\in} \Omega^n}  \log \frac{\mu(Q_n)}{{M_{|Q_n|}}(Q_n)} \\
=  &  \sum_{{\bm \sigma} \in \Sigma^{|Q_n|}} \mu_k({\bm \sigma}) \log \frac{\mu({\bm \sigma})}{{M_{|Q_n|}}({\bm \sigma})} \\
	\leq		&   \rho \cdot (R_n +3+\DL[\tau_\func])+\rho c_0+\log c_1+\log{(2+{\mathbb{E}[|d|]})}
	\end{align*}
 This expression is
 \begin{equation}\label{eq:zero-one-proof}
 \mathcal{O}(R_n +\DL[\tau_\func])
 \end{equation}
where the constants absorbed into $\mathcal{O}(\cdot)$ depend on: $\rho$ provided by Lemma~\ref{prop:prob-dl-bound} (depends on PCFG production probabilities); $c_0, c_1 > 0$ provided by Regularity Assumption 5; $\log \mathbb{E}[|d|] < \infty$ provided by Regularity Assumption 6, and not otherwise on $\yield$ (and thus $\Omega$).
This then also yields a bound on the zero-one error on completing the prompt:
\begin{align*}
	&	\frac{1}{{|\Omega| \choose n}}\sum_{x_{1\dots n} \tilde{\in} \Omega^n}  \sum_{m=1}^n \mathbb{1}_{\spell{\func(x_m)} \neq \operatorname{arg\ max}_{\omega \in \Sigma^{d}} M(\omega s|P_{m})}  \\ 
 \leq & 	\frac{1}{{|\Omega| \choose n}}\sum_{x_{1\dots n} \tilde{\in} \Omega^n}  \sum_{m=1}^n \mathbb{1}_{M(\spell{\func(x_m)} s|P_{m}) \leq \frac{1}{2}}  \\ 
 = & 		\frac{1}{\log 2} \frac{1}{{|\Omega| \choose n}}\sum_{x_{1\dots n} \tilde{\in} \Omega^n}  \sum_{m=1}^n \log 2 \cdot \mathbb{1}_{-\log M(\spell{\func(x_m)} s|P_{m}) \geq \log 2}  \\ 
 \leq & 	-	\frac{1}{\log 2} \frac{1}{{|\Omega| \choose n}}\sum_{x_{1\dots n} \tilde{\in} \Omega^n}  \sum_{m=1}^n \log M(\spell{\func(x_m)} s|P_{m}) \\ 
 = & \frac{1}{\log 2} \cdot (\ref{eq:expected-summed-ce-fraction}) \\
 = &  \mathcal{O}(R_n +\DL[\tau_\func])
 \end{align*}

\end{proof}

\subsection{Extension to Stochastic or Noisy Functions}\label{sec:stochastic-functions}
Extension to the case where $\phi$ is stochastic or noisy has an entirely analogous proof, and only requires some additional notation to formally define the relevant notions.
Formally, we say that a stochastic function $\func : \Omega \times \mathcal{R} \rightarrow \Omega^*$ 
is \emph{expressed} by a derivation tree $\tau_\func$ with description length $\DL[\tau_\func]$ if, for all $w \in \Sigma^*$ and all $x\in\Omega$, $Prob(\yield(\tau_\func, \langle x\rangle, r)=w)$ equals $Prob(\spell{\func(x,r)} = w)$.
The optimal cross-entropy loss, for an oracle predictor that knows the task from the start, is given by the entropy of outputs:
\begin{equation}
\ell_{opt} = \frac{1}{|\Omega|} \sum_x \operatorname{H}[\spell{\func(x,r)}]
\end{equation}
The learning bound will compare ICL loss to this optimal loss:
\begin{theorem}[Regret Bound for Nondeterministic Functions]\label{theorem:regret-bound}
Let any CAG be given, satisfying the regularity assumptions, and including the associated trees $\trees$, yield map $\yield$, and predictive distribution $M$, with the associated quantities $R_n$.
	Let $\func : \Omega\times\mathcal{R} \rightarrow \Omega^d$ be a function expressed by a derivation tree $\tau_\func\in\trees$.
 Let $\xi := x_1, x_2, ..., x_n \in \Omega$ ($n \leq |\Omega|$) be a sequence without replacement, and let $s \in \Sigma$.
For $m=1,\dots,n$, consider the prompt $P_m$ given by
	\begin{equation}
		 \overline{x_1 \func(x_1)} s \spell{x_2 \func(x_2)} s \dots s \spell{x_{m-1} \func(x_{m-1})} s \spell{x_{m}}.
	\end{equation}
 with expected completion $\spell{\func(x_m)}$.
Assume predictions are made as
\begin{equation}
     \operatorname{arg} \operatorname{max}\limits_{\omega \in \Sigma^d} M(\omega s|P_m).
\end{equation}
and cross-entropy loss is incurred as
\begin{equation}
     - \log M( \spell{\func(x_m)}s|P_m).
\end{equation}
On average across the choice of the sequence $x_1, x_2, ..., x_n$, the summed cross-entropy loss on completing $P_1, ..., P_n$,
is bounded by
 \begin{equation}\label{eq:learning-bound-regret}
  n \cdot \ell_{opt} + \mathcal{O}\left(R_n +\DL[\tau_\func]\right)
 \end{equation}
where $\mathcal{O}(\cdot)$ absorbs constants depending on the PCFG, $s$, and the average document length $\mathbb{E}[|d|]$, but not otherwise on $|\Omega|$, $\func$, or $n$.
\end{theorem}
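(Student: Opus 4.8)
The plan is to replay the three-step argument used for Theorem~\ref{theorem:theorem1}, changing only the bookkeeping that the randomness of $\func$ forces. Throughout, fix $s\in\Sigma$ and let $\mu$ be the mixture, over input sequences $\xi=x_1\dots x_n$ drawn uniformly among those with pairwise-distinct entries, of the law of the fully completed prompt $\spell{x_1}\spell{\func(x_1)}s\cdots\spell{x_n}\spell{\func(x_n)}s$ in which the $\func(x_i)$ are generated from independent sources of randomness. Since the inputs can be read directly off any such string, $\mu(w)=\binom{|\Omega|}{n}^{-1}\mu(w\mid\xi_w)$ for the unique input sequence $\xi_w$ compatible with $w$.

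\emph{Step 1 (description length).} Using closure under \textsc{Projection}, \textsc{Concatenation}, and \textsc{Constants} exactly as before, build a derivation tree $\tau$ with $\DL[\tau]\le\DL[\tau_\func]+3$ whose yield $\yield(\tau,\langle x\rangle,r)$ is the \emph{random} string $\spell{x}\,\spell{\func(x,r)}\,s$, matching the law prescribed by the stochasticity of $\func$. Applying Definition~\ref{eq:formal-def-rn} to $\tau$ produces a tree $\tau_\xi$ whose yield contains, with probability $\ge p_{\tau_\xi}>0$, an infix distributed as $\yield(\tau,\langle x_1\rangle,r_1)\cdots\yield(\tau,\langle x_n\rangle,r_n)$ and with $\DL[\tau_\xi]\le R_n+\DL[\tau_\func]+3+\rho^{-1}\log[\binom{|\Omega|}{n}p_{\tau_\xi}]$; Regularity Assumption~5 embeds $\tau_\xi$ inside a tree $\tau'_\xi$ rooted at $\START$ with $\DL[\tau'_\xi]\le c_0+\DL[\tau_\xi]$.

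\emph{Step 2 (lower bound on $M$).} Exactly as in Theorem~\ref{theorem:theorem1}, counting the occurrences of $w$ contributed by the route that realizes $\tau'_\xi$, then uses $\tau_\xi$, then produces the repeated infix, but now carrying the extra factor $\mu(w\mid\xi_w)$ coming from the random completions, gives, for the unique $\xi_w$, the inequality $M_{|w|}(w)\ \ge\ (2+\mathbb{E}[|d|])^{-1}\,c_1\,p_{\tau_{\xi_w}}\,P(\tau'_{\xi_w}\mid\START)\,\mu(w\mid\xi_w)$. Combining with Lemma~\ref{prop:prob-dl-bound} and the $R_n$-bound on $\DL[\tau_\xi]$, the factors $p_{\tau_{\xi_w}}$ and $\binom{|\Omega|}{n}$ cancel, leaving $\mu(w)/M_{|w|}(w)=\mathcal{O}\!\big(\exp(\rho(R_n+\DL[\tau_\func]))\big)$ uniformly in $w$; hence $\sum_w\mu(w)\log(\mu(w)/M_{|w|}(w))=\mathcal{O}(R_n+\DL[\tau_\func])$.

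\emph{Step 3 (regret decomposition).} This is the one genuinely new piece. Split the quantity of interest as
\begin{align*}
&-\,\mathbb{E}_\mu\!\left[\textstyle\sum_{m=1}^n\log M(\spell{\func(x_m)}s\mid P_m)\right] \\
&\qquad=\ \mathbb{E}_\mu\!\left[\textstyle\sum_{m=1}^n\log\tfrac{\mu(\spell{\func(x_m)}s\mid P_m)}{M(\spell{\func(x_m)}s\mid P_m)}\right]\ -\ \mathbb{E}_\mu\!\left[\textstyle\sum_{m=1}^n\log\mu(\spell{\func(x_m)}s\mid P_m)\right].
\end{align*}
The first term is at most $\mathbb{E}_\mu[\log(\mu(Q_n)/M_{|Q_n|}(Q_n))]=\mathcal{O}(R_n+\DL[\tau_\func])$, by exactly the telescoping and KL-nonnegativity argument of Theorem~\ref{theorem:theorem1}: the intervening input tokens $\spell{x_m}$ contribute nonnegative conditional KL divergences, which can be dropped. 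For the second term: because the random sources feeding the $n$ example subtrees are mutually independent and $s$ is fixed, conditioning on $P_m$ does not change the law of $\spell{\func(x_m,r)}$, so $-\mathbb{E}_\mu[\log\mu(\spell{\func(x_m)}s\mid P_m)]=\operatorname{H}[\spell{\func(x_m,r)}]$; since each $x_m$ is marginally uniform on $\Omega$ under $\mu$, summing over $m$ gives $n\,\ell_{opt}$. Adding the two contributions yields the bound $n\,\ell_{opt}+\mathcal{O}(R_n+\DL[\tau_\func])$ of Equation~\ref{eq:learning-bound-regret}. The main obstacle is making this final separation rigorous: one must verify that the irreducible conditional-entropy terms telescope cleanly to $n\ell_{opt}$ (which hinges on the per-example independence of the CAG's randomness, so that past examples carry no information about the current completion) while the residual regret term is still controlled by the same description-length/compressibility estimate as in the deterministic case.
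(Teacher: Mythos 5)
Your proposal is correct and follows essentially the same route as the paper, which states only that the proof is ``entirely analogous'' to that of Theorem~\ref{theorem:theorem1}: you replay the description-length and likelihood-lower-bound steps (correctly noting that the random-completion factor $\mu(w\mid\xi_w)$ appears in both $\mu(w)$ and the lower bound on $M_{|w|}(w)$ and hence cancels), and you supply the one genuinely new ingredient---splitting the cross-entropy into a KL/regret term controlled by the same compressibility estimate plus a conditional-entropy term that, by per-example independence of the yield randomness and the marginal uniformity of each $x_m$, sums to $n\,\ell_{opt}$. This is exactly the intended argument, spelled out in more detail than the paper provides.
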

The proof is entirely analogous to that of Theorem 1.
A bound on zero-one loss follows from a bound on cross-entropy loss analogously to (\ref{eq:zero-one-proof}).

\subsection{Necessity of Dependence on $R_n$}\label{sec:optimality-bound}
Here, we sketch (not fully formally) why the dependence of the bound (\ref{eq:learning-bound}) on $R_n$ cannot in general be avoided, by constructing a CAG with an ``adversarial'' production rule that mimics a prompt in order to lead the predictor astray. 
We do not consider this adversarial production rule linguistically realistic, but aim to show that the linear bound is tight in the absence of further assumptions.
Simultaneously, we point out that such an adversarial production rule can only slow down, but not prevent ICL. The latter point is important because it shows why the learning guarantee is stable under mixing other data into the pretraining distribution---ICL capabilities obtained in a small CAG (like the one generating our \textsc{Compositional} dataset) carry over  to larger CAGs extending it, up to a change in the constants in Equation~\ref{eq:learning-bound}.

We consider a CAG with a production rule as described in Example~\ref{ex:iteration-rn} mapping a nonterminal to a single nonterminal, and a corresponding yield $\yield(\psi[\tau], \langle\rangle, r)$ of the form
\begin{equation}
\yield(\tau, \langle x_1\rangle, r_1) \dots \yield(\tau, \langle x_{n}\rangle, r_{n})
\end{equation}
where any sequence of mutually different  $x_1, ..., x_n$ is chosen with probability
\begin{equation}\label{eq:prob-permutation-iter}
p(n) \cdot {|\Omega| \choose n}^{-1} := A \cdot n^{-q} \cdot {|\Omega| \choose n}^{-1}.
\end{equation}
where $A = (\sum_{n=1}^\infty n^{-q})^{-1}$; $q>2$. 
We now add a second ``adversarial'' production rule $\psi'$ where the yield of $\psi'[\tau]$ is 
\begin{equation}
\yield(\tau, \langle x_1\rangle, r_1) \dots \yield(\tau, \langle x_{n-1}\rangle, r_{n-1}) \spell{x_n} \spell{y}
\end{equation}
where $y$ is chosen from $\Omega$ at random.
Again, each $x_1, ..., x_n$ is chosen with probability (\ref{eq:prob-permutation-iter}).
Both $\psi$ and $\psi'$ generate derivation trees whose root nonterminal is $\START$, and they have the same production probabilities.
Now given a prompt:
\begin{equation}
P_n := \spell{x_1} \spell{\func(x_1)} s \dots \spell{x_{n}} \spell{\func(x_{n})} s \spell{x_{n+1}}
\end{equation}
by Example~\ref{ex:iteration-rn},
\begin{equation}
	R_n \leq 1-\frac{1}{\rho} \log \sum_{k\geq n} p(n) = \Theta(\log n)
\end{equation}
and we obtain an error bound
\begin{equation}\label{eq:logarithmic-error}
	\mathcal{O}(\log N)
\end{equation}
on completing prompts $P_1, \dots, P_N$ from Theorem 1.
Now we consider two derivation trees:
$\tau_1$ applying the loop production rule $\psi$ to the derivation tree $\tau_{example}$
for which (as in Equation~\ref{eq:tau-x-fx}):
\begin{equation}
    \yield(\tau_{example}, \langle x\rangle, r) = \spell{x  \func(x)}  s \in \Sigma^*
\end{equation}
and $\tau_2$ applying instead the adversarial production rule.
Now, for $y \neq \func(x_{n+1})$,
\begin{align}
\sum_d p(d|\tau_1) \cdot \#_d(P_n) = \sum_d p(d|\tau_2) \cdot \#_d(P_n) = &\sum_{k=n+1}^\infty p(k) (k-n) {|\Omega| \choose n+1}^{-1}
\\
\sum_d p(d|\tau_1) \cdot \#_d(P_n \spell{y}) = &0
\\
\sum_d p(d|\tau_2) \cdot \#_d(P_n \spell{y}) =& |\Omega|^{-1} \sum_{k=n+1}^\infty p(k)  {|\Omega| \choose n+1}^{-1}
\end{align}
When $n$ is large, the probability of the prompt context is dominated by $\tau_1, \tau_2$, because other ways of generating such a string (e.g. through concatenation of independent strings) have probability exponentially small in $n$, asymptotically smaller than ${|\Omega| \choose n+1}^{-1}$.
Hence,
\begin{align*}
    \sum_{y\neq \func(x_{n+1})} M(\spell{y}|P_n)= &\sum_{y\neq \func(x_{n+1})} \frac{\sum_d p(d) \cdot \#_d(P_n y)}{\sum_d p(d) \cdot \#_d(P_n)}  \\
        \gtrsim  &\frac{\sum_{k=n+1}^\infty k^{-q}{|\Omega| \choose n+1}^{-1}}{ 2\sum_{k=n+1}^\infty k^{-q} (k-n) {|\Omega| \choose n+1}^{-1}}  \\
        \sim &\frac{n^{-q+1}}{n^{-q+2}} = \frac{1}{n} \\
\end{align*}
and hence
\begin{align*}
 - \sum_{n=1}^N  \log M(\spell{\func(x_{n+1})}|P_n) \gtrsim   -\sum_{n=1}^N \log\left(1-\frac{1}{n}\right) \geq  \sum_{n=1}^N n^{-1} \sim \log N \\
\end{align*}
(the last inequality can be obtained from the Taylor series of $\log(1-x)$),
matching the upper bound (\ref{eq:logarithmic-error}) up to constants.

We note that the presence of this adversarial production rule can only slow down ICL up to the limit given by Theorem 1, but cannot make ICL impossible.
Intuitively, this is because even an adversarial production rule  will inadvertently lend support to the correct prediction: A correctly completed prompt $P_n \spell{\func(x_{n+1})}$ is a prefix of longer prompts $P_{n'}$, and thus will occur even in documents created using the adversarial production rule.

\subsection{Broad ICL Skills Require Non-Context-Free Generative Process}\label{sec:pcfg-no-icl}
Here, we describe informally why broad ICL learning guarantees for an idealized predictor require going beyond context-free grammars as a formal model of the generative process underlying the pretreaining distribution.
It has long been noted that context-free grammars are too restrictive to model the syntactic structure of language \citep[e.g.][]{Shieber1985EvidenceAT,Joshi1985NaturalLP}; indeed, repetition-like operations eluding context-free grammars have figured in this context \citep[e.g.][]{Steedman1990GappingAC,Kallmeyer2010OnMC}; see also Appendix~\ref{sec:minimalist-iteration}.

First, we note that iteration operations such as in Example~\ref{ex:iteration-rn} cannot be implemented in a PCFG because that would enable a CFG to express languages such as the copy language, $\{ww : w\in \Sigma^*\}$, known to be impossible.
More generally, in a PCFG, the description length of a derivation tree expressing repetition of a tree $\tau$ needs to grow with $n\cdot \DL[\tau]$; hence, $R_n \equiv +\infty$.

If the description of length of $\phi_\tau$ is bounded, prompt-like structures can be hard-coded into a PCFG, e.g., using production rules
\begin{verbatim}
A :== x f(x) s A 
A :== 
\end{verbatim}
Such a simple CFG could generate pretraining data sufficient to enable ICL for the function $f$. 
However, this only works when $f$ is hard-coded into the PCFG in such a way, and ICL will fail for other functions.
To see why this is the case, it is sufficient to consider constant functions $\func_w : \Omega \rightarrow\Omega^u$ such that $\spell{\func_w(x)} \equiv w \in \Omega^u$.
Given a prompt 
\begin{equation}
P_m(x_{1\dots m}; w) := x_1 w s x_2 w s \dots x_{m-1} w s x_m
\end{equation}
the reference answer, from the set $\{bs : b\in \Sigma^u\}$ is $ws$.
The possibility of ICL for an idealized predictor would now mean that  -- for sufficiently large $m$ depending on $|w|$ -- the CFG can ``copy'' $w$ from the prompt $P_m(x_{1\dots m}; w)$.
However, for any fixed CFG, this must fail at least for some sufficiently long $w$, essentially for the same reason as the non-context-freeness of the copy language.
\footnote{This is seen most easily for the cross-entropy loss: If $w$ is long, the LM's cross-entropy on $P_n$ (i.e., $-\log M(P_n)$) scales with $n|w|$. Hence, the cross-entropy of the correct completion given $P_m$ cannot converge to zero as $m\rightarrow\infty$.}

There is a second sense in which ICL guarantees such as Theorem 1 are not possible for CFGs:
A PCFG generating a pretraining dataset allowing ICL skills for a function $f$ (such as the hard-coded rules above) can always be extended with adversarial production rules that prevent ICL in the idealized predictor for the extended distribution.
In contrast, for general CAGs, the learning guarantee from Theorem 1 holds even if extending the CAG with other, possibly adversarial, production rules---intuitively, the guarantee is stable under mixing other data into the pretraining distribution---such additional rules would only affect the constants in Equation~\ref{eq:learning-bound} (cf. Appendix~\ref{sec:optimality-bound}).

\section{Proof of Theorem \ref{theorem:cot}}\label{sec:proof-cot}

\begin{proof}[Proof of Theorem \ref{theorem:cot}]
We reduce the statement to an application of the proof of  Theorem~\ref{theorem:theorem1} to the prompts
   \begin{equation}\label{eq:first-prompt-q}
    P^{(1)}_n({\bf q}) = \overline{x_1 \func_1(x_1) q_1} s \dots \overline{x_n \func_1(x_n) q_n} s \overline{x_{n+1}}
    \end{equation}
    with expected completion $\phi_1(x_{n+1})$,
    and
   \begin{equation}\label{eq:second-prompt-q}
    P^{(2)}_n({\bf q}) = \overline{x_1 q_1 \func_1(q_1)} s \dots \overline{x_n q_n \func_1(q_n)} s \overline{x_{n+1} q_{n+1}}
    \end{equation}
    with expected completion $\phi_1(q_{n+1})$.
    On average over the sequences ${\bf x}$ (without replacement), ${\bf q}$ (with replacement), we already have a bound of the desired form, using a variant of Theorem~\ref{theorem:theorem1} using repetition of the derivation trees $\tau^{(1)}$ and $\tau^{(2)}$:
    \begin{equation}
    \yield(\tau^{(1)}, \langle x\rangle, r) = \spell{x} \spell{\func_1(x)}\spell{q}  s \in \Sigma^*
\end{equation}
and
    \begin{equation}
    \yield(\tau^{(2)}, \langle x\rangle, r) = \spell{x} \spell{ q} \spell{ \func_1(q)}  s \in \Sigma^*
\end{equation}
where $q$ is uniformly distributed as $r$ varies,
as assured by the closure under \textsc{Marginalization} (Section~\ref{sec:assumptions}); we use closure under \textsc{Concatenation}; $x, q$ denote spellout operations $\tau_{projection}$ as assured by closure under \textsc{Projection}; and $s$ denotes $\tau_{symbol(s)}$ as assumed by closure under \textsc{Constants}.
	Substituting these for the tree $\tau$ (\ref{eq:tau-x-fx}) and analogously carrying out the remainder of Section~\ref{proof:theorem1} yields error bounds on completing (\ref{eq:first-prompt-q}) and (\ref{eq:second-prompt-q}) of the form $\mathcal{O}(R_n +\DL[\tau_{\func_1}])$.

Now we notice that, keeping $x$ and $\func_1$ fixed, the distribution of (\ref{eq:first-prompt-q}) over ${\bf q}$ equals the distribution of (\ref{eq:first-prompt}) over arbitrary functions ${\func_2}$.
An analogous statement holds for (\ref{eq:second-prompt-q}).
This proves the theorem.

\end{proof}

\section{Linguistic Grammar Formalisms and Design Choices}\label{sec:design-choices}

Here, we discuss how our definition of CAGs fits into the landscape of grammar formalisms, and how our theoretical results transfer to other formalisms.
We intend CAGs not as a full-fledged grammar formalism competing with existing proposals; rather, it aims to condense key mathematical ideas and insights from the formal grammar literature while minimizing notational burden: containing just enough material to formalize our theory of what learning from next-word prediction on compositional data entails for ICL.
We decided to go this route, rather than committing to  any individual formalism, both in order to avoid formalism-specific notational burden, and to make transparent the general features necessary (and not necessary) for proving our results.
For this reason, we focus on mathematical links to grammar formalisms, and leave the question of specific analyses of linguistic phenomena largely aside -- those questions are addressed in a substantive literature on formal grammar embedded in those formalisms.
This section only describes aspects of grammar formalisms as they are relevant to our analysis; a very comprehensive technical survey is provided by \citet{Kallmeyer2010ParsingBC}.
\citet{Mller2020GrammaticalT} discusses and compares analyses of linguistic phenomena across different formalisms.
A short survey is provided by \citet{Jger2012FormalLT}.

\subsection{Role of Yield Function}\label{sec:design-choices-lsit}
Recall that CAGs consist of a PCFG backbone generating derivation trees, and a yield operator mapping those to strings.
This architecture is extremely common across the grammar formalisms literature, even if not always stated in these terms.
The simplest case is that of a context-free grammar, where the yield function has a particularly simple form: The yield of a tree $\psi[t_1, \dots, t_k]$ consisting of an application of the production rule $\psi$ with children $t_1, \dots, t_k$ is
\begin{equation}
    \mathcal{Y}(\psi[t_1, \dots, t_k]) = \mathcal{Y}(t_1) \dots \yield(t_k)
\end{equation}
It is well-established that context-free grammars are too restrictive for modeling the syntax of natural language, even at the level of individual sentences \citep[e.g.][]{Shieber1985EvidenceAT,Joshi1985NaturalLP}.
This is addressed by a range of grammar formalisms that relax the definition of $\yield$.
An extremely general framework for formalizing this, into which many formalisms can be embedded, is the Generalized Context Free Grammar (GCFG, \citep{Pollard1984GeneralizedPS}), where each production rule $\psi$ is associated with some function $f_\psi(x_1, \dots, x_k)$ and the yield is:
\begin{equation}\label{eq:yield-gcfg}
    \mathcal{Y}(\psi[t_1, \dots, t_k]) = f_\psi(\mathcal{Y}(t_1), \dots,  \yield(t_k))
\end{equation}
The simple case where $f_\psi$ denotes string concatenation recovers context-free grammars.
Indeed, GCFGs generalize in a second direction, by allowing $\yield$ to map to \emph{tuples} of strings. This is useful for modeling syntactic relationships between structures appearing in different places in a sentence.

Grammar formalisms vary in what restrictions they place on $\yield$ -- that is, what restrictions they place on $f_\psi$.
CAGs assume that $\yield$ always maps to strings, i.e., tuples of length 1 in GCFG parlance. We do this for simplicity; nothing in our analysis depends on this.
Another assumption is more substantive:
CAG follows many grammar formalisms in assuming that $f_\psi$ performs some kind of concatenation of the argument strings (in some order and multiplicity), and cannot ``look inside'' the strings returned for the different children.
Many formalisms also limit the degree to which the yield function can repeat children; we take this up in Section~\ref{sec:minimalist-iteration}.

Some common formalisms, such as Minimalist Grammars (MGs), Tree-Adjoining Grammars (TAG), or Combinatory Categorical Grammar (CCG), are not usually stated in terms of (\ref{eq:yield-gcfg}).
MGs are weakly equivalent to formalisms commonly described in such terms (see Appendix~\ref{sec:minimalist-iteration}).
TAG and CCG themselves are also described in terms of derivation trees but with an unbounded number of node types, in fact, our analysis can be adapted to such settings directly without passing through GCFG-based description (Appendix~\ref{sec:infinite-node-types}).
Finally, some formalisms (notably HPSG \citep{pollard1994head}, and a variant of Minimalist grammars with feature percolation \citep{kobele2005features}) are Turing-complete, though only a subset similar to the other formalisms discussed here will likely be used in linguistic analysis.

Grammar formalisms are mostly applied at the level of individual sentences, but language models trained on large-scale text also learn and leverage relations across sentences within a document; indeed, grammar formalisms can be extended to model discourse without substantial changes to the mathematical formalisms \citep[e.g.][]{Kamp1993FromDT,Ginzburg2001InterrogativeIT,Ginzburg2012TheIS}. 

\paragraph{The Role of {\Cvgarg}s and Randomness}

Our definition of the yield function has two distinctive features beyond the CFG and GCFG setting (\ref{eq:yield-gcfg}), i.e., the addition of {\cvgarg}s and of an additional source of randomness:
\begin{equation}\label{eq:yield-general-si}
    \mathcal{Y}(\tau, \langle x_1,\dots,x_{a_n}\rangle, r) \in \Sigma^*, 
\end{equation}
These have important ramifications for linguistic and theoretical analysis.
The addition of attributes to nonterminals, while not present in (\ref{eq:yield-gcfg}), is in fact standard in linguistic analyses.
In some approaches, such as Minimalism \citep{Chomsky1992TheMP}, LFG \citep{Bresnan1987LexicalfunctionalG}, HPSG \citep{pollard1994head}, they are an integral aspect of the formalism; in others, such as TAG and CCG, they may be added to make linguistic analyses more parsimonious.
Attributes have many linguistic uses.
For example, in the domain of syntax, they might be used to establish subject-verb agreement without blowing up the number of nonterminal categories.
In our setting, the more relevant use of attributes  is in the syntax-semantics interface:
In particular, in order to model how forms are mapped to meanings, formal analyses typically assume that the referents of different expressions in  a sentence, and their relations among each other, are propagated through the derivation tree.
For instance, in HPSG \citep{Pollard1984GeneralizedPS,Kim2008EnglishSA,Ginzburg2001InterrogativeIT} or Sign-Based Construction Grammar \citep{Boas2012SignBasedCG}, each node in a tree is associated with semantic attributes indexing to real-world referent(s), which are propagated through the tree to establish semantic links between them. Similar approaches can also be used to define syntax-semantics interfaces for other formalsism \citep[e.g.][for TAG]{Kallmeyer2004LTAGSW}.
While we do not explicitly model meanings, they play a key role in determining the distribution of strings found in a corpus, which will respect world knowledge.
While derivational generative theories of syntax such as Minimalism \citep{Chomsky1992TheMP} focus on modeling the set of grammatical sentences irrespective of world knowledge,
a role of world knowledge is naturally part of model-theoretic (constraint-based) theories such as HPSG, where feature-based meaning representations -- and constraints on them -- are first class parts of the grammar of a language.
We formalize attributes or feature-value pairs in terms of a fixed-length list of {\cvgarg}s; our results are robust to other choices, such as modeling them in terms of feature-value pairs with some finite set of feature names.

As long as the set of possible attributes is finite -- which is the case in CAGs -- attributes can be simply compiled out into a larger set of nonterminals and production rules.
Analysis of generative capacity -- the primary cocnern of the theory of grammar formalisms--can thus ignore attributes (as is done in \citet{Kallmeyer2010ParsingBC}).
This explains why, despite the importance of attributes to linguistic analysis, they do not figure in (\ref{eq:yield-gcfg}).
For instance, in Minimalism, nonterminal categories are thought of as feature bundles, but they may be compiled out into atoms in formal description \citep[Section 6.2.2]{Kallmeyer2010ParsingBC}.
However, compiling attributes into atomic nonterminals obfuscates generalizations and blows up the size of the grammar (in the extreme case, when jointly modeling syntax and meaning, separate versions of a simple $\text{S} \rightarrow \text{NP VP}$ production rule for each combination of NP referents and verb meanings).
Keeping attributes in the formalization has the virtue of making compositional structure transparent and allowing us to prove bounds in Theorems 1--2 that do not deteriorate as $\Omega$ increases, because the PCFG component is separated from the universe.

The other addition concerns an additional source of randomness $r$ which affects how derivation trees are spelled out into strings.
This again does not change the generative capacity of our formalism, as it can be compiled out into a larger set of production rules.
Our motivation for including additional randomness is again to prevent blowup of the set of production rules.
For example, without additional nondeterminism in the $\yield$ function, a grammar would need separate production rules whenever there are different ways of assigning {\cvgarg}s to the children of a node (e.g., different possible referents for a noun phrase).

On a technical level, the use of attributes (or feature-value pairs) and additional nondeterminism permits us to decouple ICL bounds from the size of the world.
Indeed, the constants in the bounds do not change if we took the size of the world towards infinity and considered arbitrarily long prompts; we consider this an attractive mathematical property.
It would be obscured by compiling out features and nondeterminism into more nonterminals and production rules.

\paragraph{Application of Analysis to Grammar Formalisms}
Having discussed how CAGs relate to linguistic grammar formalisms, we now discuss what is needed in order to apply our Theorems 1--2 to such formalisms:
\begin{enumerate}
\item The formalism should be stated in terms of a GCFG. The fact that GCFGs model the yield as a tuple of strings does not impact the proofs of Theorems 1-2, beyond increasing notational load. For extension to formalisms not stated in terms of a GCFG, see Appendix~\ref{sec:infinite-node-types}.
\item The grammatical structure should be formalized in such a way that syntactic rules abstract over attributes, and attributes are not compiled out into atomic nonterminals.
As we discussed above, this demand satisfied by typical formal linguistic analyses.
Furthermore, we expect the scope of a grammar to be a model of the full generative distribution over sentences (or even documents), not just enumerating the grammatical sentences.

    \item In order to obtain nontrivial ICL bound, the grammar must be able to describe repetition of a single derivation tree. This is, indeed, a nontrivial condition on the grammar's generative power and requires going beyond context-free grammars (Appendix~\ref{sec:pcfg-no-icl}). We examine this further in Appendix~\ref{sec:minimalist-iteration}
\end{enumerate}

\subsection{Derivation Trees with Infinitely Many Node Types}\label{sec:infinite-node-types}
In TAG and CCG, derivation trees may have infinite sets of node or arc labels, not strictly providing a CFG backbone. 
For example, in a CCG derivation, unboudnedly complex combinatory types can appear (e.g., \texttt{VP/(NP/VP)}). Indeed, the only place where our analysis relies on the finiteness of node labels and production rules is Proposition~\ref{prop:prob-dl-bound}, bounding probability in terms of description length. An amended definition of description length accounting for unboudneness -- measuring not only the number of nodes in a tree but also their complexity -- is then sufficient to recover our theory.

\subsection{Repeating Structures in Grammar Formalisms}\label{sec:minimalist-iteration}

Key to obtaining a nontrivial ICL bound is the ability of the generative process to produce repetition of a derivation tree.
This is impossible in context-free grammars (Appendix~\ref{sec:pcfg-no-icl}), but becomes possible in more powerful linguistically adequate grammars.

Prime examples of repetition in language come from list-like enumerations (Figure~\ref{fig:compositional-setup}A.4) and from the ``gapping'' structure (Figure~\ref{fig:compositional-setup}A.6).
Repetition of derivation trees is fully possible in Range Concatenation Grammars (RNG) \citep{Boullier1999ChineseNM, Boullier2000RangeCG}.
In many other formalisms, repetition is restricted so that loop operations cannot be nested, or can only be nested subject to further constraints.
This is sufficient for proving ICL bounds as long as the function $\phi$ itself does not contain loop operations, which is satisfied in our test tasks. Even if $\phi$ contained loop operations, bounds can be proven using such formalisms if the nesting depth of loop operations is bounded, which is likely to be the case given general limits on recursion in language \citep{karlsson:2007-constraints,DBLP:conf/acl/BlasiCWSBB19}.

The formalism CNL-LMG \citep{Kallmeyer2010OnMC} specifically uses loop operations in order to account for gapping and scrambling phenomena in natural language, but it restricts the ways in which loop operations can be nested to maintain polynomial-time recognition.

Many formalisms do not allow duplication of the same subtree in the yield function ($f_\psi$ in Equation~\ref{eq:yield-gcfg}), which on first sight might exclude repetition.
However, these formalisms nonetheless can simulate loops using other capabilities -- and this is how gapping is analyzed in such formalisms.
In the remainder of the section, we illustrate how  $k$-fold iteration with {\cvgarg}s can be expressed in Minimalist Grammars  (MGs) \citep{Stabler1996DerivationalM}, a popular formalisation of  common ideas in the linguistic syntax literature \citep{Chomsky1992TheMP}.
MGs are weakly equivalent to a range of formalisms, including Multiple Context Free Grammars (MCFG) \citep{Seki1991OnMC} and linear context-free rewriting systems (LCFRS) \citep{VijayShanker1987CharacterizingSD}, and belong to the family of \textit{mildly-context sensitive language classes} \citep{Joshi1985NaturalLP,VijayShanker1994TheEO,Kallmeyer2010ParsingBC}, thought to be appropriate to describing the syntax of natural language; going beyond the power of context-free languages but allowing polynomial-time recognition.
Computational implementations with probabilistic parameterizations include, inter alia, \citet{Hale2006UncertaintyAT} using MCFGs, \citet{Hunter2013DistributionsOM,DBLP:journals/corr/abs-1710-11350,Torr2019WideCoverageNA} for MGs, \citet{Yang2022UnsupervisedDC} for LCFRSs.
Positive learnability results include \citet{Clark2021StrongLO}.
They are strictly subsumed by RCG and CNL-LMG, and in turn subsume Tree-Adjoining Grammar (TAG) \citep{Joshi1985NaturalLP} and Combinatory Categorical Grammar (CCG) \citep{Steedman2004TheSP}, two other popular non-context-free grammar formalisms.

We show that any Minimalist Grammar can be extended to achieve a constant error bound (independent of $n$) for all $n \leq |\Omega|$ for any function $\func$ expressible in the original grammar. 
This is not exactly the same as $R_k \leq c$ because it only applies to functions $\func$ that do not contain iterations themselves. But it is sufficient for achieving such a bound for functions expressed by trees $\tau_\func$ that do not contain loops themselves -- which applies to all test tasks considered in this paper.
The above-mentioned CNL-LMG formalism strictly extends MGs by allowing nested repetition of loop operations, subject to certain restrictions.
In fact, \citet{Kallmeyer2010OnMC} argued that this increased power is needed to account for gapping and scrambling in natural language; such a formalism would naturally give rise to a constant error bound even for certain functions including nested loops.
In this section, we focus on the popular Minimalist Grammars.

To make the discussion self-contained, we introduce the definition, using the equivalent  Multiple Context Free Grammars (MCFG) formalism  \citep{Seki1991OnMC} for convenience (also used by \citet{Hale2006UncertaintyAT}); this was shown to be equivalent by \citet{DBLP:conf/lacl/Michaelis98,DBLP:conf/lacl/Michaelis01}. 
Like CAGs, MCFGs can be formulated in terms of GCFGs (Equation~\ref{eq:yield-gcfg}).
As in general GCFGs, MCFG derivation trees yield tuples of strings rather than strings (thus, the ``M(ultiple)'' in ``MCFG''), with length (``dimension'') deterined by the top nonterminal.
\begin{definition}[Definition 6.1 in \cite{Kallmeyer2010ParsingBC}]\label{def:mcfg}
A MCFG consists of $(\NT, \T, F, P, S)$ such that:
\begin{enumerate}
\item $\NT$ is a finite set of non-terminals, and each $A\in \NT$ has an associated integer (``\textit{dimension}'') $dim(A)\geq 1$, $dim(A)\in\mathbb{N}$.

\item $\T$ is a finite set of terminals

\item $F$ is a finite set of mcf-functions: that is,
\begin{equation}f : (\T^*)^{d_1} \times ... \times (\T^*)^{d_k} \rightarrow (\T^*)^{d_0}
\end{equation}
such that each component of the value of $f$ is a concatenation of some constant strings and some components of its {\cvgarg}s. Furthermore, each component of the RHS of a rule is not allowed to appear in the value of $f$ more than once.

\item $P$ is a finite set of production rules of the form
\begin{equation}
  \psi :  A_0 \Rightarrow f_\psi[A_1,...,A_k]
\end{equation}
with $k\geq 0$, $f_\psi \in F$ such that
\begin{equation}
f_\psi : (\T^*)^{dim(A_1)} \times ... \times (\T^*)^{dim(A_k)} \rightarrow (\T^*)^{dim(A_0)}
\end{equation}

\item $S\in N$ is the start symbol with $dim(S) = 1$
\end{enumerate}
Derivation trees are defined as in CAGs.
The yield operation  is defined recursively as follows when $\psi \in P$:
\begin{equation}
    \yield(\func[A_1,...,A_k]) = f_\func(\yield(A_1) ... \yield(A_k))
\end{equation}
This completes the definition.
\end{definition}

Note that \citet{Kallmeyer2010ParsingBC} uses the term ``yield'' in a slightly different from our usage here, using it to refer to a set of strings derivable from a nonterminal rather than the string generated by an individual derivation tree.

The MCFG formalism does not explicitly have attributes, which would be compiled out into atomic nonterminals; we thus simply assume that every (non)terminal in $\NT \cup \T$ is associated with a tuple of attributes, and that we can write production rules as:
\begin{equation}
\psi[\xi] : \func_0(\xi) \Rightarrow \func_1(\eta_1) ... \func_k(\eta_k)
\end{equation}
where $\func_1, \dots, \func_k$ are independent of $\xi$, and $\eta_1, \dots, \eta_k$ are determined by $\psi$ and $\xi$.

We assume that some MCFG is given; we establish that we can extend it so that any function expressible using a derivation tree in the original MCFG can be iterated $N$-fold (where $N$ can go up to $|\Omega|$) in the resulting MCFG.
For notational simplicity, we assume that all nonterminals in the orignal MCFG have dimension 1; extension to higher dimension is a matter of notation.
We add all nonterminals of the form
\begin{equation}
   \textsc{Parallel}(\func; \omega_1, \dots, \omega_N)  : \func \in \NT,\ \omega_i\ \text{distinct}
\end{equation}
with dimension $N$, and of the form
\begin{equation}
    \textsc{Repeat}(\func) : \func \in \NT
\end{equation}
with dimension $1$.
For production rules
\begin{equation}
\psi : \func_0(\xi) \Rightarrow \func_1(\eta_1) ... \func_k(\eta_k)
\end{equation}
add a rule \textsc{Parallel}$(\psi,N, \omega_{1\dots N})$:
\begin{align*}
    \textsc{Parallel}(\phi_0; \omega_1, \dots, \omega_N) 
    \Rightarrow    f_\phi(& \textsc{Parallel}(\phi_1; \omega_1, \dots, \omega_N) \\
 & \dots \\
 & \textsc{Parallel}(\phi_k; \omega_1, \dots, \omega_N))
\end{align*}
where $f_\phi(x_1^{1\dots N}, ..., x_k^{1\dots N}) = [x_1^1\dots x_k^1, ..., x_1^N\dots x_k^N]$. 
Furthermore, add rules \textsc{Repeat}$(\phi, \omega_{1\dots N})$
\begin{align*}
\textsc{Repeat}(\phi_0) 
\Rightarrow 
f_\phi(\textsc{Parallel}(\phi_0; \omega_1, \dots, \omega_N)): \func \in \NT,\ \omega_i\ \text{distinct}
\end{align*}
where $f_\phi([x_1, ..., x_N]) = x_1...x_N$.

Then:
\begin{theorem}
If $\phi$ is definable in the original MCFG with $\DL[\tau_\phi]$ nodes, and $\omega_1, \dots, \omega_N \in \Omega$ is some sequence without replacement, then there is a tree $\tau'$ in the extended MCFG such that
\begin{equation}
    \yield(\tau') = \phi(\omega_1) \dots \phi(\omega_N)
\end{equation}
with $\DL[\tau_\phi]+1$ nodes.
\end{theorem}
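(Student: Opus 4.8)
The plan is to turn the derivation tree $\tau_\func$ of the original MCFG into an isomorphic ``parallelized'' tree in the extended grammar that runs $N$ copies of the derivation of $\func$ side by side --- one for each of $\omega_1,\dots,\omega_N$, carried as the $N$ string components of a dimension-$N$ nonterminal --- and then to cap it with a single \textsc{Repeat} node whose mcf-function concatenates those $N$ components into one string. Since the parallelized tree has the same shape as $\tau_\func$, it has $\DL[\tau_\func]$ nodes, and the \textsc{Repeat} node adds exactly one more, giving the claimed count $\DL[\tau_\func]+1$.

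Concretely, I would first define a node-for-node relabeling $T$ of $\tau_\func$: a node carrying the original rule $\psi : \func_0(\xi) \Rightarrow \func_1(\eta_1)\dots\func_k(\eta_k)$ is replaced by the corresponding added rule $\textsc{Parallel}(\psi,N,\cdot)$, its root nonterminal $\func_0$ by the appropriate $\textsc{Parallel}(\func_0;\cdot)$ nonterminal, and its $j$-th child's nonterminal by the \textsc{Parallel} nonterminal for $\func_j$ whose $i$-th strand holds the attribute tuple that $\func_j$ receives in the original derivation when the free input of $\tau_\func$ is set to $\omega_i$ (i.e.\ $\eta_j$ applied strand-wise to the parent's strand attributes; at the root the strands simply hold $\omega_1,\dots,\omega_N$). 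The core step is then a structural induction on subtrees $t$ of $\tau_\func$ showing that $\yield(T(t))$ is the $N$-tuple whose $i$-th component equals $\yield(t)$ evaluated with the free input set to $\omega_i$: the base case is immediate for leaves, and the inductive step follows directly from the definition of the \textsc{Parallel} mcf-function, which in each of the $N$ components applies exactly the combination prescribed by $f_\psi$ to the matching components of the children --- so componentwise the computation is literally the original derivation. Applying this to $t = \tau_\func$ and using $\yield(\tau_\func,\langle x\rangle,r)\equiv\spell{\func(x)}$ gives $\yield(T(\tau_\func)) = (\spell{\func(\omega_1)},\dots,\spell{\func(\omega_N)})$. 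Finally, setting $\tau' := \textsc{Repeat}(\func_0)\bigl[\,T(\tau_\func)\,\bigr]$, whose mcf-function maps $[y_1,\dots,y_N]\mapsto y_1\dots y_N$, yields $\yield(\tau') = \func(\omega_1)\dots\func(\omega_N)$ with $\DL[\tau']=\DL[\tau_\func]+1$.

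The only part requiring care is the attribute bookkeeping, which is where I expect all the (routine) effort to go: one must verify that the added \textsc{Parallel} rules are legitimate mcf-functions --- each component of each argument is used at most once, which holds since the $N$ strands are disjoint and $f_\psi$ already satisfies this within a strand --- that attribute percolation inside each strand faithfully reproduces the original derivation under input $\omega_i$ (this is what forces the $j$-th child to carry $\eta_j$ evaluated at the $\omega_i$, slightly generalizing the schematic rule displayed before the theorem, and also requires the \textsc{Parallel} family to include the finitely many attribute tuples actually arising along $\tau_\func$, not only pairwise-distinct ones), and that nothing changes when $\func$ has larger dimension or arity beyond heavier indexing. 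There is no genuine obstacle here: the statement is essentially a compilation lemma, and its force is the corollary that any MG/MCFG-expressible $\func$ can be iterated $N$-fold at the price of a single extra node, so that $R_N$ is $\mathcal{O}(1)$ for such $\func$, which is exactly what Theorem~\ref{theorem:theorem1} and Theorem~\ref{theorem:cot} need in order to produce $n$-independent in-context-learning bounds.
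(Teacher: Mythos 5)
Your proposal is correct and is essentially the paper's own proof: replace each production rule of $\tau_\func$ node-for-node with the corresponding \textsc{Parallel} rule (carrying the attribute tuples for each $\omega_i$ strand-wise), then cap the result with a single \textsc{Repeat} node, giving exactly one extra node. The structural induction and the attribute-bookkeeping checks you spell out are the routine details the paper leaves implicit.
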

\begin{proof}
Change any production rule in $\tau_\phi$ into the corresponding \textsc{Parallel} production rule, with the appropriate {\cvgarg}s for each of the $N$ copies.
Then apply a single \textsc{Repeat} rule to concatenate these $N$ copies.
\end{proof}

A difference between this resulting MG/MCFG and CAGs is that the MCFG needs to compile the attributes into nonterminals, inflating the size of the grammar.
This can be avoided by separating attributes from nonterminals, as done in both  CAGs and standard linguistic analysis practice.
As a result, a predictive model reflecting the extended MCFG enjoys ICL bounds (provided by Theorems 1--2) for any function $\phi$ definable in the original MCFG.

\section{Details for Document Scripts}\label{sec:acc:generating-scripts}
\label{sec:scripts-are-cvcg}

\begin{figure*}

		\begin{verbatim}
<command> ::= PRINT <variable>
           | IF <condition> THEN <block> ELSE <block> ENDIF
           | LOOP OVER <variable> DO <block> ENDFOR
           | FOR SOME <variable> SUCH THAT <condition> DO
              <block>  ENDFOR
<block> ::= <command> | <command> <command> 
           | <command> <command> <command> | ...
<condition> ::= <function>(<variable>) = <variable>
<variable> ::= x1 | x2 | x3 | ...
<function> ::= f1 | f2 | f3 | ...
\end{verbatim}

\caption{Backus-Naur(-like) Grammar for document scripts.
}\label{fig:backus-naur}
\end{figure*}

\begin{figure}
    \centering
    \includegraphics[width=0.8\textwidth]{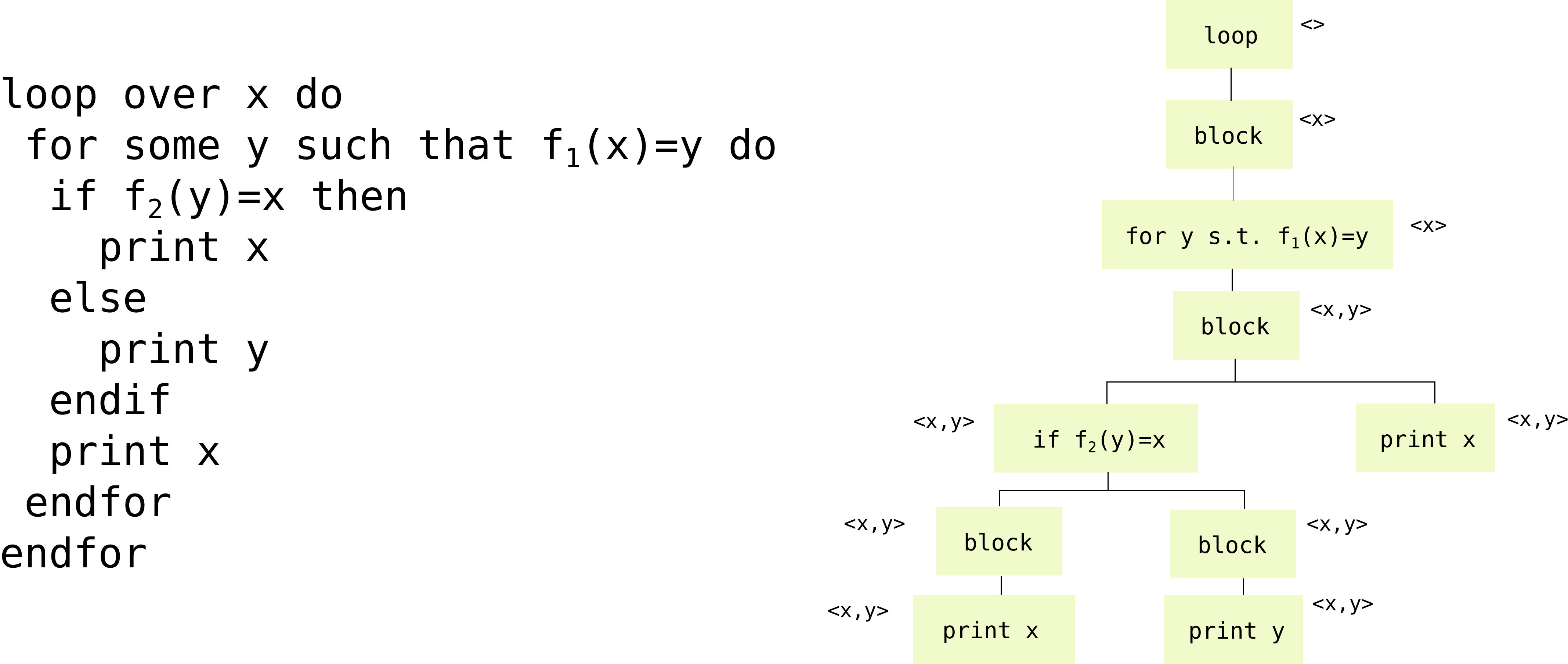}
    \caption{Left: a document script (same as in Figure~\ref{eq:setup}D). Right: a corresponding CAG derivation tree. Each node has a corresponding nonterminal in the CFG in Figure~\ref{fig:backus-naur} (\texttt{$<$command$>$} or \texttt{$<$block$>$}), and a list of {\cvgarg}s, which corresponds to the free variables available at that point in the script.}
    \label{fig:cvg-script}
\end{figure}

As programs, document scripts are defined by the Backus-Naur grammar in Figure~\ref{fig:backus-naur}.
They can simultaneously be interpreted as generated by a CAG:
The free variables within a code block correspond to {\cvgarg}s; translating to a PCFG with arity would proceed by creating separate nonterminals for each arity up to the maximum number of free variables enforced by the implementation.
An example of this correspondence is provided in Figure~\ref{fig:cvg-script}.
$R_{1}, R_2, \dots, R_{10}=1$ is satisfied by the ``for all'' statement.

The production rules for the \texttt{$\langle$function$\rangle$} nonterminal (Figure~\ref{fig:backus-naur}) depend on $|\Ff|$; as a consequence, our learning bounds will depend on $|\Ff|$.
On the other hand, the production rules do not depend on $\Omega$.
Our theory thus correctly predicts that ICL accuracy decreases with $|\Ff|$, but not with $|\Omega|$.

Because $\Sigma=\Omega$, we do not require trees specifically representing constants that may not be the names $\spell{\omega}$ of objects $\omega$, and thus do not need to satisfy closure under \textsc{Constants} (Section~\ref{sec:assumptions}).

\paragraph{Correspondence to CAG}
Here, we describe formally how a document script can be expressed as a CAG derivation tree; this simultaneously precisely defines the semantics of document scripts (see Figure~\ref{fig:cvg-script} for an example).
For each number $k$ of free variables up to some limit enforced by the implementation, we introduce nonterminals
\begin{enumerate}
\item $n_{\texttt{command},k}$
\item $n_{\texttt{block},k}$
\end{enumerate}
and, for each $1 \leq i \leq k$, terminals
\begin{enumerate}
\item $t_{\texttt{print},k,i}$
\end{enumerate}
We set $\START := n_{\texttt{command},0}$.
Further, we create production rules:
\begin{enumerate}
    \item Printing: $\psi_{print,k,i} := n_{\texttt{command},k} \Rightarrow t_{print,k,i}$
    \item Conditions: $\psi_{if, k, f_s(\xi_i) = \xi_j} : n_{\texttt{command},k} \Rightarrow t_{block,k} t_{block,k}$ for each equation of the form $f_s(\xi_i) = \xi_j$, for  $1 \leq i,j \leq k+1$; $f_s \in \mathcal{F}$.
    \item Loop: $\psi_{loop, k} : n_{\texttt{command},k} \Rightarrow t_{\texttt{block},k+1}$
    \item For some: $\psi_{for\ some, k, f_s(\xi_i) = \xi_j} :  n_{\texttt{command},k} \Rightarrow t_{\texttt{block},k+1}$
    \item Block: $\psi_{block, k, N} : n_{\texttt{command},k,N} \Rightarrow t_{block,k+1} \dots t_{block,k+1}$ ($N$ times, for each $N>0$ up to some limit enforced by the implementation)
\end{enumerate}
We specify $\yield$ as follows:
\begin{enumerate}
    \item $\yield(\psi_{print,k,i}[\tau], \xi, r) = \spell{\xi_i}$
    \item $\yield(\psi_{if, k, f_s(\xi_i) = \xi_j}[\tau_1,\tau_2], \xi, r) = \yield(\tau_1, \xi, r_1)$ if the condition is true, and the same with $\tau_2$ else.
    \item $\yield(\psi_{loop, k}[\tau], \xi, r) = \yield(\tau, \langle \omega_1, \xi_1, ..., \xi_k\rangle r_1) \dots \yield(\tau, \langle \omega_N, \xi_1, ..., \xi_k\rangle r_N)$ where the subset $\omega_1, ..., \omega_N$ is determined by $r_0$, and $N$ is fixed.
    \item $\yield(\psi_{for\ some, k, f_s(\xi_i) = \xi_j}[\tau], \xi, r) = \yield(\tau, \langle\omega, \xi_1, \dots, \xi_k\rangle, r)$ where $\omega \in \Omega$ is a random element satisfying $f_s(\xi_i) = \xi_j$ if $\omega$ is substituted for $\xi_{k+1}$; if none exist, the yield is the empty string.
    \item $\yield(\psi_{block, k, N}[\tau_1, \dots, \tau_N], \xi, r) = \yield(\tau_1, \xi, r_1) \dots \yield(\tau_N, \xi, r_N)$
\end{enumerate}

The regularity assumptions in Section~\ref{sec:assumptions} are satisfied as follows; here, we make use of the fact that $f_1$ is the identity in our experiments.
Closure under \textsc{Projection} is satisfied by $t_{print,k,i}$.
Closure under \textsc{Concatenation} is satisfied by wrapping each of $\tau_1, \tau_2$ in an application of $\psi_{for\ some, k, f_1(\xi_1) = \xi_1}$ and then wrapping those inside an application $\psi_{block, k, N}$.
Closure under \textsc{Marginalization} is satisfied by $\psi_{if, k, f_1(\xi_1) = \xi_1}$.
We omit closure under \textsc{Constants}, which is not necessary to prove Theorems 1--2 in our setup (because the separator is chosen from $\Omega$); this allows us to decouple the PCFG component of the CAG from $\Omega$.
The constants $c_0, c_1$ are small: a derivation tree with zero free variables can directly generate a document, potentially after wrapping in a \texttt{command} NT. 
Finally, expected document length is finite because production rules for loops and blocks only create strings of bounded lengths, so that $\yield(\tau, \langle\rangle, r) \leq C \cdot \DL[\tau]$ for some $C < \infty$.

\paragraph{Sampling Document Scripts.}

Sampling from the grammar requires defining a distribution over programs.
We defined a simple power-law prior for the number of $\langle\texttt{command}\rangle$ productions in a $\langle\texttt{block}\rangle$ ($p(l) \propto (1+l)^{-4}$, $1\leq l \leq 10$).
Second, as we do not have a-priori expectations for production rules for the  \texttt{$\langle$command$\rangle$} nonterminal, we defined a hyperprior so that for each document in the pretraining corpus, production probabilities were sampled individually before generating the program.
Thus, we avoided comitting to an arbitrary choice.
We constrained the production probabilities so that, for a given choice of probabilities, the expected document length was $\leq 64$, the LM's context length. Up to normalization, the satisfying set of probability vectors is approximately a half-space, which we precomputed.
We computed this set separately for each setting (varying $\Ff$ and $\Omega$) to make the mean document length comparable across settings (recall the our bounds contain constants depending on $\log \mathbb{E}[|d|]$, but not $|\Omega|$).
Thus, rather than a PCFG, we use a \emph{compound PCFG} distribution \citep{DBLP:conf/acl/KimDR19,DBLP:conf/emnlp/ZhaoT20}, i.e., sample one set of production probabilities for each generated scripts from a larger space of accepted PCFGs.
This corresponds to sampling from a mixture of different PCFGs, which we do to account for the fact that we have no a-priori knowledge of the ``correct'' production probabilities.
It does not affect the applicability of our learning bounds, as they hold for each of the mixed PCFGs individiaully.

\paragraph{Variables.}
Production of \texttt{$\langle$variable$\rangle$} was constrained to variables bound by a ''\texttt{LOOP OVER}'' or ''\texttt{FOR SOME}'' operator with scope over the variable.
Production rules whose RHS required more open variables than were contextually available were blocked.
These constraints are automatically enforced in the CAG translation.

Naively sampling syntactically correct scripts tends to produce bloated scripts with many unused variables.
We took the following steps to mitigate unused variables: With each production, we associated a probability distribution over the currently open variables; this was mixed with a uniform distribution at most productions, but with a Dirac distribution on the newly introduced variable for variable-introducing operators. Arguments to an equation $f_s(x_i) = x_j$ were sampled without replacement from the set of  available variables.

These steps do not break the PCFG independence assumptions, as they could be compiled out by splitting nonterminals and production rules.

\paragraph{Enforcing termination.}
We scale the probabilities of recursive rules with a negative power of the depth, $(\mathsf{depth})^{-2}$ (roughly based on the rate of recursion in natural language, \citet{DBLP:conf/acl/BlasiCWSBB19}), and then re-normalize the production probabilities, so that the production rules depend on the depth.
We do this for practical reasons: in order to enforce rapid termination of sampling without (near-)infinite recursive calls, as we mix a large space of PCFG parameters and thus could not tune these individually for rapid termination.
This breaks the PCFG assumptions; however, if anything, this choice should \emph{decrease} the documents' bias towards conmpositionality.

\section{HMM5 Training Datasets}\label{sec:HMM}

The HMM5 dataset closely follows the GINC dataset by \citet{DBLP:conf/iclr/XieRL022}; here, we provide a full definition to make the discussion self-contained.
The generative process is a mixture of five HMMs, whose state space is $\Omega \times \mathbb{F}$ (entities times properties), and whose transitions are independent in the two components:
\begin{equation}
p(\omega_{t+1}, f_{t+1}|\omega_t, f_t) = p(\omega_{t+1}|\omega_t) p(f_{t+1}|f_t)
\end{equation}
For the five HMMs, there is a common transition matrix for the entity component; each HMM has its own transition matrix for the property component $\theta \in \mathbb{R}^{|\Ff|\times|\Ff|}$.

A state $(\omega, f)$ emits the symbol $f(\omega)$; the corresponding property look-up table is termed  \textit{memory matrix} in \citet{DBLP:conf/iclr/XieRL022}.

Each property transition matrix $\theta$ is generated as a convex combination of $N_{perm} := 100$ random permutation matrices; the weights of the convex combination are generated as $\operatorname{softmax}((u-0.5)/0.1)$
where $u \in \mathbb{R}^{100}$ is uniformly random in $[0,1]$.

The entity transition matrix is sampled by first obtaining a matrix $T$ in the same way as the property transition matrices, and then computing
\begin{equation}
0.1 T + 0.9 I_{|\Omega| \times |\Omega|}
\end{equation}
We take the start distribution for the hidden states in all HMMs to be uniform.
For each document, one of the five HMMs is chosen randomly.
In line with the approximate length distribution of compositional documents, document length was sampled from a Gaussian with mean 50 and SD 10, clipped to $[0,128]$.

\paragraph{\textsc{HMMPerDoc}}
In the \textsc{HMMPerDoc} dataset, we created both the property and the entity transition matrix individually for each document, and set $N_{perm}=1$.
In preliminary experiments, we found that ICL was much less successful with $N_{perm}=10$ or $N_{perm}=100$.
The matrices were constrained so that either $f_2$ or $f_3$ was excluded from the transition dynamics.
In comparison to the \textsc{HMM} dataset, which mixes 5 HMMs, the resulting dataset reflects a mixture of a much larger number of HMMs, up to $|\Omega|! |\Ff-1|!$.

\section{Effect of $|\Omega|$}\label{sec:effect-of-omega}

As described in the main text, Theorem~\ref{theorem:theorem1} provides a bound independent of $|\Omega|$.
Empirically, we even observed \emph{improved} accuracy on some tasks when increasing $|\Omega|$.
This can be explained in terms of the experimental setup:
When $|\Omega|$ is large, an equation such as $y=f(x)$ (for fixed $x,y$) is less likely to be satisfied by a more than one function of small description length, so that prompts may be more distinctive.
Heuristically, on average across functions $f$ and prompts, the probability that no other function $f'$ with $\DL[f'] \leq \DL[f]$ matches the prompt is on the order of 
\begin{equation}
\left(1-\frac{1}{|\Omega|^N}\right)^{c|\Ff|} \sim \exp\left(-\frac{c|\Ff|}{|\Omega|^N}\right)
\end{equation}
if $N$ is fixed and $|\Ff|, \Omega$ are large; this increases as $|\Omega|$ increases.

\section{Ablations}\label{sec:ablations}
We created variants of the training data with (i) loops, (ii) variable introduction via ``for some'', (iii) conditions (if-then-else) 
ablated, and trained LMs with 21M parameters.
When ablating loops, we added a free variable that is set randomly, in order to introduce a variable.

See Figure~\ref{fig:ablation}.

\begin{figure}
\centering

	\textsc{FunctionEvaluation}

\includegraphics[width=0.7\textwidth]{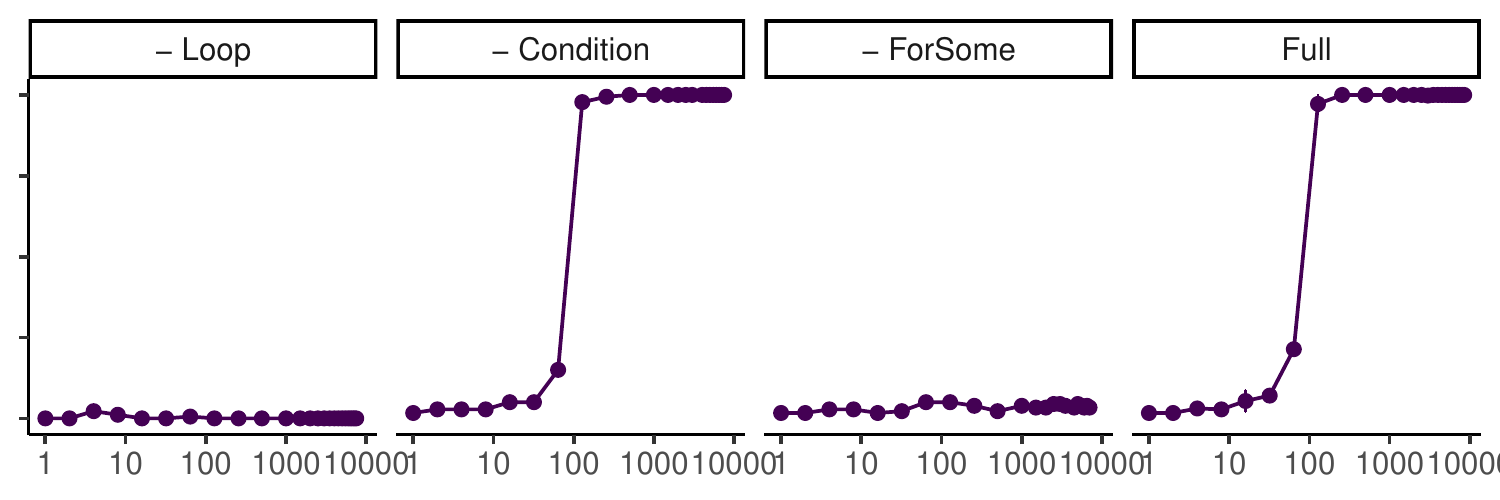}

\textsc{Propositional}

\includegraphics[width=0.7\textwidth]{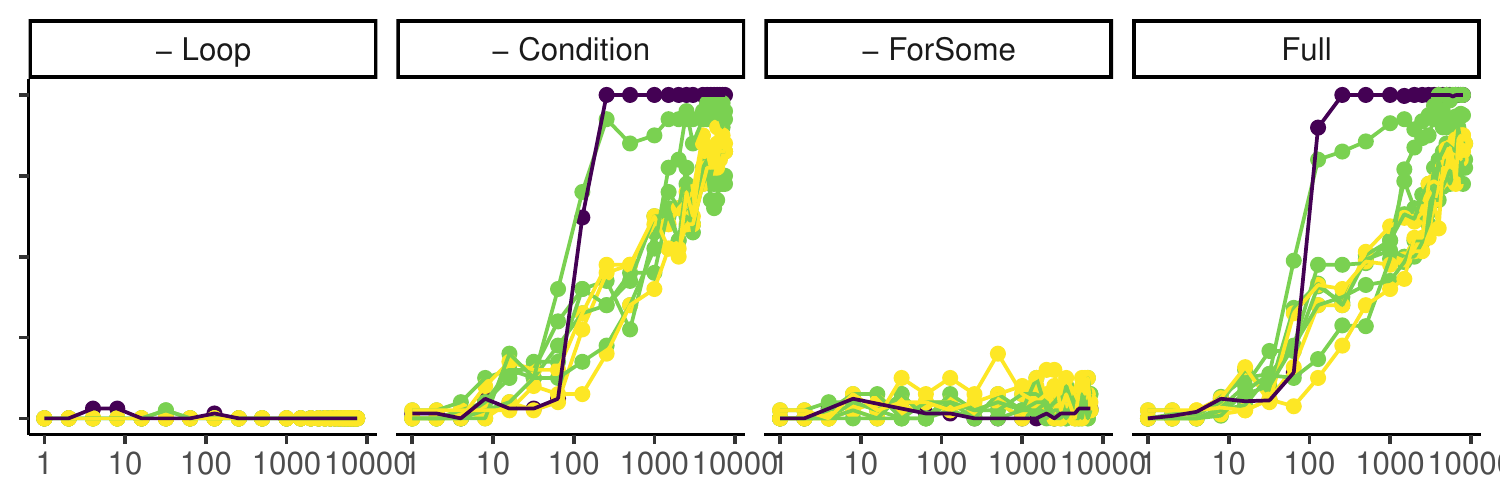}

\textsc{Composed}

\includegraphics[width=0.7\textwidth]{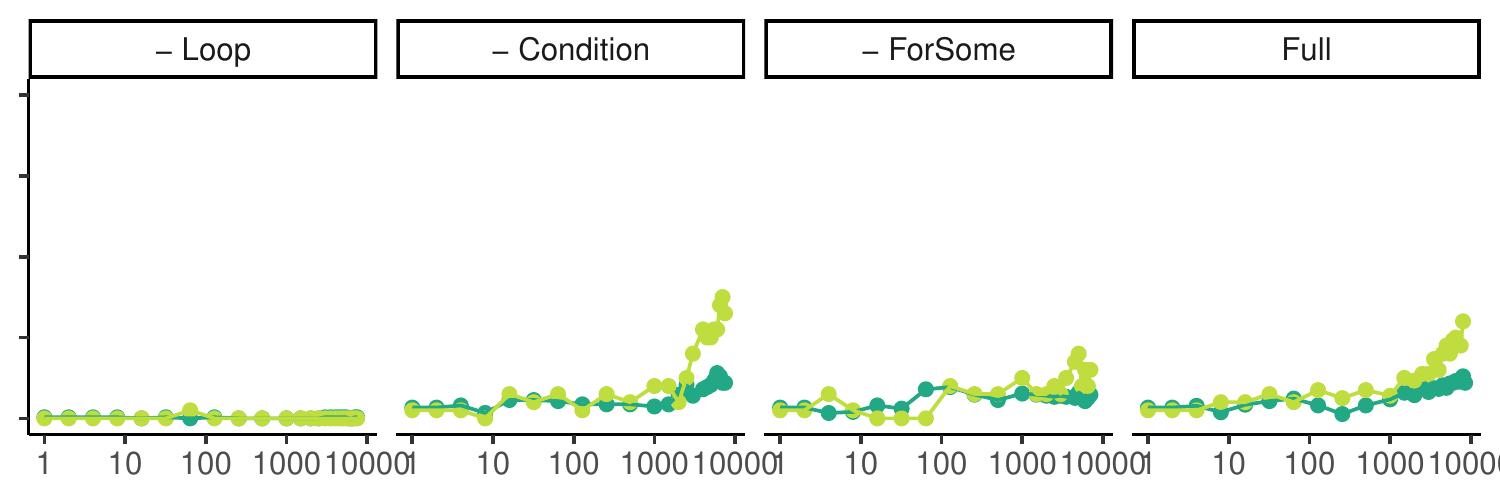}

\textsc{Binary}

\includegraphics[width=0.7\textwidth]{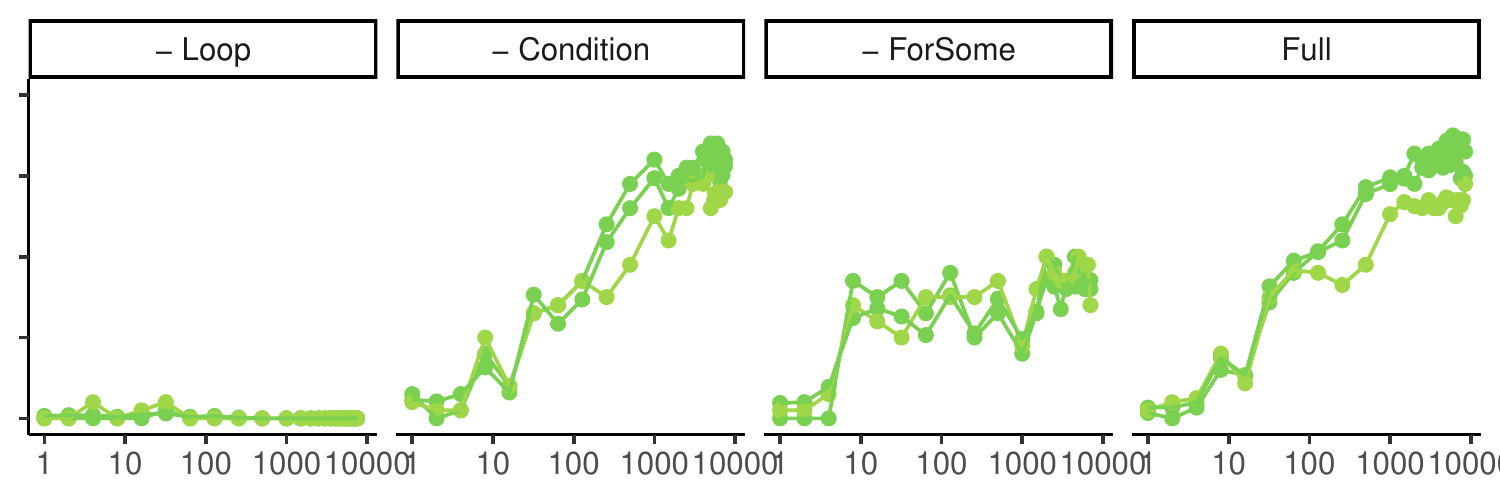}

\caption{Ablating the three components of the minimal CAG (all at 21M parameters, 14 prompt examples). 
Ablating variable-introducing constructs (loops and ``for some'') makes ICL impossible. Ablating the condition construct (``if-then-else'')  barely hurts ICL performance, if at all. 
	}\label{fig:ablation}
\end{figure}

\section{Additional Results}

See Figures \ref{fig:additional}--\ref{fig:additional-by-length-300}.

\begin{figure*}
    \centering
    All tasks by $|\Omega|$
    
    \begin{tabular}{ccccccccc}
            $|\Omega|=30$&
\includegraphics[width=0.15\textwidth]{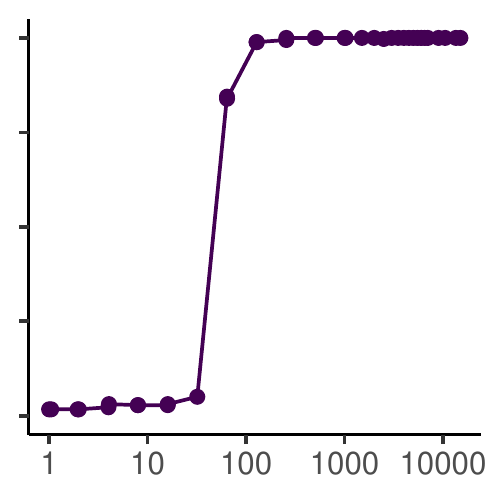} &
\includegraphics[width=0.15\textwidth]{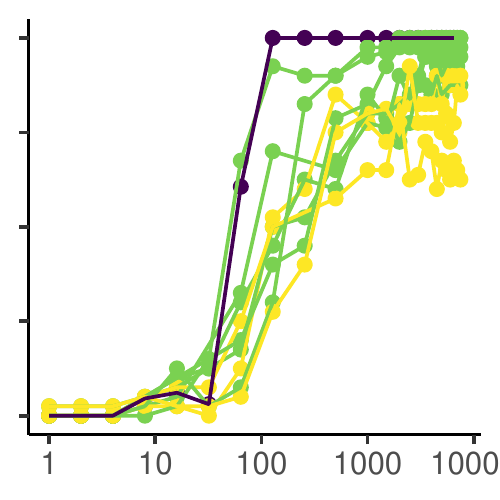} &
\includegraphics[width=0.15\textwidth]{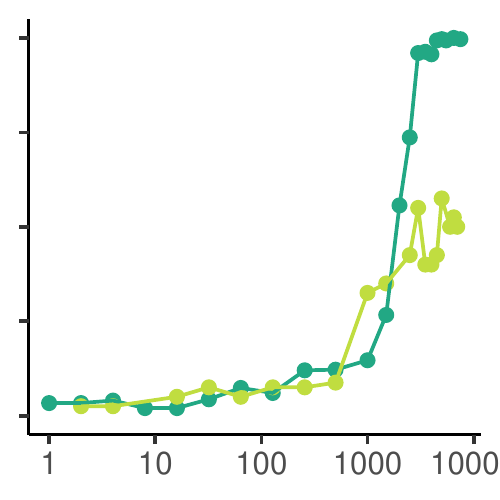} &
\includegraphics[width=0.15\textwidth]{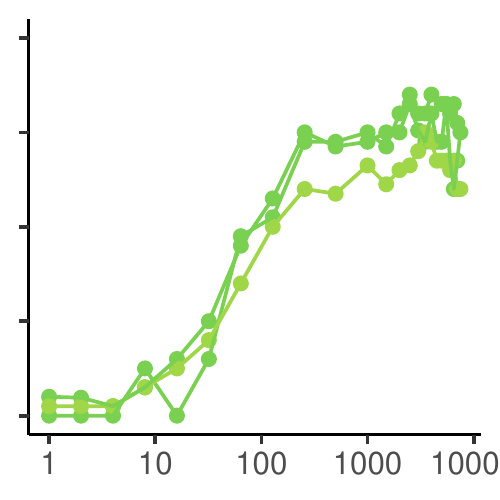}
\\
        $|\Omega|=100$&
\includegraphics[width=0.15\textwidth]{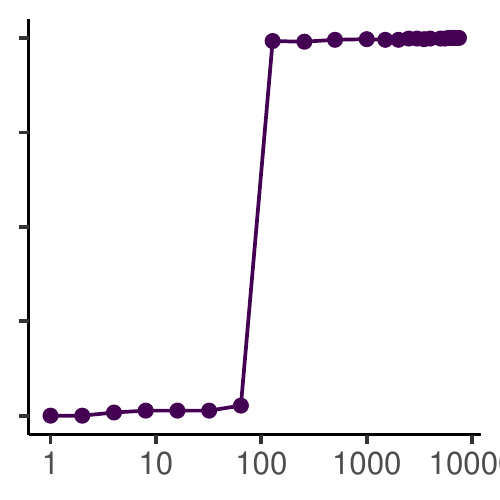} &
\includegraphics[width=0.15\textwidth]{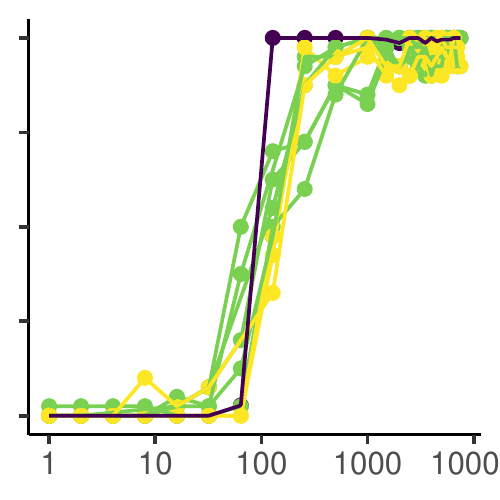} &
\includegraphics[width=0.15\textwidth]{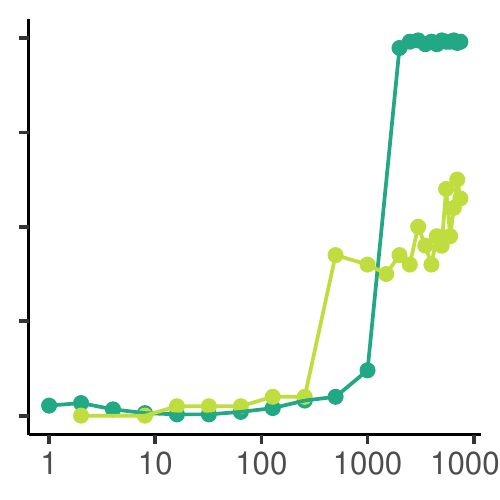} &
\includegraphics[width=0.15\textwidth]{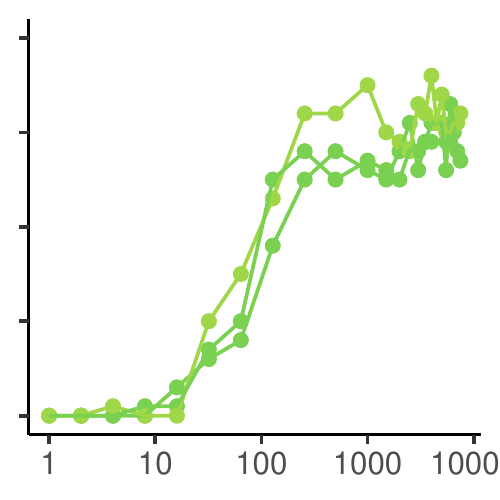}
\\
    $|\Omega|=300$&
\includegraphics[width=0.15\textwidth]{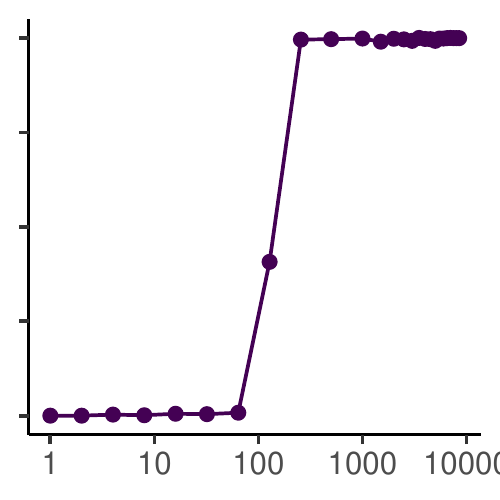} &
\includegraphics[width=0.15\textwidth]{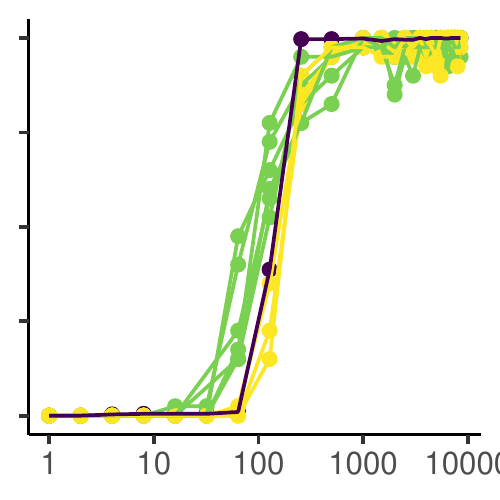} &
\includegraphics[width=0.15\textwidth]{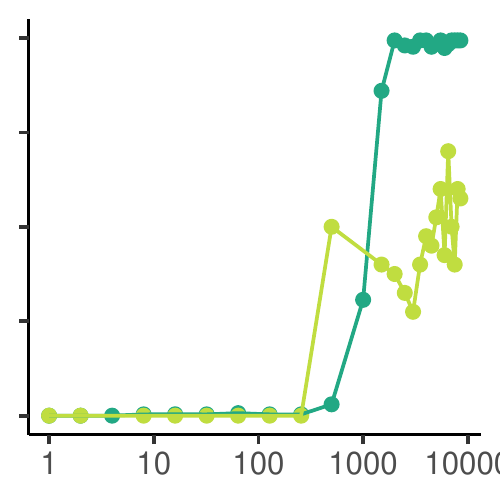} &
\includegraphics[width=0.15\textwidth]{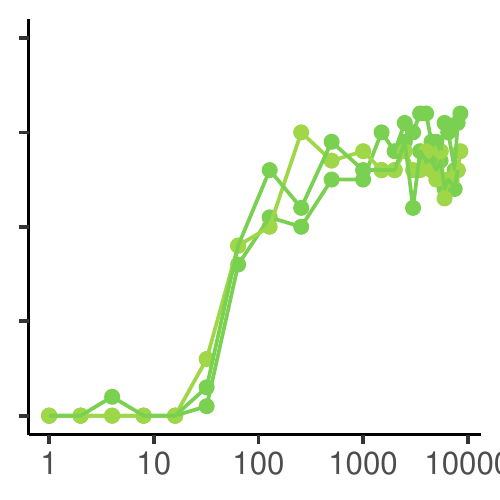}
    \end{tabular}

    \caption{Additional results by $|\Omega|$.}
    \label{fig:additional}
\end{figure*}

\begin{figure*}
    \centering
    All tasks by $|\Ff|$
    
    \begin{tabular}{ccccccccc}
            $|\Ff|=10$&
\includegraphics[width=0.15\textwidth]{simpleGrid11.pdf} &
\includegraphics[width=0.15\textwidth]{simpleGrid12.pdf} &
\includegraphics[width=0.15\textwidth]{simpleGrid13.pdf} &
\includegraphics[width=0.15\textwidth]{simpleGrid14.pdf}
\\
        $|\Ff|=20$&
\includegraphics[width=0.15\textwidth]{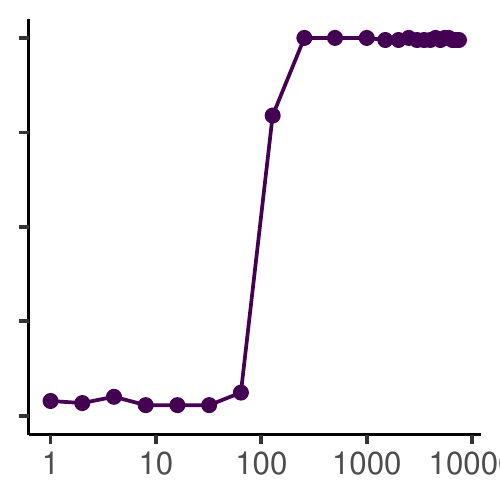} &
\includegraphics[width=0.15\textwidth]{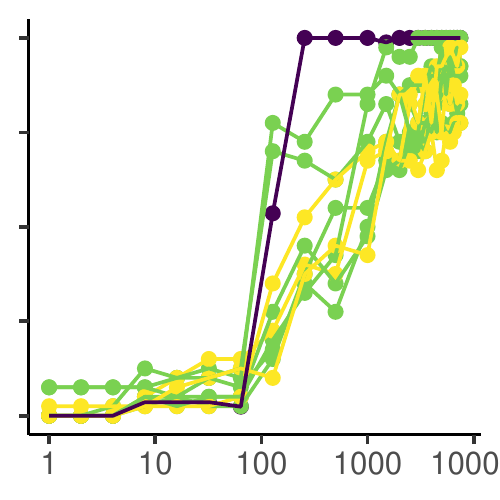} &
\includegraphics[width=0.15\textwidth]{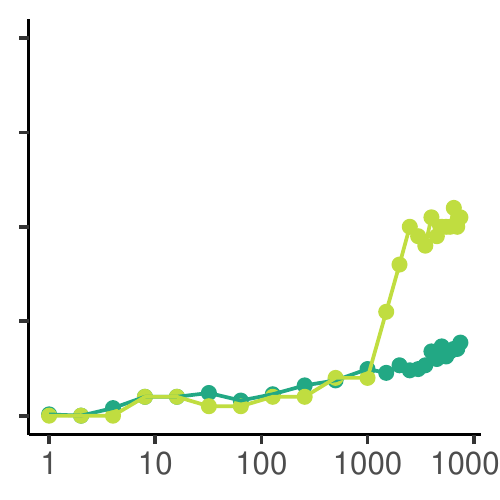} &
\includegraphics[width=0.15\textwidth]{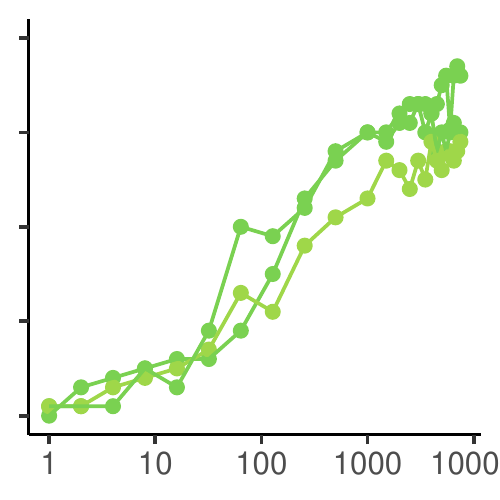}
\\
    $|\Ff|=30$&
\includegraphics[width=0.15\textwidth]{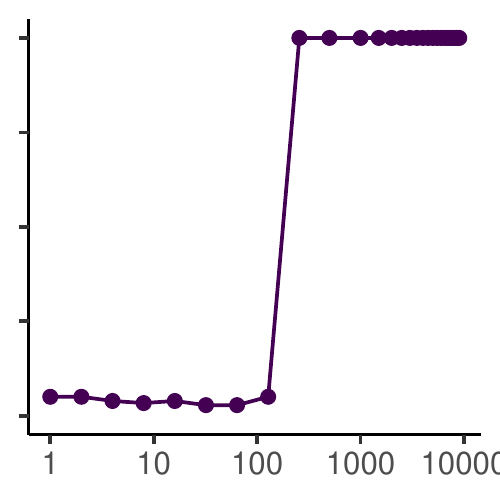} &
\includegraphics[width=0.15\textwidth]{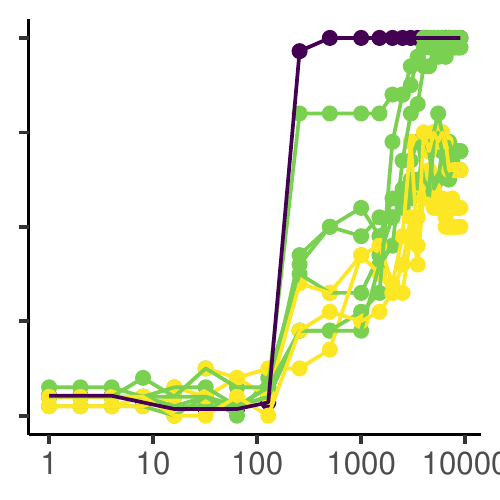} &
\includegraphics[width=0.15\textwidth]{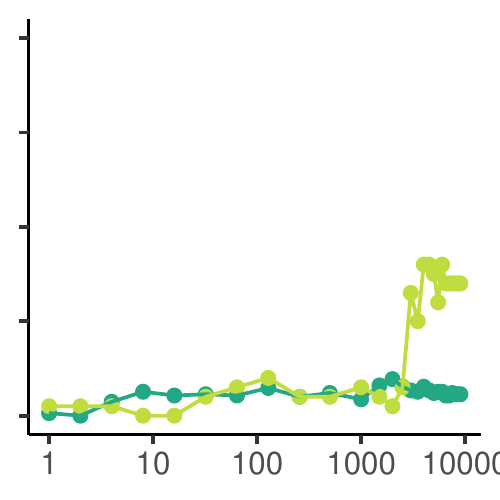} &
\includegraphics[width=0.15\textwidth]{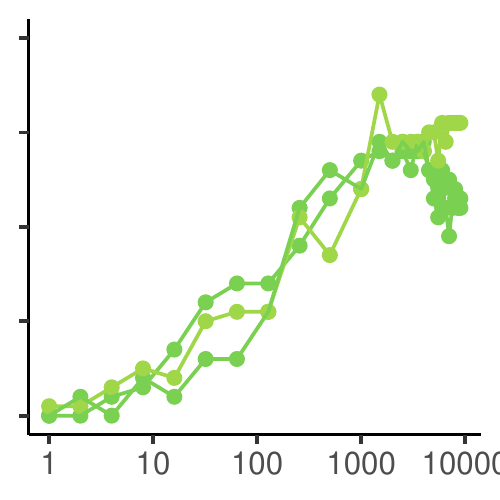}
    \end{tabular}

    \caption{Additional results, by $|\Ff|$.}
    \label{fig:additional-by-funcs}
\end{figure*}

\begin{figure*}
    \centering
	All tasks by Prompt Length

	$|\Omega|=30$, $|\Ff|=10$, 85M parameters
    
    \begin{tabular}{ccccccccc}
\includegraphics[width=0.75\textwidth]{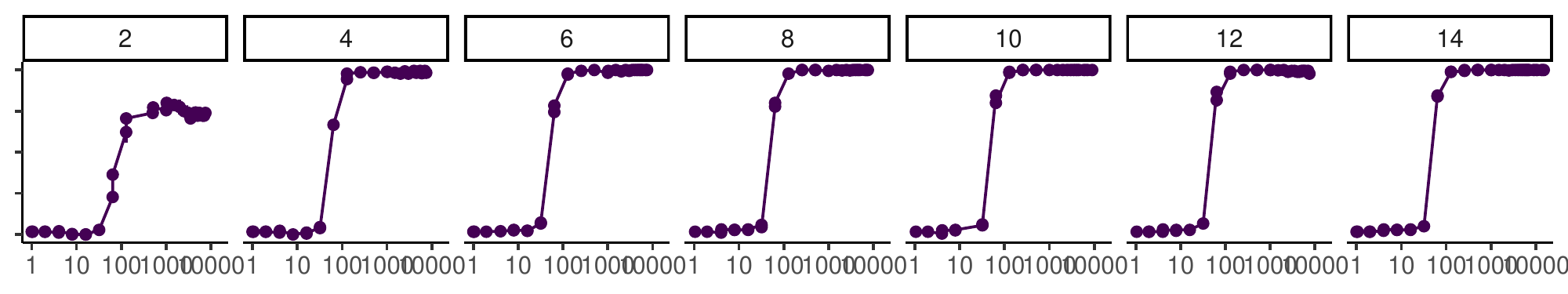} \\
\includegraphics[width=0.75\textwidth]{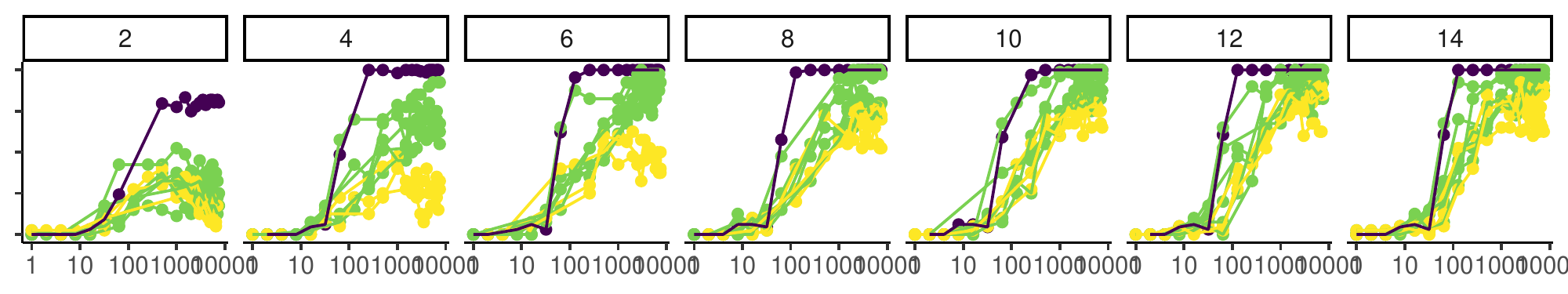} \\
\includegraphics[width=0.75\textwidth]{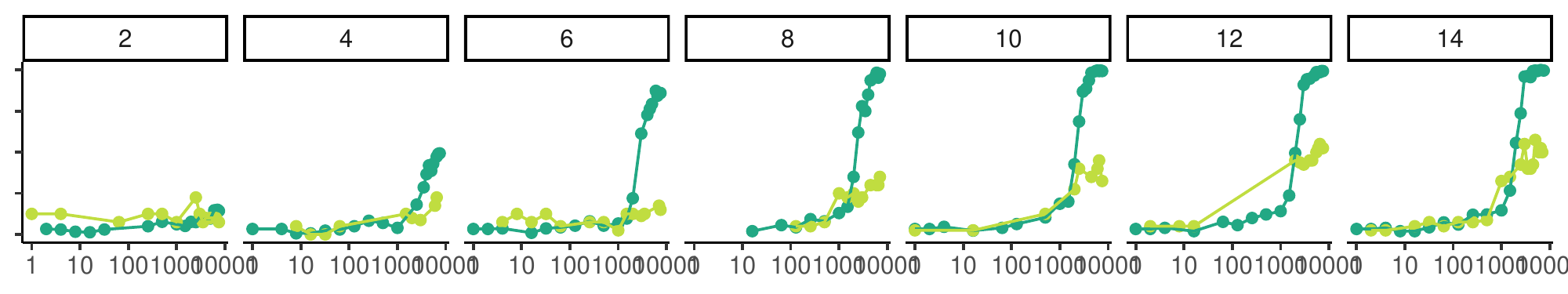} \\
\includegraphics[width=0.75\textwidth]{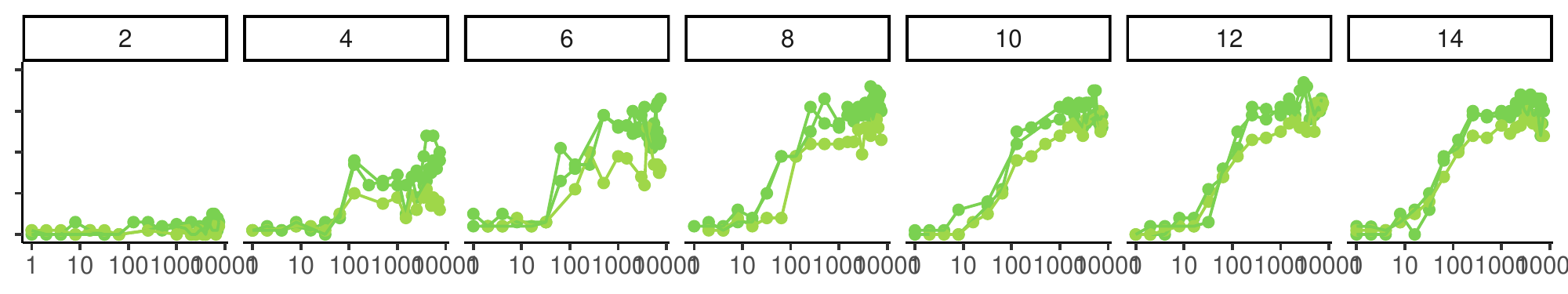}
    \end{tabular}

    \caption{Additional results.}
    \label{fig:additional-by-length}
\end{figure*}

\begin{figure*}
    \centering
	All tasks by Prompt Length

	$|\Omega|=300$, $|\Ff|=10$, 85M parameters
    
    \begin{tabular}{ccccccccc}
\includegraphics[width=0.75\textwidth]{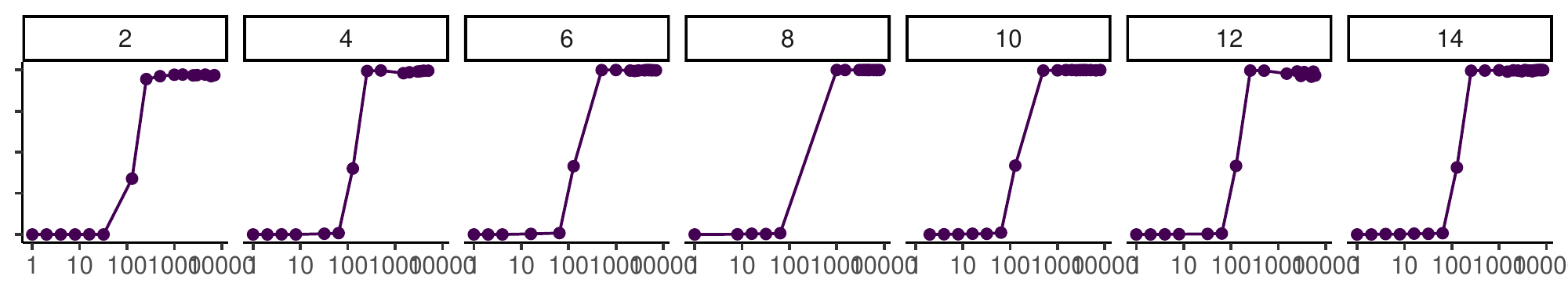} \\
\includegraphics[width=0.75\textwidth]{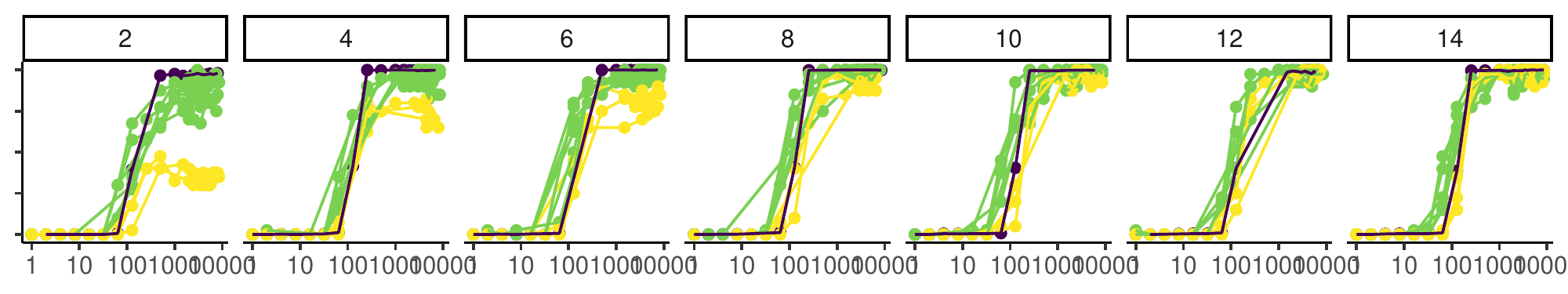} \\
\includegraphics[width=0.75\textwidth]{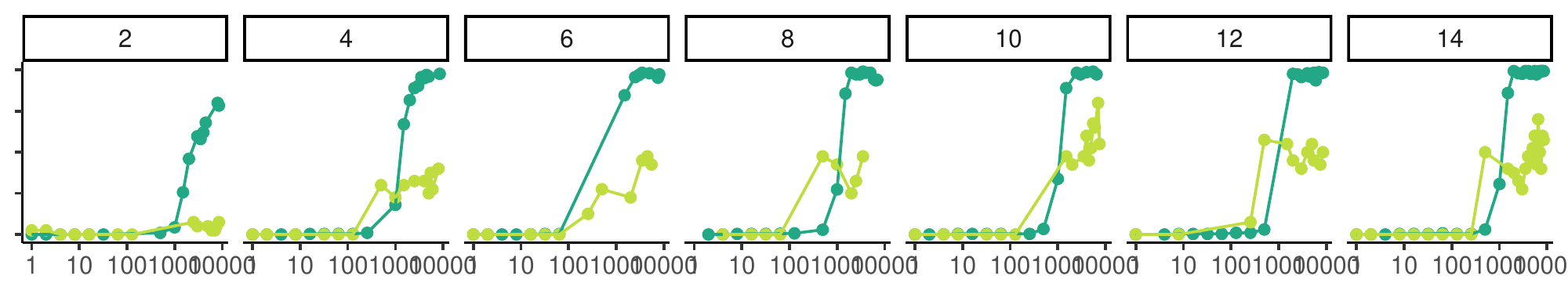} \\
\includegraphics[width=0.75\textwidth]{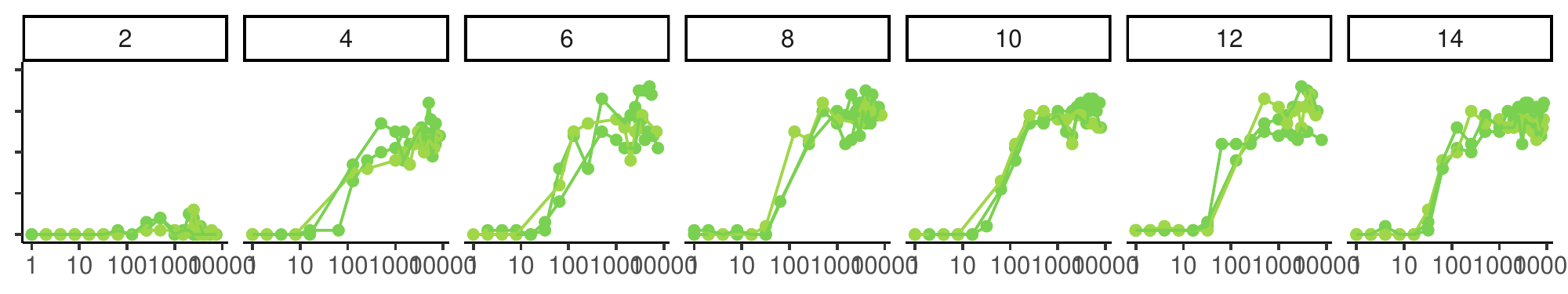}
    \end{tabular}
    
    \caption{Additional results.}
    \label{fig:additional-by-length-300}
\end{figure*}

\section{Heldout Analysis}
See Figure~\ref{fig:heldout}.
\begin{figure}
    \centering

\includegraphics[width=0.7\textwidth]{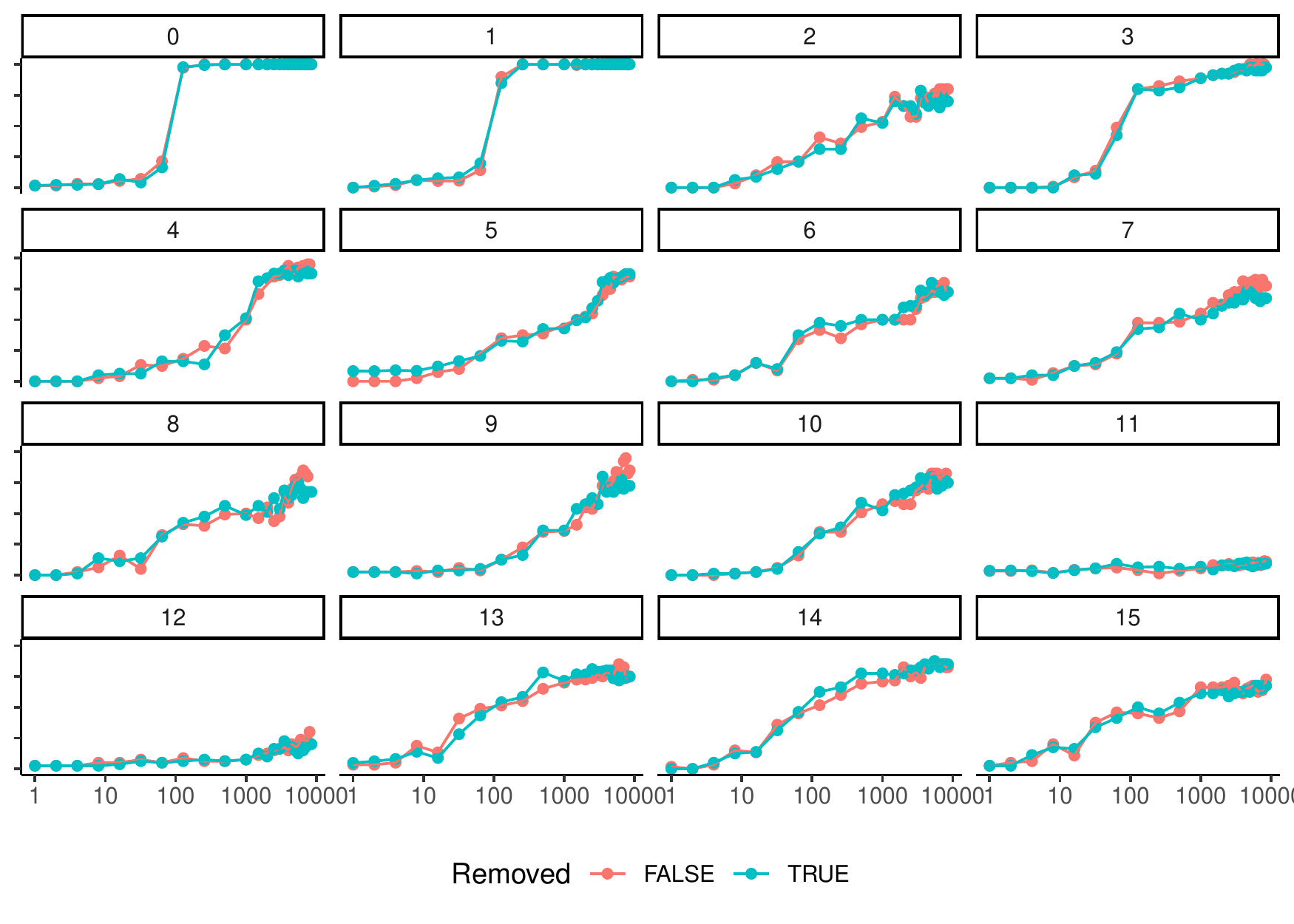}

    \caption{Accuracies on the test tasks (21M parameters; prompt length 14) after pretraining on the full dataset (red) or pretraining on a version of the same dataset where all documents containing a substring matching a valid prompt for any of the test tasks were removed (blue). Tasks are numbered as in Appendix~\ref{sec:task-formulas}.}
    \label{fig:heldout}
\end{figure}

\section{GPT-3 Experiment}\label{sec:gpt3=appendix}

\paragraph{Prompt Format}
We used single newlines to separate input from label, and double newlines to separate examples.

Sampled prompt (\textsc{FunctionEvaluation}, reversal):
\begin{verbatim}
i i x d h o y u v h\nh v u y o h d x i i
\n\n\n
n d b y p h z u h h\nh h u z h p y b d n
\n\n\n
m k e q m m j s g y\ny g s j m m q e k m
\n\n\n
u n j m u u m k t n\nn t k m u u m j n u
\n\n\n
j j z c v u t e a j
\end{verbatim}
with reference answer
\begin{verbatim}
\nj a e t u v c z j j
\end{verbatim}
As in our other experiments, only inputs $x$ or $x,y$  were included in prompts for which the response $z$ was unambiguous.
We obtained results for $\approx 14$ sampled prompts per task and prompt length (with some variability due to compute availability).

\end{document}